\pdfoutput=1

\documentclass[english,11pt]{article}
\input{macros}
\usepackage{etoc}
\pgfplotsset{compat=1.18}
\etocdepthtag.toc{mtchapter}
\etocsettagdepth{mtchapter}{subsection}
\etocsettagdepth{mtappendix}{none}
\setlist[enumerate,1]{label=\arabic*., leftmargin=2em, labelsep=0.5em}
\newcommand{\BlackBox}{\rule{1.5ex}{1.5ex}} 
\usepackage{authblk}
\setcitestyle{authoryear}
\usepackage{tcolorbox}
\newcommand{\tr}{\operatorname{Tr}}

\usepackage[flushleft]{threeparttable} 

\begin{document}

\title{\LARGE Multi-task Linear Regression without Eigenvalue Lower Bounds:
Adaptivity, Robustness and Safety}

\author[1]{Seok-Jin Kim\thanks{\texttt{seok-jin.kim@columbia.edu}}}
\newcommand\CoAuthorMark{\footnotemark[\arabic{footnote}]} 

\affil[1]{Columbia IEOR}

\date{This version: \today}

\maketitle

\begin{abstract}
We study the multi-task linear regression problem in the presence of contaminated tasks.
We address the setting where the unknown parameters of a majority of tasks are close in the $\ell_2$-norm, while a fraction of tasks are arbitrary outliers.
Existing theoretical frameworks for this problem rely heavily on the assumption that the empirical second moment of each task has a minimum eigenvalue bounded away from zero (order $\Omega(1)$).
Crucially, this assumption fails in many high-dimensional scenarios, rendering prior guarantees vacuous.
To overcome this limitation, we propose an estimator based on matrix-weighted norm regularization.
We also introduce a relative balancedness condition, quantified by a balancedness constant, that compares each task's second moment with the average inlier geometry and relaxes the need for taskwise second-moment lower bounds.
In favorable regimes with moderate balancedness, our prediction MSE bounds match the rate of \citet{duan2023adaptive} under substantially weaker spectral assumptions; the resulting task-overall MSE is minimax optimal up to logarithmic factors.
Furthermore, we demonstrate that our estimator enjoys a safety guarantee: when the relevant balancedness constant is large or infinite, or when tasks are unrelated, the method performs no worse than independent task learning.
\end{abstract}

\section{Introduction}\label{section; introduction}

Multi-task learning (MTL) aims to improve performance across multiple tasks simultaneously by exploiting their underlying relatedness \citep{zhang2018overview,zhang2021survey}.
This paradigm has achieved significant empirical success \citep{caruana1997multitask,evgeniou2004regularized,evgeniou2005learning}
and has recently attracted growing theoretical interest \citep{asiaee2019data,wu2020understanding,tian2026robust,tian2025learning,huang2025optimal}.
In this paper, we investigate MTL for linear and generalized linear models (GLMs) in a regime where the majority of tasks are similar, but a minority are arbitrary outliers.
Crucially, neither the degree of similarity nor the fraction of outliers is known \emph{a priori}, necessitating an adaptive learning approach.
While outliers pose substantial theoretical challenges, robust estimators have recently been developed to address them \citep{tian2026robust,duan2023adaptive,tian2025learning}.

\paragraph{Problem Setup.}
We consider a system of $m$ tasks. For the $j$-th task, we observe a dataset $\cD_j := \{(x_{ji}, y_{ji})\}_{i=1}^{n_j}$, with covariates $x_{ji}\in\RR^d$ and responses $y_{ji}\in\RR$, generated by a linear model $\EE[y_{ji} \mid x_{ji}] = x_{ji}^\top \theta_j^\star$, where $\theta_j^\star \in \RR^d$ is the unknown parameter for task $j \in [m]$.
Our analysis naturally extends to generalized linear models (GLMs).
The objective is to estimate each $\theta_j^\star$ with small prediction mean squared error (MSE), also referred to as prediction risk.
For each task $j$, let $\bSigma_j := \frac{1}{n_j}\sum_{i=1}^{n_j} x_{ji} x_{ji}^\top$ denote the \textit{empirical second moment matrix}.
We primarily focus on the balanced sample size case where $n_j = n$ for all $j$, deferring the heterogeneous setting to the appendix.

\paragraph{Task Relatedness with Outliers.}
We adopt the contamination model introduced by \citet{duan2023adaptive}.
We assume there exists a subset of ``inlier'' tasks $\Sb \subseteq [m]$ whose parameters $\{\theta_j^\star\}_{j \in \Sb}$ lie within an $\ell_2$-ball of radius $\delta$.
The remaining tasks in $\Sb^c$ are outliers with no assumed structure, comprising a fraction $\frac{|\Sb^c|}{m} < \varepsilon$ of the total tasks.
The parameters $\varepsilon$ and $\delta$ are unknown to the learner.
This contaminated relatedness model has also been developed in subsequent work on semiparametric multi-task inference \citep{bhattacharya2025late}.
This formulation generalizes standard $\ell_2$-closeness assumptions studied in the outlier-free regime \citep{soare2014multi,wang2021multitask,sessa2023multitask}.
Alternative notions of relatedness, such as shared low-rank representations, have been explored elsewhere \citep{tian2025learning}; see Section~\ref{section; background} for a detailed review.

\paragraph{Relaxing Lower Boundedness of Second Moments.}
A significant limitation of existing outlier-robust analyses \citep[e.g.,][]{duan2023adaptive,tian2025learning} is the reliance on well-conditioned covariates.
Specifically, these works typically assume the existence of constants $\rho = \Omega(1)$ and $L = \cO(1)$ such that $\rho \Ib_d \preceq \bSigma_j \preceq L \Ib_d$ for all $j \in [m]$.
Under this \emph{Lower Boundedness of Second Moments} (LBSM) condition, the analysis of \citet{duan2023adaptive} yields the following bounds for their estimator \(\{\hat{\theta}_j\}_{j=1}^m\):
\begin{equation}\label{equation; ARMUL bound}
\begin{aligned}
&\|\hat{\theta}_j-\theta_j^\star\|^2_2 
\lesssim \frac{1}{\rho^2}
\left(\frac{d}{mn}+\min\left(\frac{L^4}{\rho^2}\delta^2, \frac{d}{n}\right)+ \varepsilon^2 \frac{d}{n}\right),
\quad j \in \Sb, \\
&\|\hat{\theta}_j-\theta_j^\star\|^2_2
\lesssim \frac{1}{\rho^2}\frac{d}{n},
\quad j \in \Sb^c. 
\end{aligned}
\end{equation}
While the upper bound $L = \cO(1)$ is mild \citep{vershynin2018high,wainwright2019high}, the lower bound requirement $\rho = \Omega(1)$ is restrictive and often unrealistic in high-dimensional settings or for distributions with decaying spectra.
For instance, covariates drawn uniformly from a sphere satisfy $\rho \asymp 1/d$, causing the bound in \eqref{equation; ARMUL bound} to degrade significantly.
This concern is not limited to passive data collection: in online learning settings such as linear bandits, the covariates or actions are selected adaptively by an algorithm, and the resulting taskwise empirical second moments need not satisfy uniform spectral regularity.
From the standpoint of prediction MSE, it remains an open question whether this dependence on $\rho$ is fundamental or an artifact of the Euclidean-parameter analysis.

Single-task linear regression suggests that the LBSM requirement should not be necessary for prediction.
Indeed, a fundamental baseline is \emph{Independent-Task Learning} (ITL), which estimates $\theta_j^\star$ using only $\cD_j$ and attains in-sample prediction MSE $\tilde{\cO}(d/n)$ regardless of the condition number of $\bSigma_j$ \citep{wainwright2019high,bach2024learning}.
This raises the central question of this work: \emph{can multi-task linear regression exploit relatedness even under rapidly decaying spectra, while still preserving the ITL rate whenever transfer is unhelpful?}
We answer this question affirmatively by replacing the standard $\ell_2$ regularization used in \citet{duan2023adaptive} (see Eq.~\eqref{equation; armul loss}) with a \emph{matrix-weighted regularization} induced by the norms $\|\cdot\|_{\bSigma_j}$.
As detailed in Section~\ref{section; methodologies}, this geometry relaxes LBSM: under the balancedness assumption introduced in Section~\ref{section; data assumption}, it enables beneficial transfer, while preserving the $\tilde{\cO}(d/n)$ independent-task rate when transfer is unhelpful.

\subsection{Overview of Results}
Our approach replaces LBSM with a \emph{balancedness assumption}, quantified by a balancedness constant $B$ that measures the similarity of second moment matrices across tasks rather than enforcing a uniform lower bound.
Under LBSM, $B \lesssim 1$, but our condition accommodates significantly broader scenarios, including rapidly decaying spectra.
Under this relaxed assumption, we derive adaptive in-sample MSE guarantees and, once $n$ exceeds a mild threshold, corresponding population MSE bounds.
Our results are summarized informally as follows:

\begin{theorem}[Informal]
Consider the multi-task linear regression problem with outliers (Definition~\ref{definition; task relatedness}).
In favorable regimes with a moderate balancedness constant \(B\), our estimator attains the same \(m,n,\delta,\varepsilon\) MSE dependence as the robust MTL guarantee of \citet{duan2023adaptive}, under {substantially relaxed assumptions} on the covariate eigenspectrum; moreover, the task-overall MSE is minimax optimal in these parameters, up to logarithmic and balancedness-dependent factors.
Furthermore, the estimator exhibits:
\begin{enumerate}\setlength\itemsep{0em}
\item \textbf{Robustness:} Efficient estimation in the presence of an unknown fraction $\varepsilon$ of outliers.
\item \textbf{Adaptivity:} Automatic utilization of task similarity under balancedness, without prior knowledge of $\delta$ or $\varepsilon$.
\item \textbf{Safety:} An MSE bound of $\tilde{\cO}(d/n)$ holds even when no useful moderate balancedness constant exists or tasks are unrelated, matching the independent-task baseline.
\end{enumerate}
\end{theorem}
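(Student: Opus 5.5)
The plan is to analyze the matrix-weighted analogue of the ARMUL objective of \citet{duan2023adaptive}:
\[
\min_{\{\theta_j\},\,\beta}\ \sum_{j=1}^m \Big( \tfrac{1}{2n}\textstyle\sum_{i}(y_{ji}-x_{ji}^\top\theta_j)^2 + \lambda_j \|\theta_j-\beta\|_{\bSigma_j}\Big),
\]
with $\lambda_j \asymp \sqrt{d\log(m/\eta)/n}$ chosen independently of $\delta,\varepsilon$. I would prove the three claims in a fixed order: safety first, then the inlier bound, then optimality. Throughout I work in the prediction geometry $\|\cdot\|_{\bSigma_j}$, so no lower bound on the eigenvalues of $\bSigma_j$ is ever needed.

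\textbf{Step 1 (safety).} Fix $\beta$ at the minimizer $\hat\beta$. Each $\hat\theta_j$ then solves a single-task least-squares problem with a nonsmooth penalty that is a norm in $\bSigma_j$. Whitening with $\bSigma_j^{1/2}$ puts the first-order condition in the form
\[
\bSigma_j^{1/2}(\hat\theta_j-\hat\theta_j^{\mathrm{OLS}}) = -\lambda_j u_j, \qquad \|u_j\|_2\le 1.
\]
Hence $\|\hat\theta_j-\theta_j^\star\|_{\bSigma_j}\le \|\hat\theta_j^{\mathrm{OLS}}-\theta_j^\star\|_{\bSigma_j}+\lambda_j$. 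The ITL term is $\tilde\cO(\sqrt{d/n})$ by standard self-normalized noise bounds, and $\lambda_j$ is of the same order. This gives the $\tilde\cO(d/n)$ in-sample MSE for every task, whatever $\hat\beta$ is. It therefore covers outliers, unrelated tasks and $B=\infty$.

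\textbf{Step 2 (adaptivity and robustness for inliers).} The same optimality condition shows that $\hat\theta_j=\hat\beta$ exactly whenever $\|\hat\theta_j^{\mathrm{OLS}}-\hat\beta\|_{\bSigma_j}\le\lambda_j$. So the task is to show that $\hat\beta$ is within $\tilde\cO\big(\sqrt{d/(mn)}+\delta+\varepsilon\sqrt{d/n}\big)$ of the inlier centre in an averaged geometry $\bar\bSigma_{\Sb}=|\Sb|^{-1}\sum_{j\in\Sb}\bSigma_j$.

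The $\beta$-subproblem reads $\sum_j\lambda_j\bSigma_j(\hat\beta-\hat\theta_j)/\|\hat\beta-\hat\theta_j\|_{\bSigma_j}=0$, which is a matrix-weighted geometric median. Each outlier contributes a vector with $\|\bSigma_j^{-1/2}\cdot\|_2\le\lambda_j$. Inliers contribute a restoring force proportional to $\bSigma_j(\hat\beta-\theta^\star_{\Sb})$ once $\hat\beta$ is far away.

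This is where balancedness enters. I would take the defining inequality to be $\bSigma_j\preceq B\,\bar\bSigma_{\Sb}$, or a relative version of it. It converts the outlier forces, measured in $\bSigma_j^{-1}$-norms, into the $\bar\bSigma_{\Sb}^{-1}$-norm, at a cost of a factor $\sqrt B$. A monotonicity (strong-convexity-along-rays) argument then gives
\[
\|\hat\beta-\theta^\star_{\Sb}\|_{\bar\bSigma_{\Sb}}\lesssim \sqrt{B}\,\big(\varepsilon\lambda+\delta\big)+\sqrt{d/(mn)}.
\]
The last term comes from the pooled noise. Transferring this back to task $j$ via $\bSigma_j\preceq B\bar\bSigma_{\Sb}$ yields the claimed rate $\frac{d}{mn}+\min(\delta^2,\frac dn)+\varepsilon^2\frac dn$, up to $B$ factors. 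The $\min$ with $d/n$ is supplied by Step 1.

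\textbf{Step 3 (population MSE and optimality).} Once $n\gtrsim d\log$, matrix concentration gives $\bSigma_j\asymp$ its population counterpart, which transfers the in-sample bounds to population MSE. For the matching lower bound I would use the standard constructions: Fano over a $\sqrt{d/(mn)}$-packing of a shared parameter, a two-point argument within a $\delta$-ball per task, and a Huber-type contamination argument giving $\varepsilon^2 d/n$.

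\textbf{Main obstacle.} I expect the main obstacle to be Step 2: controlling the matrix-weighted median when the weights $\bSigma_j$ differ across tasks and may be singular in some directions. The restoring force only acts in directions where the inlier geometry is nondegenerate. My first attempt would be to run the whole argument in the $\bar\bSigma_{\Sb}$ norm, restricted to the range of $\bar\bSigma_{\Sb}$, and to use $B$ only in the two transfer inequalities above.
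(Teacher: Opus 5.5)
Your safety step is correct and is the paper's argument: the whitened first-order condition gives $\|\hat\theta_j-\hat\theta_j^{\mathrm{OLS}}\|_{\bSigma_j}\le\lambda_j$ uniformly in $\beta$. Step~2 has the right ingredients (pooled geometry, a $\sqrt B$ transfer cost, Lipschitz outliers), but the restoring-force description is wrong, and the localization it omits is exactly where the hypothesis $B\lesssim 1/\varepsilon$ comes from. Your sketch never uses that hypothesis.

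\textbf{Step~2: the argument must be local.} The inlier force $\bSigma_j(\hat\beta-\hat\theta_j^{\mathrm{OLS}})$ is linear only while $\|\hat\beta-\hat\theta_j^{\mathrm{OLS}}\|_{\bSigma_j}\le\lambda_j$. Once $\hat\beta$ is far away, inlier $j$'s contribution saturates at $\lambda_j$ in the dual norm, exactly like an outlier. So there is no global strong-convexity-along-rays. The paper's local argument runs as follows.
\begin{itemize}
\item It profiles out $\theta_j$ via $g_j=f_j\star\lambda\|\cdot\|_{\bSigma_j}$.
\item For $\lambda>2(\sqrt B+1)\delta+2c_0\sqrt{d\zeta/n}+2c_0\sqrt{Bd\zeta/(n|\Sb|)}$, the pooled inlier OLS minimizes $G_\Sb$.
\item Every $\beta$ within pooled-whitened distance $\lambda/(2\sqrt B)$ of that point lies in all inlier invariant regions, so $\nabla^2\bar G_\Sb=|\Sb|\Ib_d$ there.
\item The perturbation lemma of \citet{duan2023adaptive} is applied with outlier Lipschitz constant $m\varepsilon\sqrt B\lambda$. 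This requires $m\varepsilon\sqrt B\lambda<|\Sb|\lambda/(2\sqrt B)$, i.e.\ $\varepsilon B\lesssim 1$; otherwise the outliers can push $\hat\beta$ out of the only region where inlier curvature exists.
\item Finally, one rechecks that $\hat\beta$ itself lies in every inlier invariant region, which gives $\hat\theta_j=\hat\beta$ in seminorm.
\end{itemize}
A minor point: the pooled bias is at most $\delta$ in $\|\cdot\|_{\bSigma_\Sb}$, not $\sqrt B\delta$. That is how the paper gets $B\delta^2$ rather than $B^2\delta^2$.

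\textbf{Optimality: the target lower bound is wrong.} The term $\varepsilon^2 d/n$ is not a valid lower bound. Work in the orthogonal-design Gaussian submodel the paper uses to import the bound of \citet{duan2023adaptive}: $\Xb_j^\top\Xb_j=n\Ib_d$, so $B=1$ and in-sample prediction MSE equals squared Euclidean error. There the per-task OLS estimates are $N(\theta_j^\star,\Ib_d/n)$.
\begin{itemize}
\item A Huber-type two-point construction only yields $\varepsilon^2/n$, since the total variation between task distributions at separation $t$ is of order $\sqrt n\,t$.
\item This is essentially tight: with $\delta=0$, a depth-based robust mean of the $m$ OLS estimates attains order $\frac1n(\frac dm+\varepsilon^2)$ on inliers, up to logarithmic factors.
\end{itemize}
The statement concerns the \emph{task-overall} MSE, and the contamination term to match is $\varepsilon\,d/n$. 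It comes directly from the outlier tasks: their parameters are unconstrained, so each incurs the single-task risk $d/n$. Averaging the theorem gives $\frac{Bd}{mn}+\min(B\delta^2,\frac dn)+B^2\varepsilon^2\frac dn+\varepsilon\frac dn$. Since $B^2\varepsilon^2\le B\varepsilon$ when $B\varepsilon\lesssim1$, this matches $\frac{d}{mn}+\min\{\delta^2,\frac dn\}+\varepsilon\frac dn$ up to $B$ and logarithmic factors. Separately, a two-point argument per task gives only $\min(\delta^2,1/n)$. To obtain $\min(\delta^2,d/n)$ you need Assouad or Fano over a $d$-dimensional packing of radius $\min(\delta,\sqrt{d/n})$.
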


Thus, when \(B\) is moderate, the estimator exploits task relatedness and recovers the robust MTL rate of \citet{duan2023adaptive} under weaker spectral assumptions.
When \(B\) is large or the relatedness structure is unfavorable, the same procedure falls back to the safe \(\tilde{\cO}(d/n)\) rate.

\subsection{Background}\label{section; background}
Multi-task learning has been surveyed comprehensively in \citet{zhang2021survey}.
Within the context of linear models, MTL is a mature field \citep{evgeniou2004regularized,evgeniou2005learning,asiaee2019data,cai2022individual,li2022transfer}.
Classical approaches range from shared covariate representations \citep{breiman1997predicting,evgeniou2004regularized,evgeniou2005learning} to Stein-type shrinkage estimation \citep{chen2015data}.
Theoretical characterizations have been established in simplified settings, such as two-task linear systems or single-hidden-layer networks \citep{mousavi2020minimax, wu2020understanding}.

Regarding task relatedness structures, sparsity-based methods (without outliers) are discussed in \citet{cai2022individual,li2022transfer,huang2025optimal,xu2025multitask}, while low-rank shared representations are examined in \citet{hu2021near,du2023multi,tian2025learning}.
Robust MTL has also been studied with irrelevant or outlier tasks in both linear and latent-variable settings \citep{chen2011integrating,tian2026robust}.
Our work focuses on $\ell_2$-closeness (Definition~\ref{definition; task relatedness}). The outlier-free variant was investigated by \citet{soare2014multi,wang2021multitask,sessa2023multitask}, while the contaminated setting was formalized by \citet{duan2023adaptive} and later pursued in related semiparametric multi-task work \citep{bhattacharya2025late}.

Naive baselines include ITL and Data Pooling (DP).
Both are suboptimal in our regime: ITL ignores task relatedness, while DP is sensitive to negative transfer from outliers \citep{duan2023adaptive}.
To address this, \citet{duan2023adaptive} proposed an adaptive robust method (ARMUL) minimizing the objective:
\begin{align}\label{equation; armul loss}
\cL(\theta_1, \dots, \theta_m, \beta) = \sum_{j \in [m]} \left( f_j(\theta_j) + \lambda \|\theta_j - \beta \|_2 \right),
\end{align}
where $f_j$ is the task-specific loss.
However, their theoretical guarantees (e.g., Eq.~\ref{equation; ARMUL bound}) depend inversely on taskwise curvature parameters; in linear regression, this amounts to dependence on the minimum eigenvalue of \(\bSigma_j\).
Our work relaxes these requirements by replacing the Euclidean regularization $\|\cdot\|_2$ with a matrix-weighted penalty $\|\theta_j - \beta \|_{\bSigma_j}$, while avoiding the strong-convexity requirement used in prior analyses.

\subsection{Notation}
The constants $c_1, c_2, C_1, C_2, \dots$ may differ from line to line. We use the symbol $[n]$ as a shorthand for $\{ 1, 2, \dots, n \}$. For nonnegative sequences $\{ a_n \}_{n=1}^{\infty}$ and $\{ b_n \}_{n=1}^{\infty}$, we write $a_n \lesssim b_n$ or $a_n = \cO(b_n)$ if there exists a positive constant $C$ such that $a_n \leq C b_n$ for all $n$. We use $\widetilde{\cO}$ to hide polylogarithmic factors. In addition, we write $a_n \asymp b_n$ if $a_n \lesssim b_n$ and $b_n \lesssim a_n$. For a vector $x \in \RR^d$, $\|x\|_p$ denotes its $\ell_p$-norm. For any positive semidefinite matrix $A \in \RR^{d \times d}$, we define the induced norm $\|x\|_A := \sqrt{x^\top A x}$.
For extended-real-valued functions \(f,g:\RR^d\to\RR\cup\{+\infty\}\), their infimal convolution is
\[
(f\star g)(\beta):=\inf_{\theta\in\RR^d}\{f(\theta)+g(\beta-\theta)\}.
\]
When the parameter space is restricted to a convex set \(\Cb\), the infimum is taken over \(\theta\in\Cb\).

\section{Problem Setup}\label{section; setup}

We consider a multi-task learning setting with $m$ tasks.
For each task $j \in [m]$, we observe a dataset $\cD_j := \{(x_{ji}, y_{ji})\}_{i=1}^{n_j}$ consisting of $d$-dimensional covariates and scalar responses.
We assume the data generation process follows a linear model:
\begin{align}
y_{ji} = x_{ji}^\top \theta^\star_j + \varepsilon_{ji}, \quad i \in [n_j],
\end{align}
where $\theta^\star_j \in \RR^d$ is the unknown parameter for task $j$.
Conditional on the design $\Xb := \{x_{ji}\}_{j,i}$, the noise terms $\varepsilon_{ji}$ are assumed to be independent and $\sigma$-sub-Gaussian.
Without loss of generality, we set the sub-Gaussian parameter $\sigma = 1$.
While our framework extends to Generalized Linear Models (GLMs) as discussed in Section~\ref{section; GLM}, we focus on the linear model for the primary analysis.

For each task $j$, let $\Xb_j := (x_{j1}, \dots, x_{jn_j})^\top \in \RR^{n_j \times d}$ denote the design matrix and $\Yb_j := (y_{j1}, \dots, y_{jn_j})^\top \in \RR^{n_j}$ the response vector.
We define the empirical Gram matrix as $\Vb_j := \Xb_j^\top \Xb_j = \sum_{i=1}^{n_j} x_{ji} x_{ji}^\top$ and the \emph{empirical second moment matrix} as $\bSigma_j := \frac{1}{n_j}\Vb_j$.
To simplify the theoretical exposition, we assume a balanced sample size $n_j = n$ for all $j \in [m]$ in the main text; extensions to heterogeneous sample sizes are provided in Appendix~\ref{section; general sample size}.

\paragraph{Regularity Assumptions.}
Following standard high-dimensional statistics literature \citep{vershynin2018high,wainwright2019high}, we assume that the covariate norms are bounded.
Specifically, we assume $\|\bSigma_j\|_{\operatorname{op}} \le c$ for a universal constant $c$, which we normalize to $c=1$ for simplicity.
For instance, in the task-wise i.i.d. setting, this empirical bound holds with high probability when $\|x_{ji}\|_2$ is almost surely bounded or sub-exponential.
\emph{Crucially, we do \emph{not} assume that the minimum eigenvalue of $\bSigma_j$ is bounded away from zero.}

\paragraph{Task Relatedness with Outliers.}
We adopt the contamination framework introduced by \citet{duan2023adaptive} to model task relatedness in the presence of outliers.

\begin{definition}[Task Relatedness]\label{definition; task relatedness}
A set of tasks is said to be $(\varepsilon, \delta)$-related if there exist a subset of indices $\Sb \subseteq [m]$ and a shared centroid parameter $\theta^\star \in \RR^d$ such that:
\begin{align*}
\max_{j \in \Sb} \|\theta_j^{\star} - \theta^\star\|_{2} \le \delta
\quad \text{and} \quad
\frac{|\Sb^c|}{m} \le \varepsilon.
\end{align*}
\end{definition}
Here, $\Sb$ represents the set of ``inlier'' tasks similar to $\theta^\star$, while $\Sb^c$ contains outlier tasks with arbitrary parameters.
Our algorithm is fully adaptive and requires no prior knowledge of the radius $\delta$, the outlier fraction $\varepsilon$, or the index set $\Sb$.

\paragraph{Performance Measures.}
We evaluate the performance of our estimators using the MSE.
Given the fixed design nature of our analysis, we primarily focus on the \emph{in-sample MSE}.
For an estimator $\hat{\theta}_j$, the in-sample MSE for task $j$ is defined as:
\begin{align}\label{equation; in-sample MSE}
\cE^{\operatorname{in}}_j (\hat{\theta}_j) := \|\hat{\theta}_j - \theta_j^\star\|_{\bSigma_j}^2 = \frac{1}{n_j}\sum_{i=1}^{n_j} \bigl| x_{ji}^\top(\hat{\theta}_j - \theta_j^\star) \bigr|^2.
\end{align}
In the specific setting where covariates are \emph{task-wise i.i.d.} (i.e., $x_{ji} \sim \cP_j$ independently), we also consider the \emph{population MSE}:
\begin{align}\label{equation; MSE}
\cE_j(\hat{\theta}_j) := \EE_{x \sim \cP_j} \bigl[ \bigl| x^\top(\hat{\theta}_j - \theta_j^\star) \bigr|^2 \bigr] = \|\hat{\theta}_j - \theta_j^\star\|_{\bar\bSigma_j}^2,
\end{align}
where $\bar \bSigma_j := \EE[\bSigma_j]$ is the population covariance matrix.
This measure is standard in random design settings \citep{van2008high,buhlmann2011statistics}.
This metric differs from the squared Euclidean parameter error used in \citet{duan2023adaptive}.
Under LBSM, the two metrics are equivalent up to constants.
Under general spectra without LBSM, however, Euclidean parameter error may fail to concentrate even when \(\delta=\varepsilon=0\), because weakly observed or unobserved directions are not statistically identifiable from prediction data.
For this reason, prediction MSE, or risk, is widely used as a standard performance metric in regression with ill-conditioned or singular designs.

\section{Methodologies}\label{section; methodologies}

In this section, we present our proposed estimator.
Although the main theoretical results are stated for the balanced case \(n_j=n\), we present the estimator in a form that also covers heterogeneous sample sizes.
We specialize to \(n_j=n\) in the main analysis from Section~\ref{section; data assumption} onward, and defer the full heterogeneous-sample-size guarantee to Appendix~\ref{section; general sample size}.
We define the collective parameter vector as $\Theta = (\theta_1, \dots, \theta_m, \beta) \in (\RR^d)^{m+1}$, where $\beta$ serves as a shared centroid parameter.
For each task $j \in [m]$, let $f_j: \RR^d \to \RR$ be the task-specific convex loss function.
For standard linear regression, we adopt the quadratic loss $f_j(\theta) = \frac{1}{2n_j}\|\Yb_j - \Xb_j \theta \|_2^2$.

\paragraph{Proposed Objective.}
We propose to minimize a novel multi-task objective function that incorporates a \emph{matrix-weighted regularization}.
Given nonnegative weights $\{w_j\}_{j=1}^m$ and regularization parameters $\{\lambda_j\}_{j=1}^m$, our loss function is defined as:
\begin{equation}\label{equation; loss}
\cL(\Theta) := \sum_{j=1}^m w_j \left( f_j(\theta_j) + \lambda_j \|\theta_j - \beta\|_{\bSigma_j} \right).
\end{equation}
Here, the penalty term $\|\theta_j - \beta\|_{\bSigma_j} := \sqrt{(\theta_j - \beta)^\top \bSigma_j (\theta_j - \beta)}$ adapts the regularization geometry to the empirical second moment of each task.
This is the primary distinction from prior work \citep{duan2023adaptive}, which employs isotropic regularization (see Eq.~\eqref{equation; armul loss}).
Equivalently,$
\|\theta_j-\beta\|_{\bSigma_j} = \frac{1}{\sqrt{n_j}}\|\Xb_j(\theta_j-\beta)\|_2,
$
so the penalty measures disagreement in \emph{prediction space} rather than raw parameter space.
For Gaussian regression with common noise variance, this quantity is proportional to the Kullback--Leibler divergence between the task-wise predictive distributions induced by \(\theta_j\) and \(\beta\) on task \(j\)'s design.
Equivalently, the analysis can be viewed as whitening by \(\bSigma_j^{1/2}\): the penalty becomes Euclidean in prediction-normalized coordinates, which is why weakly observed directions are not over-penalized.
Under pairwise comparability of the task covariances, this whitening perspective would largely reduce to a common-geometry argument; the main technical challenge here is to make it work under the weaker one-sided, average-based balancedness condition of Assumption~\ref{assumption; comparability}.
By contrast, an isotropic Euclidean penalty treats weak and well-observed directions equally and reintroduces the lower-eigenvalue dependence that we seek to avoid.
Thus, by exploiting the curvature information in $\bSigma_j$, our formulation remains effective even when the Gram matrices are ill-conditioned or singular.

Let $\Cb \subseteq \RR^d$ denote the feasible set for the parameters $\theta_j^\star$ and $\theta^\star$.
If a norm bound is known (e.g., $\|\theta^\star\|_2 \le \xi$), we set $\Cb = \mathbf{B}(0, \xi)$; otherwise, we set $\Cb = \RR^d$.
Our estimator is obtained by solving the following joint convex optimization problem:
\begin{equation}\label{equation; minimizer}
(\hat{\theta}_1, \dots, \hat{\theta}_m, \hat{\beta}) \in \mathop{\arg \min}_{\Theta \in \Cb^{m+1}} \cL(\theta_1, \dots, \theta_m, \beta).
\end{equation}
The complete procedure is outlined in Algorithm~\ref{algorithm; MTLR}.

\begin{algorithm}[H]
\caption{\texttt{MTLR}: Matrix-Weighted Multi-task Linear Regression}
\label{algorithm; MTLR}
\begin{algorithmic}[1]
\STATE \textbf{Input:} Datasets $\{\cD_j\}_{j=1}^m$, task weights $\{w_j\}_{j=1}^m$, regularization parameters $\{\lambda_j\}_{j=1}^m$.
\STATE \textbf{Step 1:} Compute the empirical second moment matrices $  \bSigma_j \leftarrow \frac{1}{n_j}\Xb_j^\top \Xb_j$, for all $j \in [m]$.

\STATE \textbf{Step 2:} Solve the joint convex minimization problem:
\begin{align*}
    &(\hat{\theta}_1, \dots, \hat{\theta}_m, \hat{\beta})  \\
    &\leftarrow \mathop{\arg \min}_{\Theta \in \Cb^{m+1}} \sum_{j=1}^m w_j \left( f_j(\theta_j) + \lambda_j \|\theta_j - \beta\|_{\bSigma_j} \right).
\end{align*}
\STATE \textbf{Return:} The estimated task parameters $\{\hat{\theta}_j\}_{j=1}^m$.
\end{algorithmic}
\end{algorithm}

\paragraph{Parameter Selection.}
As justified by our theoretical analysis (Section~\ref{section; results-linear}), we recommend scaling the regularization parameter as $\lambda_j \asymp \sqrt{d/n_j}$.
This prescription uses only the observed dimension \(d\) and sample sizes \(n_j\); the remaining scalar multiplier can be fixed for a target confidence level or tuned by cross-validation, without knowing \(\varepsilon\), \(\delta\), or \(\Sb\).
Further guidance on hyperparameter tuning is provided in the experimental section.

\paragraph{Specific Instantiations.}
We detail the configuration of our loss function for different settings:

\begin{itemize}
\item \textbf{Linear Model (Equal Sample Size):}
In the standard setting where $n_j = n$, we set $w_j = 1$ and $\lambda_j = \lambda$ for all $j$.
The objective simplifies to:
\[
\cL(\Theta) = \sum_{j=1}^m \big( \frac{1}{2n} \|\Yb_j - \Xb_j \theta_j\|_2^2 + \lambda \|\theta_j - \beta\|_{\bSigma_j} \big).
\]

\item \textbf{GLM (Equal Sample Size):}
For GLMs with equal sample sizes, we again set $w_j=1$ and $\lambda_j=\lambda$ for all $j$, and define the task loss as the negative log-likelihood:
\[
f_j(\theta) := \frac{1}{n} \sum_{i=1}^{n} \bigl( \psi(x_{ji}^\top \theta) - y_{ji} x_{ji}^\top \theta \bigr),
\]
where $\psi$ is the cumulant generating function (e.g., $\psi(t) = \log(1+e^t)$ for logistic regression).
See Section~\ref{section; GLM} for details.

\item \textbf{General Sample Size:}
For heterogeneous sample sizes, we balance the terms by setting $w_j = n_j$ and scaling $\lambda_j = \lambda / \sqrt{n_j}$.
The resulting weighted loss for linear model is:
\[
\cL(\Theta) = \sum_{j=1}^m n_j \big( \frac{1}{2n_j} \|\Yb_j - \Xb_j \theta_j\|_2^2 + \lambda_j \|\theta_j - \beta\|_{\bSigma_j} \big).
\]
A detailed derivation is provided in Appendix~\ref{section; general sample size}.
\end{itemize}

\paragraph{Implementation via Reparameterization.}
Since the objective \eqref{equation; loss} is jointly convex whenever the task losses \(f_j\) and the feasible set \(\Cb\) are convex, it can be solved efficiently.
For practical implementation, it is often convenient to reparameterize the problem using deviations $v_j := \theta_j - \beta$.
We optimize over $(\beta, v_1, \dots, v_m)$ by solving:
\[
\min_{\beta, \{v_j\}} \sum_{j=1}^m w_j \left( f_j(\beta + v_j) + \lambda_j \|v_j\|_{\bSigma_j} \right).
\]
The exact objective is nonsmooth only at zero-seminorm deviations, and is amenable to standard convex optimization algorithms such as proximal gradient descent or block coordinate descent.
In our numerical experiments, we use L-BFGS-B on this reparameterized objective with a tiny numerical floor in the norm-gradient computation.
For the problem sizes considered in our experiments, this implementation converged reliably and added only modest optimization overhead relative to the baseline methods.

\section{Our Relaxed Assumption: Balancedness}\label{section; data assumption}

We now introduce our core assumption regarding the spectral properties of the tasks.
Unlike prior works that impose absolute bounds on eigenvalues, we propose a relative condition measuring the \emph{balancedness} of information across tasks.
From this section through the main linear and GLM results, we maintain the balanced-sample-size convention \(n_j=n\); the heterogeneous-sample-size extension is given in Appendix~\ref{section; general sample size}.

For any subset of tasks $\Ib \subseteq [m]$, let $\Vb_\Ib := \sum_{j \in \Ib} \Vb_j$ denote the aggregate Gram matrix, and define the average empirical second moment matrix as:
\[
\bm{\bSigma}_\Ib := \frac{1}{n|\Ib|} \Vb_\Ib = \frac{1}{|\Ib|} \sum_{j \in \Ib} \bSigma_j.
\]
$\bm{\bSigma}_\Ib$ captures the aggregate covariate geometry within the task cluster $\Ib$.
Our assumption posits that the second moment of any individual task is not excessively disparate compared to the aggregate geometry of the similar tasks $\Sb$.

\begin{assumption}[Balancedness]\label{assumption; comparability}
There exists a constant $B > 0$ such that the following spectral inequality holds for all $j \in [m]$:
\begin{align}
\bSigma_j \preceq B \cdot \bm{\bSigma}_\Sb.
\end{align}
We refer to $B$ as the \textit{balancedness constant}.
\end{assumption}

Strictly speaking, if extended values are allowed, the smallest such scalar always exists in \([1,\infty]\): when no finite scalar satisfies the Loewner inequality, we write \(B=\infty\).
Thus, throughout the paper, statements such as ``favorable balancedness'' mean that this extended balancedness constant is finite and moderate, not merely that a formal value of \(B\) can be assigned.
This distinction matters because there are examples with essentially disjoint informative directions where shared estimation offers little or no improvement over ITL; see Remark~\ref{remark: finite B matters}.

The role of $B$ is one-sided.
It does not measure eigendecay itself, and it does not exclude zero eigenvalues: fast eigendecay and exact rank deficiency are both compatible with finite $B$.
Nor is the point of the theory to claim optimal dependence on arbitrary $B$.
Instead, $B$ quantifies how favorable the transfer geometry is once LBSM has been removed.
The main theorem remains informative even when $B$ is large: the transfer terms weaken, but the safety statement still matches ITL.

Crucially, Assumption~\ref{assumption; comparability} is strictly weaker than the LBSM condition ($\bSigma_j \succeq \rho \Ib$) assumed in \citet{duan2023adaptive}.
We highlight several key properties:
\begin{enumerate}
\item \textbf{Upper Bound Only:} We only impose an \textit{upper bound} on $\bSigma_j$. This allows $\bSigma_j$ to be rank-deficient or have a rapidly decaying spectrum (i.e., zero or near-zero eigenvalues are permitted).
\item \textbf{Safety First:} The algorithm does not need to know $B$. Even if $B$ is large or infinite, Theorem~\ref{theorem; MSE bound} still yields the independent-task rate $\tilde{\cO}(d/n)$ through its universal safety part.
\item \textbf{Favorable-Regime Refinement:} When $B$ is moderate, transfer becomes provably beneficial and the sharper multi-task terms appear.
\item \textbf{Relaxed Comparability:} It is weaker than pairwise comparability ($\bSigma_j \preceq B \bSigma_k$), as it compares an individual task to the \emph{average} of many tasks. Pairwise comparability is implied by LBSM with an upper covariance bound and often appears in transfer-learning analyses through bounded density-ratio assumptions; our condition is one-sided and average-based.
\end{enumerate}

\begin{remark}[Why finite $B$ matters]\label{remark: finite B matters}
A finite balancedness constant is not merely a technical artifact. If the inlier tasks are concentrated on essentially disjoint subspaces, then one can have \(B \asymp |\Sb|\) while the average covariance \(\bSigma_\Sb\) contains no direction that is uniformly informative across tasks. In such a regime, shared estimation offers little or no gain over ITL. Thus, some form of balancedness is genuinely necessary for nontrivial transfer, even though the safety part of our guarantee continues to hold without it.
\end{remark}

\paragraph{Connection to Covariate Shift in Linear Models.}
The balancedness assumption has a similar one-sided flavor to coverage assumptions in linear-model transfer under covariate shift.
There, source-to-target transfer is typically controlled by a Loewner comparison between the second moment matrices of the training and evaluation distributions: the directions that matter for the target risk must be sufficiently covered by the source design, while no uniform lower bound on every target covariance direction is required \citep{wang2026pseudo}.
Our condition plays the same role in the multi-task contaminated setting.
The aggregate inlier covariance \(\bSigma_\Sb\) acts as the shared source of information, and the requirement \(\bSigma_j \preceq B\bSigma_\Sb\) says that each task's prediction geometry is covered by that aggregate inlier geometry up to the balancedness constant \(B\).
Thus the assumption is not a disguised eigenvalue lower bound; it is a one-sided coverage condition that identifies when information can be transferred safely from the good tasks while still permitting singular or highly anisotropic designs.

\paragraph{Empirical Diagnostic.}
Because Assumption~\ref{assumption; comparability} is stated in terms of empirical second moments, it also suggests a simple diagnostic:
\begin{align*}
   & B_{\mathrm{emp}}
:=
\max_{j\in[m]}
\lambda_{\max}\!\left(\bSigma_{[m]}^{\dagger/2}\bSigma_j\bSigma_{[m]}^{\dagger/2}\right),
\\
&\bSigma_{[m]}:=\frac{1}{m}\sum_{k=1}^m\bSigma_k.
\end{align*}
When the outlier fraction is moderate, \(\bSigma_{[m]}\) is a practical proxy for the unknown inlier average \(\bSigma_\Sb\), so \(B_{\mathrm{emp}}\) provides a rough estimate of how favorable the transfer geometry is.
Large values of \(B_{\mathrm{emp}}\) indicate that one or a few tasks have covariate directions poorly represented by the task average.
This diagnostic is not used by the estimator, but it can help interpret a dataset or screen a small number of tasks that make transfer geometry unfavorable.

\subsection{Examples of Assumption~\ref{assumption; comparability}}\label{subsection; balancedness example}
We illustrate the generality of Assumption~\ref{assumption; comparability} through the following examples, covering both well-conditioned and degenerate regimes.

\paragraph{Example 1 (LBSM Regime).}
Suppose the standard eigenvalue bounds hold: $\rho \Ib_d \preceq \bSigma_j \preceq L\Ib_d$ for all $j$.
Then $\bSigma_j \preceq L \Ib_d = \frac{L}{\rho} (\rho \Ib_d) \preceq \frac{L}{\rho} \bm{\bSigma}_\Sb$.
Thus, Assumption~\ref{assumption; comparability} holds with $B = L/\rho$.
Our analysis therefore replaces direct lower-eigenvalue dependence by the relative constant \(B\), which equals \(L/\rho\) under this classical well-conditioned regime but remains meaningful far beyond it.

\paragraph{Example 2 (Pairwise Comparable).}
If $\bSigma_j \preceq B' \bSigma_k$ for all pairs $j, k$, then summing over $k \in \Sb$ and normalizing implies $\bSigma_j \preceq B' \bm{\bSigma}_\Sb$. Thus, the assumption holds with $B=B'$.

\paragraph{Example 3 (Degenerate/Low-Rank Covariates).}
Consider a degenerate setting where $\bSigma_j = \Ib_d$ for a subset $\Ib \subset [m]$ and $\bSigma_j = \mathbf{0}$ otherwise.
Provided that the inliers overlap with the informative tasks (i.e., $\Sb \cap \Ib \neq \emptyset$), a direct calculation shows that Assumption~\ref{assumption; comparability} holds with the finite constant $B = \frac{|\Sb|}{|\Sb \cap \Ib|}$.

\subsection{Balancedness for Task-wise i.i.d. Covariates}
\label{subsection; assumption example expected second moment}
While our analysis primarily relies on the empirical quantities $\bSigma_j$, it is instructive to relate Assumption~\ref{assumption; comparability} to population properties when covariates are stochastic.
Consider the setting where $x_{ji} \overset{\text{i.i.d.}}{\sim} \cP_j$ for each task $j$, and let $\bar{\bSigma}_j := \EE_{x\sim\cP_j}[xx^\top]$.

We define the \textit{comparability measure} $\nu_j \ge 1$ as the smallest constant satisfying:
\begin{align}\label{definition; comparability measure}
\nu_j^{-1} \bSigma_j \preceq \bar{\bSigma}_j \preceq \nu_j \bSigma_j.
\end{align}
This constant captures the concentration of the empirical covariance around its mean.
Standard matrix concentration theory guarantees that $\nu_j \approx 1$ with high probability once the sample size $n$ satisfies mild conditions.

\paragraph{Sample Complexity for Sub-Exponential Covariates.}
If \(x_{ji}\) follows a strongly sub-Gaussian or sub-exponential distribution with parameter \(K = \cO(1)\), then by Theorem 4.7.1 in \citet{vershynin2018high,ding2024sub}, we typically have 
\(
\nu_j \lesssim 1
\)
once \(n = \tilde\Omega(d)\).
Standard distributions such as Gaussian and Uniform satisfy this condition.
Specifically, if the coordinates of \(\bar\bSigma_j^{-1/2}x_{ji}\) have \(\cO(1)\) ninth moments, or if \(\bar\bSigma_j^{-1/2}x_{ji}\) itself satisfies a \(\tilde{\cO}(1)\) high-probability \(\ell_2\)-norm bound, then a sample complexity of \(\tilde{\cO}(d)\) suffices \citep{oliveira2016lower}.
We provide a more detailed discussion regarding general sub-exponential covariates in Section~\ref{section; detail sample complexity} and Lemma~\ref{lemma; sample complexity}.
Even for general sub-exponential covariates, \(\nu_j = \tilde{\cO}(1)\) holds provided that \(n\) satisfies a mild sample complexity requirement.
Furthermore, if $\bar{\bSigma}_j$ satisfies the LBSM condition, standard matrix concentration results imply \(\nu_j = \mathcal{O}(1)\) as long as $n = \tilde{\Omega}(d)$ \citep{vershynin2018high}.

Using this measure, we can reformulate the balancedness assumption in terms of population quantities:
\begin{remark}[Population Balancedness]\label{example; bounded second moments}
Define the average population covariance \(\bar \bSigma_{\Sb}:= \frac{1}{|\Sb|} \sum_{j \in \Sb} \bar{\bSigma}_j\).
If the population covariances satisfy $\bar{\bSigma}_j \preceq \bar{B}\bar{\bSigma}_\Sb$, then the empirical Assumption~\ref{assumption; comparability} holds with $B \lesssim (\max_j \nu_j)^2 \bar{B}$.
\end{remark}

\section{Results for Linear Model}\label{section; results-linear}

In this section, we derive theoretical guarantees for the linear model.
Unlike prior analyses, our guarantees depend on the relaxed balancedness constant $B$ rather than the minimum eigenvalue of $\bSigma_j$.
They should be read in two layers: a universal safety bound, and a sharper transfer bound when the geometry is favorable.

We fix a failure probability $\kappa \in (0,1)$ and define $\zeta := \log(16m/\kappa)$.
Throughout this section, we maintain the balanced-sample-size convention \(n_j=n\); the general case is treated in Appendix~\ref{section; general sample size}.

\subsection{In-sample MSE Bound}
We analyze Algorithm~\ref{algorithm; MTLR} with the regularization parameter scaled as $\lambda_j = q\sqrt{d\zeta/n}$ for a sufficiently large universal constant $q \gtrsim 1$.
The following theorem establishes the in-sample performance.

\begin{theorem}[In-sample MSE]\label{theorem; MSE bound}
Suppose the task relatedness condition (Definition~\ref{definition; task relatedness}) holds.
With probability at least $1-\kappa$, the estimator $\hat{\theta}_j$ obtained by Algorithm~\ref{algorithm; MTLR} with \(\Cb= \RR^{d}\) satisfies the following bounds simultaneously for all $j \in [m]$:

\begin{itemize}
\item \textbf{Safety Guarantee:} Regardless of the balancedness constant $B$ or the parameters $\varepsilon$ and $\delta$,
\[
\cE_j^{\operatorname{in}}(\hat{\theta}_j) \;\lesssim\; q^2 \frac{d}{n} \zeta.
\]
This matches the minimax rate for independent-task linear regression.

\item \textbf{Adaptivity and Robustness:} If Assumption~\ref{assumption; comparability} holds with $B \lesssim \min(1/\varepsilon, m)$, then for all inlier tasks $j \in \Sb$:
\[
\begin{aligned}
\cE_j^{\operatorname{in}}(\hat{\theta}_j)
&\;\lesssim\; \biggl( \frac{B d}{m n}
    + \min\left( B\delta^2, \, q^2 \frac{d}{n} \right) 
    + q^2 B^2 \varepsilon^2 \frac{d}{n} \biggr) \zeta.
\end{aligned}
\]
\end{itemize}
\end{theorem}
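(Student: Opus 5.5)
The proof first profiles out the task parameters and then studies the resulting robust problem in the centroid $\beta$. Both parts of the statement should come from this one reduction.

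\textbf{Step 1: first-order conditions and noise events.} Fix $\beta$ and minimize over $\theta_j$. Writing $u_j:=\bSigma_j^{1/2}(\theta_j-\theta_j^\star)$ and $\xi_j:=\frac1n\bSigma_j^{\dagger/2}\Xb_j^\top\boldsymbol\varepsilon_j$, the task-$j$ objective becomes, up to constants, $\frac12\|u_j-\xi_j\|_2^2+\lambda\|u_j-\bSigma_j^{1/2}(\beta-\theta_j^\star)\|_2$. This is a Huber-type proximal problem in whitened coordinates, so the task weakly observed directions never enter. Sub-Gaussianity and a union bound over $m$ tasks give the event
\[
\max_j \|\xi_j\|_2^2 \lesssim \frac{d\zeta}{n},
\]
with probability $1-\kappa/2$, since $\|\xi_j\|_2^2$ is a quadratic form in the noise with a projection of rank at most $d$. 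On this event, $\lambda=q\sqrt{d\zeta/n}\ge 2\max_j\|\xi_j\|$.

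\textbf{Step 2: safety.} The optimality condition for $\theta_j$ reads $u_j-\xi_j=-\lambda g_j$, where $g_j$ is a subgradient of the Euclidean norm, so $\|g_j\|\le1$. Hence $\|u_j\|\le\|\xi_j\|+\lambda$, that is,
\[
\cE^{\mathrm{in}}_j(\hat\theta_j)\lesssim q^2\frac{d\zeta}{n},
\]
for every $j$ and every $\hat\beta$. This requires no assumption on $B$, $\varepsilon$ or $\delta$, and gives the safety part.

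\textbf{Step 3: analysis of $\hat\beta$.} The profiled objective in $\beta$ is a sum of Huber-type functions $h_j(\bSigma_j^{1/2}(\beta-\theta_j^\star)-\xi_j)$. Each $h_j$ is quadratic inside radius $\lambda$ and grows linearly with slope $\lambda$ outside. Its gradient is $\bSigma_j^{1/2}\Pi_\lambda(\cdot)$, where $\Pi_\lambda$ is projection onto the ball of radius $\lambda$.

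Write $\Delta:=\hat\beta-\theta^\star$ and compare with the first-order condition at $\hat\beta$. Outliers contribute a gradient of $\bSigma_j^{1/2}$-norm at most $\lambda$. Pairing with $\Delta$ and using Cauchy--Schwarz gives a total outlier contribution at most $\lambda\sum_{j\in\Sb^c}\|\Delta\|_{\bSigma_j}$.

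Inliers with $\|\Delta\|_{\bSigma_j}+\delta+\|\xi_j\|\lesssim\lambda$ lie in the quadratic regime. There the restricted curvature is $\sum_{j\in\Sb}\|\Delta\|_{\bSigma_j}^2=|\Sb|\,\|\Delta\|_{\bSigma_\Sb}^2$, and the linear terms are the aggregate noise $\sum_{j\in\Sb}\xi_j$ together with the bias $\bSigma_j^{1/2}(\theta_j^\star-\theta^\star)$. Assumption~\ref{assumption; comparability} converts every $\|\Delta\|_{\bSigma_j}$ into $\sqrt B\|\Delta\|_{\bSigma_\Sb}$.

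Putting these together should give
\[
|\Sb|\,\|\Delta\|_{\bSigma_\Sb}^2\lesssim \Bigl\|\textstyle\sum_{j\in\Sb}\Xb_j^\top\boldsymbol\varepsilon_j/n\Bigr\|_{\Vb_\Sb^{\dagger}/n}\,\|\Delta\|_{\bSigma_\Sb}\sqrt{n}\;+\;|\Sb|\sqrt{B}\,\delta\,\|\Delta\|_{\bSigma_\Sb}\;+\;\varepsilon m\lambda\sqrt B\,\|\Delta\|_{\bSigma_\Sb}.
\]
The pooled noise term is of order $\sqrt{|\Sb| d\zeta/n}$. Dividing through yields
\[
B\|\Delta\|_{\bSigma_\Sb}^2\lesssim \frac{Bd\zeta}{mn}+B^2\delta^2+q^2B^2\varepsilon^2\frac{d\zeta}{n}.
\]
The $\delta$ term needs care to reach $B\delta^2$ rather than $B^2\delta^2$. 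The plan is to measure the bias in each task's own norm: using $\|\theta_j^\star-\theta^\star\|_{\bSigma_j}\le\delta$ directly gives a bias contribution of $|\Sb|\delta\cdot\sqrt B\|\Delta\|_{\bSigma_\Sb}$.

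\textbf{Step 4: main obstacle, justifying the quadratic regime.} Before Step 3 can be used, $\Delta$ must be small enough that the inliers are not clipped. I would use a localization argument. Consider the convex combination $\theta^\star+t\Delta$ and use monotonicity of the gradient of a convex function, so the conclusion holds at a point inside the region where the restricted strong convexity is valid; convexity then transfers it to $\hat\beta$. The requirement $B\lesssim\min(1/\varepsilon,m)$ is exactly what keeps the outlier term and the pooled noise term below $\lambda$ after conversion.

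\textbf{Step 5: task-level bound and the $\min$.} For each $j\in\Sb$, the prox structure gives
\[
\|u_j\|\lesssim\min\bigl(\|\bSigma_j^{1/2}(\hat\beta-\theta_j^\star)\|+\|\xi_j\|,\;\lambda+\|\xi_j\|\bigr),
\]
and moreover $u_j$ equals the shrinkage of $\xi_j$ toward $\bSigma_j^{1/2}(\hat\beta-\theta_j^\star)$. We have $\|\hat\beta-\theta_j^\star\|_{\bSigma_j}^2\le 2B\|\Delta\|_{\bSigma_\Sb}^2+2\delta^2$.

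The naive $\|\xi_j\|^2\asymp d/n$ term must be avoided. Instead, when the task is in the quadratic regime, $\hat\theta_j=\hat\beta$ exactly, because the norm penalty is nonsmooth at zero and $\lambda$ dominates $\|\xi_j\|$ plus the bias. So $\cE^{\mathrm{in}}_j=\|\hat\beta-\theta_j^\star\|_{\bSigma_j}^2$ up to constants. Combining this with the safety bound gives the $\min(B\delta^2,q^2d/n)$ term, and the overall rate follows.
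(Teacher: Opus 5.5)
Your outline is sound, but Step~3 does not deliver the claimed $B\delta^2$ term, and the remedy you propose leaves the problem unchanged. Write $\eta_j:=\theta_j^\star-\theta^\star$ and $\langle a,b\rangle_{\bSigma_j}:=a^\top\bSigma_j b$. Your master inequality bounds the inlier bias by converting each $\|\Delta\|_{\bSigma_j}$ into $\sqrt B\|\Delta\|_{\bSigma_\Sb}$. Your ``fix'' (measuring $\eta_j$ in its own norm) gives $|\Sb|\,\delta\sqrt B\|\Delta\|_{\bSigma_\Sb}$, which is exactly the same term.

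Dividing through then gives $\|\Delta\|_{\bSigma_\Sb}\lesssim\sqrt B\delta$, and hence $\|\hat\beta-\theta_j^\star\|_{\bSigma_j}\le\sqrt B\|\Delta\|_{\bSigma_\Sb}+\delta\lesssim B\delta$. The localization radius $\lambda/(2\sqrt B)$ would also force $B\delta\lesssim\lambda$. The case split therefore yields only $\min(B^2\delta^2,q^2d\zeta/n)$, which is the GLM rate of Theorem~\ref{theorem; MSE bound GLM}, not the linear one. The missing idea is that the inlier bias should not pass through balancedness at all. Cauchy--Schwarz across tasks gives
\[
\Bigl|\sum_{j\in\Sb}\langle\eta_j,\Delta\rangle_{\bSigma_j}\Bigr|
\le\delta\sum_{j\in\Sb}\|\Delta\|_{\bSigma_j}
\le\delta\Bigl(|\Sb|\sum_{j\in\Sb}\|\Delta\|_{\bSigma_j}^2\Bigr)^{1/2}
=|\Sb|\,\delta\,\|\Delta\|_{\bSigma_\Sb}.
\]
This yields $\|\Delta\|_{\bSigma_\Sb}\lesssim\delta+\sqrt{d\zeta/(mn)}+\varepsilon\sqrt B\lambda$, so $\sqrt B$ is paid only once, in the final conversion to $\|\cdot\|_{\bSigma_j}$. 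This is the analogue of the paper's hat-matrix bound $\text{Bias}\le\sqrt{B/N_\Sb}\,\|\eb\|_2\le\sqrt B\delta$ for the pooled estimator.

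You should also make the case split explicit. Localization needs $\sqrt B\delta$, the pooled noise, and $\varepsilon B\lambda$ each to be below a small multiple of $\lambda$. The condition $B\lesssim\min(1/\varepsilon,m)$ handles the last two but not the first. When $\sqrt B\delta\gtrsim\lambda$ you must fall back on Step~2; this is the paper's Case~2, and it is what produces the $\min$.

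With that repair, your argument is a valid proof by a genuinely different decomposition.
\begin{itemize}
\item The paper never writes the profiled objective explicitly. It first bounds the bias and variance of the inlier pooled OLS $\hat\beta_{F_\Sb}$ in closed form and shows it lies in every inlier's invariant region, so $\hat\beta_{G_\Sb}=\hat\beta_{F_\Sb}$. It then establishes curvature $|\Sb|\Ib_d$ of $\bar G_\Sb$ on the pooled-whitened ball of radius $\lambda/(2\sqrt B)$. Finally it invokes the ARMUL perturbation lemma with outlier Lipschitz constant $m\varepsilon\sqrt B\lambda$.
\item You instead profile exactly to a sum of Huber functions $H_\lambda\bigl(\bSigma_j^{1/2}(\beta-\theta_j^\star)-\xi_j\bigr)$ and localize around $\theta^\star$ in a single pass. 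Noise and bias enter as linear terms of the first-order condition, and your ray-and-monotonicity step is the proof of the perturbation lemma inlined.
\end{itemize}
Your version is more self-contained. Both the safety bound and the zero-seminorm personalization (that is, $\|\hat\theta_j-\hat\beta\|_{\bSigma_j}=0$ whenever $\|\bSigma_j^{1/2}(\hat\beta-\theta_j^\star)-\xi_j\|_2\le\lambda$) follow deterministically from the prox formula on the noise event. The paper's modular route (invariant region, pooling lemma, perturbation lemma) is what it reuses for GLMs and for heterogeneous sample sizes, where no closed-form Huber profile is available.
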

\paragraph{Discussion.}
Compared to \citet{duan2023adaptive}, our result avoids dependence on the minimum eigenvalue of the second moments.
Part (a) is a universal safety statement: regardless of $B$, $\varepsilon$, or $\delta$, the estimator never pays more than the independent-task rate up to logarithmic factors.
Part (b) becomes active only when $B$ is moderate and tasks are related.
In this favorable regime, the inlier guarantee, together with the safety bound for outlier tasks, recovers the same \(m,n,\delta,\varepsilon\) rate structure as the robust MTL guarantee of \citet{duan2023adaptive}, but without imposing LBSM; the covariate-geometry dependence enters through the relative balancedness constant \(B\) rather than through a taskwise lower eigenvalue.
Under classical LBSM with \(B=L/\rho\), this replaces the lower-eigenvalue-driven factors in the prior Euclidean-parameter analysis by the relative balancedness factor \(B\), which can remain finite even when no uniform lower eigenvalue exists.
Specializing back to the LBSM regime, our prediction-MSE bound has a milder \(\rho\)-dependence than the MSE bound obtained from the analysis of \citet{duan2023adaptive}.

\paragraph{Discussion of Optimality.}
Indeed, averaging the theorem bounds over all tasks gives
\begin{align*}
    &\frac{1}{m}\sum_{j=1}^m \cE_j^{\operatorname{in}}(\hat{\theta}_j)
\lesssim
\left(\frac{B d}{mn}+\min\left(B\delta^2,\, q^2\frac{d}{n}\right)+q^2B^2\varepsilon^2\frac{d}{n}+\varepsilon q^2\frac{d}{n}\right)\zeta,
\end{align*}
which, in favorable regimes with moderate \(B\), has the same \(m,n,\delta,\varepsilon\) dependence as the robust MTL benchmark of \citet{duan2023adaptive} when viewed through prediction risk.
The matching lower-bound comparison is most naturally stated in the fixed-design setting and is already witnessed by an orthogonal-design submodel: when \(\Xb_j^\top \Xb_j=n\Ib_d\) and the noise is Gaussian, the observations reduce to independent Gaussian sequence observations \(n^{-1}\Xb_j^\top \Yb_j=\theta_j^\star+N(0,\Ib_d/n)\), with \(B=1\).
Specializing the minimax lower-bound construction of \citet{duan2023adaptive} to this submodel yields the additive lower bound
\[
\frac{d}{mn}+\min\left\{\delta^2,\frac{d}{n}\right\}+\varepsilon\frac{d}{n}.
\]
Since in-sample prediction MSE coincides with squared Euclidean parameter error in this orthogonal submodel, the task-overall in-sample MSE above is minimax optimal with respect to \(m,n,\delta,\varepsilon\) in the favorable balanced regime, up to logarithmic and balancedness-dependent factors.

\subsection{Population MSE Bound}
We now extend our guarantees to the population MSE $\cE_j(\hat{\theta}_j)$ (defined in \eqref{equation; MSE}) under the assumption that covariates are task-wise i.i.d.
Recall the definition of the comparability constant $\nu_j$ from \eqref{definition; comparability measure}, which relates empirical and population covariances ($\bar{\bSigma}_j \preceq \nu_j \bSigma_j$).

\begin{theorem}[Population MSE]\label{theorem; population MSE}
In the task-wise i.i.d. setting, under the conditions of Theorem~\ref{theorem; MSE bound}, the following population bounds hold with probability at least \(1-\kappa\):
\begin{enumerate}
\item \textbf{Via Comparability:} For all tasks,
\[
\cE_j(\hat{\theta}_j) \;\le\; \nu_j \cdot \cE_j^{\operatorname{in}}(\hat{\theta}_j).
\]
\item \textbf{Via Intrinsic Dimension (Safety):} Let $U_j$ be a high-probability upper bound on $\|x_{ji}\|_2$ in the sense formalized in Definition~\ref{definition; hp norm bound} of Appendix~\ref{section; detail sample complexity}. If the parameter domain is known to be bounded, \(\Cb = \mathbf{B}(0,\xi)\), and \(\theta_j^\star\in\Cb\), then:
\[
\cE_j(\hat{\theta}_j^\xi) \;\lesssim\; \cE_j^{\operatorname{in}}(\hat{\theta}_j) + \tilde{\cO}\big(\frac{\xi^2 U_j^2}{n}\big),
\]
where \(\hat{\theta}_j^\xi\) is the \(\|\cdot\|_{\bSigma_j}\)-norm projection of \(\hat{\theta}_j\) onto \(\mathbf B(0,\xi)\).
\end{enumerate}
\end{theorem}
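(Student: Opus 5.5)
Part 1 follows directly from the definition of the comparability measure \eqref{definition; comparability measure}. Since $\bar{\bSigma}_j \preceq \nu_j \bSigma_j$, evaluating both quadratic forms at $\hat\theta_j-\theta_j^\star$ gives
\[
\cE_j(\hat\theta_j)=\|\hat\theta_j-\theta_j^\star\|_{\bar\bSigma_j}^2\le \nu_j\|\hat\theta_j-\theta_j^\star\|_{\bSigma_j}^2=\nu_j\,\cE_j^{\operatorname{in}}(\hat\theta_j).
\]
This holds deterministically on the event of Theorem~\ref{theorem; MSE bound}, so the probability $1-\kappa$ carries over unchanged. The only subtlety is that $\nu_j$ is random, defined through the sample, so the statement is pathwise. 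If a high-probability bound on $\nu_j$ is wanted, one would intersect with the concentration event and split $\kappa$.

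For Part 2 the plan has two steps.

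\emph{Step one: the projection does not hurt the in-sample error.} Since $\theta_j^\star\in\mathbf B(0,\xi)$ and the ball is convex, the $\|\cdot\|_{\bSigma_j}$-projection is nonexpansive toward points of the set. This is the Pythagorean/obtuse-angle inequality for projections in the seminorm $\|\cdot\|_{\bSigma_j}$. If $\bSigma_j$ is singular, the projection is defined as any minimizer, and the variational inequality $\langle \hat\theta_j-\hat\theta_j^\xi,\theta-\hat\theta_j^\xi\rangle_{\bSigma_j}\le 0$ for all $\theta$ in the ball still holds by first-order optimality. It follows that $\|\hat\theta_j^\xi-\theta_j^\star\|_{\bSigma_j}\le\|\hat\theta_j-\theta_j^\star\|_{\bSigma_j}$. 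The $\cE^{\operatorname{in}}_j$ bound therefore transfers to $\hat\theta_j^\xi$, and moreover $\Delta:=\hat\theta_j^\xi-\theta_j^\star$ lies in $\mathbf B(0,2\xi)$.

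\emph{Step two: a uniform comparison of empirical and population quadratic forms over a bounded class.} I would consider the function class $\{x\mapsto (x^\top\Delta)^2:\|\Delta\|_2\le 2\xi\}$, which is bounded by $4\xi^2U_j^2$ on the high-probability event $\|x_{ji}\|_2\le U_j$ from Definition~\ref{definition; hp norm bound}. I would then apply a localized uniform deviation bound (Bernstein/Talagrand with local Rademacher complexity, or an offset-type ratio argument for nonnegative squares) of the form
\[
\|\Delta\|_{\bar\bSigma_j}^2\le 2\|\Delta\|_{\bSigma_j}^2+C\frac{\xi^2U_j^2\log(\cdot)}{n}.
\]
Because only an upper bound on the population form is needed, the one-sided lower-tail control for nonnegative variables (as in \citet{oliveira2016lower}) suffices. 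It is essential to avoid any $d$-dependent complexity term: the Rademacher complexity of linear functionals on an $\ell_2$ ball of radius $\xi$ with $\|x\|\le U_j$ is $\xi U_j/\sqrt n$, which combined with boundedness produces the $\xi^2U_j^2/n$ rate via the fixed-point argument. Truncation handles the portion of mass outside the event $\|x\|\le U_j$, and a union bound over $j$ (with $\kappa$ split) handles all tasks simultaneously.

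The main obstacle is Step two: obtaining the fast $1/n$ rate rather than the slow $\xi U_j\cdot\|\Delta\|/\sqrt n$ rate without paying for dimension. I would first attempt the star-shaped localization argument for squared linear classes: the fixed point of $r\mapsto \xi U_j\sqrt{r/n}$ scales as $\xi^2U_j^2/n$. I would only fall back to Mendelson's small-ball method if the boundedness is too weak, for instance when $U_j$ is a high-probability bound rather than an almost-sure one.
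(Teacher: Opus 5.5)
Your Part~1 and Step one are the paper's argument: you use the definition of $\nu_j$, nonexpansiveness of the $\|\cdot\|_{\bSigma_j}$-projection toward $\theta_j^\star\in\mathbf B(0,\xi)$, and $\|\hat\theta_j^\xi-\theta_j^\star\|_2\le 2\xi$.

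You diverge in Step two. The paper uses no empirical-process localization. It proves the operator inequality $\bar\bSigma_j\preceq c\,(\bSigma_j+\frac{K^2}{n}\Ib)$ (Lemma~\ref{lemma; U/n}), with $K\asymp U_j\log(nU_j/(\gamma\kappa))$, via truncation, a one-sided matrix Bernstein bound for the truncated vectors, and the peeling estimate of Lemma~\ref{lemma; tail_bound}. Evaluating this Loewner inequality at $\Delta=\hat\theta_j^\xi-\theta_j^\star$ gives $\|\Delta\|_{\bar\bSigma_j}^2\lesssim\|\Delta\|_{\bSigma_j}^2+\xi^2K^2/n$ immediately, and uniformity over the data-dependent $\Delta$ comes for free.

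Your uniform ratio bound for $\{x\mapsto(x^\top\Delta)^2\}$ can reach the same conclusion, but be precise about where the $\sqrt r$ localization comes from. Lipschitz contraction with the crude constant $4\xi K$ only yields a deviation of order $\xi^2K^2/\sqrt n$. Localizing the linear class in $\|\cdot\|_{\bar\bSigma_j}$ instead gives a rate that depends on the spectrum of $\bar\bSigma_j$, or on $d$. The dimension-free fixed point $r\mapsto\xi K\sqrt{r/n}$ needs two ingredients:
\begin{itemize}
\item the self-bounding identity $t^2-s^2=(t+s)(t-s)$, which makes the empirical-$L_2$ Lipschitz constant of the square of order $\sqrt r$ on $\{P_nf\le r\}$;
\item $L_\infty$-covering/Dudley bounds for linear predictors, as in the smooth-loss ``optimistic rate'' argument.
\end{itemize}
This costs extra $\log^{3}n$ factors, which $\tilde{\cO}$ absorbs.

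There is also a bookkeeping point. Definition~\ref{definition; hp norm bound} only gives an exponential tail at scale $U_j$, so $\{\|x_{ji}\|_2\le U_j\ \forall i\}$ is not a high-probability event. You must truncate at $K\asymp U_j\log(n/\kappa)$, as the paper does.

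Comparing the two: the paper's matrix route is shorter and fully uniform, but it is tied to quadratic forms of linear predictors. Yours is heavier but would extend to other constraint sets or nonlinear classes. Because $\|\Delta\|_2\le 2\xi$, your route can also absorb the truncation residual additively, without the minimum nonzero eigenvalue $\gamma$ that Lemma~\ref{lemma; tail_bound} invokes.
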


\paragraph{Discussion.}
The population bound introduces an additional empirical-to-population comparability factor, \(\nu_j\).
When \(\nu_j \lesssim 1\), which holds for many sub-Gaussian or sub-exponential designs once \(n \gtrsim d\), the favorable in-sample transfer guarantee carries over directly to population MSE.
Even when \(\nu_j\) is large, the intrinsic-dimension bound still provides a second safety layer through \(\cE_j(\hat\theta_j^\xi) \lesssim \cE_j^{\mathrm{in}}(\hat\theta_j)+\xi^2U_j^2/n\).
The high-probability norm parameter \(U_j\) is defined precisely in Appendix~\ref{section; detail sample complexity}: it is a tail scale for \(\|x_{ji}\|_2\).
The bounded-domain condition is extra prior information on the parameter scale; it is not needed for the in-sample theorem, but it gives a population fallback when empirical-to-population comparability is unfavorable.
For the linear model, the projection post-processing is only used for this fallback: since it is taken in the same \(\|\cdot\|_{\bSigma_j}\)-norm as the in-sample risk and \(\theta_j^\star\in\mathbf B(0,\xi)\), it does not increase the in-sample prediction error, i.e., \(\cE_j^{\operatorname{in}}(\hat{\theta}_j^\xi)\le \cE_j^{\operatorname{in}}(\hat{\theta}_j)\).
Thus, when prior knowledge of the bounded parameter set \(\Cb\) is available, the projected estimator is automatically no worse for in-sample MSE, while Theorem~\ref{theorem; population MSE} provides an additional safety guarantee for population MSE.
For sub-Gaussian covariates and constant \(\xi\), standard norm concentration gives \(U_j=\tilde{\cO}(\sqrt{d})\), so this term still matches the ITL rate up to logarithmic factors.
Moreover, whenever \(\nu_j\) is moderate, or more generally whenever \(U_j^2 \ll d\) because the design is effectively low-dimensional, the resulting population bound is strictly sharper than ITL.
In this sense, \(\nu_j\) determines when the transfer improvement lifts to population risk, while the \(U_j\)-based bound preserves safety even outside that regime.


\section{Results for Generalized Linear Model}\label{section; GLM}
We now extend the analysis to generalized linear models (GLMs) \citep{van2008high}.
Throughout this section, we keep the balanced-sample-size convention \(n_j = n\).

For each task \(j \in [m]\) and sample \(i \in [n]\), the conditional distribution is given by:
\[
\PP\bigl[y_{ji} \,\bigm|\, x_{ji}\bigr]
= \rho(y_{ji})\exp\bigl(y_{ji} \cdot x_{ji}^\top \theta_j^\star -\psi(x_{ji}^\top \theta_j^\star)\bigr),
\]
where \(\psi\) is a convex function, and we define the link function as \(\psi'(z) = m(z)\).
This implies \(\EE[y_{ji}\mid x_{ji}] = m(x_{ji}^\top \theta_j^\star)\).
We define the stochastic noise as \(\varepsilon_{ji} := y_{ji}- m(x_{ji}^\top \theta_j^\star)\).
We assume that the pairs \((x_{ji},y_{ji})\) for \(i,j \ge 1\) are sampled independently.
We adopt standard GLM assumptions \citep{pmlr-v139-oh21a,tian2025learning} and, for simplicity, take \(\varepsilon_{ji}\) to be $1$-sub-Gaussian.

\begin{assumption}\label{assumption; GLM domain}
We assume that \(\theta_j^\star, \theta^\star \in \mathbf{B}(0, \xi)\) for some constant \(\xi > 0\) known to the learner.
Furthermore, we assume there exists an interval \(\mathcal{I}\subset\RR\) such that for all tasks \(j\), samples \(i\), and parameters \(\theta \in \Cb=\mathbf{B}(0,\xi)\), the linear predictor satisfies \(x_{ji}^\top\theta \in \mathcal{I}\).
On this interval, the link curvature is uniformly bounded:
\[
\alpha_\ell \le m'(z)=\psi''(z)\le \alpha_u,\qquad \forall z\in\mathcal{I},
\]
for universal constants \(\alpha_\ell,\alpha_u>0\).
\end{assumption}

\paragraph{In-Sample and Population MSE.}
Define the GLM in-sample prediction error by
\(\cR_j^{\operatorname{in}}(\theta):=\frac{1}{n} \sum_{i=1}^{n} |m(x_{ji}^\top \theta_j^\star) -m(x_{ji}^\top \theta)|^2\).
As in the linear case, when covariates are task-wise i.i.d.\ (i.e., \(x_{ji}\sim \cP_j\) independently within each task), we also define the population prediction error by
\(\cR_j(\theta):=\EE_{x_{ji} \sim \cP_j} |m(x_{ji}^\top \theta_j^\star) -m(x_{ji}^\top \theta)|^2\).
Under Assumption~\ref{assumption; GLM domain}, the mean value theorem gives
\begin{align*}
\cR^{\operatorname{in}}_j (\hat{\theta}_j)  \lesssim \cE^{\operatorname{in}}_j (\hat{\theta}_j), \quad      \cR^{\operatorname{}}_j (\hat{\theta}_j)  \lesssim \cE^{\operatorname{}}_j (\hat{\theta}_j).
\end{align*}
Thus it suffices to control \( \cE^{\operatorname{in}}_j (\hat{\theta}_j) \) and \(\cE^{\operatorname{}}_j (\hat{\theta}_j)\).

\paragraph{Loss Function and Algorithm for GLM.}
In Algorithm~\ref{algorithm; MTLR}, for the GLM setup, we construct the loss function \(\cL\) using:
\[
f_j(\theta) := \frac{1}{n}\sum_{i=1}^n \Bigl(\psi(x_{ji}^\top \theta) - y_{ji} x_{ji}^\top \theta\Bigr).
\]
We set the domain of the parameter to \( \Cb = \mathbf{B}(0,\xi)\). Apart from these changes, the procedure follows the methodology outlined in Section~\ref{section; methodologies}.

\paragraph{In-Sample MSE Bounds.}\;
The GLM analogue of Theorem~\ref{theorem; MSE bound} is as follows.
Recall that \(\zeta := \log(16m/\kappa)\).

\begin{theorem}[In-sample MSE: GLM]\label{theorem; MSE bound GLM}
Suppose the task relatedness condition (Definition~\ref{definition; task relatedness}) and GLM regularity assumptions (Assumption~\ref{assumption; GLM domain}) hold.
With probability at least $1-\kappa$, the estimator $\hat{\theta}_j$ obtained by Algorithm~\ref{algorithm; MTLR} with $\Cb = \mathbf{B}(0, \xi)$ and $\lambda_j = q\sqrt{d\zeta/n}$ ($q \gtrsim 1$) satisfies the following bounds simultaneously for all $j \in [m]$:

\begin{itemize}
\item \textbf{Safety Guarantee (Universal):} Regardless of the balancedness constant $B$ or the parameters $\varepsilon$ and $\delta$,
\[
\cE_j^{\operatorname{in}}(\hat{\theta}_j) \;\lesssim\; q^2 \frac{d}{n} \zeta.
\]
This matches the known independent-task GLM rate.

\item \textbf{Adaptivity to Relatedness (Inliers):} If Assumption~\ref{assumption; comparability} holds with $B \lesssim \min(1/\varepsilon, m)$, then for all inlier tasks $j \in \Sb$:
\[
\begin{aligned}
\cE_j^{\operatorname{in}}(\hat{\theta}_j)
&\;\lesssim\; \biggl( \frac{B d}{m n}
    + \min\left( B^2\delta^2, \, q^2 \frac{d}{n} \right) 
    + q^2B^2 \varepsilon^2 \frac{d}{n} \biggr) \zeta.
\end{aligned}
\]
\end{itemize}
\end{theorem}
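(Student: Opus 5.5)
The plan is to reduce the GLM case to the linear-model argument behind Theorem~\ref{theorem; MSE bound}. Assumption~\ref{assumption; GLM domain} gives restricted strong convexity and smoothness on \(\Cb=\mathbf B(0,\xi)\). For \(\theta,\theta'\in\Cb\), a second-order Taylor expansion of \(f_j\) along the segment (which stays in \(\Cb\) by convexity, so every linear predictor lies in \(\mathcal I\)) yields
\[
\tfrac{\alpha_\ell}{2}\|\theta-\theta'\|_{\bSigma_j}^2\le f_j(\theta)-f_j(\theta')-\langle\nabla f_j(\theta'),\theta-\theta'\rangle\le \tfrac{\alpha_u}{2}\|\theta-\theta'\|_{\bSigma_j}^2 .
\]
At the truth, \(\nabla f_j(\theta_j^\star)=-\frac1n\Xb_j^\top\bm\varepsilon_j\), which has the same form as in the linear model. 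The key quantity is therefore the dual-norm noise \(\|\bSigma_j^{\dagger/2}\nabla f_j(\theta_j^\star)\|_2=\frac1{\sqrt n}\|P_j\bm\varepsilon_j/\sqrt n\|_2\), with \(P_j\) the projection onto the column space of \(\Xb_j\). A Hanson--Wright / sub-Gaussian quadratic-form bound plus a union bound over \(m\) tasks gives \(\lesssim\sqrt{d\zeta/n}\) for all \(j\) simultaneously with probability \(1-\kappa\). I would define this as the good event \(\mathcal G\); it is exactly the event used in the linear proof, and \(q\gtrsim1\) makes \(\lambda\) dominate the noise by a constant factor.

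\textbf{Safety.} I would use the first-order optimality of \(\hat\theta_j\) for the convex problem over \(\Cb\), with \(\hat\beta\) fixed: \(\theta\mapsto f_j(\theta)+\lambda\|\theta-\hat\beta\|_{\bSigma_j}\) is minimized over \(\Cb\) at \(\hat\theta_j\), and \(\theta_j^\star\in\Cb\) is feasible. Comparing the two values and using the lower curvature bound gives
\[
\tfrac{\alpha_\ell}{2}\|\hat\theta_j-\theta_j^\star\|_{\bSigma_j}^2\le \langle\nabla f_j(\theta_j^\star),\theta_j^\star-\hat\theta_j\rangle+\lambda\big(\|\theta_j^\star-\hat\beta\|_{\bSigma_j}-\|\hat\theta_j-\hat\beta\|_{\bSigma_j}\big).
\]
The triangle inequality bounds the last bracket by \(\|\hat\theta_j-\theta_j^\star\|_{\bSigma_j}\). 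The noise term is at most \(\sqrt{d\zeta/n}\,\|\hat\theta_j-\theta_j^\star\|_{\bSigma_j}\) on \(\mathcal G\), since \(\nabla f_j(\theta_j^\star)\) lies in the range of \(\bSigma_j\). This yields \(\cE_j^{\rm in}\lesssim(\lambda+\sqrt{d\zeta/n})^2/\alpha_\ell^2\lesssim q^2 d\zeta/n\), regardless of \(B,\varepsilon,\delta\).

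\textbf{Adaptivity.} I would follow the linear proof's structure, with the curvature constants carried along. First, treat \(\hat\beta\) as the minimizer of the profiled objective \(\sum_j (f_j+\lambda\|\cdot\|_{\bSigma_j})\)-infimal convolution, i.e.\ \(\hat\beta\) minimizes \(\sum_j h_j(\beta)\) with \(h_j:=\min_\theta\{f_j(\theta)+\lambda\|\theta-\beta\|_{\bSigma_j}\}\). Each \(h_j\) has subgradients bounded by \(\lambda\) in \(\bSigma_j^{\dagger/2}\)-dual norm. For inliers near \(\theta^\star\), \(h_j\) coincides with \(f_j\) wherever \(\|\nabla f_j\|_{\bSigma_j^{\dagger}}<\lambda\); there it is \(\alpha_\ell\)-strongly convex in \(\|\cdot\|_{\bSigma_j}\).

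Second, from the optimality of \(\hat\beta\), I would derive a bound on \(\|\hat\beta-\theta^\star\|_{\bSigma_\Sb}\). The aggregated inlier curvature \(\alpha_\ell|\Sb|\bSigma_\Sb\) must beat three terms:
\begin{itemize}
\item the aggregated noise, \(\|\sum_{j\in\Sb}\Xb_j^\top\bm\varepsilon_j/n\|\) in \(\bSigma_\Sb^{\dagger/2}\)-norm, which is of order \(\sqrt{|\Sb| d\zeta/n}\);
\item the heterogeneity, where \(\nabla f_j(\theta^\star)-\nabla f_j(\theta_j^\star)\) has \(\bSigma_j^{\dagger}\)-norm at most \(\alpha_u\|\theta^\star-\theta_j^\star\|_{\bSigma_j}\le\alpha_u\delta\);
\item the outliers, each contributing at most \(\lambda\) in \(\bSigma_j^{\dagger}\)-norm.
\end{itemize}
Converting \(\bSigma_j^\dagger\)-norms into \(\bSigma_\Sb^\dagger\)-norms uses \(\bSigma_j\preceq B\bSigma_\Sb\), costing \(\sqrt B\) per term. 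This gives \(\|\hat\beta-\theta^\star\|_{\bSigma_\Sb}^2\lesssim \frac{d\zeta}{mn}+\frac{\alpha_u^2}{\alpha_\ell^2}\delta^2\wedge\cdots+B\varepsilon^2\lambda^2\). The condition \(B\lesssim\min(1/\varepsilon,m)\) keeps the outlier gradient mass below the inlier curvature, so the localization argument closes.

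Third, for \(j\in\Sb\), I would transfer to task \(j\) through \(\|\cdot\|_{\bSigma_j}^2\le B\|\cdot\|_{\bSigma_\Sb}^2\). Then compare \(\hat\theta_j\) with \(\hat\beta\) via the per-task optimality as in the safety step: \(\hat\theta_j=\hat\beta\) whenever the noise plus bias stays below \(\lambda\), and otherwise the bound is the safety rate. Taking the minimum with the safety bound gives the stated form. The \(\delta\) term picks up \(B\) from the transfer and an extra factor through the bias converted between geometries (here \(\alpha_u/\alpha_\ell\) is absorbed, but nonlinearity prevents the exact cancellation available in the quadratic case). That is why I expect \(B^2\delta^2\) rather than \(B\delta^2\).

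The main obstacle will be the second step. The localization of \(\hat\beta\) requires \(h_j\) to behave like \(f_j\) near \(\theta^\star\) while \(f_j\) is only locally quadratic. I would first try to establish the basic inequality on a \(\Cb\)-restricted segment between \(\theta^\star\) and \(\hat\beta\), using convexity of \(\sum_j h_j\) to reduce to a boundary point of the localization ball, as in standard restricted strong convexity arguments.
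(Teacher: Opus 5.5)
Your proposal is correct. Part (b) takes a genuinely different, and somewhat more economical, route than the paper.

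\textbf{Safety.} You run a basic inequality directly against \(\theta_j^\star\). The paper instead goes through the unregularized constrained MLE \(\tilde\theta_j\): it shows \(\|\hat\theta_j-\tilde\theta_j\|_{\bSigma_j}\le 2\lambda/\alpha_\ell\) by whitened strong convexity and the reverse triangle inequality, then bounds \(\|\tilde\theta_j-\theta_j^\star\|_{\bSigma_j}\) separately. The two arguments are equivalent. Yours saves a step and makes it transparent that the only random event is the noise event, uniformly in \(\hat\beta\).

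\textbf{Adaptivity.} The paper localizes around the inlier pooled MLE \(\hat\beta_{F_\Sb}\) in three stages:
\begin{enumerate}
\item an error bound for \(\hat\beta_{F_\Sb}\);
\item an invariant-region argument placing \(\hat\beta_{F_\Sb}\) in every inlier invariant region, so that \(\hat\beta_{G_\Sb}=\hat\beta_{F_\Sb}\) (a local minimizer of a convex function is global);
\item local \(\alpha_\ell|\Sb|\)-strong convexity of \(\bar G_\Sb\) on a pooled-whitened ball of radius \(\lambda/(2\alpha_u\sqrt B)\), combined with the constrained perturbation lemma for the \(\sqrt B\lambda\)-Lipschitz outlier term.
\end{enumerate}
You collapse all of this into one localized basic inequality centered at \(\theta^\star\).

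To make your version rigorous, you do not need strong convexity of \(h_j\) itself. Suppose \(\hat\beta\) lies outside the \(\bSigma_\Sb\)-ball of radius \(r\asymp\lambda/(\alpha_u\sqrt B)\), and take the point \(\beta_t\) at distance \(r\) on the segment from \(\theta^\star\) to \(\hat\beta\) (it stays in \(\Cb\)). Then:
\begin{itemize}
\item convexity gives \(H(\beta_t)\le H(\theta^\star)\);
\item every inlier is in its invariant region at \(\beta_t\), so \(G_\Sb(\beta_t)=F_\Sb(\beta_t)\);
\item \(G_\Sb(\theta^\star)\le F_\Sb(\theta^\star)\) holds trivially;
\item \(F_\Sb\) is globally \(\alpha_\ell|\Sb|\)-strongly convex in \(\|\cdot\|_{\bSigma_\Sb}\) on \(\Cb\).
\end{itemize}
The resulting thresholds on \(\lambda\) and the smallness condition \(\tau\varepsilon B\lesssim1\), with \(\tau=\alpha_u/\alpha_\ell\), coincide with those in the paper's key GLM proposition. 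So you recover the same rates and the same case split against the safety bound. Your route also avoids the projection-onto-range bookkeeping, since you work in seminorms throughout.

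What the paper's decomposition buys is modularity. The pooled-estimator lemma and the outlier perturbation step are reused almost verbatim across the linear, GLM and heterogeneous-sample proofs. It also yields the interpretable intermediate fact that the regularized inlier problem exactly reproduces the pooled MLE.

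\textbf{Two small corrections.}
\begin{itemize}
\item Your displayed \(\bSigma_\Sb\)-norm bound drops a factor. With the per-task conversion you describe, the heterogeneity contribution is \(\tau\sqrt B\,\delta\) in \(\bSigma_\Sb\)-norm, i.e.\ \(\tau^2B\delta^2\) after squaring. Only after transfer to \(\bSigma_j\) does this become \(B^2\delta^2\).
\item The extra \(B\) is not forced by nonlinearity. It comes from applying the triangle inequality task by task in the bias sum, which the paper also does. Write the whitened bias as \(\frac1n\tilde\Xb_\Sb^\top a\), where \(\tilde\Xb_\Sb\) is the stacked pooled-whitened inlier design with \(\tilde\Xb_\Sb^\top\tilde\Xb_\Sb=N_\Sb\Ib_d\) and \(\|a\|_2^2\le\alpha_u^2N_\Sb\delta^2\). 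This gives \(\tau\delta\) in \(\bSigma_\Sb\)-norm, hence \(B\delta^2\) after transfer. That is stronger than the stated \(B^2\delta^2\), so either version proves the theorem.
\end{itemize}
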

This mirrors the linear-model message: part (a) gives universal safety, while part (b) yields sharper transfer only in the favorable regime.

\paragraph{Population MSE Bounds.}\;
Combining Theorem~\ref{theorem; MSE bound GLM} with Theorem~\ref{theorem; population MSE} yields the corresponding GLM population guarantee: safety when \(B\) is large or infinite, or when \(\nu_j\) is large, and sharper transfer when both are moderate.
For GLMs, the algorithm is already run over the bounded domain \(\Cb=\mathbf B(0,\xi)\), so the intrinsic-dimension comparison in Theorem~\ref{theorem; population MSE} applies directly without any additional post-processing.

\section{Numerical Experiments}\label{section; appendix experiments}

The experiments below are theory-aligned diagnostics for contaminated linear and generalized linear multi-task learning.
Accordingly, we compare against data pooling (DP), independent-task learning (ITL), and ARMUL \citep{duan2023adaptive}.
Code for reproducing the numerical experiments is available at \url{https://github.com/seokjinkim0428/Multi-task-Linear-Regression}.
The synthetic studies vary the quantities appearing in our theory: task relatedness, contamination, eigendecay, and balancedness.
We also include a real-data experiment on the UCI Human Activity Recognition (HAR) dataset to illustrate the same matrix-weighted approach in a multi-task logistic regression setting.

\subsection{Synthetic Data}
We benchmark our method against DP, ITL, and ARMUL. Unless otherwise stated, the synthetic experiments use \(n=100\), \(m=30\), and \(d=30\), and we evaluate population MSE for each task. The covariates \(x_{ji}\) are sampled from the unit sphere and then scaled coordinatewise by \(k^{-\alpha}\), where \(\alpha\) controls eigendecay. For related tasks \(j \in \Sb\), we set \(\theta_j^\star = \theta^\star + \delta \Delta_j\), where \(\Delta_j\) is sampled uniformly from the upper unit sphere; for outlier tasks \(j \in \Sb^c\), we sample \(\theta_j^\star\) from the same upper-hemisphere geometry with Euclidean radius \(r_{\mathrm{out}}=10\). Responses are generated as \(y_{ji} = x_{ji}^\top \theta_j^\star + \varepsilon_{ji}\), with \(\varepsilon_{ji} \sim \cN(0,1)\).

We report MSE separately on all, related, and outlier tasks. For our method and ARMUL, we tune the multiplicative regularization parameter using 5-fold cross-validation over
\[
\mathcal{Q}_{\mathrm{synth}}=\{0.1,0.4,0.7,1.0,2.0,4.0,8.0,16.0\},
\]
and every point averages over 30 Monte Carlo repetitions. DP and ITL do not require hyperparameter tuning.
Throughout this subsection, we use \(\bar{B}\) to denote the population balancedness constant from Remark~\ref{example; bounded second moments}, namely the smallest scalar such that \(\bar\bSigma_j \preceq \bar{B} \bar\bSigma_{\Sb}\) for all inlier tasks \(j \in \Sb\). Our large-balancedness DGP is calibrated directly at this population-covariance level. In Gaussian random-design settings, standard covariance concentration implies that the empirical balancedness constant \(B\) closely tracks \(\bar{B}\) up to small fluctuations, so sweeping \(\bar{B}\) is a natural proxy for sweeping \(B\).

\paragraph{Sweep of Task Relatedness \(\delta\).}
We vary \(\delta \in \{0.2,0.4,0.8,1.6,3.2\}\) while keeping \(\varepsilon=0.1\), \(\alpha=1\), and the covariance geometry shared across tasks, so \(\bar{B}=1\). Across this favorable-transfer regime, our estimator consistently attains the smallest all-task MSE and the clearest gains on related tasks. Its all-task MSE ranges from \(0.0138\) to \(0.0259\), while the best competing baseline remains above \(0.041\). On related tasks, our method remains well below the transfer baselines, with errors between \(0.0022\) and \(0.0136\). As expected, the outlier-only split is not where transfer helps most, and ITL remains competitive there.

\begin{figure}[H]
\centering
\includegraphics[width=\textwidth]{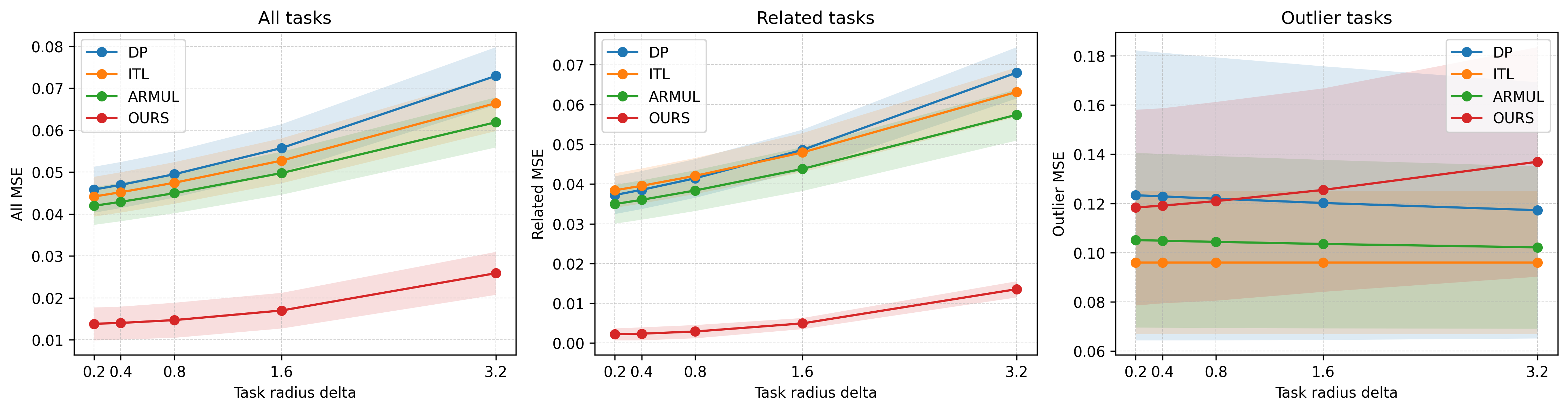}
\caption{Synthetic sweep over the inlier radius \(\delta\). Rows show all-task, related-task, and outlier-task MSE.}
\label{figure; synthetic delta sweep}
\end{figure}

\paragraph{Sweep of Outlier Fraction \(\varepsilon\).}
We next vary \(\varepsilon \in \{0.05,0.1,0.2,0.3,0.4\}\), again under \(\bar{B}=1\). This directly isolates contamination while preserving favorable covariance alignment. Our method remains best on all-task MSE throughout the sweep and preserves a large advantage on related tasks even as the fraction of arbitrary outliers increases. The all-task error increases smoothly from \(0.0062\) to \(0.0460\), rather than showing an abrupt failure mode. On the outlier-only split, ITL is typically the strongest baseline, which is natural because those tasks are deliberately unrelated.

\begin{figure}[H]
\centering
\includegraphics[width=\textwidth]{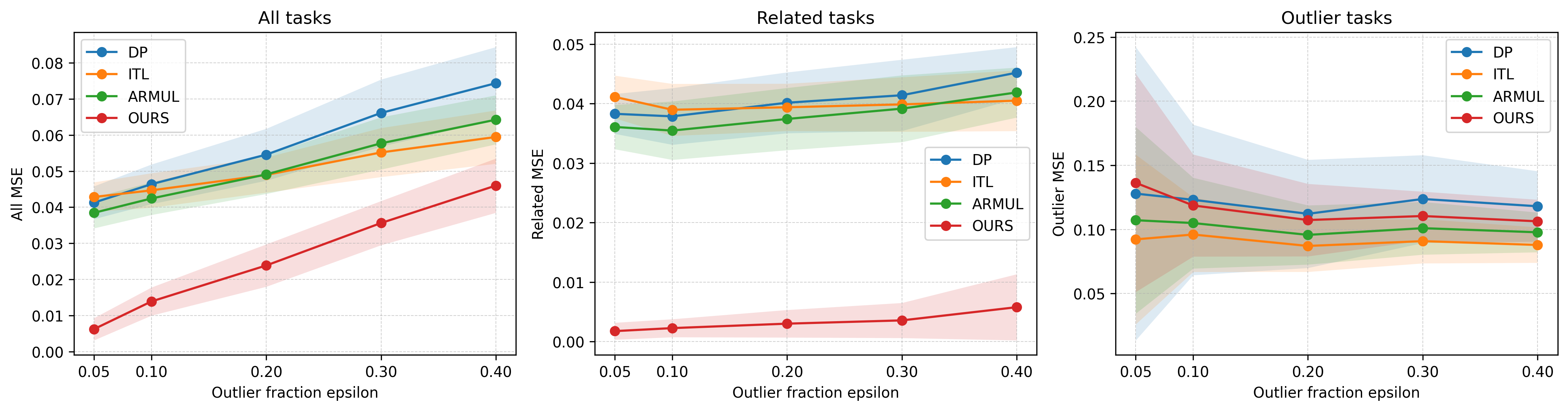}
\caption{Synthetic sweep over the outlier fraction \(\varepsilon\). Rows show all-task, related-task, and outlier-task MSE.}
\label{figure; synthetic epsilon sweep}
\end{figure}

\paragraph{Sweep of Eigendecay \(\alpha\).}
To stress the spectral assumptions directly, we keep \(\bar{B}=1\) and vary the eigendecay exponent over
\[
\alpha\in\{0,0.5,1.0,1.5,2.0\}.
\]
This experiment is the cleanest demonstration that our method does not require a uniform eigenvalue lower bound. Even when the spectrum decays sharply, our estimator continues to dominate on all-task and related-task error. In particular, the gap is already substantial at \(\alpha=0\), and it remains visible through the rapidly decaying cases \(\alpha=1.5\) and \(2.0\).

\begin{figure}[H]
\centering
\includegraphics[width=\textwidth]{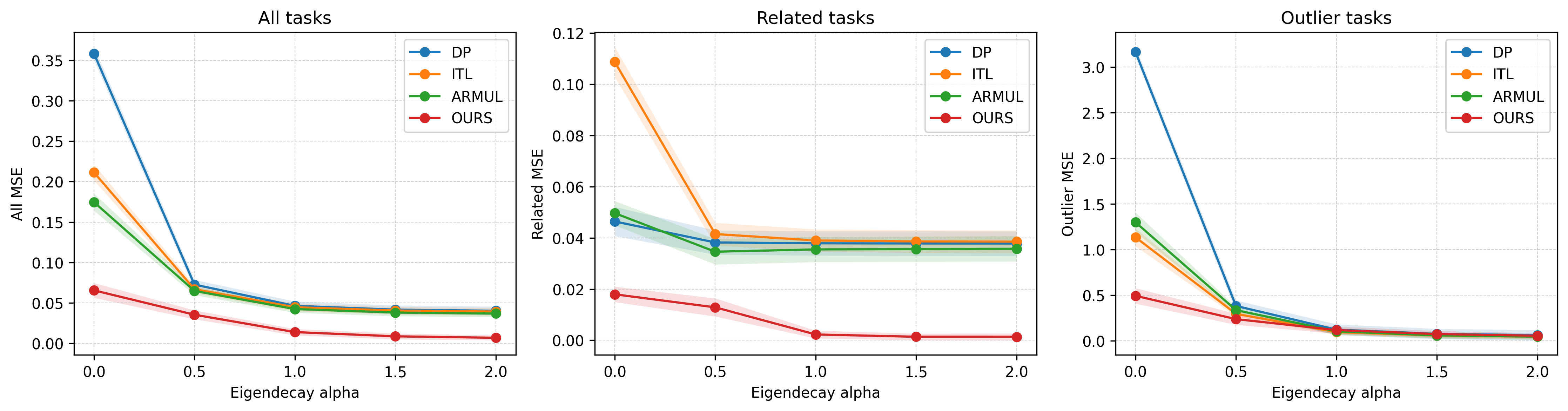}
\caption{Synthetic sweep over the eigendecay exponent \(\alpha\). Rows show all-task, related-task, and outlier-task MSE.}
\label{figure; synthetic decay sweep}
\end{figure}

\paragraph{Sweep of Population Balancedness \(\bar{B}\).}
Our large-\(\bar{B}\) stress test uses a different DGP. For this section only, we set \(m=50\), \(\varepsilon=0.5\), and \(\delta=0.5\). We first sample the outlier set uniformly at random. The remaining tasks are inliers, and their covariances are assigned from a two-group spiked family:
\[
\bSigma_j = \eta \Ib + (1-\eta)e_{g(j)}e_{g(j)}^\top.
\]
Here \(e_1,e_2\) are two coordinate directions and \(g(j)\in\{1,2\}\) records the spike group assigned to task \(j\). We sweep target values \(W\in\{5,10,15,20\}\), where \(W\) is the desired population balancedness level \(\bar B\). Given \(W\), let \(r=\lfloor |\Sb|/W\rfloor\) and \(p=r/|\Sb|\). Conditional on the random inlier set, we select \(r\) inlier tasks uniformly without replacement and set \(g(j)=2\) for these minority-spike tasks; all remaining inlier tasks have \(g(j)=1\).
Outlier tasks also use the majority-spike covariance \(g(j)=1\), but their regression parameters are sampled independently from the same outlier construction as above, with Euclidean radius \(r_{\mathrm{out}}=10\).
We choose
\[
\eta=\frac{1/W-p}{1-p},
\]
clipped to \([0,1]\). This choice calibrates the inlier average covariance along the minority spike direction. Indeed, along direction \(e_2\), a minority task has variance \(1\), while the inlier average has variance \(p+(1-p)\eta\). Therefore the one-sided comparison ratio for a minority-spike task is
\[
\frac{1}{p+(1-p)\eta}=W,
\]
up to the integer rounding in \(r\). Thus the population balancedness constant is calibrated to \(\bar B\approx W\). Since the design is Gaussian, standard covariance concentration implies that the empirical balancedness constant \(B\) is close to this population value \(\bar B\) once \(n\gtrsim d\), up to logarithmic factors.

The resulting pattern is especially informative. When \(\bar{B}=5\), our estimator is the best overall, with all-task MSE \(0.0074\). As \(\bar{B}\) increases, ITL becomes the best method on all-task error, which is precisely the regime where aggressive transfer should lose its advantage. In this large-\(\bar{B}\), high-contamination regime, our method tracks the ITL baseline closely and does not exhibit catastrophic negative transfer. Because this experiment uses the severe contamination level \(\varepsilon=0.5\), we view the all-task curve as the primary diagnostic; the related-task split is less informative here than in the earlier sweeps.
We do not expect the errors to vary monotonically with \(\bar B\), since changing \(\bar B\) changes the underlying spiked-covariance mixture rather than merely increasing a single scalar difficulty parameter.

\begin{figure}[H]
\centering
\includegraphics[width=\textwidth]{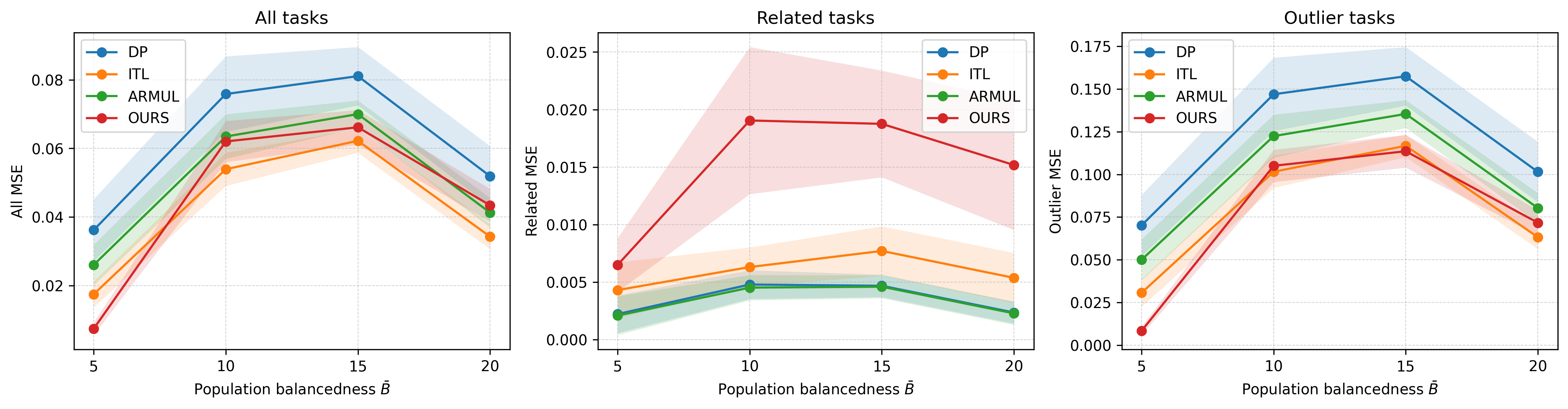}
\caption{Large-\(\bar{B}\) synthetic stress test based on common-operator-norm spiked covariances. Rows show all-task, related-task, and outlier-task MSE.}
\label{figure; synthetic bar b sweep}
\end{figure}

\paragraph{Rank-Deficient Stress.}
This same sweep also probes the rank-deficient regime.
The calibrated floor \(\eta\) is zero at the \(\bar{B}=5\) endpoint, so the inlier covariance matrices are singular rank-one spiked covariances; at the remaining sweep values the floor is small, giving near-singular designs.
Thus the experiment tests both large-balancedness behavior and the singular/near-singular covariance regime that violates a uniform eigenvalue lower bound.

\subsection{Real-World Data}
We evaluated the performance of our proposed algorithm against various baselines using the HAR dataset \citep{anguita2013public}. This dataset comprises motion data collected from 30 participants wearing waist-mounted smartphones equipped with inertial sensors. The dataset covers six distinct daily activities: walking, walking upstairs, walking downstairs, sitting, standing, and laying. The subject-level sample sizes range from 281 to 409 observations, with an average of about 343 observations per volunteer. Each data point is represented by a 561-dimensional feature vector, capturing characteristics in both the time and frequency domains.

Following \citet{duan2023adaptive}, we formulated the problem as a multi-task logistic regression with \(m=30\) tasks (one task per volunteer). Unlike their HAR experiment, which distinguished \texttt{sitting} from the other activities, we instead used the label \(y=1\) for \texttt{standing} and \(y=0\) for all other activities. We randomly selected \(20\%\) of the data from each task for testing and trained logistic models on the remaining data.
We intentionally do not apply PCA.
The purpose of our method is precisely to remain stable when some covariate directions have small empirical variance, so discarding low-variance directions is not necessary for numerical stability and may remove useful information.
This means that the HAR numbers below should be read as a theory-aligned real-data illustration, not as a direct reproduction of the HAR table in \citet{duan2023adaptive}, whose label construction and PCA preprocessing differ from ours.

We compared our method with ARMUL, DP, and ITL, utilizing the logistic regression loss for all methods. For our method and ARMUL, we tuned the regularization parameter \(q\)—the multiplicative factor in \(\lambda_j = q \frac{\sqrt{d}}{\sqrt{n_j}}\)—using 5-fold cross-validation over the grid \(q \in \{0.05, 0.10, 0.15, \dots, 0.50\}\). We repeated the experiment over \(30\) independent random splits. As shown in Table~\ref{table; real-world results}, our method strictly outperformed DP, ITL, and ARMUL.

\begin{table}[H]
\centering
\caption{Results for the real-world data experiment. Comparison of classification error rates (percentage). Standard deviations are reported in parentheses.}
\label{table; real-world results}
\vspace{0.2cm}
\begin{tabular}{lc}
\toprule
Method & Mean Error (SD) \\
\midrule
\textbf{Ours} & 1.25 (0.32) \\
DP & 7.61 (0.46)\\
ITL & 4.67 (0.51) \\
ARMUL \citep{duan2023adaptive} & 5.24 (0.43) \\
\bottomrule
\end{tabular}
\end{table}
As a diagnostic, the empirical balancedness constant on the raw HAR subject tasks is about \(B_{\text{emp}} \approx 30\): not a trivial \({B}\approx 1\) regime, but also far from a completely degenerate transfer geometry.
Thus, the HAR experiment is not in a \(B=\mathcal{O}(1)\) regime; rather, it illustrates that the proposed matrix-weighted procedure can remain empirically effective even when both the ambient dimension and the balancedness diagnostic are large.
One plausible interpretation is that DP is vulnerable to subject heterogeneity because it forces all participants to share a single parameter, while ARMUL is disadvantaged by the failure of LBSM in this high-dimensional setting.
The improvement over ITL suggests that balancedness is mainly a sufficient condition for our sharp theory: even when \(B_{\text{emp}}\) is large, matrix-weighted regularization can still exploit useful shared structure while remaining stable.

\section{Conclusion}

In this paper, we have established an efficient multi-task linear regression framework that operates under strictly relaxed assumptions on the covariate spectrum.
By introducing a matrix-weighted regularization scheme, we recover the \(m,n,\delta,\varepsilon\) prediction-MSE rates of \citet{duan2023adaptive} under substantially weaker spectral assumptions.
In the favorable balanced regime, the resulting task-overall MSE is minimax optimal up to logarithmic and balancedness-dependent factors.
Crucially, our estimator provides a \emph{safety guarantee}: it performs robustly when tasks are related and the balancedness constant \(B\) is moderate, yet gracefully degrades to the independent-task rate when \(B\) is large or infinite.

Promising directions for future research include investigating the tightness of the dependence on the balancedness constant \(B\) and extending this geometric framework to infinite-dimensional settings such as reproducing kernel Hilbert spaces (RKHS) and overparameterized neural-network regimes, including neural tangent kernel limits.

\section*{Acknowledgments}
We thank the reviewers for their thoughtful and constructive feedback, which helped improve the presentation of this work. We are also grateful to Kaizheng Wang for helpful discussions.

\newpage
\bibliography{ref}

\newpage
\appendix

\clearpage
\newpage
\begin{center}
\textbf{{\Huge Supplementary Materials}}
\end{center}
\vspace{0.5cm}
\etocdepthtag.toc{mtappendix}
\etocsettagdepth{mtchapter}{none}
\etocsettagdepth{mtappendix}{subsection}
\tableofcontents

\crefalias{section}{appendix} 

\newpage

\section{Loss Geometry}\label{section; loss geometry}

The next results isolate the single-task \(n\)-sample geometry that underlies our multi-task analysis.
Unlike the main contaminated multi-task setup, every statement in this section concerns a single regression problem with empirical second moment \(\bSigma\) and the matrix-weighted infimal convolution \(f \star \lambda \|\cdot\|_{\bSigma}\).
We separate the linear and GLM cases because the domain and curvature assumptions differ. 
For a loss \(f\), feasible set \(\Cb\), and seminorm \(\|\cdot\|_{\bSigma}\), we use the convention
\[
(f \star \lambda\|\cdot\|_{\bSigma})(\beta)
:=
\inf_{\theta\in\Cb}\left\{f(\theta)+\lambda\|\theta-\beta\|_{\bSigma}\right\}.
\]
This definition is used in both the linear and GLM arguments below.
In the linear case we take \(\Cb=\RR^d\), so the constraint is omitted; in the GLM case we take \(\Cb=\mathbf B(0,\xi)\).

\subsection{Linear Regression}

We first consider a single fixed-design linear regression problem with \(n\) samples \(\{(x_i,y_i)\}_{i=1}^n\), model
\[
y_i = x_i^\top \theta^\star + \varepsilon_i,
\]
design matrix \(\Xb \in \RR^{n\times d}\), response vector \(\Yb \in \RR^n\), empirical second moment \(\bSigma := \frac{1}{n}\Xb^\top \Xb\), and squared loss
\[
f(\theta) := \frac{1}{2n}\|\Yb - \Xb\theta\|_2^2,
\qquad
g(\beta) := (f \star \lambda\|\cdot\|_{\bSigma})(\beta).
\]
The first ingredient identifies a regime where the regularized task objective agrees with the original loss and a complementary safety bound showing that regularization cannot hurt by more than the single-task rate.

\paragraph{Whitened Notation.}
When \(\bSigma\) is full-rank, we define the whitened design matrix by
\[
\tilde{\Xb}:=\Xb\bSigma^{-1/2},
\]
and, for any vector \(v\in\RR^d\), we define its whitened image by
\[
\bar v := \bSigma^{1/2}v.
\]
For any function \(h:\RR^d\to\RR\), we define its whitened version by
\[
\bar h(\bar v) := h(\bSigma^{-1/2}\bar v).
\]
Thus \(\bar\theta\), \(\bar\beta\), \(\bar\theta^\star\), \(\bar{\hat\theta}\), and \(\bar{\tilde\theta}\) are all instances of the same map \(v\mapsto \bar v\). In particular, the transformed loss is
\[
\bar{f}(\bar v) := f(\bSigma^{-1/2}\bar v)
= \frac{1}{2n}\|\Yb-\tilde{\Xb}\bar v\|_2^2.
\]
When \(\bSigma\) is rank-deficient, the barred notation is interpreted in the projected row space in the rank-deficient case below.

\begin{proposition}[Infimal Convolution Invariant Region: Linear]\label{proposition; invariant region linear}
If the regularization parameter satisfies
\[
\lambda > \|\theta^\star - \beta \|_{\bSigma} + c_0 \frac{\sqrt{d\log (1/\kappa)}}{\sqrt{n}},
\]
then with probability at least $1-\kappa$, we have $f(\beta) = g(\beta)$.
Furthermore, any minimizer $\beta'$ of the objective $f(\theta) + \lambda \|\theta - \beta\|_{\bSigma}$ satisfies $\|\beta' - \beta \|_{\bSigma} = 0$.
\end{proposition}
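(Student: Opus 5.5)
The plan is to verify a first-order optimality condition directly. This avoids any whitening or full-rank assumption. Since \(f\) is a convex quadratic, for every \(\theta\in\RR^d\),
\[
f(\theta)-f(\beta)\;\ge\;\langle \nabla f(\beta),\theta-\beta\rangle,
\qquad
\nabla f(\beta)=\bSigma(\beta-\theta^\star)-\tfrac1n\Xb^\top\bm\varepsilon,
\]
where \(\bm\varepsilon\) is the noise vector. The goal is to write \(\nabla f(\beta)=\bSigma^{1/2}u\) for some \(u\in\RR^d\) with \(\|u\|_2<\lambda\). Granting this, Cauchy--Schwarz gives
\[
\langle \nabla f(\beta),\theta-\beta\rangle=\langle u,\bSigma^{1/2}(\theta-\beta)\rangle\ge-\|u\|_2\|\theta-\beta\|_{\bSigma}.
\]
Hence
\[
f(\theta)+\lambda\|\theta-\beta\|_{\bSigma}-f(\beta)\ge(\lambda-\|u\|_2)\|\theta-\beta\|_{\bSigma}\ge 0.
\]
Taking \(\theta=\beta\) attains the infimum, so \(g(\beta)=f(\beta)\). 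Moreover, the inequality is strict whenever \(\|\theta-\beta\|_{\bSigma}>0\). Therefore any minimizer \(\beta'\) of \(\theta\mapsto f(\theta)+\lambda\|\theta-\beta\|_{\bSigma}\) must satisfy \(\|\beta'-\beta\|_{\bSigma}=0\), which is the second claim.

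To construct \(u\), I treat the two terms of the gradient separately. The deterministic part is \(\bSigma(\beta-\theta^\star)=\bSigma^{1/2}u_1\) with \(u_1:=\bSigma^{1/2}(\beta-\theta^\star)\), so \(\|u_1\|_2=\|\theta^\star-\beta\|_{\bSigma}\). For the noise part, \(\Xb^\top\bm\varepsilon\) lies in the row space of \(\Xb\), which equals \(\mathrm{range}(\bSigma)=\mathrm{range}(\bSigma^{1/2})\). Set \(u_2:=-(\bSigma^{1/2})^{\dagger}\tfrac1n\Xb^\top\bm\varepsilon\); then \(\bSigma^{1/2}u_2=-\tfrac1n\Xb^\top\bm\varepsilon\). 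Its squared norm is
\[
\|u_2\|_2^2=\tfrac1{n^2}\bm\varepsilon^\top\Xb\bSigma^{\dagger}\Xb^\top\bm\varepsilon=\tfrac1n\,\bm\varepsilon^\top\Pb\bm\varepsilon,
\qquad
\Pb:=\tfrac1n\Xb\bSigma^\dagger\Xb^\top .
\]
Here \(\Pb\) is the orthogonal projection onto the column space of \(\Xb\), which has rank at most \(d\). This step is where rank deficiency is handled for free, because the pseudo-inverse plays the role of the projected whitening.

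The main probabilistic step, and the only real obstacle, is bounding \(\bm\varepsilon^\top\Pb\bm\varepsilon=\|\Pb\bm\varepsilon\|_2^2\) for conditionally independent \(1\)-sub-Gaussian noise, with \(\Pb\) fixed given the design. I would apply the Hanson--Wright inequality, using \(\tr\Pb\le d\), \(\|\Pb\|_F\le\sqrt d\) and \(\|\Pb\|_{\mathrm{op}}\le 1\). Alternatively, one can use an \(\epsilon\)-net argument on the unit sphere of the at most \(d\)-dimensional range of \(\Pb\). Either route yields, with probability at least \(1-\kappa\),
\[
\|\Pb\bm\varepsilon\|_2^2\lesssim d+\log(1/\kappa)\le c_0^2\,d\log(1/\kappa),
\]
where the last inequality holds for \(\kappa\) bounded away from \(1\) and suitable \(c_0\). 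Hence \(\|u_2\|_2\le c_0\sqrt{d\log(1/\kappa)/n}\). Setting \(u=u_1+u_2\), we get \(\|u\|_2\le\|\theta^\star-\beta\|_{\bSigma}+c_0\sqrt{d\log(1/\kappa)/n}<\lambda\) by the hypothesis, which completes the argument.
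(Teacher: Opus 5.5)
Your proof is correct and is essentially the paper's argument. Your vector $u=(\bSigma^{1/2})^{\dagger}\nabla f(\beta)$ is exactly the whitened gradient $\nabla\bar f(\bar\beta)$, and your convexity-plus-Cauchy--Schwarz step is the content of Lemma~\ref{lemma; ARMUL invariant region}, which the paper invokes after whitening, followed by the same Hanson--Wright bound on $\frac{1}{n}\bm\varepsilon^\top\Pb\bm\varepsilon$ with $\Pb$ the hat matrix. The only difference is packaging: your pseudo-inverse treats full-rank and rank-deficient $\bSigma$ in one pass, whereas the paper handles the singular case separately by reducing to $\mathrm{Range}(\bSigma)$.
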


\begin{proof}
We first prove the result for the case where $\bSigma$ is full-rank, and then extend it to the general case.

\paragraph{Case 1: Full-Rank $\bSigma$.}
Note that $\frac{1}{n}\tilde{\Xb}^\top \tilde{\Xb} = \Ib_d$.
The optimization problem can be rewritten in the transformed space as:
\begin{align*}
\min_{\theta} \left( \frac{1}{2n} \|\Yb -\Xb \theta \|_2^2 + \lambda \|\theta -\beta \|_{\bSigma} \right)
&= \min_{\bar{\theta}} \left( \frac{1}{2n} \|\Yb -\tilde{\Xb}\bar{\theta} \|_2^2 + \lambda \| \bar{\theta} - \bar{\beta} \|_2 \right).
\end{align*}
By the first-order optimality condition for the $\ell_2$-regularized problem (or Lemma~\ref{lemma; ARMUL invariant region}), the minimizer is exactly $\bar{\beta}$ if $\lambda > \|\nabla \bar{f}(\bar{\beta}) \|_2$.

We calculate the gradient norm at $\bar{\beta}$:
\begin{align*}
\|\nabla \bar{f}(\bar{\beta}) \|_2
&= \left\| \frac{1}{n}\tilde{\Xb}^\top (\tilde{\Xb} \bar{\beta} - \Yb) \right\|_2 \\
&= \left\| \frac{1}{n}\tilde{\Xb}^\top (\tilde{\Xb} \bar{\beta} - \tilde{\Xb} \bar{\theta}^\star - \bm{\varepsilon}) \right\|_2 \\
&\leq \left\| \frac{1}{n}\tilde{\Xb}^\top \tilde{\Xb} (\bar{\beta} - \bar{\theta}^\star) \right\|_2 + \left\| \frac{1}{n}\tilde{\Xb}^\top \bm{\varepsilon} \right\|_2 \\
&= \|\bar{\beta} - \bar{\theta}^\star\|_2 + \left\| \frac{1}{n}\tilde{\Xb}^\top \bm{\varepsilon} \right\|_2.
\end{align*}
The first term equals $\|\beta - \theta^\star\|_{\bSigma}$.
For the second term, since the rows of $\tilde{\Xb}$ are isotropic, the Hanson-Wright inequality (Lemma~\ref{lemma; hanson wright inequality}) implies that with probability at least $1-\kappa$:
\[
\left\| \frac{1}{n}\tilde{\Xb}^\top \bm{\varepsilon} \right\|_2 \le c_0 \frac{\sqrt{d \log(1/\kappa)}}{\sqrt{n}}.
\]
Thus, the condition on $\lambda$ ensures $\bar{\beta}$ is the minimizer, implying $\beta$ minimizes the original objective.

\paragraph{Case 2: Rank-Deficient $\bSigma$.}
Let \(\mathcal R:=\mathrm{Range}(\bSigma)=\mathrm{row}(\Xb)\), and let \(\Pb\) be the orthogonal projection onto \(\mathcal R\).
Since \(\mathrm{Null}(\bSigma)=\mathrm{Null}(\Xb)\), for every \(v\in\RR^d\),
\[
\Xb v=\Xb\Pb v,
\qquad
\|v\|_{\bSigma}=\|\Pb v\|_{\bSigma}.
\]
Hence both the loss and the penalty in the linear objective depend on \(\theta\) only through \(\Pb\theta\), with \(\beta\) entering only through \(\Pb\beta\):
\[
f(\theta)=f(\Pb\theta),
\qquad
\|\theta-\beta\|_{\bSigma}=\|\Pb\theta-\Pb\beta\|_{\bSigma}.
\]
Because the linear case is unconstrained, the original problem is exactly equivalent to the reduced problem over \(u\in\mathcal R\),
\[
\min_{u\in\mathcal R}\left\{f(u)+\lambda\|u-\Pb\beta\|_{\bSigma}\right\}.
\]
On \(\mathcal R\), the restricted operator \(\bSigma|_{\mathcal R}\) is positive definite, so Case 1 applies to the reduced \(r=\rank(\bSigma)\)-dimensional problem with projected parameters \(\Pb\theta^\star\) and \(\Pb\beta\).
The stochastic term is of order \(\sqrt{r/n}\), which is bounded by the displayed \(\sqrt{d/n}\) term.
Since \(\|\Pb\beta-\Pb\theta^\star\|_{\bSigma}=\|\beta-\theta^\star\|_{\bSigma}\), the assumed lower bound on \(\lambda\) implies that \(\Pb\beta\) is a minimizer of the reduced problem.
Therefore \(g(\beta)=f(\beta)\).
Moreover, if \(\beta'\) is any minimizer of the original objective, then \(\Pb\beta'\) is a minimizer of the reduced problem, so \(\|\Pb\beta'-\Pb\beta\|_{\bSigma}=0\).
Equivalently, \(\|\beta'-\beta\|_{\bSigma}=0\).
\end{proof}

\begin{proposition}[Personalization: Linear]\label{proposition; personalization}
With probability at least $1-\kappa$, the following holds uniformly over all \(\beta\in\RR^d\): for every minimizer \(\hat{\theta}(\beta)\) of \(f(\theta) + \lambda \|\theta-\beta \|_{\bSigma}\),
\begin{align*}
\|\hat{\theta}(\beta) - \theta^\star \|_{\bSigma} \leq \lambda + c_0 \sqrt{\frac{d}{n}\log\left(\frac{1}{\kappa}\right)}.
\end{align*}
\end{proposition}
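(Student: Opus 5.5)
We will follow the same whitening-and-reduction route as in Proposition~\ref{proposition; invariant region linear}. First take \(\bSigma\) full rank and pass to whitened coordinates \(\bar\theta=\bSigma^{1/2}\theta\). Then \(\frac1n\tilde\Xb^\top\tilde\Xb=\Ib_d\), and the objective becomes
\[
\bar f(\bar\theta)+\lambda\|\bar\theta-\bar\beta\|_2 ,
\qquad
\bar f(\bar\theta)=\tfrac12\|\bar\theta-\bar\theta^\star\|_2^2-\tfrac1n\bm{\varepsilon}^\top\tilde\Xb(\bar\theta-\bar\theta^\star)+\text{const}.
\]
So \(\bar f\) is \(1\)-strongly convex with identity Hessian. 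Its gradient is
\[
\nabla\bar f(\bar\theta)=\bar\theta-\bar\theta^\star-\bar{\bm g},
\qquad
\bar{\bm g}:=\tfrac1n\tilde\Xb^\top\bm{\varepsilon}.
\]
The key feature is that \(\bar{\bm g}\) does not depend on \(\beta\). This is what makes the bound uniform over \(\beta\).

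Next we use first-order optimality at any minimizer \(\bar{\hat\theta}\):
\[
0\in \bar{\hat\theta}-\bar\theta^\star-\bar{\bm g}+\lambda\,\partial\|\cdot\|_2(\bar{\hat\theta}-\bar\beta).
\]
Every subgradient of the Euclidean norm has norm at most \(1\). Hence \(\bar{\hat\theta}-\bar\theta^\star=\bar{\bm g}-\lambda u\) with \(\|u\|_2\le1\), and the triangle inequality gives
\[
\|\hat\theta(\beta)-\theta^\star\|_{\bSigma}=\|\bar{\hat\theta}-\bar\theta^\star\|_2\le\lambda+\|\bar{\bm g}\|_2 .
\]
This is a deterministic statement valid for all \(\beta\) and all minimizers simultaneously. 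The only randomness is the single event \(\{\|\bar{\bm g}\|_2\le c_0\sqrt{d\log(1/\kappa)/n}\}\). That event has probability at least \(1-\kappa\) by the Hanson--Wright bound (Lemma~\ref{lemma; hanson wright inequality}), exactly as in Case 1 of Proposition~\ref{proposition; invariant region linear}, using isotropy of the rows of \(\tilde\Xb\).

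For rank-deficient \(\bSigma\) we reduce to the row space \(\mathcal R\) with projection \(\Pb\), as in Case 2 of Proposition~\ref{proposition; invariant region linear}. The loss and penalty depend on \(\theta\) only through \(\Pb\theta\). So for any minimizer \(\hat\theta(\beta)\), the projection \(\Pb\hat\theta(\beta)\) minimizes the reduced \(r\)-dimensional problem, where \(\bSigma|_{\mathcal R}\) is positive definite. Also
\[
\|\hat\theta(\beta)-\theta^\star\|_{\bSigma}=\|\Pb\hat\theta(\beta)-\Pb\theta^\star\|_{\bSigma}.
\]
Applying the full-rank argument on \(\mathcal R\) gives the bound with \(\sqrt{r/n}\le\sqrt{d/n}\).

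The main point needing care is uniformity over \(\beta\). We handle it by noting that the noise vector \(\bar{\bm g}\) is independent of \(\beta\) and that the subgradient bound holds regardless of where the kink lies, so no union bound or covering argument is required. A second point is that the minimizer need not be unique in the rank-deficient case. This is harmless because the optimality argument applies to every minimizer's projection.
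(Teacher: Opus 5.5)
Your proof is correct and follows essentially the paper's route: whitening, a single $\beta$-independent noise event controlled by Hanson--Wright, and the same row-space reduction for singular $\bSigma$. The only difference is that you read off $\bar{\hat\theta}-\bar\theta^\star=\bar{\bm g}-\lambda u$ directly from the subgradient optimality condition. The paper instead invokes the perturbation Lemma~\ref{lemma; ARMUL effect outlier} to get $\|\bar{\hat\theta}-\bar{\tilde\theta}\|_2\le\lambda$ and bounds the OLS error separately; since $\bar\theta^\star+\bar{\bm g}$ is exactly the whitened OLS solution $\bar{\tilde\theta}$, this is the same decomposition.
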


\begin{proof}
If \(\bSigma\) is rank-deficient, let \(\mathcal R:=\mathrm{Range}(\bSigma)=\mathrm{row}(\Xb)\) and let \(\Pb\) be the orthogonal projection onto \(\mathcal R\).
As in the previous proposition, \(f(\theta)=f(\Pb\theta)\) and \(\|\theta-\beta\|_{\bSigma}=\|\Pb\theta-\Pb\beta\|_{\bSigma}\).
Thus the unconstrained problem reduces exactly to \(\mathcal R\), where \(\bSigma|_{\mathcal R}\) is positive definite; the stochastic bound on this \(r=\rank(\bSigma)\)-dimensional space is no larger than the displayed \(\sqrt d\) bound.
It therefore suffices to prove the result on the working subspace, and we write the proof as if \(\bSigma\) were full-rank.

Fix an arbitrary \(\beta\in\RR^d\), and let \(\hat{\theta}=\hat{\theta}(\beta)\).
Let \(\tilde{\theta}\) be the minimizer of \(f(\theta)\) (OLS).
The estimators in the transformed space, $\bar{\hat{\theta}} := \bSigma^{1/2}\hat{\theta}$ and $\bar{\tilde{\theta}} := \bSigma^{1/2}\tilde{\theta}$, are the minimizers of
\begin{align*}
\min_{\bar{\theta}} \left( \bar{f}(\bar{\theta}) + \lambda \|\bar{\theta} - \bar{\beta}\|_2 \right) \quad \text{and} \quad \min_{\bar{\theta}} \bar{f}(\bar{\theta}),
\end{align*}
respectively.

\paragraph{Step 1: Perturbation via Regularization.}
Since $\nabla^2 \bar{f}(\bar{\theta}) = \Ib_d$, the function $\bar{f}$ is 1-strongly convex.
The regularization term $h(\bar{\theta}) := \lambda \|\bar{\theta} - \bar{\beta}\|_2$ is convex and $\lambda$-Lipschitz with respect to $\|\cdot\|_2$.
By Lemma~\ref{lemma; ARMUL effect outlier},
\[
\|\bar{\hat{\theta}} - \bar{\tilde{\theta}}\|_2 \leq \lambda,
\]
which is equivalent to $\|\hat{\theta} - \tilde{\theta}\|_{\bSigma} \leq \lambda$.

\paragraph{Step 2: OLS Estimation Error.}
The unregularized estimator $\tilde{\theta}$ is the OLS solution, and
\begin{align*}
\|\bar{\tilde{\theta}} - \bar{\theta}^\star\|_2
&\le \|\nabla \bar{f}(\bar{\theta}^\star)\|_2
= \left\| \frac{1}{n} \sum_{i=1}^n \varepsilon_i \tilde{x}_i \right\|_2,
\end{align*}
where $\tilde{x}_i = \bSigma^{-1/2}x_i$ are isotropic.
Applying the Hanson-Wright inequality (Lemma~\ref{lemma; hanson wright inequality}), with probability at least $1-\kappa$:
\[
\|\bar{\tilde{\theta}} - \bar{\theta}^\star\|_2 \leq c_0 \sqrt{\frac{d}{n}\log\left(\frac{1}{\kappa}\right)}.
\]
This is equivalent to $\|\tilde{\theta} - \theta^\star\|_{\bSigma} \le c_0 \sqrt{\frac{d}{n}\log(\frac{1}{\kappa})}$.

\paragraph{Conclusion.}
Combining the two bounds yields
\begin{align*}
\|\hat{\theta} - \theta^\star\|_{\bSigma}
&\le \|\hat{\theta} - \tilde{\theta}\|_{\bSigma} + \|\tilde{\theta} - \theta^\star\|_{\bSigma} \\
&\le \lambda + c_0 \sqrt{\frac{d}{n}\log\left(\frac{1}{\kappa}\right)}.
\end{align*}
The only random event used above is the OLS noise event in Step 2, which does not depend on \(\beta\). Hence the bound holds simultaneously for every \(\beta\) and every corresponding minimizer \(\hat{\theta}(\beta)\).
\end{proof}

\subsection{Generalized Linear Models}\label{section; loss properties}

We next consider a single fixed-design GLM with \(n\) samples \(\{(x_i,y_i)\}_{i=1}^n\), empirical second moment
\[
\bSigma := \frac{1}{n}\sum_{i=1}^n x_i x_i^\top,
\]
parameter domain \(\Cb = \mathbf{B}(0,\xi)\) from Assumption~\ref{assumption; GLM domain}, negative log-likelihood
\[
f(\theta) := \frac{1}{n}\sum_{i=1}^n \bigl(-y_{i}x_{i}^\top \theta + \psi(x_{i}^\top \theta)\bigr),
\qquad
g(\beta) := (f \star \lambda\|\cdot\|_{\bSigma})(\beta),
\]
and again study the corresponding matrix-weighted infimal convolution on this single-task \(n\)-sample problem.

\paragraph{Whitened Notation.}
When \(\bSigma\) is full-rank, define the whitened covariates by
\begin{align*}
\tilde{x}_i &:= \bSigma^{-1/2}x_i,
\end{align*}
for any vector \(v\in\RR^d\), define its whitened image by
\[
\bar v := \bSigma^{1/2}v,
\]
and define the transformed domain by
\[
\bar{\Cb} := \{\bar v : v\in\Cb\}.
\]
For any function \(h:\RR^d\to\RR\), define its whitened version by
\[
\bar h(\bar v) := h(\bSigma^{-1/2}\bar v).
\]
Thus \(\bar\theta\), \(\bar\beta\), \(\bar\theta^\star\), \(\bar{\hat\theta}\), and \(\bar{\tilde\theta}\) are all instances of the same map \(v\mapsto \bar v\). In particular, the transformed loss is
\[
\bar{f}(\bar v)
:=
f(\bSigma^{-1/2}\bar v)
=
\frac{1}{n}\sum_{i=1}^n \left( -y_{i}\tilde{x}_{i}^\top \bar v + \psi(\tilde{x}_{i}^\top \bar v) \right).
\]
When \(\bSigma\) is rank-deficient, the barred notation is interpreted on the projected row space, with the projected feasible set described in the proof below.

\begin{proposition}[Infimal Convolution Invariant Region: GLM]\label{proposition; invariant region GLM}
If \(\beta\in\Cb\) and $\lambda > \alpha_u \|\beta-\theta^\star \|_{\bSigma} + c_0 \frac{\sqrt{d \log(1/\kappa)}}{\sqrt{n}}$, then with probability at least $1-\kappa$,
\[
\beta \in \mathop{\arg \min}_{\theta \in \Cb} \left( f(\theta) + \lambda \|\theta -\beta \|_{\bSigma} \right).
\]
Consequently, $f(\beta) = (f \star \lambda\|\cdot\|_{\bSigma})(\beta)$.
Furthermore, any minimizer $\beta'$ satisfies $\|\beta' - \beta\|_{\bSigma} = 0$.
\end{proposition}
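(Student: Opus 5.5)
Following the proof of Proposition~\ref{proposition; invariant region linear}, I will handle the full-rank case first and then reduce the rank-deficient case to it. The difference here is that the feasible set is $\Cb=\mathbf B(0,\xi)$ rather than $\RR^d$, so optimality has to be certified by a constrained first-order condition, not by a zero-subgradient condition.

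\emph{Full-rank case.} After whitening, the problem is $\min_{\bar\theta\in\bar\Cb}\bar f(\bar\theta)+\lambda\|\bar\theta-\bar\beta\|_2$. The feasible set $\bar\Cb$ is convex and contains $\bar\beta$, and the objective is convex. It therefore suffices to exhibit a subgradient at $\bar\beta$ that makes a nonnegative inner product with every feasible direction. The subdifferential of the penalty at $\bar\beta$ is the Euclidean ball of radius $\lambda$. If $\|\nabla\bar f(\bar\beta)\|_2<\lambda$, we may take $g=\nabla\bar f(\bar\beta)+u$ with $u=-\nabla\bar f(\bar\beta)$, which gives $g=0$. Then $\bar\beta$ is optimal over all of $\RR^d$, and a fortiori over $\bar\Cb$.

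The key estimate is therefore $\|\nabla\bar f(\bar\beta)\|_2<\lambda$. Write
\[
\nabla\bar f(\bar\beta)=\frac1n\sum_i\bigl(\psi'(\tilde x_i^\top\bar\beta)-\psi'(\tilde x_i^\top\bar\theta^\star)\bigr)\tilde x_i-\frac1n\sum_i\varepsilon_i\tilde x_i .
\]
The noise term is handled exactly as in the linear case: the rows $\tilde x_i$ satisfy $\frac1n\sum_i\tilde x_i\tilde x_i^\top=\Ib$, so Lemma~\ref{lemma; hanson wright inequality} bounds it by $c_0\sqrt{d\log(1/\kappa)/n}$. For the signal term, the mean value theorem gives $\psi'(a)-\psi'(b)=\psi''(z_i)(a-b)$ with $z_i$ between $x_i^\top\beta$ and $x_i^\top\theta^\star$. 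Both $\beta$ and $\theta^\star$ lie in $\Cb$, which is convex, so $z_i\in\mathcal I$ and hence $w_i:=\psi''(z_i)\in[\alpha_\ell,\alpha_u]$. The signal term then equals $\tilde{\Xb}^\top \Db\tilde{\Xb}(\bar\beta-\bar\theta^\star)/n$ with $\Db=\mathrm{diag}(w_i)$.

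Bounding this matrix is the step I expect to be the main obstacle, since a naive operator-norm bound on $\tilde{\Xb}^\top\Db\tilde{\Xb}/n$ might seem to lose a factor. It does not. Because $0\preceq\Db\preceq\alpha_u\Ib$, we have $0\preceq \tilde{\Xb}^\top\Db\tilde{\Xb}/n\preceq \alpha_u\tilde{\Xb}^\top\tilde{\Xb}/n=\alpha_u\Ib$. A PSD matrix dominated by $\alpha_u\Ib$ has operator norm at most $\alpha_u$. So the signal term is at most $\alpha_u\|\bar\beta-\bar\theta^\star\|_2=\alpha_u\|\beta-\theta^\star\|_{\bSigma}$. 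Adding the two bounds and using the hypothesis on $\lambda$ gives the strict inequality.

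\emph{Uniqueness in seminorm and the rank-deficient case.} Let $\beta'$ be another minimizer over $\Cb$. By convexity, every point on the segment from $\beta$ to $\beta'$ is also a minimizer. Near $\bar\beta$, the strict inequality $\|\nabla\bar f(\bar\beta)\|<\lambda$, combined with the linear growth of the penalty, forces the objective to increase strictly in every direction with $\|\cdot\|_{\bSigma}>0$. This uses convexity of $\bar f$, which gives $\bar f(\bar\beta+t h)\ge \bar f(\bar\beta)+t\langle\nabla\bar f(\bar\beta),h\rangle$. Hence $\|\beta'-\beta\|_{\bSigma}=0$.

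For rank-deficient $\bSigma$, let $\Pb$ be the projection onto the row space of $\Xb$. Both $f$ and the penalty depend on $\theta$ only through $\Pb\theta$, so the problem over $\Cb$ reduces to one over $\Pb\Cb$. That set is convex, it contains $\Pb\beta$ and $\Pb\theta^\star$, and every $\theta\in\Cb$ satisfies $x_i^\top\theta=x_i^\top\Pb\theta\in\mathcal I$. Since $\bSigma$ is positive definite on this subspace, the argument above applies verbatim in dimension $r\le d$. Finally, $f(\beta)=g(\beta)$ follows immediately from the fact that $\beta$ is a minimizer with zero penalty.
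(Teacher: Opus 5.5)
Your proposal is correct and follows the paper's proof essentially step for step. You whiten, reduce to showing \(\|\nabla\bar f(\bar\beta)\|_2<\lambda\), split the gradient into a noise term (Hanson--Wright) and a mean-value-theorem bias term whose weighted Gram matrix is dominated by \(\alpha_u\) times the identity, and handle the rank-deficient case by projecting onto the row space. The only cosmetic difference is that you certify optimality and seminorm uniqueness directly, via a zero subgradient and the first-order convexity inequality, which is exactly what the paper's constrained invariant-region lemma packages.
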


\begin{proof}
We first consider the case where $\bSigma$ is full-rank.
The original optimization problem is equivalent to the transformed problem:
\begin{align*}
\beta \in \mathop{\arg \min}_{\theta \in \Cb} \left( f(\theta) + \lambda \|\theta -\beta \|_{\bSigma} \right)
\iff
\bar{\beta} \in \mathop{\arg \min}_{\bar{\theta} \in \bar{\Cb}} \left( \bar{f}(\bar{\theta}) + \lambda \|\bar{\theta} -\bar{\beta} \|_{2} \right).
\end{align*}
Since $\bar{f}$ is convex on $\bar{\Cb}$ and \(\bar\beta\in\bar{\Cb}\), by Lemma~\ref{lemma; ARMUL invariant region constrained}, it suffices to show that $\|\nabla \bar{f}(\bar{\beta})\|_2 < \lambda$.

We compute the gradient $\nabla \bar{f}(\bar{\beta})$:
\begin{align*}
\nabla \bar{f}(\bar{\beta})
&= \frac{1}{n}\sum_{i=1}^n \left( -y_i \tilde{x}_i + \psi'(\tilde{x}_i^\top \bar{\beta})\tilde{x}_i \right).
\end{align*}
Substituting the model equation $y_i = \psi'(\tilde{x}_i^\top \bar{\theta}^\star) + \varepsilon_i$, we decompose the gradient into a noise term and a bias term:
\begin{align*}
\nabla \bar{f}(\bar{\beta})
&= -\frac{1}{n}\sum_{i=1}^n \varepsilon_i \tilde{x}_i + \frac{1}{n}\sum_{i=1}^n \left( \psi'(\tilde{x}_i^\top \bar{\beta}) - \psi'(\tilde{x}_i^\top \bar{\theta}^\star) \right) \tilde{x}_i.
\end{align*}
By the Mean Value Theorem, there exists $\xi_i$ between $\tilde{x}_i^\top \bar{\beta}$ and $\tilde{x}_i^\top \bar{\theta}^\star$ such that:
\[
\psi'(\tilde{x}_i^\top \bar{\beta}) - \psi'(\tilde{x}_i^\top \bar{\theta}^\star) = \psi''(\xi_i) \tilde{x}_i^\top (\bar{\beta} - \bar{\theta}^\star).
\]
Substituting this back, we obtain:
\begin{align*}
\nabla \bar{f}(\bar{\beta})
&= -\frac{1}{n}\sum_{i=1}^n \varepsilon_i \tilde{x}_i + \left( \frac{1}{n}\sum_{i=1}^n \psi''(\xi_i) \tilde{x}_i \tilde{x}_i^\top \right) (\bar{\beta} - \bar{\theta}^\star).
\end{align*}
Let $\tilde{\Qb} := \frac{1}{n}\sum_{i=1}^n \psi''(\xi_i) \tilde{x}_i \tilde{x}_i^\top$.
Under Assumption~\ref{assumption; GLM domain}, we have $\psi''(\cdot) \in [\alpha_\ell, \alpha_u]$.
Since $\frac{1}{n}\sum_{i=1}^n \tilde{x}_i \tilde{x}_i^\top = \Ib_d$, the matrix $\tilde{\Qb}$ satisfies:
\[
\alpha_\ell \Ib_d \preceq \tilde{\Qb} \preceq \alpha_u \Ib_d \implies \|\tilde{\Qb}\|_{\operatorname{op}} \le \alpha_u.
\]

We now bound the norm of the gradient. Using the triangle inequality:
\begin{align*}
\|\nabla \bar{f}(\bar{\beta})\|_2
&\leq \left\| \frac{1}{n}\sum_{i=1}^n \varepsilon_i \tilde{x}_i \right\|_2 + \|\tilde{\Qb}\|_{\operatorname{op}} \|\bar{\beta} - \bar{\theta}^\star\|_2 \\
&\leq \left\| \frac{1}{n}\sum_{i=1}^n \varepsilon_i \tilde{x}_i \right\|_2 + \alpha_u \|\beta - \theta^\star\|_{\bSigma}.
\end{align*}
Since the whitened covariates $\tilde{x}_i$ are isotropic, we apply the Hanson-Wright inequality (Lemma~\ref{lemma; hanson wright inequality}).
With probability at least $1-\kappa$:
\[
\left\| \frac{1}{n}\sum_{i=1}^n \varepsilon_i \tilde{x}_i \right\|_2 \leq c_0 \frac{\sqrt{d \log(1/\kappa)}}{\sqrt{n}}.
\]
Thus, provided that $\lambda > \alpha_u \|\beta - \theta^\star\|_{\bSigma} + c_0 \frac{\sqrt{d \log(1/\kappa)}}{\sqrt{n}}$, the condition $\|\nabla \bar{f}(\bar{\beta})\|_2 < \lambda$ holds, implying $\beta$ is a minimizer over \(\Cb\).

For the case where $\bSigma$ is rank-deficient, let \(\mathcal R:=\mathrm{Range}(\bSigma)=\mathrm{row}(\Xb)\), and let \(\Pb\) be the orthogonal projection onto \(\mathcal R\).
As above, \(\mathrm{Null}(\bSigma)=\mathrm{Null}(\Xb)\), so the GLM linear predictors and the seminorm are unchanged by projection:
\[
x_i^\top\theta=x_i^\top\Pb\theta,
\qquad
\|\theta-\beta\|_{\bSigma}
=
\|\Pb\theta-\Pb\beta\|_{\bSigma}.
\]
The constrained domain causes no further difficulty.
Here \(\Cb=\mathbf B(0,\xi)\), and orthogonal projections are contractions; hence
\[
\Pb\Cb=\Cb\cap\mathcal R\subseteq\Cb.
\]
Thus replacing any feasible \(\theta\) by \(\Pb\theta\) preserves all GLM linear predictors and the \(\bSigma\)-seminorm penalty while keeping the parameter feasible.
The original constrained problem is therefore equivalent, in objective value and minimizer projections, to the reduced problem over \(u\in\Pb\Cb=\Cb\cap\mathcal R\), with \(\beta\) and \(\theta^\star\) replaced by \(\Pb\beta\) and \(\Pb\theta^\star\).
Because the reduced feasible set remains inside \(\Cb\), the curvature condition in Assumption~\ref{assumption; GLM domain} is valid throughout the reduced problem.
On \(\mathcal R\), the restricted operator \(\bSigma|_{\mathcal R}\) is positive definite, so the full-rank argument applies in dimension \(r=\rank(\bSigma)\), with the stochastic \(\sqrt r\) term bounded by the displayed \(\sqrt d\) term.
Since \(\|\Pb\beta-\Pb\theta^\star\|_{\bSigma}=\|\beta-\theta^\star\|_{\bSigma}\), the same lower bound on \(\lambda\) implies that \(\Pb\beta\) minimizes the reduced problem.
Equivalently, \(\beta\) is a minimizer of the original constrained problem, and any minimizer \(\beta'\) satisfies \(\|\Pb\beta'-\Pb\beta\|_{\bSigma}=0\), i.e., \(\|\beta'-\beta\|_{\bSigma}=0\).
\end{proof}

\begin{lemma}[Strong Convexity and Growth Condition: GLM]\label{lemma; PL GLM}
Suppose \(\bSigma\) is full-rank, and let $\bar{f}$ be the transformed loss defined in Proposition~\ref{proposition; invariant region GLM} on the domain $\bar{\Cb}$.
The function $\bar{f}$ is $\alpha_\ell$-strongly convex.
Consequently, let $\bar{\tilde{\theta}}$ be the minimizer of $\bar{f}$ over the convex set $\bar{\Cb}$. Then for any $\bar{\theta} \in \bar{\Cb}$:
\[
\bar{f}(\bar{\theta}) \ge \bar{f}(\bar{\tilde{\theta}}) + \frac{\alpha_\ell}{2} \|\bar{\theta} - \bar{\tilde{\theta}}\|_2^2.
\]
\end{lemma}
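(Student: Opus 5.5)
We compute the Hessian of \(\bar f\) directly and then combine strong convexity with the first-order optimality condition over the convex set \(\bar{\Cb}\).

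\textbf{Step 1 (Hessian).} Differentiating the whitened loss twice gives, for any \(\bar\theta\in\bar{\Cb}\),
\[
\nabla^2 \bar f(\bar\theta)=\frac{1}{n}\sum_{i=1}^n \psi''(\tilde x_i^\top\bar\theta)\,\tilde x_i\tilde x_i^\top .
\]
Here \(\tilde x_i^\top\bar\theta = x_i^\top\theta\) with \(\theta=\bSigma^{-1/2}\bar\theta\in\Cb\). Assumption~\ref{assumption; GLM domain} therefore places this argument in \(\mathcal I\), so \(\psi''(\tilde x_i^\top\bar\theta)\ge\alpha_\ell\). Because \(\frac1n\sum_i\tilde x_i\tilde x_i^\top=\bSigma^{-1/2}\bSigma\bSigma^{-1/2}=\Ib_d\), we get \(\nabla^2\bar f(\bar\theta)\succeq\alpha_\ell\Ib_d\) throughout \(\bar{\Cb}\).

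\textbf{Step 2 (convexity of the domain).} The set \(\bar{\Cb}\) is the image of a ball under the invertible linear map \(\bSigma^{1/2}\), so it is convex. Hence for \(\bar\theta,\bar\theta'\in\bar{\Cb}\) the whole segment between them lies in \(\bar{\Cb}\). Applying Taylor's theorem with integral remainder along this segment gives
\[
\bar f(\bar\theta)\ge \bar f(\bar\theta')+\langle\nabla\bar f(\bar\theta'),\bar\theta-\bar\theta'\rangle+\tfrac{\alpha_\ell}{2}\|\bar\theta-\bar\theta'\|_2^2 ,
\]
which is \(\alpha_\ell\)-strong convexity on \(\bar{\Cb}\).

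\textbf{Step 3 (growth condition).} Take \(\bar\theta'=\bar{\tilde\theta}\). The minimizer exists and is unique because \(\bar{\Cb}\) is compact and \(\bar f\) is strongly convex. The first-order optimality condition for a constrained convex minimizer gives \(\langle\nabla\bar f(\bar{\tilde\theta}),\bar\theta-\bar{\tilde\theta}\rangle\ge0\) for all \(\bar\theta\in\bar{\Cb}\). Dropping this nonnegative term from the inequality in Step 2 yields the claimed bound.

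The only point needing care is Step 1: the curvature bound must be verified at every point of the segment, not just at the endpoints. This holds because Assumption~\ref{assumption; GLM domain} applies to all \(\theta\in\Cb\), and convexity of \(\Cb\) keeps the whole segment inside \(\Cb\). The remaining steps are routine.
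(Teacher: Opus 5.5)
Your proposal is correct and follows essentially the same route as the paper: bound the whitened Hessian below by \(\alpha_\ell \Ib_d\) using \(\psi''\ge\alpha_\ell\) on \(\mathcal I\) and the isotropy \(\frac1n\sum_i \tilde x_i\tilde x_i^\top=\Ib_d\), then apply strong convexity at the constrained minimizer and drop the nonnegative first-order term. You also fill in details the paper leaves implicit, namely that the segment stays in \(\bar{\Cb}\) so the curvature bound holds along it, and that the minimizer exists by compactness.
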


\begin{proof}
First, we establish the uniform lower bound on the Hessian.
The Hessian of the transformed loss is given by:
\[
\nabla^2 \bar{f}(\bar{\theta}) = \frac{1}{n}\sum_{i=1}^n \psi''(\tilde{x}_i^\top \bar{\theta}) \tilde{x}_i \tilde{x}_i^\top.
\]
Under Assumption~\ref{assumption; GLM domain}, the second derivative of the cumulant generating function satisfies $\psi''(z) \ge \alpha_\ell$ for all $z$ in the domain.
Additionally, by the construction of the whitened covariates $\tilde{x}_i = \bSigma^{-1/2}x_i$, the empirical covariance is isotropic:
\[
\frac{1}{n}\sum_{i=1}^n \tilde{x}_i \tilde{x}_i^\top = \bSigma^{-1/2} \left( \frac{1}{n}\sum_{i=1}^n x_i x_i^\top \right) \bSigma^{-1/2} = \bSigma^{-1/2} \bSigma \bSigma^{-1/2} = \Ib_d.
\]
This implies $\nabla^2 \bar{f}(\bar{\theta}) \succeq \alpha_\ell \Ib_d$ for all $\bar{\theta} \in \bar{\Cb}$, establishing $\alpha_\ell$-strong convexity.

Now, applying the definition of strong convexity at the minimizer $\bar{\tilde{\theta}}$:
\[
\bar{f}(\bar{\theta}) \ge \bar{f}(\bar{\tilde{\theta}}) + \nabla \bar{f}(\bar{\tilde{\theta}})^\top (\bar{\theta} - \bar{\tilde{\theta}}) + \frac{\alpha_\ell}{2}\|\bar{\theta} - \bar{\tilde{\theta}}\|_2^2.
\]
Since $\bar{\Cb}$ is a convex set and $\bar{\tilde{\theta}}$ is the minimizer over $\bar{\Cb}$, the first-order optimality condition states that $\langle \nabla \bar{f}(\bar{\tilde{\theta}}), \bar{\theta} - \bar{\tilde{\theta}} \rangle \ge 0$ for all $\bar{\theta} \in \bar{\Cb}$.
Dropping the non-negative linear term yields the desired result:
\[
\bar{f}(\bar{\theta}) \ge \bar{f}(\bar{\tilde{\theta}}) + \frac{\alpha_\ell}{2}\|\bar{\theta} - \bar{\tilde{\theta}}\|_2^2.
\]
\end{proof}

\begin{proposition}[Personalization: GLM] \label{proposition: personalization glm}
Let $\hat{\theta}$ be a minimizer of
\[
\min_{\theta \in \Cb} \left\{ f(\theta) + \lambda \|\theta-\beta \|_{\bSigma} \right\},
\]
and let $\tilde{\theta}$ be the minimizer of the unregularized loss $f(\theta)$.
Then for any $\lambda > 0$, we have:
\[
\|\hat{\theta} - \tilde{\theta} \|_{\bSigma} \leq \frac{2\lambda}{\alpha_\ell}.
\]
Consequently, with probability at least $1-\kappa$:
\[
\|\hat{\theta} - \theta^\star \|_{\bSigma} \leq \frac{2\lambda}{\alpha_\ell} + \frac{c_0}{\alpha_\ell} \sqrt{\frac{d \log(2/\kappa)}{n}}.
\]
\end{proposition}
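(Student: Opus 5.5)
We work in the whitened coordinates of Proposition~\ref{proposition; invariant region GLM}, first assuming \(\bSigma\) is full-rank. The rank-deficient case reduces to this one exactly as in the earlier propositions. Projecting onto \(\mathcal R=\mathrm{Range}(\bSigma)\) preserves the linear predictors, the \(\bSigma\)-seminorm, and feasibility, since \(\Pb\Cb=\Cb\cap\mathcal R\subseteq\Cb\). In whitened coordinates \(\bar{\hat\theta}\) minimizes \(\bar f(\bar\theta)+\lambda\|\bar\theta-\bar\beta\|_2\) over \(\bar\Cb\), and \(\bar{\tilde\theta}\) minimizes \(\bar f\) over \(\bar\Cb\). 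Both target bounds are statements about \(\|\cdot\|_2\) in these coordinates.

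\textbf{Step 1 (deterministic perturbation bound).} Set \(h(\bar\theta):=\lambda\|\bar\theta-\bar\beta\|_2\), which is convex and \(\lambda\)-Lipschitz. Optimality of \(\bar{\hat\theta}\) gives
\[
\bar f(\bar{\hat\theta})+h(\bar{\hat\theta})\le \bar f(\bar{\tilde\theta})+h(\bar{\tilde\theta}),
\]
so \(\bar f(\bar{\hat\theta})-\bar f(\bar{\tilde\theta})\le \lambda\|\bar{\hat\theta}-\bar{\tilde\theta}\|_2\). Since \(\bar{\hat\theta}\in\bar\Cb\), the growth condition of Lemma~\ref{lemma; PL GLM} bounds the left side below by \(\frac{\alpha_\ell}{2}\|\bar{\hat\theta}-\bar{\tilde\theta}\|_2^2\). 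Dividing by the distance yields \(\|\bar{\hat\theta}-\bar{\tilde\theta}\|_2\le 2\lambda/\alpha_\ell\), which is the first claim. A constrained analogue of Lemma~\ref{lemma; ARMUL effect outlier} could sharpen the constant to \(\lambda/\alpha_\ell\), but the crude argument already suffices. This step holds for every \(\lambda>0\) with no probabilistic event.

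\textbf{Step 2 (unregularized error).} We bound \(\|\bar{\tilde\theta}-\bar\theta^\star\|_2\) using \(\bar\theta^\star\in\bar\Cb\), which follows from Assumption~\ref{assumption; GLM domain}.
\begin{itemize}
\item Strong convexity gives \(\langle\nabla\bar f(\bar\theta^\star)-\nabla\bar f(\bar{\tilde\theta}),\bar\theta^\star-\bar{\tilde\theta}\rangle\ge\alpha_\ell\|\bar\theta^\star-\bar{\tilde\theta}\|_2^2\).
\item The constrained first-order condition gives \(\langle\nabla\bar f(\bar{\tilde\theta}),\bar\theta^\star-\bar{\tilde\theta}\rangle\ge0\).
\item Combining the two, \(\alpha_\ell\|\bar{\tilde\theta}-\bar\theta^\star\|_2^2\le\langle\nabla\bar f(\bar\theta^\star),\bar\theta^\star-\bar{\tilde\theta}\rangle\), so \(\|\bar{\tilde\theta}-\bar\theta^\star\|_2\le\|\nabla\bar f(\bar\theta^\star)\|_2/\alpha_\ell\).
\end{itemize}
At \(\bar\theta^\star\) the bias term in the gradient decomposition from the proof of Proposition~\ref{proposition; invariant region GLM} vanishes. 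Hence \(\nabla\bar f(\bar\theta^\star)=-\frac1n\sum_i\varepsilon_i\tilde x_i\). Because \(\frac1n\sum_i\tilde x_i\tilde x_i^\top=\Ib_d\) and the \(\varepsilon_i\) are independent and \(1\)-sub-Gaussian, the Hanson--Wright inequality (Lemma~\ref{lemma; hanson wright inequality}) bounds this by \(c_0\sqrt{d\log(2/\kappa)/n}\) with probability at least \(1-\kappa\). The factor \(2\) inside the logarithm leaves room for a two-sided tail.

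\textbf{Conclusion and main obstacle.} The triangle inequality in \(\|\cdot\|_{\bSigma}=\|\bar\cdot\|_2\) combines Steps 1 and 2 into the stated bound. The step needing most care is Step 2 under the constraint. The first-order condition there is a variational inequality rather than \(\nabla\bar f(\bar{\tilde\theta})=0\), so we must use the monotone-gradient argument above rather than a plain gradient bound. We must also confirm that both \(\bar\theta^\star\) and \(\bar{\hat\theta}\) lie in \(\bar\Cb\), where the curvature bounds \(\alpha_\ell\le\psi''\le\alpha_u\) of Assumption~\ref{assumption; GLM domain} are valid.
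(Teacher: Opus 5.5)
Your proposal is correct and follows essentially the same route as the paper. Both use the rank-deficient reduction via $\Pb\Cb=\Cb\cap\mathcal R$, sandwich $\bar f(\bar{\hat\theta})-\bar f(\bar{\tilde\theta})$ between $\frac{\alpha_\ell}{2}\|\bar{\hat\theta}-\bar{\tilde\theta}\|_2^2$ (Lemma~\ref{lemma; PL GLM}) and $\lambda\|\bar{\hat\theta}-\bar{\tilde\theta}\|_2$, and finish with a triangle inequality against the unregularized error. Your Step~2 spells out the ``standard concentration bound'' that the paper only cites, using the constrained first-order condition and strong monotonicity, which is the same argument the paper gives in Step~1 of Lemma~\ref{lemma; pooling GLM}.
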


\begin{proof}
If \(\bSigma\) is rank-deficient, let \(\mathcal R:=\mathrm{Range}(\bSigma)=\mathrm{row}(\Xb)\), and let \(\Pb\) be the orthogonal projection onto \(\mathcal R\).
For any \(\theta\in\Cb\), the GLM loss and the matrix seminorm depend only on \(\Pb\theta\), and the projected feasible set is
\[
\Pb\Cb=\Cb\cap\mathcal R\subseteq\Cb,
\]
because \(\Cb=\mathbf B(0,\xi)\).
Therefore the regularized and unregularized constrained problems reduce exactly to \(\Pb\Cb\) on \(\mathcal R\), with \(\beta\) replaced by \(\Pb\beta\).
The curvature bounds remain valid on this reduced feasible set, and \(\bSigma|_{\mathcal R}\) is positive definite.
The proof below applies on \(\mathcal R\); converting back gives the same \(\bSigma\)-seminorm bounds, and the stochastic term is controlled by \(\sqrt{\rank(\bSigma)}\le\sqrt d\).
We therefore write the argument in full-rank notation.

We proceed by analyzing the problem in the whitened domain.
Let $\bar{\theta} := \bSigma^{1/2}\theta$ and define the transformed domain $\bar{\Cb} := \{ \bSigma^{1/2}\theta \mid \theta \in \Cb \}$.
The transformed loss function is given by $\bar{f}(\bar{\theta}) := f(\bSigma^{-1/2}\bar{\theta})$.

Let $\bar{\hat{\theta}} = \bSigma^{1/2}\hat{\theta}$ and $\bar{\tilde{\theta}} = \bSigma^{1/2}\tilde{\theta}$ be the minimizers in $\bar{\Cb}$ for the regularized and unregularized problems, respectively.

\paragraph{Step 1: Lower Bound via Strong Convexity.}
By Lemma~\ref{lemma; PL GLM}, $\bar{f}$ is $\alpha_\ell$-strongly convex on $\bar{\Cb}$.
Using the property of strongly convex functions and the first-order optimality condition for $\bar{\tilde{\theta}}$ (i.e., $\langle \nabla \bar{f}(\bar{\tilde{\theta}}), \bar{\theta} - \bar{\tilde{\theta}} \rangle \ge 0$ for all $\bar{\theta} \in \bar{\Cb}$), we have:
\begin{align*}
\bar{f}(\bar{\hat{\theta}}) - \bar{f}(\bar{\tilde{\theta}})
&\ge \langle \nabla \bar{f}(\bar{\tilde{\theta}}), \bar{\hat{\theta}} - \bar{\tilde{\theta}} \rangle + \frac{\alpha_\ell}{2} \|\bar{\hat{\theta}} - \bar{\tilde{\theta}}\|_2^2 \\
&\ge \frac{\alpha_\ell}{2} \|\bar{\hat{\theta}} - \bar{\tilde{\theta}}\|_2^2.
\end{align*}

\paragraph{Step 2: Upper Bound via Regularization.}
From the optimality of $\bar{\hat{\theta}}$ for the regularized objective:
\[
\bar{f}(\bar{\hat{\theta}}) + \lambda \|\bar{\hat{\theta}} - \bar{\beta}\|_2 \le \bar{f}(\bar{\tilde{\theta}}) + \lambda \|\bar{\tilde{\theta}} - \bar{\beta}\|_2.
\]
Rearranging terms and applying the reverse triangle inequality:
\begin{align*}
\bar{f}(\bar{\hat{\theta}}) - \bar{f}(\bar{\tilde{\theta}})
&\le \lambda \left( \|\bar{\tilde{\theta}} - \bar{\beta}\|_2 - \|\bar{\hat{\theta}} - \bar{\beta}\|_2 \right) \\
&\le \lambda \|\bar{\hat{\theta}} - \bar{\tilde{\theta}}\|_2.
\end{align*}

\paragraph{Step 3: Combining Bounds.}
Combining the inequalities from Steps 1 and 2:
\[
\frac{\alpha_\ell}{2} \|\bar{\hat{\theta}} - \bar{\tilde{\theta}}\|_2^2 \le \lambda \|\bar{\hat{\theta}} - \bar{\tilde{\theta}}\|_2.
\]
Assuming $\|\bar{\hat{\theta}} - \bar{\tilde{\theta}}\|_2 > 0$ (otherwise the bound holds trivially), we divide by the norm to obtain:
\[
\|\bar{\hat{\theta}} - \bar{\tilde{\theta}}\|_2 \le \frac{2\lambda}{\alpha_\ell}.
\]
Converting back to the original space using $\|\bar{v}\|_2 = \|v\|_{\bSigma}$, we get $\|\hat{\theta} - \tilde{\theta}\|_{\bSigma} \le \frac{2\lambda}{\alpha_\ell}$.

\paragraph{Step 4: Final Error Bound.}
The unregularized GLM estimator $\tilde{\theta}$ satisfies the standard concentration bound (derived from the $\alpha_\ell$-strong convexity and Hanson-Wright inequality on the gradient):
\[
\|\tilde{\theta} - \theta^\star\|_{\bSigma} \le \frac{1}{\alpha_\ell} \|\nabla \bar{f}(\bar{\theta}^\star)\|_2 \le \frac{c_0}{\alpha_\ell} \sqrt{\frac{d \log(2/\kappa)}{n}}.
\]
Applying the triangle inequality $\|\hat{\theta} - \theta^\star\|_{\bSigma} \le \|\hat{\theta} - \tilde{\theta}\|_{\bSigma} + \|\tilde{\theta} - \theta^\star\|_{\bSigma}$ yields the final result.
\end{proof}

\section{Proofs for the Linear Model}\label{section; linear proofs}\label{section; apdx groundwork}

We now turn to the genuinely multi-task part of the linear-model proof.
The single-task \(n\)-sample loss geometry was isolated in Section~\ref{section; loss geometry}; here we set up the pooled objectives, analyze the inlier pooling estimator, control the outlier perturbation step, and conclude the proof of Theorem~\ref{theorem; MSE bound}.

\subsection{Multi-Task Setup and Notation}

Throughout the transfer proof, all quantities involving the inlier average \(\bSigma_\Sb\) are interpreted after projecting onto \(\mathrm{Range}(\bSigma_\Sb)\).
On this projected subspace, the displayed inverses are ordinary inverses.
This entails no loss for in-sample risk, because Assumption~\ref{assumption; comparability} implies that any direction in \(\mathrm{Null}(\bSigma_\Sb)\) is also in \(\mathrm{Null}(\bSigma_j)\) for every inlier task \(j\in\Sb\).
Thus, in the proof below, we work on this projected subspace and treat \(\bSigma_\Sb\) as full-rank.
For the linear model, we work on the unconstrained domain $\Cb=\RR^d$.

For any subset of tasks $\Ib \subseteq [m]$, let
\[
N_\Ib := \sum_{j \in \Ib} n_j,
\qquad
\Vb_\Ib := \sum_{j \in \Ib} \Vb_j,
\qquad
\bSigma_\Ib := \frac{1}{N_\Ib}\Vb_\Ib.
\]
In the balanced case analyzed in the main theorem, $n_j=n$ and hence $N_\Sb = n|\Sb|$.
We define the task loss
\[
f_j(\beta) := \frac{1}{2n}\sum_{i=1}^{n} (y_{ji}-x_{ji}^\top \beta)^2
\]
and its matrix-weighted infimal convolution
\[
g_j(\beta) := \left(f_j \star \lambda \|\cdot\|_{\bSigma_j}\right)(\beta)
= \inf_{u \in \RR^d} \left\{ f_j(u) + \lambda \|u-\beta\|_{\bSigma_j} \right\}.
\]
We further write
\begin{align*}
F(\beta) &:= \sum_{j \in [m]} f_j(\beta), & G(\beta) &:= \sum_{j \in [m]} g_j(\beta), \\
F_{\Sb}(\beta) &:= \sum_{j \in \Sb} f_j(\beta), & G_{\Sb}(\beta) &:= \sum_{j \in \Sb} g_j(\beta).
\end{align*}
We also denote the aggregate outlier contribution by
\[
L(\beta) := \sum_{j \in \Sb^c} g_j(\beta),
\qquad\text{so that}\qquad
G(\beta)=G_\Sb(\beta)+L(\beta).
\]
Let $\hat{\beta}_{F}$, $\hat{\beta}_{F_\Sb}$, $\hat{\beta}_{G}$, and $\hat{\beta}_{G_\Sb}$ denote minimizers of these objectives.
We also define the invariant region
\[
R_j := \left\{ \beta \in \Cb \mid f_j(\beta) = g_j(\beta) \right\}, \qquad j \in [m],
\]
and, for each inlier task, $\eta_j^\star := \theta_j^\star - \theta^\star$.

\paragraph{Pooled Whitening Notation.}
On \(\mathrm{Range}(\bSigma_\Sb)\), we define the pooled whitening map by
\[
\bar v := \bSigma_\Sb^{1/2}v,
\qquad v\in\RR^d.
\]
For any function \(h:\RR^d\to\RR\), we define its pooled-whitened version by
\[
\bar{h}(\bar v) := h(\bSigma_\Sb^{-1/2}\bar v),
\]
where \(\bSigma_\Sb^{-1/2}\) denotes the inverse on \(\mathrm{Range}(\bSigma_\Sb)\).
Thus \(\bar\theta\), \(\bar\beta\), \(\bar\theta^\star\), \(\bar\theta_j^\star\), \(\bar{\hat\beta}_F\), \(\bar{\hat\beta}_{F_\Sb}\), \(\bar{\hat\beta}_G\), and \(\bar{\hat\beta}_{G_\Sb}\) are all instances of the same pooled whitening map. In particular, we write \(\bar F\), \(\bar G\), \(\bar F_\Sb\), \(\bar G_\Sb\), and \(\bar L\) for the transformed pooled objectives.

\subsection{The Inlier Pooling Estimator}

The second ingredient is a high-probability description of the inlier pooling estimator. This identifies the statistical rate of the ideal inlier-only procedure and characterizes a regime in which the regularized inlier objective collapses to the same minimizer.

\begin{lemma}[Error Bound of Pooling Estimator: Linear]\label{lemma; error bound F_S linear model}
Let $\hat{\beta}_{F_{\Sb}}$ be the minimizer of $F_\Sb$.
Under Assumption~\ref{assumption; comparability}, with probability at least $1-\frac{\kappa}{4}$, simultaneously for all $j\in\Sb$,
\begin{align*}
\|\hat{\beta}_{F_{\Sb}} - \theta^\star\|_{\bSigma_j} &\leq \sqrt{B}\delta + c_0 \sqrt{\frac{B d\zeta}{N_\Sb}}.
\end{align*}
\end{lemma}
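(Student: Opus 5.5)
The plan is to write the pooled OLS estimator in closed form and split its error into a bias term, coming from the heterogeneity of the inlier parameters, and a noise term. Working on $\mathrm{Range}(\bSigma_\Sb)$ as the setup allows, the minimizer of $F_\Sb$ is
\[
\hat\beta_{F_\Sb}=\Vb_\Sb^{-1}\sum_{k\in\Sb}\Xb_k^\top\Yb_k .
\]
Substituting $\Yb_k=\Xb_k\theta_k^\star+\bm\varepsilon_k$ and $\theta_k^\star=\theta^\star+\eta_k^\star$ gives
\[
\hat\beta_{F_\Sb}-\theta^\star=\Vb_\Sb^{-1}\sum_{k\in\Sb}\Vb_k\eta_k^\star+\Vb_\Sb^{-1}\sum_{k\in\Sb}\Xb_k^\top\bm\varepsilon_k=:b+e .
\]
Throughout, the conversion to the task norm uses Assumption~\ref{assumption; comparability}: for every $j$ and every $v$, $\|v\|_{\bSigma_j}\le\sqrt{B}\,\|v\|_{\bSigma_\Sb}=\sqrt{B}\,\|\bar v\|_2$. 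It therefore suffices to show that $\|\bar b\|_2\le\delta$ and that $\|\bar e\|_2\lesssim\sqrt{d\zeta/N_\Sb}$. Because this bound does not depend on $j$, it holds simultaneously over $j\in\Sb$ with no union bound.

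For the bias, write $\bar b=\bSigma_\Sb^{1/2}b=\sum_{k\in\Sb}M_k\,\bar\eta_k$ with $M_k:=\bSigma_\Sb^{-1/2}\tfrac{\Vb_k}{N_\Sb}\bSigma_\Sb^{-1/2}$ and $\bar\eta_k=\bSigma_\Sb^{1/2}\eta_k^\star$. This route produces a $\sqrt{\|\bSigma_\Sb\|}$ factor and loses the clean $\delta$. A better route is to estimate $\|b\|_{\bSigma_\Sb}$ directly. Set $A_k:=\tfrac{1}{N_\Sb}\Vb_k\succeq0$, so that $\sum_k A_k=\bSigma_\Sb$ and $b=\bSigma_\Sb^{-1}\sum_k A_k\eta_k^\star$. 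Then
\[
\|b\|_{\bSigma_\Sb}^2=\Big\langle \sum_k A_k\eta_k^\star,\ \bSigma_\Sb^{-1}\sum_k A_k\eta_k^\star\Big\rangle .
\]
By operator convexity (Cauchy--Schwarz for the matrix weights $A_k$), this is at most $\sum_k \eta_k^{\star\top}A_k\eta_k^\star$. That sum is in turn at most $\max_k\|A_k\|_{\operatorname{op}}\cdot|\Sb|\cdot\delta^2$, and since $\|\bSigma_k\|_{\operatorname{op}}\le1$ and $A_k=\bSigma_k/|\Sb|$, this yields $\|b\|_{\bSigma_\Sb}\le\delta$. The operator-convexity step is the inequality $\bigl(\sum_k A_k\eta_k\bigr)^\top\bigl(\sum_k A_k\bigr)^{-1}\bigl(\sum_k A_k\eta_k\bigr)\le\sum_k\eta_k^\top A_k\eta_k$. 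It follows from the joint convexity of $(x,A)\mapsto x^\top A^{-1}x$, or directly by writing $A_k^{1/2}$ factors and applying Cauchy--Schwarz in the Hilbert space $\oplus_k\RR^d$. Combining with the comparability conversion gives the $\sqrt{B}\,\delta$ term.

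For the noise, $\bar e=\bSigma_\Sb^{-1/2}\tfrac{1}{N_\Sb}\sum_{k,i}x_{ki}\varepsilon_{ki}=\tfrac{1}{N_\Sb}\sum_{k,i}\tilde x_{ki}\varepsilon_{ki}$, where the pooled whitened covariates $\tilde x_{ki}:=\bSigma_\Sb^{-1/2}x_{ki}$ satisfy $\tfrac1{N_\Sb}\sum_{k,i}\tilde x_{ki}\tilde x_{ki}^\top=\Ib$. This is exactly the single-task OLS noise term with $N_\Sb$ samples. Applying the Hanson--Wright bound (Lemma~\ref{lemma; hanson wright inequality}) as in Proposition~\ref{proposition; personalization} with failure probability $\kappa/4$ gives $\|\bar e\|_2\le c_0\sqrt{d\log(4/\kappa)/N_\Sb}\le c_0\sqrt{d\zeta/N_\Sb}$. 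Multiplying both pieces by $\sqrt B$ and applying the triangle inequality finishes the proof.

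The main obstacle I expect is the bias step. Any naive bound through operator norms of $\bSigma_\Sb^{-1}$ reintroduces eigenvalue lower bounds, so the matrix Cauchy--Schwarz/operator-convexity inequality is the crucial device for obtaining the clean $\delta$ with no spectral dependence.
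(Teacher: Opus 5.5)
Your proof is correct and follows the paper's argument. It uses the same closed form and bias/noise split, and your matrix Cauchy--Schwarz (operator-convexity) inequality is exactly the paper's observation that the hat matrix $\Xb_\Sb\Vb_\Sb^{-1}\Xb_\Sb^\top$ is a projection. The only real difference is in the noise step: you bound the error once in $\|\cdot\|_{\bSigma_\Sb}$ and transfer it deterministically to every $\|\cdot\|_{\bSigma_j}$ via $\bSigma_j\preceq B\bSigma_\Sb$, so a single Hanson--Wright event suffices, whereas the paper applies Hanson--Wright to each task's quadratic form with level $\kappa/(4m)$ and union-bounds (which retains the sharper intermediate quantity $\tr(\bSigma_j\Vb_\Sb^{-1})$ before bounding it by $Bd/N_\Sb$).
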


\begin{proof}
Recall the closed-form solution for the pooling estimator:
\[
\hat{\beta}_{F_{\Sb}} = \Vb_{\Sb}^{-1} \sum_{k \in \Sb} \Xb_k^\top \Yb_k = \theta^\star + \Vb_{\Sb}^{-1} \sum_{k \in \Sb} \Xb_k^\top (\Xb_k \eta_k^\star + \bm{\varepsilon}_k).
\]
We analyze the error in the $\bSigma_j$-norm, decomposing it into a bias term and a variance term:
\begin{align*}
\|\hat{\beta}_{F_{\Sb}} - \theta^\star\|_{\bSigma_j}
&\le \underbrace{\left\| \Vb_{\Sb}^{-1} \sum_{k \in \Sb} \Vb_k \eta_k^\star \right\|_{\bSigma_j}}_{\text{Bias}}
+ \underbrace{\left\| \Vb_{\Sb}^{-1} \sum_{k \in \Sb} \Xb_{k}^\top \bm{\varepsilon}_k \right\|_{\bSigma_j}}_{\text{Variance}}.
\end{align*}

\paragraph{Bias Term.}
Using the balancedness assumption $\bSigma_j \preceq B \bSigma_{\Sb} = \frac{B}{N_\Sb} \Vb_{\Sb}$, we have $\bSigma_j^{1/2} \Vb_\Sb^{-1} \bSigma_j^{1/2} \preceq \frac{B}{N_\Sb} \Ib$.
Let $\eb$ be the stacked vector of $\{ \Xb_k \eta_k^\star \}_{k \in \Sb} \in \RR^{N_\Sb}$.
Then
\[
(\text{Bias})^2 \le \frac{B}{N_\Sb} \eb^\top \Xb_\Sb (\Xb_\Sb^\top \Xb_\Sb)^{-1} \Xb_\Sb^\top \eb \le \frac{B}{N_\Sb} \|\eb\|_2^2.
\]
Since $\|\eb\|_2^2 \le N_\Sb \delta^2$, we obtain $\text{Bias} \le \sqrt{B} \delta$.

\paragraph{Variance Term.}
Let \(\Xb_\Sb\) be the stacked inlier design and let \(\bm\varepsilon_\Sb\) be the stacked inlier noise vector.
The squared variance term is the quadratic form
\[
\bm\varepsilon_\Sb^\top
\Xb_\Sb\Vb_\Sb^{-1}\bSigma_j\Vb_\Sb^{-1}\Xb_\Sb^\top
\bm\varepsilon_\Sb.
\]
Since \(\Vb_\Sb=N_\Sb\bSigma_\Sb\), Assumption~\ref{assumption; comparability} gives
\[
\Vb_\Sb^{-1/2}\bSigma_j\Vb_\Sb^{-1/2}\preceq \frac{B}{N_\Sb}\Ib.
\]
Thus the quadratic-form matrix has trace at most \(Bd/N_\Sb\) and operator norm at most \(B/N_\Sb\).
Applying the Hanson-Wright inequality (Lemma~\ref{lemma; hanson wright inequality}) with failure level \(\kappa/(4m)\) for each task \(j\) gives, on a union-bound event of probability at least \(1-\kappa/4\),
\[
\text{Variance}
\le c_0 \sqrt{\frac{B d\zeta}{N_\Sb}}.
\]
Combining both terms yields the result.
\end{proof}

\begin{proposition}[Equivalence of Pooling Estimators: Linear]\label{proposition; pooling range linear}
If the regularization parameter satisfies
\[
\lambda > (\sqrt{B}+1)\delta + c_0 \sqrt{\frac{d \zeta}{n}} + c_0 \sqrt{\frac{B d \zeta}{N_\Sb}},
\]
then with probability at least $1-\frac{\kappa}{2}$, we have $\hat{\beta}_{G_{\Sb}} = \hat{\beta}_{F_{\Sb}}$ and $\hat{\beta}_{F_{\Sb}} \in R_j$ for all $j \in \Sb$.
\end{proposition}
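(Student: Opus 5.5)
We combine Lemma~\ref{lemma; error bound F_S linear model} with the single-task invariant-region result, Proposition~\ref{proposition; invariant region linear}, applied taskwise to the inlier tasks. Fix $j\in\Sb$. Proposition~\ref{proposition; invariant region linear} (with true parameter $\theta_j^\star$, design $\Xb_j$, and failure level $\kappa/(4m)$, so that the logarithmic factor is $\zeta$) says that $f_j(\beta)=g_j(\beta)$ holds as soon as
\[
\lambda > \|\theta_j^\star-\beta\|_{\bSigma_j} + c_0\sqrt{\tfrac{d\zeta}{n}}.
\]
This is a statement about a fixed $\beta$. However, the only random event it uses is the Hanson--Wright bound on $\|\frac1n\tilde\Xb_j^\top\bm\varepsilon_j\|_2$, which does not depend on $\beta$. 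On that event the conclusion therefore holds simultaneously for every $\beta$ satisfying the displayed inequality, including the data-dependent choice $\beta=\hat\beta_{F_\Sb}$. A union bound over $j\in\Sb$ gives this event for all inliers with probability at least $1-\kappa/4$.

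Next we take $\beta=\hat\beta_{F_\Sb}$ and bound the relevant distance by the triangle inequality:
\[
\|\theta_j^\star-\hat\beta_{F_\Sb}\|_{\bSigma_j}\le \|\eta_j^\star\|_{\bSigma_j}+\|\hat\beta_{F_\Sb}-\theta^\star\|_{\bSigma_j}\le \delta+\sqrt B\delta+c_0\sqrt{\tfrac{Bd\zeta}{N_\Sb}}.
\]
Here the first term uses $\|\bSigma_j\|_{\operatorname{op}}\le1$ and $\|\eta_j^\star\|_2\le\delta$, and the second uses Lemma~\ref{lemma; error bound F_S linear model} on its event of probability at least $1-\kappa/4$. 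The hypothesis on $\lambda$ then gives $\lambda>\|\theta_j^\star-\hat\beta_{F_\Sb}\|_{\bSigma_j}+c_0\sqrt{d\zeta/n}$ for every $j\in\Sb$. Intersecting the two events, we get, with probability at least $1-\kappa/2$, that $f_j(\hat\beta_{F_\Sb})=g_j(\hat\beta_{F_\Sb})$, i.e.\ $\hat\beta_{F_\Sb}\in R_j$, for all $j\in\Sb$.

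It remains to show $\hat\beta_{G_\Sb}=\hat\beta_{F_\Sb}$. Taking $u=\beta$ in the infimum defining $g_j$ gives $g_j\le f_j$ pointwise, so $G_\Sb\le F_\Sb$ everywhere. Combined with the previous step, for every $\beta$,
\[
G_\Sb(\hat\beta_{F_\Sb})=F_\Sb(\hat\beta_{F_\Sb})\le F_\Sb(\beta)\quad\text{and}\quad G_\Sb(\beta)\le F_\Sb(\beta).
\]
These inequalities alone do not show that $\hat\beta_{F_\Sb}$ minimizes $G_\Sb$. The extra ingredient is convexity. Each $g_j$ is convex, being the infimal convolution of convex functions. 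Each $g_j$ also agrees with $f_j$ on the set $\{\beta:\|\theta_j^\star-\beta\|_{\bSigma_j}<\lambda-c_0\sqrt{d\zeta/n}\}$, because the uniform-in-$\beta$ invariance holds there. By the strict inequality, this set contains a $\bSigma_\Sb$-neighbourhood of $\hat\beta_{F_\Sb}$: on the range of $\bSigma_\Sb$, Assumption~\ref{assumption; comparability} gives $\|\cdot\|_{\bSigma_j}\le\sqrt B\|\cdot\|_{\bSigma_\Sb}$, and directions in the null space of $\bSigma_\Sb$ are invisible to every inlier task. Hence $G_\Sb=F_\Sb$ near $\hat\beta_{F_\Sb}$, so $\hat\beta_{F_\Sb}$ is a local minimizer of $G_\Sb$, and by convexity a global one.

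For uniqueness, $F_\Sb$ is strictly convex on the projected range of $\bSigma_\Sb$, so $\hat\beta_{F_\Sb}$ is unique there. If $\hat\beta_{G_\Sb}$ were another minimizer, convexity of $G_\Sb$ would put the whole segment between the two points in the minimizing set, including points near $\hat\beta_{F_\Sb}$ where $G_\Sb=F_\Sb>F_\Sb(\hat\beta_{F_\Sb})$, a contradiction. Thus $\hat\beta_{G_\Sb}=\hat\beta_{F_\Sb}$, understood modulo the null space of $\bSigma_\Sb$.

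The main obstacle is this local-to-global step. It relies on the event in Proposition~\ref{proposition; invariant region linear} being uniform in $\beta$ and on the strict inequality on $\lambda$ to obtain an open neighbourhood. The rest is bookkeeping of failure probabilities at level $\kappa/4$ each.
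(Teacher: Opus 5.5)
Your proposal is correct and follows the paper's proof essentially step for step. The ingredients match: the $\beta$-uniform taskwise invariant-region event intersected with the pooling-error event (each at failure level $\kappa/4$), the triangle-inequality check at $\hat\beta_{F_\Sb}$, the strict slack that yields a neighbourhood on which $G_\Sb=F_\Sb$, and convexity of $G_\Sb$ to pass from a local to a global minimizer. Your segment argument for uniqueness modulo $\mathrm{Null}(\bSigma_\Sb)$ is a small addition that the paper leaves implicit; it is also correct.
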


\begin{proof}
Let \(\mathcal{E}_{\mathrm{pool}}\) be the pooling-estimator event from Lemma~\ref{lemma; error bound F_S linear model}; it has probability at least \(1-\kappa/4\).
Let \(\mathcal{E}_{\mathrm{inv}}\) be the event that the taskwise noise bound used in Proposition~\ref{proposition; invariant region linear} holds for every \(j\in\Sb\), each invoked with failure level \(\kappa/(4m)\).
By a union bound, \(\PP(\mathcal{E}_{\mathrm{inv}})\ge 1-\kappa/4\), and hence \(\PP(\mathcal{E}_{\mathrm{pool}}\cap\mathcal{E}_{\mathrm{inv}})\ge1-\kappa/2\).
On this intersection, a sufficient condition for $\beta \in R_j$ is $\lambda > \|\theta_j^\star - \beta\|_{\bSigma_j} + c_0 \sqrt{\frac{d \zeta}{n}}$.
Substituting $\beta = \hat{\beta}_{F_{\Sb}}$:
\begin{align*}
\|\theta_j^\star - \hat{\beta}_{F_{\Sb}}\|_{\bSigma_j}
&\leq \|\theta_j^\star - \theta^\star\|_{\bSigma_j} + \|\theta^\star - \hat{\beta}_{F_{\Sb}}\|_{\bSigma_j} \\
&\leq \delta + \left( \sqrt{B}\delta + c_0 \sqrt{\frac{B d\zeta}{N_\Sb}} \right),
\end{align*}
where we used Lemma~\ref{lemma; error bound F_S linear model}.
The assumed lower bound on $\lambda$ ensures this condition holds strictly for all $j \in \Sb$.
Thus, $\hat{\beta}_{F_{\Sb}}$ lies in the invariant region of every task $j \in \Sb$.
For each \(j\in\Sb\), define the positive slack
\[
\Delta_j
:=
\lambda-\left(\|\theta_j^\star-\hat{\beta}_{F_{\Sb}}\|_{\bSigma_j}
+ c_0\sqrt{\frac{d\zeta}{n}}\right)
>0.
\]
Since \(\beta\mapsto \|\theta_j^\star-\beta\|_{\bSigma_j}\) is continuous and the stochastic term in Proposition~\ref{proposition; invariant region linear} does not depend on \(\beta\), there exists \(r_j>0\) such that
\[
\|\beta-\hat{\beta}_{F_{\Sb}}\|_2<r_j
\quad\Longrightarrow\quad
\|\theta_j^\star-\beta\|_{\bSigma_j}+c_0\sqrt{\frac{d\zeta}{n}}<\lambda.
\]
Hence Proposition~\ref{proposition; invariant region linear} implies \(f_j(\beta)=g_j(\beta)\) throughout the Euclidean ball \(B_{r_j}(\hat{\beta}_{F_{\Sb}})\).
Let \(r:=\min_{j\in\Sb} r_j>0\). Then
\[
F_\Sb(\beta)=G_\Sb(\beta)
\qquad\text{for all }\beta\in B_r(\hat{\beta}_{F_{\Sb}}).
\]
On \(\mathrm{Range}(\bSigma_\Sb)\), \(F_\Sb\) is strictly convex, so \(\hat{\beta}_{F_{\Sb}}\) is its unique minimizer in the prediction-relevant subspace and therefore a local minimizer of \(G_\Sb\) as well.
Finally, $G_\Sb$ is convex as a sum of infimal convolutions of convex functions, so any local minimizer is global. Hence $\hat{\beta}_{F_{\Sb}}$ minimizes $G_\Sb$.
\end{proof}

\subsection{A Perturbation Bound for Outliers}

The third ingredient controls the shift from the inlier-only objective $G_\Sb$ to the full objective $G = G_\Sb + L$, where $L$ denotes the aggregate outlier contribution introduced above, by treating this outlier term as a Lipschitz perturbation in the whitened geometry.

\begin{claim}[Local Inlier Curvature: Linear]\label{claim; outlier linear}
Suppose Assumption~\ref{assumption; comparability} holds and
\[
\lambda > 2(\sqrt{B}+1)\delta + 2c_0 \sqrt{\frac{d \zeta }{n}} + 2c_0 \sqrt{\frac{d \zeta B}{N_\Sb}}.
\]
Conditioned on the event where Lemma~\ref{lemma; error bound F_S linear model}, Proposition~\ref{proposition; pooling range linear}, and the taskwise noise bounds used in Proposition~\ref{proposition; invariant region linear} hold for all \(j\in\Sb\), every $\beta$ satisfying
\[
\|\bar{\beta}-\bar{\hat{\beta}}_{G_\Sb}\|_2 \le \frac{\lambda}{2\sqrt{B}},
\]
belongs to the invariant region $R_j$ for all $j\in\Sb$.
Consequently, in this neighborhood,
\[
\nabla^2 \bar{G}_\Sb(\bar{\beta}) = |\Sb|\Ib_d.
\]
\end{claim}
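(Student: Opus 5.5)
The plan is to use the sufficient condition already established in the proof of Proposition~\ref{proposition; pooling range linear}. On the conditioning event, Proposition~\ref{proposition; invariant region linear} (invoked at level $\kappa/(4m)$ for each $j\in\Sb$) gives $\beta\in R_j$ whenever
\[
\|\theta_j^\star-\beta\|_{\bSigma_j}+c_0\sqrt{d\zeta/n}<\lambda .
\]
So for $\beta$ in the stated neighborhood I will bound $\|\theta_j^\star-\beta\|_{\bSigma_j}$ with the triangle inequality through $\theta^\star$ and $\hat\beta_{G_\Sb}$:
\[
\|\theta_j^\star-\beta\|_{\bSigma_j}\le \|\theta_j^\star-\theta^\star\|_{\bSigma_j}+\|\theta^\star-\hat\beta_{G_\Sb}\|_{\bSigma_j}+\|\hat\beta_{G_\Sb}-\beta\|_{\bSigma_j}.
\]

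\textbf{First two terms.} Since $\|\bSigma_j\|_{\operatorname{op}}\le 1$, the first term is at most $\delta$. On the event, Proposition~\ref{proposition; pooling range linear} gives $\hat\beta_{G_\Sb}=\hat\beta_{F_\Sb}$; strictly, this equality holds in $\bSigma_\Sb$-seminorm, which suffices after projecting onto $\mathrm{Range}(\bSigma_\Sb)$. Lemma~\ref{lemma; error bound F_S linear model} then bounds the second term by $\sqrt B\delta+c_0\sqrt{Bd\zeta/N_\Sb}$.

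\textbf{Third term.} Here I convert pooled-whitened distance into $\bSigma_j$ distance using Assumption~\ref{assumption; comparability}. With $v=\beta-\hat\beta_{G_\Sb}$,
\[
\|v\|_{\bSigma_j}^2\le B\|v\|_{\bSigma_\Sb}^2=B\|\bar\beta-\bar{\hat\beta}_{G_\Sb}\|_2^2\le \lambda^2/4,
\]
so the third term is at most $\lambda/2$.

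\textbf{Combining.} Adding the three bounds and the noise term gives a total of at most
\[
(1+\sqrt B)\delta+c_0\sqrt{Bd\zeta/N_\Sb}+c_0\sqrt{d\zeta/n}+\lambda/2 .
\]
The hypothesis on $\lambda$ says exactly that the first three pieces are strictly below $\lambda/2$, so the total is strictly below $\lambda$ and $\beta\in R_j$ for every $j\in\Sb$. This step is the main point of the claim. It uses the factor-$2$ slack in $\lambda$, relative to Proposition~\ref{proposition; pooling range linear}, to absorb a ball of pooled-whitened radius $\lambda/(2\sqrt B)$, with $\sqrt B$ being the price of passing from $\bSigma_\Sb$ geometry to $\bSigma_j$ geometry.

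\textbf{Hessian.} The neighborhood is open in its interior, so $g_j=f_j$ on a neighborhood of each interior point, for every $j\in\Sb$. Hence $G_\Sb=F_\Sb$ there, and
\[
\nabla^2 G_\Sb=\nabla^2F_\Sb=\sum_{j\in\Sb}\bSigma_j=|\Sb|\bSigma_\Sb .
\]
In pooled-whitened coordinates this becomes
\[
\bSigma_\Sb^{-1/2}\,|\Sb|\bSigma_\Sb\,\bSigma_\Sb^{-1/2}=|\Sb|\Ib_d
\]
on $\mathrm{Range}(\bSigma_\Sb)$, which is the identity dimension after the projection convention.

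The boundary of the closed ball needs a little care. Either the Hessian statement is read on the interior, or one notes that the strict inequality holds on a slightly larger open ball, because the hypothesis on $\lambda$ is strict. The only other subtlety is the rank-deficient case. There, Assumption~\ref{assumption; comparability} ensures that null directions of $\bSigma_\Sb$ are null for every $\bSigma_j$ with $j\in\Sb$, so all the seminorms above are unaffected by the projection.
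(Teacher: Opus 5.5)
Your proposal is correct and follows essentially the same argument as the paper. Both bound $\|\theta_j^\star-\beta\|_{\bSigma_j}$ through $\hat\beta_{G_\Sb}=\hat\beta_{F_\Sb}$ using Lemma~\ref{lemma; error bound F_S linear model}, convert the pooled-whitened radius $\lambda/(2\sqrt B)$ into a $\bSigma_j$-distance of at most $\lambda/2$ via Assumption~\ref{assumption; comparability}, and then read off $\nabla^2\bar F_\Sb=|\Sb|\Ib_d$. Your extra remarks are valid refinements that the paper leaves implicit: the strict slack yields an open neighborhood on which the Hessian is well defined, and $\hat\beta_{G_\Sb}$ is determined only modulo $\mathrm{Null}(\bSigma_\Sb)$.
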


\begin{proof}
Under the stated event, $\hat{\beta}_{G_\Sb}=\hat{\beta}_{F_\Sb}$.
Moreover, by Lemma~\ref{lemma; error bound F_S linear model} and the assumed lower bound on $\lambda$, for every $j\in\Sb$,
\[
\|\theta_j^\star-\hat{\beta}_{G_\Sb}\|_{\bSigma_j}
+ c_0\sqrt{\frac{d\zeta}{n}}
\le
(\sqrt{B}+1)\delta
+ c_0\sqrt{\frac{Bd\zeta}{N_\Sb}}
+ c_0\sqrt{\frac{d\zeta}{n}}
< \frac{\lambda}{2}.
\]
If $\|\bar{\beta}-\bar{\hat{\beta}}_{G_\Sb}\|_2 \le \lambda/(2\sqrt{B})$, then Assumption~\ref{assumption; comparability} gives
\[
\|\beta-\hat{\beta}_{G_\Sb}\|_{\bSigma_j}
\le \sqrt{B}\|\beta-\hat{\beta}_{G_\Sb}\|_{\bSigma_\Sb}
= \sqrt{B}\|\bar{\beta}-\bar{\hat{\beta}}_{G_\Sb}\|_2
\le \frac{\lambda}{2}.
\]
Hence
\[
\|\theta_j^\star-\beta\|_{\bSigma_j}
+ c_0\sqrt{\frac{d\zeta}{n}}
< \lambda,
\]
so Proposition~\ref{proposition; invariant region linear} implies $\beta\in R_j$ for all $j\in\Sb$.
Since the stochastic term in Proposition~\ref{proposition; invariant region linear} is precisely the taskwise noise bound included in the conditioning event and does not depend on $\beta$, this implication holds simultaneously throughout the displayed neighborhood.

Therefore $G_\Sb$ and $F_\Sb$ agree throughout this neighborhood. For the linear loss,
\[
\nabla^2 \bar{f}_j(\bar{\beta})
=
\bSigma_\Sb^{-1/2}\bSigma_j\bSigma_\Sb^{-1/2},
\]
which is independent of $\bar{\beta}$.
Summing over $j\in\Sb$ and using $\bSigma_\Sb=|\Sb|^{-1}\sum_{j\in\Sb}\bSigma_j$ yields
\[
\nabla^2 \bar{G}_\Sb(\bar{\beta})
=
\nabla^2 \bar{F}_\Sb(\bar{\beta})
=
\sum_{j\in\Sb}\bSigma_\Sb^{-1/2}\bSigma_j\bSigma_\Sb^{-1/2}
=
|\Sb|\Ib_d.
\]
\end{proof}

\begin{proposition}[Key Proposition: Linear Model]\label{proposition; outlier bound}
Suppose Assumption~\ref{assumption; comparability} holds with $B < \frac{1}{3\varepsilon}$ and the regularization parameter satisfies:
\[
\lambda > 2(\sqrt{B}+1)\delta + 2c_0 \sqrt{\frac{d \zeta }{n}} + 2c_0 \sqrt{\frac{d \zeta B}{N_\Sb}}.
\]
Then, with probability at least $1-\kappa$, the following holds simultaneously for all inlier tasks $j \in \Sb$:
\begin{align*}
\|\hat{\beta}_{G}- \theta_j^\star \|_{\bSigma_{j}} \leq (\sqrt{B}+1) \delta + c_0 \sqrt{\frac{Bd \zeta}{N_\Sb}} + \frac{\varepsilon}{1-\varepsilon}B \lambda.
\end{align*}
Furthermore, under the same event, we have
\[
\|\hat{\theta}_j-\hat{\beta}_{G}\|_{\bSigma_j}=0,
\qquad
\|\hat{\theta}_j-\theta_j^\star\|_{\bSigma_j}
=\|\hat{\beta}_{G}-\theta_j^\star\|_{\bSigma_j}
\]
for all $j \in \Sb$.
\end{proposition}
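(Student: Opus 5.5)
We work on the intersection of the events in Lemma~\ref{lemma; error bound F_S linear model}, Proposition~\ref{proposition; pooling range linear} and the taskwise invariant-region noise bounds of Proposition~\ref{proposition; invariant region linear} for $j\in\Sb$. By the union-bound accounting already used, this intersection has probability at least $1-\kappa$. On it, Claim~\ref{claim; outlier linear} applies. The plan is to view $\bar G=\bar G_\Sb+\bar L$ in pooled-whitened coordinates. Here $\bar G_\Sb$ is locally a quadratic with Hessian $|\Sb|\Ib_d$ around $\bar{\hat\beta}_{G_\Sb}=\bar{\hat\beta}_{F_\Sb}$, and $\bar L$ is a convex Lipschitz perturbation. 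We then show that the minimizer of $\bar G$ moves by at most $\mathrm{Lip}(\bar L)/|\Sb|$, and finally convert back to the $\bSigma_j$-norm via balancedness.

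\textbf{Step 1 (Lipschitz constant of $\bar L$).} Each $g_k$ is an infimal convolution with $\lambda\|\cdot\|_{\bSigma_k}$, so $|g_k(\beta)-g_k(\beta')|\le\lambda\|\beta-\beta'\|_{\bSigma_k}$. For outliers $k\in\Sb^c$, Assumption~\ref{assumption; comparability} (which holds for all $j\in[m]$) gives $\|\beta-\beta'\|_{\bSigma_k}\le\sqrt B\|\bar\beta-\bar\beta'\|_2$. Hence $\bar L$ is $|\Sb^c|\sqrt B\lambda$-Lipschitz in $\|\cdot\|_2$.

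\textbf{Step 2 (localization).} Let $r:=\lambda/(2\sqrt B)$ be the radius from Claim~\ref{claim; outlier linear}. On the ball $\|\bar\beta-\bar{\hat\beta}_{G_\Sb}\|_2\le r$ we have $\bar G_\Sb(\bar\beta)=\bar G_\Sb(\bar{\hat\beta}_{G_\Sb})+\tfrac{|\Sb|}{2}\|\bar\beta-\bar{\hat\beta}_{G_\Sb}\|_2^2$. On the sphere of radius $t\le r$, therefore,
\[
\bar G(\bar\beta)-\bar G(\bar{\hat\beta}_{G_\Sb})\ge \tfrac{|\Sb|}{2}t^2-|\Sb^c|\sqrt B\lambda\, t,
\]
which is positive once $t>2|\Sb^c|\sqrt B\lambda/|\Sb|$. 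Since $|\Sb^c|/|\Sb|\le\varepsilon/(1-\varepsilon)$ and $B<1/(3\varepsilon)$, this threshold $t_0$ lies strictly below $r=\lambda/(2\sqrt B)$. The requirement reduces to $4\varepsilon B/(1-\varepsilon)<1$; I expect to need a sharper argument here, see below. By convexity of $\bar G$, a minimizer cannot lie outside a sphere on which $\bar G$ strictly exceeds its value at the center. This localizes $\bar{\hat\beta}_G$ inside the ball.

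The sharper argument works as follows. Inside the ball, $\bar G_\Sb$ is $|\Sb|$-strongly convex, so the optimality condition $0\in|\Sb|(\bar{\hat\beta}_G-\bar{\hat\beta}_{G_\Sb})+\partial\bar L(\bar{\hat\beta}_G)$ yields $\|\bar{\hat\beta}_G-\bar{\hat\beta}_{G_\Sb}\|_2\le|\Sb^c|\sqrt B\lambda/|\Sb|\le\frac{\varepsilon}{1-\varepsilon}\sqrt B\lambda$. The constant bookkeeping in localization is the main obstacle: getting $t_0\le r$ under exactly $B<1/(3\varepsilon)$. I would first try comparing $\bar G$ at the boundary point along the segment to the center using strong convexity (gaining the factor $1/2$ rather than $1/4$), and otherwise absorb the constant via the slack in $\lambda$.

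\textbf{Step 3 (conversion and conclusion).} By balancedness, $\|\hat\beta_G-\hat\beta_{G_\Sb}\|_{\bSigma_j}\le\sqrt B\cdot\frac{\varepsilon}{1-\varepsilon}\sqrt B\lambda=\frac{\varepsilon}{1-\varepsilon}B\lambda$. Combine this via the triangle inequality with $\hat\beta_{G_\Sb}=\hat\beta_{F_\Sb}$ and with $\|\hat\beta_{F_\Sb}-\theta_j^\star\|_{\bSigma_j}\le\delta+\sqrt B\delta+c_0\sqrt{Bd\zeta/N_\Sb}$ (from Lemma~\ref{lemma; error bound F_S linear model} and $\|\eta_j^\star\|_{\bSigma_j}\le\delta$ using $\|\bSigma_j\|_{\operatorname{op}}\le1$). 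This gives the displayed bound. Moreover, since $\bar{\hat\beta}_G$ lies in the ball of Claim~\ref{claim; outlier linear}, $\hat\beta_G\in R_j$ for every $j\in\Sb$. For fixed $\hat\beta_G$, the joint minimization over $\theta_j$ decouples: $\hat\theta_j$ minimizes $f_j(\theta)+\lambda\|\theta-\hat\beta_G\|_{\bSigma_j}$. The ``furthermore'' part of Proposition~\ref{proposition; invariant region linear} then gives $\|\hat\theta_j-\hat\beta_G\|_{\bSigma_j}=0$, so the prediction errors of $\hat\theta_j$ and $\hat\beta_G$ coincide.
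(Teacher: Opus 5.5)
Your architecture matches the paper's: the same good event, the Lipschitz bound $\operatorname{Lip}(\bar L)\le|\Sb^c|\sqrt B\lambda$, exact curvature $|\Sb|\Ib_d$ on the ball of radius $r=\lambda/(2\sqrt B)$ from Claim~\ref{claim; outlier linear}, conversion back to $\|\cdot\|_{\bSigma_j}$, and decoupling of $\hat\theta_j$ given $\hat\beta_G$. But the localization step, which is the heart of the proposition, is not established.

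\textbf{Why your localization fails.} Write $\lambda'=|\Sb^c|\sqrt B\lambda$ and $\rho=|\Sb|$. Your function-value comparison on the sphere only excludes minimizers outside the ball when $2\lambda'/\rho<r$, i.e.\ when $4\varepsilon B/(1-\varepsilon)<1$. This does not follow from $B<1/(3\varepsilon)$; take $\varepsilon=0.1$, $B=3$, $|\Sb^c|=0.1m$. Your ``sharper argument'' via $0\in|\Sb|(\bar{\hat\beta}_G-\bar{\hat\beta}_{G_\Sb})+\partial\bar L(\bar{\hat\beta}_G)$ is circular. Writing $\nabla\bar G_\Sb$ as $|\Sb|(\cdot-\bar{\hat\beta}_{G_\Sb})$ is only valid once you know $\bar{\hat\beta}_G$ lies in the ball, which is what you are trying to prove. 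Nor can you ``absorb the constant via the slack in $\lambda$''. The assumed lower bound only gives $\|\theta_j^\star-\hat\beta_{G_\Sb}\|_{\bSigma_j}+c_0\sqrt{d\zeta/n}<\lambda/2$. So in the worst case the ball on which Proposition~\ref{proposition; invariant region linear} certifies membership in every $R_j$ has radius essentially $\lambda/(2\sqrt B)$ and no more.

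\textbf{The missing ingredient.} You need the subgradient-growth bound behind Lemma~\ref{lemma; ARMUL effect outlier}, which the paper invokes. If $f$ is convex with minimizer $x_0$ and $\nabla^2 f\succeq\rho\Ib$ on $B(x_0,r)$, then every $s\in\partial f(x)$ satisfies $\|s\|_2\ge\rho\min\{\|x-x_0\|_2,r\}$. The reason is that along the ray from $x_0$ through $x$, the directional derivative of a convex function is nondecreasing, and it has already reached $\rho r$ at the sphere.

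At a minimizer $\hat x$ of $f+g$ we have $0\in\partial f(\hat x)+\partial g(\hat x)$. Hence some $s\in\partial f(\hat x)$ has $\|s\|_2\le\lambda'$, which gives $\rho\min\{t,r\}\le\lambda'$ with $t=\|\hat x-x_0\|_2$. When $\lambda'<\rho r$, this forces $t<r$ and then $t\le\lambda'/\rho$. The requirement $\lambda'<\rho r$ reads $2\varepsilon B<1-\varepsilon$. That does follow from $B<1/(3\varepsilon)$ together with $B\ge1$, since then $\varepsilon<1/3$ and $2\varepsilon B<2/3<1-\varepsilon$.

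This is the derivative version of your ``compare along the segment'' idea. Working with one-sided derivatives instead of function values is exactly what recovers the lost factor of $2$. Once localization is in place, your remaining steps go through. That includes reading off $\hat\beta_G\in R_j$, together with the strict condition needed for the zero-seminorm conclusion, directly from membership in the Claim's ball. The paper instead re-verifies the invariant-region condition from the error bound and $\frac{\varepsilon}{1-\varepsilon}B\le\frac12$.
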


\begin{proof}
Using the pooled whitening notation introduced above, let $\bar{\hat{\beta}}_{G} := \bSigma_{\Sb}^{1/2}\hat{\beta}_{G}$ and $\bar{\hat{\beta}}_{G_\Sb} := \bSigma_{\Sb}^{1/2}\hat{\beta}_{G_{\Sb}}$.
Minimizing $G$ is equivalent to minimizing $\bar{G} = \bar{G}_\Sb + \bar{L}$.

\paragraph{Step 1: Pooling Range and Strong Convexity.}
Let $\mathcal{E}_{\text{pool}}$ denote the event where Lemma~\ref{lemma; error bound F_S linear model}, Proposition~\ref{proposition; pooling range linear}, and the taskwise invariant-region noise bounds hold.
By the failure-budget convention above, the pooling-estimator event is allocated failure \(\kappa/4\) and the taskwise noise events are allocated total failure \(\kappa/4\), so \(\PP(\mathcal{E}_{\text{pool}})\ge 1-\kappa/2\ge 1-\kappa\).
By Proposition~\ref{proposition; pooling range linear}, $\hat{\beta}_{F_\Sb} = \hat{\beta}_{G_\Sb}$ on this event.
Moreover, Claim~\ref{claim; outlier linear} shows that for all $\beta$ in the neighborhood
\[
\|\bar{\beta} - \bar{\hat{\beta}}_{G_\Sb}\|_2 \leq \frac{\lambda}{2\sqrt{B}},
\]
the transformed inlier objective has exact quadratic curvature:
\[
\nabla^2 \bar{G}_\Sb(\bar{\beta}) = \nabla^2 \bar{F}_\Sb(\bar{\beta}) = |\Sb| \Ib_d.
\]

\paragraph{Step 2: Lipschitz Continuity of Outliers.}
Since $\varepsilon < \frac{1}{3B} \le \frac{1}{2B+1}$, Lemma~\ref{lemma; transformed loss Lipschitz upper bound} ensures that
\[
\operatorname{Lip}(\bar{L}) \leq \sum_{j \in \Sb^c} \operatorname{Lip}(\bar{g}_j) \le m\varepsilon \sqrt{B} \lambda.
\]
We apply Lemma~\ref{lemma; ARMUL effect outlier} with parameters $r = \frac{\lambda}{2\sqrt{B}}$, $\rho = |\Sb|$, and $\lambda' = m\varepsilon \sqrt{B}\lambda$.
Since $m\varepsilon \sqrt{B} \lambda < (1-\varepsilon)m \frac{\lambda}{2\sqrt{B}} \le \rho r$, this yields
\[
\|\bar{\hat{\beta}}_{G} - \bar{\hat{\beta}}_{G_\Sb} \|_2 \leq \frac{m\varepsilon \sqrt{B}\lambda}{|\Sb|} \leq \frac{\varepsilon}{1-\varepsilon} \sqrt{B} \lambda.
\]
Translating back to the original domain, for any $j \in \Sb$,
\[
\|\hat{\beta}_{G} - \hat{\beta}_{G_\Sb} \|_{\bSigma_j} \leq \frac{\varepsilon}{1-\varepsilon} B \lambda.
\]

\paragraph{Step 3: Total Error Bound.}
Combining this perturbation bound with Lemma~\ref{lemma; error bound F_S linear model}, on $\mathcal{E}_{\text{pool}}$ we obtain
\begin{align}\label{equation; error bound linear}
\|\hat{\beta}_{G} - \theta_{j}^\star \|_{\bSigma_j}
&\leq \|\hat{\beta}_{G_{\Sb}} - \theta_{j}^\star \|_{\bSigma_j} + \|\hat{\beta}_{G} - \hat{\beta}_{G_{\Sb}} \|_{\bSigma_j} \nonumber \\
&\leq (\sqrt{B}+1)\delta + c_0 \sqrt{\frac{Bd \zeta}{N_\Sb}} + \frac{\varepsilon}{1-\varepsilon} B \lambda \nonumber \\
&\leq (\sqrt{B}+1)\delta + c_0 \sqrt{\frac{Bd \zeta}{N_\Sb}} + \frac{1}{2}\lambda.
\end{align}

\paragraph{Step 4: Invariant Region Check.}
To verify that $\hat{\beta}_G \in R_j$, Proposition~\ref{proposition; invariant region linear} requires
\[
\|\hat{\beta}_{G} - \theta_{j}^\star \|_{\bSigma_j} + c_0 \sqrt{\frac{d \zeta}{n}} < \lambda.
\]
Substituting \eqref{equation; error bound linear}, the left-hand side is bounded by
\[
(\sqrt{B}+1)\delta + c_0 \sqrt{\frac{Bd \zeta}{N_\Sb}} + \frac{1}{2}\lambda + c_0 \sqrt{\frac{d \zeta}{n}},
\]
which is strictly smaller than $\lambda$ by the assumed lower bound on $\lambda$.
Hence $\hat{\beta}_G \in R_j$ for all $j \in \Sb$.
By Proposition~\ref{proposition; invariant region linear}, every corresponding taskwise minimizer satisfies \(\|\hat{\theta}_j-\hat{\beta}_G\|_{\bSigma_j}=0\), and therefore \(\|\hat{\theta}_j-\theta_j^\star\|_{\bSigma_j}=\|\hat{\beta}_G-\theta_j^\star\|_{\bSigma_j}\).
\end{proof}

\begin{lemma}\label{lemma; transformed loss Lipschitz upper bound}
The transformed regularized loss function $\bar{g}_j(\cdot)$ is $\sqrt{B}\lambda_j$-Lipschitz.
\end{lemma}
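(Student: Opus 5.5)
We need to show $\bar g_j(\bar\beta)=g_j(\bSigma_\Sb^{-1/2}\bar\beta)$ is $\sqrt{B}\lambda_j$-Lipschitz in $\|\cdot\|_2$. The plan is to first show that the infimal convolution $g_j=f_j\star\lambda_j\|\cdot\|_{\bSigma_j}$ is $\lambda_j$-Lipschitz with respect to the seminorm $\|\cdot\|_{\bSigma_j}$, and then to transfer this to the pooled-whitened Euclidean geometry using Assumption~\ref{assumption; comparability}.

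\textbf{Step 1.} Take any $\beta,\beta'$ and any $\epsilon>0$, and choose $u$ with $f_j(u)+\lambda_j\|u-\beta'\|_{\bSigma_j}\le g_j(\beta')+\epsilon$. By the definition of $g_j$ and the triangle inequality for the seminorm,
\[
g_j(\beta)\le f_j(u)+\lambda_j\|u-\beta\|_{\bSigma_j}\le g_j(\beta')+\epsilon+\lambda_j\|\beta-\beta'\|_{\bSigma_j}.
\]
Letting $\epsilon\to0$ and exchanging $\beta$ and $\beta'$ gives
\[
|g_j(\beta)-g_j(\beta')|\le\lambda_j\|\beta-\beta'\|_{\bSigma_j}.
\]
Two points need care here. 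First, $g_j$ is finite: it is bounded above by $f_j(\beta)<\infty$ and below by $0$, since $f_j\ge0$. Second, only the seminorm triangle inequality is used, so rank deficiency of $\bSigma_j$ causes no problem.

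\textbf{Step 2.} Everything is restricted to $\mathrm{Range}(\bSigma_\Sb)$, as in the setup convention. For $\bar\beta,\bar\beta'$ in this subspace, set $v=\bSigma_\Sb^{-1/2}(\bar\beta-\bar\beta')$. Assumption~\ref{assumption; comparability} gives $\bSigma_j\preceq B\bSigma_\Sb$ for every $j\in[m]$, and this includes the outlier tasks $j\in\Sb^c$, which is exactly where the lemma is used. Hence
\[
\|v\|_{\bSigma_j}^2\le B\,v^\top\bSigma_\Sb v=B\|\bar\beta-\bar\beta'\|_2^2.
\]
Combining this with Step 1 yields $|\bar g_j(\bar\beta)-\bar g_j(\bar\beta')|\le\sqrt{B}\lambda_j\|\bar\beta-\bar\beta'\|_2$.

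\textbf{Main subtlety.} The delicate point is the projection convention. For outlier tasks, the null space of $\bSigma_\Sb$ must also lie in the null space of $\bSigma_j$, so that $g_j$ depends on $\beta$ only through its component in $\mathrm{Range}(\bSigma_\Sb)$ and the whitened function is well defined. This again follows from $\bSigma_j\preceq B\bSigma_\Sb$ for all $j$: if $\bSigma_\Sb w=0$, then $w^\top\bSigma_j w=0$, and therefore $\|\beta+w-u\|_{\bSigma_j}=\|\beta-u\|_{\bSigma_j}$. So the null-space directions change neither the penalty nor $g_j$.

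If the lemma is instead meant for $B$ defined over inliers only, this step would fail for outlier tasks. The all-$j$ form of Assumption~\ref{assumption; comparability}, which is what the paper states, resolves it.
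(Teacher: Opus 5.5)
Your proof is correct and is essentially the paper's argument. The paper works directly in pooled-whitened coordinates with $M_j=\bSigma_\Sb^{-1/2}\bSigma_j\bSigma_\Sb^{-1/2}\preceq B\Ib_d$ and applies the same $\eta$-near-minimizer and seminorm-triangle-inequality step there; you do that step in the original coordinates and then change variables, which is the same computation. Your point that the all-$j$ form of the balancedness assumption is what covers the outlier tasks, where the lemma is actually applied, is also accurate.
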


\begin{proof}
Let
\[
M_j:=\bSigma_{\Sb}^{-1/2}\bSigma_j\bSigma_{\Sb}^{-1/2}.
\]
In pooled-whitened coordinates,
\[
\bar g_j(w)
=
\inf_z\left\{\bar f_j(z)+\lambda_j\|z-w\|_{M_j}\right\},
\]
where the infimum is over the pooled-whitened subspace.
By Assumption~\ref{assumption; comparability}, \(M_j\preceq B\Ib_d\).
Fix \(w,w'\) and \(\eta>0\), and choose \(z_\eta\) such that
\[
\bar g_j(w')+\eta
>
\bar f_j(z_\eta)+\lambda_j\|z_\eta-w'\|_{M_j}.
\]
Then
\begin{align*}
\bar g_j(w)-\bar g_j(w')
&\le
\lambda_j\bigl(\|z_\eta-w\|_{M_j}-\|z_\eta-w'\|_{M_j}\bigr)+\eta \\
&\le
\lambda_j\|w-w'\|_{M_j}+\eta
\le
\sqrt{B}\lambda_j\|w-w'\|_2+\eta.
\end{align*}
Letting \(\eta\downarrow0\) and exchanging \(w,w'\) proves the claim.
\end{proof}

\subsection{Proof of Theorem~\ref{theorem; MSE bound}}

\begin{proof}
Let \(\mathcal{E}_{\mathrm{safe}}\) be the event obtained by applying the uniform form of Proposition~\ref{proposition; personalization} to each task with failure probability \(\kappa/(2m)\) and taking a union bound.
Then \(\PP(\mathcal{E}_{\mathrm{safe}})\ge1-\kappa/2\), and on this event, with \(\lambda=q\sqrt{d\zeta/n}\) and \(q\gtrsim1\), for every task \(j\in[m]\),
\[
\|\hat{\theta}_j-\theta_j^\star\|_{\bSigma_j}^2
\lesssim \frac{d\zeta}{n}+\lambda^2
\lesssim q^2\frac{d\zeta}{n}.
\]
This proves part~(a), including all outlier tasks, without any balancedness or relatedness assumption.

It remains to prove the sharper inlier bound in part~(b). We analyze the estimator in two regimes depending on the magnitude of the task heterogeneity $\delta$ relative to the regularization parameter $\lambda = q\sqrt{\frac{d \zeta}{n}}$.
Throughout this part, \(B\lesssim \min(1/\varepsilon,m)\); choosing the hidden universal constant small enough gives the numerical conditions needed in Proposition~\ref{proposition; outlier bound}, and also \(N_\Sb=|\Sb|n\asymp mn\).

\paragraph{Case 1: Proposition~\ref{proposition; outlier bound} Applies.}
Suppose
\[
\lambda
>
2(\sqrt{B}+1)\delta
+2c_0\sqrt{\frac{d\zeta}{n}}
+2c_0\sqrt{\frac{Bd\zeta}{N_\Sb}}.
\]
Then Proposition~\ref{proposition; outlier bound} applies.
Its proof constructs a transfer event \(\mathcal{E}_{\mathrm{transfer}}\) with \(\PP(\mathcal{E}_{\mathrm{transfer}})\ge1-\kappa/2\) under the present good-event budget.
On \(\mathcal{E}_{\mathrm{safe}}\cap\mathcal{E}_{\mathrm{transfer}}\), which has probability at least \(1-\kappa\), for all inlier tasks $j \in \Sb$:
\begin{align*}
\|\hat{\theta}_{j} - {\theta}_{j}^\star \|_{\bSigma_j}
&= \|\hat{\beta}_{G} - {\theta}_{j}^\star \|_{\bSigma_j} \\
&\leq (\sqrt{B}+1)\delta + c_0 \sqrt{\frac{Bd \zeta}{N_\Sb}} + \frac{\varepsilon}{1-\varepsilon}B \lambda.
\end{align*}
Squaring and using $(a+b+c)^2 \lesssim a^2+b^2+c^2$, we obtain
\begin{align*}
\|\hat{\theta}_{j} - {\theta}_{j}^\star \|^2_{\bSigma_j}
&\lesssim B \delta^2 + \frac{B d \zeta}{mn} + \left(\frac{\varepsilon}{1-\varepsilon}B\right)^2 \lambda^2 \\
&\lesssim \min\left(B \delta^2, \, q^2 \frac{d \zeta}{n}\right) + \frac{B d \zeta}{mn} + q^2B^2\varepsilon^2 \frac{d \zeta}{n}.
\end{align*}
The last inequality uses the displayed condition, which implies \(B\delta^2\lesssim \lambda^2=q^2d\zeta/n\).

\paragraph{Case 2: Proposition~\ref{proposition; outlier bound} Does Not Apply.}
If the displayed condition fails, then
\[
\lambda
\le
2(\sqrt{B}+1)\delta
+2c_0\sqrt{\frac{d\zeta}{n}}
+2c_0\sqrt{\frac{Bd\zeta}{N_\Sb}}.
\]
Since \(q\gtrsim1\) is chosen large enough that \(q\ge 8c_0\), we can absorb the middle term into the left-hand side and obtain
\[
\frac{\lambda}{2}
\lesssim
\sqrt{B}\delta
+\sqrt{\frac{Bd\zeta}{N_\Sb}},
\]
where we used \(B\ge1\).
Hence at least one of the following alternatives must hold:
\[
\lambda\lesssim \sqrt{B}\delta
\qquad\text{or}\qquad
\lambda\lesssim \sqrt{\frac{Bd\zeta}{N_\Sb}}.
\]
Equivalently,
\[
q^2\frac{d\zeta}{n}\lesssim B\delta^2
\qquad\text{or}\qquad
q^2\frac{d\zeta}{n}\lesssim \frac{Bd\zeta}{N_\Sb}.
\]
On the already-defined event \(\mathcal{E}_{\mathrm{safe}}\),
\begin{align*}
\|\hat{\theta}_{j} - {\theta}_{j}^\star \|^2_{\bSigma_j}
&\lesssim \frac{d \zeta}{n} + \lambda^2
\lesssim q^2 \frac{d \zeta}{n}.
\end{align*}
The preceding alternatives show that this safety bound is controlled by either the heterogeneity term or the pooling term, and therefore by the right-hand side claimed in part~(b).

\paragraph{Conclusion.}
Combining the two cases, we conclude that with probability at least $1-\kappa$, for all $j \in \Sb$,
\[
\|\hat{\theta}_{j} - {\theta}_{j}^\star \|^2_{\bSigma_j}
\lesssim \frac{B d \zeta}{mn} + \min\left(B \delta^2, \, q^2 \frac{d \zeta}{n}\right) + q^2B^2 \varepsilon^2 \frac{d \zeta}{n}.
\]
The outlier tasks are already covered by the universal safety bound proved at the beginning.
\end{proof}

\section{Proofs for the GLM}\label{section; glm proofs}\label{section; key propositions}

We now turn to the genuinely multi-task part of the GLM proof.
The single-task \(n\)-sample GLM loss geometry was isolated in Section~\ref{section; loss geometry}; this section focuses on the inlier pooling estimator, the outlier perturbation step, and the theorem and population-risk proofs.

\subsection{Multi-Task Setup and Notation}

Throughout the GLM transfer proof, all quantities involving the inlier average \(\bSigma_\Sb\) are interpreted after projecting onto \(\mathrm{Range}(\bSigma_\Sb)\).
On this projected subspace, the displayed inverses are ordinary inverses.
This entails no loss for prediction risk: Assumption~\ref{assumption; comparability} implies that any direction in \(\mathrm{Null}(\bSigma_\Sb)\) is also in \(\mathrm{Null}(\bSigma_j)\) for every inlier task \(j\in\Sb\).
Moreover, since the GLM feasible set is \(\Cb=\mathbf B(0,\xi)\), orthogonal projection onto \(\mathrm{Range}(\bSigma_\Sb)\) keeps feasible parameters inside \(\Cb\).
Thus, in the proof below, we work on this projected subspace and treat \(\bSigma_\Sb\) as full-rank.

For any subset of tasks \(\Ib\subseteq[m]\), let
\[
N_\Ib := \sum_{j\in\Ib} n_j,
\qquad
\Vb_\Ib := \sum_{j\in\Ib}\Vb_j,
\qquad
\bSigma_\Ib := \frac{1}{N_\Ib}\Vb_\Ib.
\]
In the balanced case analyzed in the GLM theorem, \(n_j=n\) and hence \(N_\Sb=n|\Sb|\) and \(\bSigma_\Sb=|\Sb|^{-1}\sum_{j\in\Sb}\bSigma_j\).
Define \(\tau:=\alpha_u/\alpha_\ell\) and write \(m=\psi'\) for the GLM mean function.
For each task, define
\[
f_j(\beta)
:=
\frac{1}{n}\sum_{i=1}^n
\left(-y_{ji}x_{ji}^\top\beta+\psi(x_{ji}^\top\beta)\right)
\]
and its matrix-weighted infimal convolution over \(\Cb\),
\[
g_j(\beta)
:=
\left(f_j\star\lambda\|\cdot\|_{\bSigma_j}\right)(\beta)
=
\inf_{u\in\Cb}
\left\{f_j(u)+\lambda\|u-\beta\|_{\bSigma_j}\right\}.
\]
We further write
\[
F_\Sb(\beta):=\sum_{j\in\Sb}f_j(\beta),
\qquad
G_\Sb(\beta):=\sum_{j\in\Sb}g_j(\beta),
\qquad
L(\beta):=\sum_{j\in\Sb^c}g_j(\beta),
\qquad
G(\beta):=G_\Sb(\beta)+L(\beta).
\]
Let \(\hat\beta_{F_\Sb}\), \(\hat\beta_{G_\Sb}\), and \(\hat\beta_G\) denote minimizers of these objectives over \(\Cb\).
As before, define the invariant region
\[
R_j:=\{\beta\in\Cb: f_j(\beta)=g_j(\beta)\},
\qquad j\in[m].
\]

\paragraph{Pooled Whitening Notation.}
On \(\mathrm{Range}(\bSigma_\Sb)\), we define the pooled whitening map by
\[
\bar v:=\bSigma_\Sb^{1/2}v,
\qquad
v\in\RR^d,
\]
and the pooled-whitened covariates by
\[
\tilde x_{ji}:=\bSigma_\Sb^{-1/2}x_{ji}.
\]
The transformed feasible set is
\[
\bar\Cb:=\{\bSigma_\Sb^{1/2}v:v\in\Cb\}.
\]
For any objective \(h\) above, we define its pooled-whitened version by
\[
\bar h(\bar v):=h(\bSigma_\Sb^{-1/2}\bar v),
\]
where \(\bSigma_\Sb^{-1/2}\) denotes the inverse on \(\mathrm{Range}(\bSigma_\Sb)\).
Thus \(\bar\theta\), \(\bar\beta\), \(\bar\theta^\star\), \(\bar\theta_j^\star\), \(\bar{\hat\beta}_{F_\Sb}\), \(\bar{\hat\beta}_{G_\Sb}\), and \(\bar{\hat\beta}_G\) are all instances of the same pooled whitening map.
In particular, we write \(\bar F_\Sb\), \(\bar G_\Sb\), \(\bar L\), and \(\bar G\) for the transformed objectives.

\subsection{The Inlier Pooling Estimator}\label{section; pooling analysis}

We now derive concentration bounds for the pooling estimator in Generalized Linear Models.

\begin{lemma}[Error Bound of Pooling Estimator: GLM]\label{lemma; pooling GLM}
With probability at least $1-\frac{\kappa}{4}$, the minimizer $\hat{\beta}_{F_{\Sb}}$ of the inlier loss $F_\Sb$ satisfies:
\begin{align*}
\|\hat{\beta}_{F_{\Sb}} -\theta^\star \|_{\bSigma_{\Sb}} \leq \frac{c_0}{\alpha_\ell}\sqrt{\frac{d\zeta}{N_\Sb}} + \tau \sqrt{B}\delta.
\end{align*}
Consequently, for any $j \in \Sb$, we have the task-specific bound:
\begin{align*}
\|\hat{\beta}_{F_{\Sb}} -\theta^\star \|_{\bSigma_{j}} \leq \frac{c_0}{\alpha_\ell}\sqrt{\frac{B d\zeta}{N_\Sb}} + B \tau \delta.
\end{align*}
\end{lemma}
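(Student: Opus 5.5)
We work in the pooled-whitened coordinates $\bar v=\bSigma_\Sb^{1/2}v$ on $\mathrm{Range}(\bSigma_\Sb)$, with $\tilde x_{ji}=\bSigma_\Sb^{-1/2}x_{ji}$. There $\frac{1}{N_\Sb}\sum_{j\in\Sb}\sum_i \tilde x_{ji}\tilde x_{ji}^\top=\Ib$. Hence $\frac{1}{N_\Sb}\bar F_\Sb$, which is $F_\Sb/|\Sb|$ in whitened variables, is $\alpha_\ell$-strongly convex on $\bar\Cb$, exactly as in Lemma~\ref{lemma; PL GLM}. Since $\hat\beta_{F_\Sb}$ minimizes over the convex set $\bar\Cb$ and $\theta^\star\in\Cb$, the standard variational argument gives
\[
\alpha_\ell\,\|\bar{\hat\beta}_{F_\Sb}-\bar\theta^\star\|_2\le \Bigl\|\tfrac{1}{N_\Sb}\nabla\bar F_\Sb(\bar\theta^\star)\Bigr\|_2 .
\]
This follows from strong monotonicity, $\langle \nabla h(\bar{\hat\beta})-\nabla h(\bar\theta^\star),\bar{\hat\beta}-\bar\theta^\star\rangle\ge\alpha_\ell\|\cdot\|^2$, together with first-order optimality at $\bar{\hat\beta}$ and Cauchy--Schwarz. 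It therefore suffices to bound the normalized gradient at $\bar\theta^\star$.

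Using $y_{ji}=\psi'(x_{ji}^\top\theta_j^\star)+\varepsilon_{ji}$, split the gradient as
\[
\tfrac{1}{N_\Sb}\nabla \bar F_\Sb(\bar\theta^\star)=-\tfrac{1}{N_\Sb}\sum_{j\in\Sb}\sum_i\varepsilon_{ji}\tilde x_{ji}+\tfrac{1}{N_\Sb}\sum_{j\in\Sb}\sum_i\bigl(\psi'(x_{ji}^\top\theta^\star)-\psi'(x_{ji}^\top\theta_j^\star)\bigr)\tilde x_{ji}.
\]
\textbf{Noise term.} Write it as $\frac{1}{N_\Sb}\tilde\Xb_\Sb^\top\bm\varepsilon_\Sb$ with $\frac1{N_\Sb}\tilde\Xb_\Sb^\top\tilde\Xb_\Sb=\Ib$. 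Apply Lemma~\ref{lemma; hanson wright inequality} to the quadratic form with matrix $\frac{1}{N_\Sb^2}\tilde\Xb_\Sb\tilde\Xb_\Sb^\top$, whose trace is $d/N_\Sb$ and operator norm $1/N_\Sb$, at failure level $\kappa/4$. This bounds the noise term by $c_0\sqrt{d\zeta/N_\Sb}$, using $\log(4/\kappa)\le\zeta$.

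\textbf{Bias term.} By the mean value theorem, each summand is $-\psi''(\xi_{ji})\,x_{ji}^\top\eta_j^\star\,\tilde x_{ji}$, with $\psi''(\xi_{ji})\in[\alpha_\ell,\alpha_u]$ because both linear predictors lie in $\mathcal I$. Stack the scalars $a_{ji}:=\psi''(\xi_{ji})x_{ji}^\top\eta_j^\star$ into a vector $\mathbf a\in\RR^{N_\Sb}$. The bias term then equals $\frac1{N_\Sb}\tilde\Xb_\Sb^\top\mathbf a$. Since $\tilde\Xb_\Sb/\sqrt{N_\Sb}$ has orthonormal columns, its norm is at most $\|\mathbf a\|_2/\sqrt{N_\Sb}$. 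Moreover
\[
\|\mathbf a\|_2^2\le\alpha_u^2\sum_{j\in\Sb}n\|\eta_j^\star\|_{\bSigma_j}^2\le \alpha_u^2 N_\Sb\,\delta^2,
\]
using $\|\bSigma_j\|_{\operatorname{op}}\le1$. This gives a bias of at most $\alpha_u\delta$, so after dividing by $\alpha_\ell$ it contributes $\tau\delta\le\tau\sqrt B\delta$, since $B\ge1$. Combining the two terms gives the first display in the $\bSigma_\Sb$-norm. (Alternatively, one can mirror the linear Lemma~\ref{lemma; error bound F_S linear model} and use $\|\eta_j^\star\|_{\bSigma_j}\le\delta$ only. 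I would present whichever version is cleaner; the stated $\sqrt B$ factor leaves slack.)

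\textbf{Task-specific bound.} Assumption~\ref{assumption; comparability} gives $\|v\|_{\bSigma_j}\le\sqrt B\|v\|_{\bSigma_\Sb}$, so multiplying the first display by $\sqrt B$ yields $\frac{c_0}{\alpha_\ell}\sqrt{Bd\zeta/N_\Sb}+B\tau\delta$. No union bound over $j$ is needed, because the event is the single pooled event. The main obstacle is the constrained strong-convexity step. The minimizer may sit on the boundary of $\bar\Cb$, so it cannot be characterized by a vanishing gradient. The monotonicity argument above handles this, but it requires $\theta^\star\in\Cb$ (Assumption~\ref{assumption; GLM domain}) and the projection-onto-range reduction to preserve feasibility, which the setup paragraph already guarantees.
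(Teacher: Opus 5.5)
Your proof is correct. Three parts match the paper: the constrained strong-monotonicity step in pooled-whitened coordinates, the single Hanson--Wright event at level $\kappa/4$ for the noise (using $\sum_{j\in\Sb}\sum_i\tilde x_{ji}\tilde x_{ji}^\top=N_\Sb\Ib_d$), and the final conversion $\|\cdot\|_{\bSigma_j}\le\sqrt B\|\cdot\|_{\bSigma_\Sb}$.

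You depart from the paper in the bias term. The paper bounds it task by task. It writes the bias as $\sum_{j\in\Sb}\bSigma_\Sb^{-1/2}\Qb_j(\theta^\star-\theta_j^\star)$ with $\Qb_j=\frac1n\sum_i u_{ji}x_{ji}x_{ji}^\top$, and bounds each summand by $\|\bSigma_\Sb^{-1/2}\Qb_j^{1/2}\|_{\operatorname{op}}\sqrt{\alpha_u}\,\delta\le\alpha_u\sqrt B\,\delta$ via balancedness. Summing these with the triangle inequality is exactly where its factor $\sqrt B$ enters. You instead stack all residuals into one vector and use that $\tilde\Xb_\Sb/\sqrt{N_\Sb}$ has orthonormal columns. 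Cauchy--Schwarz over the pooled sample then gives $\alpha_u\delta$ with no appeal to balancedness. This is the GLM analogue of the projection argument the paper uses for the bias in the linear Lemma~\ref{lemma; error bound F_S linear model}.

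Your route therefore proves something strictly stronger: $\|\hat\beta_{F_\Sb}-\theta^\star\|_{\bSigma_\Sb}\le\frac{c_0}{\alpha_\ell}\sqrt{d\zeta/N_\Sb}+\tau\delta$, and hence $\|\hat\beta_{F_\Sb}-\theta^\star\|_{\bSigma_j}\le\frac{c_0}{\alpha_\ell}\sqrt{Bd\zeta/N_\Sb}+\sqrt B\,\tau\delta$. The stated lemma follows because $B\ge1$. If this sharper bound were carried through Proposition~\ref{proposition; pooling range GLM} and Proposition~\ref{proposition; outlier bound GLM}, it would replace $B^2\delta^2$ by $B\delta^2$ in Theorem~\ref{theorem; MSE bound GLM}, matching the linear theorem. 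The paper's per-task route buys nothing extra here.

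One label needs fixing. With the paper's normalization $f_j=\frac1n\sum_i(\cdot)$, the $\alpha_\ell$-strongly convex object is $\bar F_\Sb/|\Sb|$, not $\frac{1}{N_\Sb}\bar F_\Sb$, which has curvature only $\alpha_\ell/n$. Your displayed gradient is already that of $\bar F_\Sb/|\Sb|$, so the final inequality is unaffected.
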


\begin{proof}
Using the pooled whitening notation introduced above, let $\bar{\hat{\beta}}_{F_\Sb} := \bSigma_\Sb^{1/2} \hat{\beta}_{F_\Sb}$.

\paragraph{Step 1: Gradient Bound via Strong Convexity.}
The Hessian of the transformed loss satisfies $\nabla^2 \bar{F}_\Sb(\bar{\beta}) \succeq \alpha_\ell |\Sb| \Ib_d$, so $\bar{F}_\Sb$ is $\mu$-strongly convex with $\mu=\alpha_\ell |\Sb|$.
Let $\bar{\hat{\beta}}_{F_\Sb}$ be the minimizer of $\bar{F}_\Sb$. Strong convexity implies
\[
\bigl\langle \nabla \bar{F}_\Sb(\bar{\theta}^\star) - \nabla \bar{F}_\Sb(\bar{\hat{\beta}}_{F_\Sb}),\, \bar{\theta}^\star - \bar{\hat{\beta}}_{F_\Sb} \bigr\rangle
\ge \mu \|\bar{\theta}^\star - \bar{\hat{\beta}}_{F_\Sb}\|_2^2.
\]
Since \(\bar{\hat{\beta}}_{F_\Sb}\) minimizes \(\bar F_\Sb\) over the convex set \(\bar\Cb\) and \(\bar\theta^\star\in\bar\Cb\), the constrained first-order condition gives
\[
\left\langle \nabla \bar F_\Sb(\bar{\hat{\beta}}_{F_\Sb}),\,
\bar\theta^\star-\bar{\hat{\beta}}_{F_\Sb}\right\rangle\ge0.
\]
Therefore Cauchy--Schwarz yields
\[
\|\bar{\hat{\beta}}_{F_\Sb} - \bar{\theta}^\star\|_2 \le \frac{1}{\alpha_\ell |\Sb|}\|\nabla \bar{F}_\Sb(\bar{\theta}^\star)\|_2.
\]

\paragraph{Step 2: Gradient Decomposition.}
We calculate the gradient at the centroid $\bar{\theta}^\star$:
\begin{align*}
\nabla \bar{F}_\Sb(\bar{\theta}^\star)
&= \frac{1}{n} \sum_{j \in \Sb} \sum_{i=1}^n \left( m(\tilde{x}_{ji}^\top \bar{\theta}^\star)-y_{ji} \right) \tilde{x}_{ji} \\
&= -\underbrace{\frac{1}{n} \sum_{j \in \Sb} \sum_{i=1}^n \varepsilon_{ji} \tilde{x}_{ji}}_{\text{Noise}} + \underbrace{\frac{1}{n} \sum_{j \in \Sb} \sum_{i=1}^n \left( m(x_{ji}^\top \theta^\star)-m(x_{ji}^\top \theta_j^\star) \right) \tilde{x}_{ji}}_{\text{Bias}},
\end{align*}
where $\tilde{x}_{ji} = \bSigma_\Sb^{-1/2} x_{ji}$ and $m=\psi'$ is the mean function.

\paragraph{Step 3: Bounding the Noise Term.}
The noise term is a sum of independent sub-Gaussian random vectors. Since
\[
\sum_{j \in \Sb} \sum_{i=1}^n \tilde{x}_{ji} \tilde{x}_{ji}^\top
= \bSigma_\Sb^{-1/2} \Bigl(\sum_{j \in \Sb} \Xb_j^\top \Xb_j\Bigr) \bSigma_\Sb^{-1/2}
= N_\Sb \Ib_d,
\]
the covariance of the sum is isotropic.
Applying the Hanson-Wright inequality with failure level \(\kappa/4\):
\[
\|\text{Noise}\|_2 \le c_0 \frac{\sqrt{N_\Sb d\zeta}}{n} = c_0 \sqrt{\frac{|\Sb| d\zeta}{n}}.
\]

\paragraph{Step 4: Bounding the Bias Term.}
By the Mean Value Theorem, $m(x_{ji}^\top \theta^\star) - m(x_{ji}^\top \theta_j^\star) = u_{ji} x_{ji}^\top (\theta^\star - \theta_j^\star)$, where $u_{ji} \in [\alpha_\ell, \alpha_u]$.
Let $\Qb_j := \frac{1}{n} \sum_{i=1}^n u_{ji} x_{ji} x_{ji}^\top$. Note that $\alpha_\ell \bSigma_j \preceq \Qb_j \preceq \alpha_u \bSigma_j$.
The bias term can be written as:
\[
\text{Bias} = \sum_{j \in \Sb} \bSigma_\Sb^{-1/2} \Qb_j (\theta^\star - \theta_j^\star) = \sum_{j \in \Sb} \bSigma_\Sb^{-1/2} \Qb_j \bSigma_\Sb^{-1/2} \bSigma_\Sb^{1/2} (\theta^\star - \theta_j^\star).
\]
We bound the norm of each term in the sum:
\begin{align*}
\|\bSigma_\Sb^{-1/2} \Qb_j (\theta^\star - \theta_j^\star)\|_2
&\le \|\bSigma_\Sb^{-1/2} \Qb_j^{1/2}\|_{\operatorname{op}} \|\Qb_j^{1/2} (\theta^\star - \theta_j^\star)\|_2 \\
&\le \|\bSigma_\Sb^{-1/2} \Qb_j^{1/2}\|_{\operatorname{op}} \sqrt{\alpha_u} \|\theta^\star - \theta_j^\star\|_{\bSigma_j}.
\end{align*}
Using $\Qb_j \preceq \alpha_u \bSigma_j$ and Assumption~\ref{assumption; comparability} ($\bSigma_j \preceq B \bSigma_\Sb$), we have:
\[
\|\bSigma_\Sb^{-1/2} \Qb_j^{1/2}\|_{\operatorname{op}}^2 = \|\bSigma_\Sb^{-1/2} \Qb_j \bSigma_\Sb^{-1/2}\|_{\operatorname{op}} \le \alpha_u \|\bSigma_\Sb^{-1/2} \bSigma_j \bSigma_\Sb^{-1/2}\|_{\operatorname{op}} \le \alpha_u B.
\]
Moreover, by the global covariate normalization \(\|\bSigma_j\|_{\op}\le 1\) from the Regularity Assumptions and the task-relatedness condition \(\|\theta_j^\star-\theta^\star\|_2\le\delta\), we have
\[
\|\theta^\star-\theta_j^\star\|_{\bSigma_j}\le \|\theta^\star-\theta_j^\star\|_2\le \delta.
\]
Thus, $\|\text{Bias term}_j\|_2 \le \sqrt{\alpha_u B} \sqrt{\alpha_u} \delta = \alpha_u \sqrt{B} \delta$.
Summing over $j \in \Sb$:
\[
\|\text{Bias}\|_2 \le \sum_{j \in \Sb} \alpha_u \sqrt{B} \delta = |\Sb| \alpha_u \sqrt{B} \delta.
\]

\paragraph{Step 5: Final Bound.}
Combining the bounds:
\begin{align*}
\|\hat{\beta}_{F_{\Sb}} - \theta^\star\|_{\bSigma_\Sb}
&\le \frac{1}{\alpha_\ell |\Sb|} \left( c_0 \sqrt{\frac{|\Sb| d\zeta}{n}} + |\Sb| \alpha_u \sqrt{B} \delta \right) \\
&= \frac{c_0}{\alpha_\ell} \sqrt{\frac{d\zeta}{n |\Sb|}} + \frac{\alpha_u}{\alpha_\ell} \sqrt{B} \delta \\
&= \frac{c_0}{\alpha_\ell} \sqrt{\frac{d\zeta}{N_\Sb}} + \tau \sqrt{B} \delta.
\end{align*}
The task-specific bound follows from $\|\cdot\|_{\bSigma_j} \le \sqrt{B} \|\cdot\|_{\bSigma_\Sb}$.
\end{proof}

\begin{proposition}[Equivalence of Pooling Estimators: GLM]\label{proposition; pooling range GLM}
Suppose the regularization parameter satisfies the following lower bound:
\[
\lambda > \alpha_u (B\tau+1)\delta + \frac{c_0 \alpha_u}{\alpha_\ell}\sqrt{\frac{B d\zeta}{N_\Sb}} + c_0 \sqrt{\frac{d \zeta}{n}}.
\]
Then, with probability at least $1-\frac{\kappa}{2}$, we have $\hat{\beta}_{G_{\Sb}} = \hat{\beta}_{F_{\Sb}}$.
Furthermore, $\hat{\beta}_{F_{\Sb}} \in R_j$ holds simultaneously for all $j \in \Sb$.
\end{proposition}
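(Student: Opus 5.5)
The plan mirrors the proof of Proposition~\ref{proposition; pooling range linear}, with Lemma~\ref{lemma; pooling GLM} and Proposition~\ref{proposition; invariant region GLM} in place of their linear counterparts.

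\textbf{Probability budget.} Let \(\mathcal{E}_{\mathrm{pool}}\) be the event of Lemma~\ref{lemma; pooling GLM}, which has probability at least \(1-\kappa/4\). Let \(\mathcal{E}_{\mathrm{inv}}\) be the event on which the taskwise whitened-noise bound \(\|\frac1n\sum_i\varepsilon_{ji}\tilde x_{ji}\|_2\le c_0\sqrt{d\zeta/n}\) from Proposition~\ref{proposition; invariant region GLM} holds for every \(j\in\Sb\). Here the whitening is by \(\bSigma_j\), and each task is allotted failure level \(\kappa/(4m)\). A union bound gives \(\PP(\mathcal{E}_{\mathrm{pool}}\cap\mathcal{E}_{\mathrm{inv}})\ge 1-\kappa/2\). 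The key point is that this noise bound does not depend on \(\beta\). So on \(\mathcal{E}_{\mathrm{inv}}\), the first-order argument of Proposition~\ref{proposition; invariant region GLM} shows deterministically that any \(\beta\in\Cb\) with \(\alpha_u\|\beta-\theta_j^\star\|_{\bSigma_j}+c_0\sqrt{d\zeta/n}<\lambda\) lies in \(R_j\). In that argument the MVT curvature bound \(\psi''\le\alpha_u\) applies because both \(\beta\) and \(\theta_j^\star\) lie in \(\Cb\).

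\textbf{Step 1: \(\hat\beta_{F_\Sb}\in R_j\).} For every \(j\in\Sb\), combine the triangle inequality, \(\|\theta_j^\star-\theta^\star\|_{\bSigma_j}\le\delta\) (which uses \(\|\bSigma_j\|_{\op}\le1\)), and the task-specific bound of Lemma~\ref{lemma; pooling GLM}:
\[
\alpha_u\|\hat\beta_{F_\Sb}-\theta_j^\star\|_{\bSigma_j}\le \alpha_u\delta+\alpha_u B\tau\delta+\frac{c_0\alpha_u}{\alpha_\ell}\sqrt{\frac{Bd\zeta}{N_\Sb}}.
\]
Adding \(c_0\sqrt{d\zeta/n}\), the total is strictly below the assumed lower bound on \(\lambda\). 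Since \(\hat\beta_{F_\Sb}\in\Cb\) by construction, this gives \(\hat\beta_{F_\Sb}\in R_j\) for all \(j\in\Sb\).

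\textbf{Step 2: local agreement.} Define the slack \(\Delta_j>0\) from Step 1. The map \(\beta\mapsto\|\beta-\theta_j^\star\|_{\bSigma_j}\) is continuous, so there is \(r>0\) such that every \(\beta\in\Cb\cap B_r(\hat\beta_{F_\Sb})\) still satisfies the strict inequality for all \(j\in\Sb\). Hence \(F_\Sb=G_\Sb\) on \(\Cb\cap B_r(\hat\beta_{F_\Sb})\).

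\textbf{Step 3: local-to-global.} Since \(\hat\beta_{F_\Sb}\) minimizes \(F_\Sb\) over \(\Cb\), it minimizes \(G_\Sb\) over this relative neighborhood. \(G_\Sb\) is convex, as a sum of infimal convolutions of convex functions over a convex set, and \(\Cb\) is convex. A local minimizer of a convex function over a convex set is global: if some \(\beta'\in\Cb\) had \(G_\Sb(\beta')<G_\Sb(\hat\beta_{F_\Sb})\), points on the segment toward \(\beta'\) near \(\hat\beta_{F_\Sb}\) would violate local minimality. Thus \(\hat\beta_{F_\Sb}\) is a minimizer of \(G_\Sb\) over \(\Cb\), i.e.\ \(\hat\beta_{G_\Sb}=\hat\beta_{F_\Sb}\).

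This identification is modulo the projection convention onto \(\mathrm{Range}(\bSigma_\Sb)\), where \(\bar F_\Sb\) is \(\alpha_\ell|\Sb|\)-strongly convex (Step~1 of Lemma~\ref{lemma; pooling GLM}). That strong convexity makes \(\hat\beta_{F_\Sb}\) unique on the working subspace.

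\textbf{Main obstacle.} The delicate point is using the GLM invariant-region criterion uniformly in \(\beta\) under the constraint \(\Cb\). I would justify it by noting that the criterion only needs \(\|\nabla\bar f_j(\bar\beta)\|_2<\lambda\) together with Lemma~\ref{lemma; ARMUL invariant region constrained}. The gradient decomposes into the \(\beta\)-free noise term plus an MVT bias term bounded by \(\alpha_u\|\beta-\theta_j^\star\|_{\bSigma_j}\), and this bound is valid for any feasible \(\beta\).
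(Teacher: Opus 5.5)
Your proposal is correct and follows essentially the same route as the paper's proof. It uses the same $\kappa/4+\kappa/4$ event budget, the same triangle-inequality check of the invariant-region condition against the assumed lower bound on $\lambda$ via Lemma~\ref{lemma; pooling GLM}, and the same slack/continuity argument giving $F_\Sb=G_\Sb$ near $\hat\beta_{F_\Sb}$, followed by local-to-global via convexity of $G_\Sb$; your restriction of the neighborhood to $\Cb\cap B_r(\hat\beta_{F_\Sb})$ is, if anything, slightly more careful than the paper's full Euclidean ball, since Proposition~\ref{proposition; invariant region GLM} requires $\beta\in\Cb$.
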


\begin{proof}
Let \(\mathcal{E}_{\mathrm{pool}}^{\mathrm{glm}}\) be the event from Lemma~\ref{lemma; pooling GLM}; it has probability at least \(1-\kappa/4\).
Let \(\mathcal{E}_{\mathrm{inv}}^{\mathrm{glm}}\) be the event that the taskwise GLM invariant-region noise bound from Proposition~\ref{proposition; invariant region GLM} holds for every \(j\in\Sb\), each invoked with failure level \(\kappa/(4m)\).
Then \(\PP(\mathcal{E}_{\mathrm{pool}}^{\mathrm{glm}}\cap\mathcal{E}_{\mathrm{inv}}^{\mathrm{glm}})\ge1-\kappa/2\).

The goal is to show that the unregularized pooling estimator $\hat{\beta}_{F_{\Sb}}$ minimizes the regularized objective $G_\Sb(\beta) = \sum_{j \in \Sb} g_j(\beta)$.
For GLMs, \(\nabla^2 f_j(\beta)=\frac{1}{n}\Xb_j^\top \Db_j(\beta)\Xb_j \succeq \alpha_\ell \bSigma_j\), where \(\Db_j(\beta)\) is the diagonal matrix of \(\psi''\)-values. Hence
\[
\nabla^2 F_\Sb(\beta)
=
\sum_{j\in\Sb}\nabla^2 f_j(\beta)
\succeq
\alpha_\ell\sum_{j\in\Sb}\bSigma_j
=
\alpha_\ell |\Sb| \bSigma_\Sb
\succ 0,
\]
so \(F_\Sb\) is strictly convex and \(\hat{\beta}_{F_{\Sb}}\) is its unique minimizer.

By Proposition~\ref{proposition; invariant region GLM}, the condition $\hat{\beta}_{F_{\Sb}} \in R_j$ is satisfied if:
\begin{equation}\label{eq: invariant condition check}
\lambda > \alpha_u \|\theta_j^\star - \hat{\beta}_{F_{\Sb}}\|_{\bSigma_j} + c_0 \sqrt{\frac{d \zeta}{n}}.
\end{equation}
We bound the error term $\|\theta_j^\star - \hat{\beta}_{F_{\Sb}}\|_{\bSigma_j}$ using the triangle inequality:
\begin{align*}
\|\theta_j^\star - \hat{\beta}_{F_{\Sb}}\|_{\bSigma_j}
&\leq \|\theta_j^\star - \theta^\star\|_{\bSigma_j} + \|\theta^\star - \hat{\beta}_{F_{\Sb}}\|_{\bSigma_j}.
\end{align*}
For the first term, using the bounded covariate assumption implies $\|\theta_j^\star - \theta^\star\|_{\bSigma_j} \le \delta$.
For the second term, we apply Lemma~\ref{lemma; pooling GLM}:
\[
\|\hat{\beta}_{F_{\Sb}} - \theta^\star\|_{\bSigma_j} \leq B \tau \delta + \frac{c_0}{\alpha_\ell}\sqrt{\frac{B d\zeta}{N_\Sb}}.
\]
Combining these, the condition \eqref{eq: invariant condition check} becomes:
\begin{align*}
\lambda > \alpha_u \left( \delta + B\tau\delta + \frac{c_0}{\alpha_\ell}\sqrt{\frac{B d\zeta}{N_\Sb}} \right) + c_0 \sqrt{\frac{d \zeta}{n}}.
\end{align*}
This is exactly the assumed lower bound on $\lambda$.
Thus, on \(\mathcal{E}_{\mathrm{pool}}^{\mathrm{glm}}\cap\mathcal{E}_{\mathrm{inv}}^{\mathrm{glm}}\), $\hat{\beta}_{F_{\Sb}}$ lies in the invariant region $R_j$ for all $j \in \Sb$.
For each \(j\in\Sb\), define the positive slack
\[
\Delta_j^{\mathrm{glm}}
:=
\lambda-\left(\alpha_u\|\theta_j^\star-\hat{\beta}_{F_{\Sb}}\|_{\bSigma_j}
+ c_0\sqrt{\frac{d\zeta}{n}}\right)
>0.
\]
Since \(\beta\mapsto \|\theta_j^\star-\beta\|_{\bSigma_j}\) is continuous and the stochastic term in Proposition~\ref{proposition; invariant region GLM} does not depend on \(\beta\), there exists \(r_j>0\) such that
\[
\|\beta-\hat{\beta}_{F_{\Sb}}\|_2<r_j
\quad\Longrightarrow\quad
\alpha_u\|\theta_j^\star-\beta\|_{\bSigma_j}+c_0\sqrt{\frac{d\zeta}{n}}<\lambda.
\]
Hence Proposition~\ref{proposition; invariant region GLM} implies \(f_j(\beta)=g_j(\beta)\) throughout the Euclidean ball \(B_{r_j}(\hat{\beta}_{F_{\Sb}})\).
Let \(r:=\min_{j\in\Sb}r_j>0\). Then
\[
F_\Sb(\beta)=G_\Sb(\beta)
\qquad\text{for all }\beta\in B_r(\hat{\beta}_{F_{\Sb}}).
\]
Since \(\hat{\beta}_{F_{\Sb}}\) is the unique minimizer of \(F_\Sb\), it is a local minimizer of \(G_\Sb\) as well.
Finally, \(G_\Sb\) is convex as a sum of infimal convolutions of convex functions, so any local minimizer is global. Hence \(\hat{\beta}_{G_\Sb}=\hat{\beta}_{F_{\Sb}}\).
\end{proof}

\subsection{A Perturbation Bound for Outliers}

\begin{proposition}[Key Proposition: GLM]\label{proposition; outlier bound GLM}
Suppose Assumption~\ref{assumption; comparability} holds with $B < \frac{1}{4\tau \varepsilon}$ and the regularization parameter satisfies:
\[
\lambda > 2\alpha_u (\tau B +1) \delta + 2 c_0 \sqrt{\frac{d \zeta}{n}} + 2 \tau c_0 \sqrt{\frac{B d \zeta}{N_\Sb}}.
\]
Then, with probability at least $1-\kappa$, the following holds simultaneously for all $j \in \Sb$:
\begin{align*}
\|\hat{\beta}_{G} - \theta_{j}^\star \|_{\bSigma_j}
&\leq (\tau B +1)\delta + \frac{c_0}{\alpha_\ell} \sqrt{\frac{Bd \zeta}{N_\Sb}} + \frac{2\varepsilon B\lambda}{\alpha_\ell}.
\end{align*}
Furthermore, under the same event, we have
\[
\|\hat{\theta}_j-\hat{\beta}_{G}\|_{\bSigma_j}=0,
\qquad
\|\hat{\theta}_j-\theta_j^\star\|_{\bSigma_j}
=\|\hat{\beta}_{G}-\theta_j^\star\|_{\bSigma_j}
\]
for all $j \in \Sb$.
\end{proposition}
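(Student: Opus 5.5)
The plan mirrors the proof of Proposition~\ref{proposition; outlier bound}, working in the pooled-whitened coordinates \(\bar v=\bSigma_\Sb^{1/2}v\) and on the projected feasible set \(\bar\Cb\). Minimizing \(G\) over \(\Cb\) is equivalent to minimizing \(\bar G=\bar G_\Sb+\bar L\) over \(\bar\Cb\). Let \(\mathcal E\) be the intersection of three events: the pooling event of Lemma~\ref{lemma; pooling GLM}, the event of Proposition~\ref{proposition; pooling range GLM}, and the taskwise GLM invariant-region noise bounds of Proposition~\ref{proposition; invariant region GLM}, each with failure level \(\kappa/(4m)\). A union bound gives \(\PP(\mathcal E)\ge 1-\kappa/2\). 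The assumed lower bound on \(\lambda\) is at least the one in Proposition~\ref{proposition; pooling range GLM}, since \(2\tau c_0\ge c_0\alpha_u/\alpha_\ell\) up to constants. Hence on \(\mathcal E\) we have \(\hat\beta_{G_\Sb}=\hat\beta_{F_\Sb}\).

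\textbf{Step 1: local curvature (GLM analogue of Claim~\ref{claim; outlier linear}).} I will show that every \(\beta\in\Cb\) with \(\|\bar\beta-\bar{\hat\beta}_{G_\Sb}\|_2\le r:=\lambda/(2\alpha_u\sqrt B)\) lies in \(R_j\) for all \(j\in\Sb\).

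By Lemma~\ref{lemma; pooling GLM} and the triangle inequality, \(\alpha_u\|\theta_j^\star-\hat\beta_{F_\Sb}\|_{\bSigma_j}+c_0\sqrt{d\zeta/n}<\lambda/2\) follows from the halved bound on \(\lambda\). Here the \(\sqrt{Bd\zeta/N_\Sb}\) term carries coefficient \(\alpha_u c_0/\alpha_\ell=\tau c_0\), which is exactly why the hypothesis has \(2\tau c_0\).

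Balancedness gives \(\|\beta-\hat\beta_{G_\Sb}\|_{\bSigma_j}\le\sqrt B\, r\), which contributes at most \(\lambda/2\) after multiplying by \(\alpha_u\). Proposition~\ref{proposition; invariant region GLM} then yields \(f_j=g_j\) on this ball. Consequently \(\bar G_\Sb=\bar F_\Sb\) there, and \(\nabla^2\bar F_\Sb\succeq \alpha_\ell|\Sb|\Ib_d\) (as in Lemma~\ref{lemma; pooling GLM}). So \(\bar G_\Sb\) is \(\alpha_\ell|\Sb|\)-strongly convex on the \(r\)-ball around its minimizer.

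\textbf{Step 2: outlier perturbation.} Lemma~\ref{lemma; transformed loss Lipschitz upper bound} depends only on the penalty structure and not on the loss, so it applies verbatim to the constrained infimal convolution. This gives \(\mathrm{Lip}(\bar L)\le m\varepsilon\sqrt B\lambda\).

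I will invoke Lemma~\ref{lemma; ARMUL effect outlier} in its constrained form on \(\bar\Cb\), with \(\rho=\alpha_\ell|\Sb|\), radius \(r\), and \(\lambda'=m\varepsilon\sqrt B\lambda\). Its hypothesis is \(\lambda'<\rho r\), i.e. \(m\varepsilon\sqrt B\lambda<(1-\varepsilon)m\alpha_\ell\lambda/(2\alpha_u\sqrt B)\), which is equivalent to \(2\tau B\varepsilon<1-\varepsilon\). This follows from \(B<1/(4\tau\varepsilon)\) because \(\varepsilon\le 1/2\), which I will check using \(\tau\ge1\) and \(B\ge 1\).

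The lemma gives \(\|\bar{\hat\beta}_G-\bar{\hat\beta}_{G_\Sb}\|_2\le \varepsilon\sqrt B\lambda/((1-\varepsilon)\alpha_\ell)\le 2\varepsilon\sqrt B\lambda/\alpha_\ell\). Converting back through \(\|\cdot\|_{\bSigma_j}\le\sqrt B\|\cdot\|_{\bSigma_\Sb}\) gives \(2\varepsilon B\lambda/\alpha_\ell\).

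\textbf{Step 3: conclusion.} The triangle inequality with \(\|\theta_j^\star-\theta^\star\|_{\bSigma_j}\le\delta\) and Lemma~\ref{lemma; pooling GLM} yields the displayed bound \((\tau B+1)\delta+\frac{c_0}{\alpha_\ell}\sqrt{Bd\zeta/N_\Sb}+2\varepsilon B\lambda/\alpha_\ell\).

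For the final claim I will re-check the invariant-region condition for \(\hat\beta_G\), namely \(\alpha_u\|\hat\beta_G-\theta_j^\star\|_{\bSigma_j}+c_0\sqrt{d\zeta/n}<\lambda\). Here \(\alpha_u\cdot 2\varepsilon B\lambda/\alpha_\ell=2\tau\varepsilon B\lambda<\lambda/2\) by the assumption on \(B\). The remaining terms are below \(\lambda/2\) by the halved \(\lambda\)-condition. Proposition~\ref{proposition; invariant region GLM} then gives \(\|\hat\theta_j-\hat\beta_G\|_{\bSigma_j}=0\), and hence equal prediction errors.

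\textbf{Main obstacle.} The delicate point is Step 2 under the constraint \(\Cb=\mathbf B(0,\xi)\). Strong convexity holds only locally, and \(\hat\beta_G\) is a constrained minimizer. I will need Lemma~\ref{lemma; ARMUL effect outlier} to hold with first-order conditions replaced by variational inequalities on \(\bar\Cb\). If only the unconstrained version is available, I would reprove it directly: compare \(\bar G\) at \(\hat\beta_G\) with its value at the point on the segment toward \(\hat\beta_{G_\Sb}\) at distance \(r\), using convexity of \(\bar\Cb\) and the quadratic growth \(\bar G_\Sb(\bar\beta)-\bar G_\Sb(\bar{\hat\beta}_{G_\Sb})\ge \frac{\rho}{2}\|\bar\beta-\bar{\hat\beta}_{G_\Sb}\|^2\) inside the ball, which comes from constrained optimality as in Lemma~\ref{lemma; PL GLM}. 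Growth outside the ball then follows by convexity.
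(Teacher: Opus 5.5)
Your proposal is correct and follows the paper's proof essentially step for step. It uses the same good event, the same local-curvature claim on the $\|\cdot\|_2$-ball of radius $\lambda/(2\alpha_u\sqrt{B})$ around $\bar{\hat\beta}_{G_\Sb}$, the same $\sqrt{B}\lambda$-Lipschitz bound for each outlier infimal convolution, the same constrained perturbation step with $\rho=\alpha_\ell|\Sb|$, and the same final invariant-region recheck. The paper supplies that perturbation step as Lemma~\ref{lemma; ARMUL effect outlier constrained}, proved by exactly the variational-inequality argument you anticipate.

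One caution about your value-based fallback (quadratic growth plus a segment comparison with $\bar{\hat\beta}_{G_\Sb}$): it loses a factor of $2$. It needs $\lambda'<\rho r/2$ and only yields $2\lambda'/\rho$, so under $B<1/(4\tau\varepsilon)$ it would not deliver the stated constants. The gradient/variational-inequality version is the one you need.
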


\begin{proof}
Using the pooled whitening notation introduced above, let $\bar{\hat{\beta}}_{G} := \bSigma_{\Sb}^{1/2}\hat{\beta}_{G}$ and $\bar{\hat{\beta}}_{G_\Sb} := \bSigma_{\Sb}^{1/2}\hat{\beta}_{G_{\Sb}}$ denote the minimizers of the transformed global objective $\bar{G} = \bar{G}_\Sb + \bar{L}$ and the inlier objective $\bar{G}_\Sb$, respectively.
Let $\mathcal{E}_{\mathrm{glm}}$ be the good event constructed in Proposition~\ref{proposition; pooling range GLM}: the GLM pooling error holds and the taskwise GLM invariant-region noise bounds hold for all \(j\in\Sb\).
The pooling part is allocated failure \(\kappa/4\), and the taskwise invariant-region noise bounds are allocated failure \(\kappa/(4m)\) per task, so
\[
\PP(\mathcal{E}_{\mathrm{glm}})\ge 1-\frac{\kappa}{2}\ge 1-\kappa.
\]

\paragraph{Step 1: Pooling Estimator and Strong Convexity.}
By Proposition~\ref{proposition; pooling range GLM}, the pooling estimator coincides with the regularized inlier estimator, i.e., $\hat{\beta}_{F_\Sb} = \hat{\beta}_{G_\Sb}$.
Furthermore, invoking Claim~\ref{claim; outlier GLM}, for all $\beta$ in the neighborhood $\|\bar{\beta} - \bar{\hat{\beta}}_{G_\Sb} \|_2 \leq \frac{\lambda}{2 \alpha_u\sqrt{ B}}$, the transformed inlier loss is strongly convex:
\begin{align*}
\nabla^2 \bar{G}_\Sb(\bar{\beta}) \succeq \alpha_\ell |\Sb| \Ib_d.
\end{align*}

\paragraph{Step 2: Perturbation via Outliers.}
We bound the Lipschitz constant of the transformed outlier term $\bar{L}$. Using the assumption $B < \frac{1}{4\tau \varepsilon}$ and Lemma~\ref{lemma; outlier lipschitz GLM}:
\begin{align*}
\operatorname{Lip}(\bar{L}) \leq m\varepsilon \sqrt{B} \lambda.
\end{align*}
We apply the constrained perturbation lemma (Lemma~\ref{lemma; ARMUL effect outlier constrained}) over the convex set \(\bar\Cb\), with \(f=\bar G_\Sb\), \(g=\bar L\), \(x_0=\bar{\hat\beta}_{G_\Sb}\), \(\rho = \alpha_\ell |\Sb|\), and \(r = \frac{\lambda}{2\alpha_u \sqrt{B}}\).
Since $|\Sb| \ge (1-\varepsilon)m$, and we may assume $B \ge 1$ without loss of generality, the condition $B < \frac{1}{4\tau \varepsilon}$ implies $\varepsilon \le \frac{1}{4}$ and hence $\frac{\varepsilon}{1-\varepsilon} \le \frac{1}{3}$. Moreover,
\[
\frac{\operatorname{Lip}(\bar{L})}{\rho r}
= \frac{2\alpha_u \varepsilon B}{\alpha_\ell} \cdot \frac{m}{|\Sb|}
\le \frac{2\tau \varepsilon B}{1-\varepsilon} < 1,
\]
so the condition $\operatorname{Lip}(\bar{L}) < \rho r$ holds.
Applying the lemma yields:
\begin{align*}
\|\bar{\hat{\beta}}_G - \bar{\hat{\beta}}_{G_{\Sb}} \|_2
&\leq \frac{m\varepsilon \sqrt{B} \lambda}{\alpha_\ell |\Sb|}
\leq \frac{\varepsilon}{1-\varepsilon} \frac{\sqrt{B} \lambda}{\alpha_\ell}
\le \frac{2\varepsilon \sqrt{B}\lambda}{\alpha_\ell},
\end{align*}
where we used $|\Sb| \ge (1-\varepsilon)m$ and $\varepsilon \le 1/4$.
Converting back to the $\bSigma_j$-norm using $\bSigma_j \preceq B \bSigma_\Sb$, we obtain:
\begin{align*}
\|\hat{\beta}_{G} - \hat{\beta}_{G_{\Sb}} \|_{\bSigma_j}
\leq \sqrt{B} \|\bar{\hat{\beta}}_G - \bar{\hat{\beta}}_{G_{\Sb}} \|_2
\leq \frac{2\varepsilon B \lambda}{\alpha_\ell}.
\end{align*}

\paragraph{Step 3: Total Error and Invariant Region.}
Combining the perturbation bound with the pooling error (Lemma~\ref{lemma; pooling GLM}):
\begin{align}\label{equation; bound GLM detail}
\|\hat{\beta}_{G} - \theta_{j}^\star \|_{\bSigma_j}
&\leq \|\hat{\beta}_{G_{\Sb}} - \theta_{j}^\star \|_{\bSigma_j} + \|\hat{\beta}_{G} - \hat{\beta}_{G_{\Sb}} \|_{\bSigma_j} \nonumber \\
&\leq (\tau B +1)\delta + \frac{c_0}{\alpha_\ell} \sqrt{\frac{Bd \zeta}{N_\Sb}} + \frac{2\varepsilon B \lambda}{\alpha_\ell}.
\end{align}
To prove the zero-seminorm personalization statement, we check the condition for the invariant region (Proposition~\ref{proposition; invariant region GLM}).
We need $\lambda > \alpha_u \|\hat{\beta}_{G} - \theta_{j}^\star \|_{\bSigma_j} + c_0 \sqrt{\frac{d \zeta}{n}}$.
Substituting the error bound from \eqref{equation; bound GLM detail} into the right-hand side:
\begin{align*}
\text{RHS}
&\leq \alpha_u \left[ (\tau B +1)\delta + \frac{c_0}{\alpha_\ell} \sqrt{\frac{Bd \zeta}{N_\Sb}} + \frac{2\varepsilon B \lambda}{\alpha_\ell} \right] + c_0 \sqrt{\frac{d \zeta}{n}} \\
&= \alpha_u (\tau B +1)\delta + \tau c_0 \sqrt{\frac{Bd \zeta}{N_\Sb}} + c_0 \sqrt{\frac{d \zeta}{n}} + 2 \tau \varepsilon B \lambda.
\end{align*}
By the assumption $B < \frac{1}{4\tau \varepsilon}$, we have $2 \tau \varepsilon B \le \frac{1}{2}$. Thus, the $\lambda$ term on the right-hand side is at most $\frac{1}{2}\lambda$.
The assumed lower bound on $\lambda$ then ensures that the remaining deterministic and stochastic terms are also bounded by $\frac{1}{2}\lambda$.

Therefore, on the event $\mathcal{E}_{\mathrm{glm}}$, the condition holds. Proposition~\ref{proposition; invariant region GLM} then implies that every task-specific minimizer at center \(\hat{\beta}_G\) satisfies \(\|\hat{\theta}_j-\hat{\beta}_G\|_{\bSigma_j}=0\) for all \(j\in\Sb\). Hence the corresponding prediction-seminorm errors coincide, and the bound follows directly from \eqref{equation; bound GLM detail}.
\end{proof}

\begin{lemma}\label{lemma; outlier lipschitz GLM}
The transformed regularized loss function $\bar{g}_j$ is $\sqrt{B}\lambda_j$-Lipschitz on the domain $\bar{\Cb} := \{ \bSigma_{\Sb}^{1/2} \beta \mid \beta \in \Cb \}$.
\end{lemma}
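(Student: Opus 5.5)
We will mirror the proof of Lemma~\ref{lemma; transformed loss Lipschitz upper bound} for the linear case, adapting it to the constrained infimal convolution over \(\Cb=\mathbf B(0,\xi)\). As in the GLM setup, we work on \(\mathrm{Range}(\bSigma_\Sb)\). Define
\[
M_j:=\bSigma_\Sb^{-1/2}\bSigma_j\bSigma_\Sb^{-1/2},
\]
so that Assumption~\ref{assumption; comparability} gives \(M_j\preceq B\Ib_d\). In pooled-whitened coordinates,
\[
\bar g_j(w)=\inf_{z\in\bar\Cb}\left\{\bar f_j(z)+\lambda_j\|z-w\|_{M_j}\right\},
\]
since \(\|u-\beta\|_{\bSigma_j}=\|\bar u-\bar\beta\|_{M_j}\) for \(u,\beta\) in the projected range. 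The projection onto \(\mathrm{Range}(\bSigma_\Sb)\) keeps \(\Cb\) inside itself and does not change \(f_j\) or the \(\bSigma_j\)-seminorm, by the null-space inclusion noted in the setup. Hence this reformulation is exact.

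The key point is that the feasible set of the inner infimum, \(\bar\Cb\), does not depend on the outer argument \(w\). Fix \(w,w'\in\bar\Cb\) and \(\eta>0\). Because \(\bar f_j\) is finite on the compact set \(\bar\Cb\), the value \(\bar g_j(w')\) is finite. Choose \(z_\eta\in\bar\Cb\) that is \(\eta\)-optimal for \(w'\). The same \(z_\eta\) is then feasible for \(w\), which gives
\[
\bar g_j(w)-\bar g_j(w')\le \lambda_j\bigl(\|z_\eta-w\|_{M_j}-\|z_\eta-w'\|_{M_j}\bigr)+\eta .
\]
By the triangle inequality for the seminorm \(\|\cdot\|_{M_j}\), the right-hand side is at most \(\lambda_j\|w-w'\|_{M_j}+\eta\). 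Since \(M_j\preceq B\Ib_d\), we have \(\|w-w'\|_{M_j}\le\sqrt B\|w-w'\|_2\).

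Letting \(\eta\downarrow 0\) and exchanging the roles of \(w\) and \(w'\) gives \(|\bar g_j(w)-\bar g_j(w')|\le\sqrt B\lambda_j\|w-w'\|_2\) on \(\bar\Cb\).

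The only step needing care is the well-definedness of the reduction in the rank-deficient case. One must check that replacing \(u\) by its projection stays in \(\Cb\), which holds because orthogonal projections are contractions. One must also check that \(M_j\) is well defined on the range, which holds because \(\mathrm{Null}(\bSigma_\Sb)\subseteq\mathrm{Null}(\bSigma_j)\) under the assumption. The Lipschitz estimate itself is routine once these are in place.
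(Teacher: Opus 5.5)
Your proposal is correct and follows essentially the same argument as the paper: pass to pooled-whitened coordinates with $M_j=\bSigma_\Sb^{-1/2}\bSigma_j\bSigma_\Sb^{-1/2}\preceq B\Ib_d$, reuse an $\eta$-optimal $z_\eta\in\bar\Cb$ for $w'$ as a feasible point for $w$ (the constraint set does not depend on $w$), apply the seminorm triangle inequality, and let $\eta\downarrow 0$. Your additional checks are extra care the paper leaves implicit. These are the finiteness of $\bar g_j$ and the exactness of the range reduction, which is valid for every $j\in[m]$, outliers included, because Assumption~\ref{assumption; comparability} is imposed on all tasks.
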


\begin{proof}
Let
\[
M_j:=\bSigma_{\Sb}^{-1/2}\bSigma_j\bSigma_{\Sb}^{-1/2}.
\]
For \(w\in\bar\Cb\), the constrained infimal convolution becomes
\[
\bar g_j(w)
=
\inf_{z\in\bar\Cb}
\left\{\bar f_j(z)+\lambda_j\|z-w\|_{M_j}\right\}.
\]
By Assumption~\ref{assumption; comparability}, \(M_j\preceq B\Ib_d\).
Fix \(w,w'\in\bar\Cb\) and \(\eta>0\), and choose \(z_\eta\in\bar\Cb\) such that
\[
\bar g_j(w')+\eta
>
\bar f_j(z_\eta)+\lambda_j\|z_\eta-w'\|_{M_j}.
\]
Then
\begin{align*}
\bar g_j(w)-\bar g_j(w')
&\le
\lambda_j\bigl(\|z_\eta-w\|_{M_j}-\|z_\eta-w'\|_{M_j}\bigr)+\eta \\
&\le
\lambda_j\|w-w'\|_{M_j}+\eta
\le
\sqrt{B}\lambda_j\|w-w'\|_2+\eta.
\end{align*}
Letting \(\eta\downarrow0\) and exchanging \(w,w'\) proves the claim on \(\bar\Cb\).
\end{proof}

\begin{claim}\label{claim; outlier GLM}
Suppose Assumption~\ref{assumption; comparability} holds with $B < \frac{1}{4\tau \varepsilon}$ and
\[
\lambda > 2\alpha_u (\tau B +1) \delta + 2 c_0 \sqrt{\frac{d \zeta}{n}} + 2 \tau c_0 \sqrt{\frac{B d \zeta}{N_\Sb}}.
\]
Conditioned on the event where Lemma~\ref{lemma; pooling GLM}, Proposition~\ref{proposition; pooling range GLM}, and the taskwise GLM invariant-region noise bounds hold for all \(j\in\Sb\), for any $\beta \in \Cb$ such that $\|\bar{\beta} - \bar{\hat{\beta}}_{G_{\Sb}} \|_2 \leq \frac{\lambda}{2\alpha_u \sqrt{B}}$, we have:
\[
\nabla^2 \bar{G}_\Sb(\bar{\beta}) \succeq \alpha_\ell |\Sb|\Ib_d.
\]
\end{claim}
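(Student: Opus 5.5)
The plan mirrors the linear Claim~\ref{claim; outlier linear}: show that every $\beta$ in the stated neighborhood lies in $R_j$ for all $j\in\Sb$. Then $G_\Sb$ coincides with $F_\Sb$ on an open set around $\beta$, and the Hessian bound follows from the GLM curvature.

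\textbf{Step 1: centering.} On the conditioning event, Proposition~\ref{proposition; pooling range GLM} gives $\hat\beta_{G_\Sb}=\hat\beta_{F_\Sb}$. Lemma~\ref{lemma; pooling GLM} together with $\|\theta_j^\star-\theta^\star\|_{\bSigma_j}\le\delta$ then gives, for each $j\in\Sb$,
\[
\alpha_u\|\theta_j^\star-\hat\beta_{G_\Sb}\|_{\bSigma_j}+c_0\sqrt{\tfrac{d\zeta}{n}}
\le \alpha_u(\tau B+1)\delta+\tau c_0\sqrt{\tfrac{Bd\zeta}{N_\Sb}}+c_0\sqrt{\tfrac{d\zeta}{n}}
<\tfrac{\lambda}{2}.
\]
The last inequality is exactly the assumed lower bound on $\lambda$ divided by two. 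Note that $\alpha_u c_0/\alpha_\ell=\tau c_0$.

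\textbf{Step 2: the neighborhood stays invariant.} Take $\beta\in\Cb$ with $\|\bar\beta-\bar{\hat\beta}_{G_\Sb}\|_2\le \lambda/(2\alpha_u\sqrt B)$. Assumption~\ref{assumption; comparability} gives
\[
\alpha_u\|\beta-\hat\beta_{G_\Sb}\|_{\bSigma_j}\le \alpha_u\sqrt B\,\|\bar\beta-\bar{\hat\beta}_{G_\Sb}\|_2\le \lambda/2.
\]
By the triangle inequality, $\alpha_u\|\theta_j^\star-\beta\|_{\bSigma_j}+c_0\sqrt{d\zeta/n}<\lambda$. The noise term in Proposition~\ref{proposition; invariant region GLM} does not depend on $\beta$ and is already in the conditioning event, so that proposition yields $\beta\in R_j$, i.e.\ $g_j(\beta)=f_j(\beta)$, simultaneously for all $j\in\Sb$ and all such $\beta$. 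Hence $G_\Sb=F_\Sb$ on the neighborhood.

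\textbf{Step 3: curvature.} Differentiating in pooled-whitened coordinates gives
\[
\nabla^2\bar F_\Sb(\bar\beta)=\sum_{j\in\Sb}\frac1n\sum_i\psi''(\tilde x_{ji}^\top\bar\beta)\tilde x_{ji}\tilde x_{ji}^\top .
\]
Since $\beta\in\Cb$, Assumption~\ref{assumption; GLM domain} gives $\psi''\ge\alpha_\ell$. Also $\sum_{j\in\Sb}\frac1n\sum_i\tilde x_{ji}\tilde x_{ji}^\top=|\Sb|\Ib_d$ on $\mathrm{Range}(\bSigma_\Sb)$. Together these give $\nabla^2\bar G_\Sb\succeq\alpha_\ell|\Sb|\Ib_d$.

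\textbf{Main obstacle.} The subtle point is that the Hessian of $G_\Sb$ is only meaningful if the equality $G_\Sb=F_\Sb$ holds on an open set. At boundary points of the ball, and at points on $\partial\Cb$, equality on a closed set is not enough. Two tools handle this:
\begin{itemize}
\item The strict inequality in Step 1 leaves positive slack, so the same argument as the slack argument in Proposition~\ref{proposition; pooling range GLM} extends invariance to a slightly larger open ball.
\item Relative to $\bar\Cb$, the intended use is the constrained perturbation lemma. That lemma only needs the strong-convexity inequality along segments in the neighborhood, which follows from the pointwise Hessian bound on $F_\Sb$ in the interior together with continuity.
\end{itemize}
The claim should therefore be read as this strong-convexity statement of $\bar G_\Sb$ on the neighborhood intersected with $\bar\Cb$.
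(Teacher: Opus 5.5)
Your proposal is correct and follows essentially the same route as the paper: bound the distance from the pooled center via Lemma~\ref{lemma; pooling GLM} and Proposition~\ref{proposition; pooling range GLM}, use $\bSigma_j \preceq B\bSigma_\Sb$ to keep the whole $\lambda/(2\alpha_u\sqrt{B})$-neighborhood inside every $R_j$ through Proposition~\ref{proposition; invariant region GLM}, and then read off $\nabla^2\bar F_\Sb \succeq \alpha_\ell|\Sb|\Ib_d$ from $\psi''\ge\alpha_\ell$ on $\Cb$. Your added caveat is a valid tightening of a point the paper's proof passes over with the word ``locally'': you use the strict slack to obtain $G_\Sb=F_\Sb$ on an open set, and you read the Hessian bound relative to $\bar\Cb$, which is what Lemma~\ref{lemma; ARMUL effect outlier constrained} actually needs.
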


\begin{proof}
Let $\eta := \frac{\lambda}{2\alpha_u \sqrt{B}}$. Consider any $\beta$ such that $\|\bar{\beta} - \bar{\hat{\beta}}_{G_{\Sb}} \|_2 \leq \eta$.
First, we bound the distance in the task-specific norm $\|\cdot\|_{\bSigma_j}$. Using the assumption $\bSigma_j \preceq B \bSigma_\Sb$:
\begin{align*}
\|\beta - \hat{\beta}_{G_{\Sb}} \|_{\bSigma_j}
&\leq \sqrt{B} \|\beta - \hat{\beta}_{G_{\Sb}} \|_{\bSigma_{\Sb}} \\
&= \sqrt{B} \|\bar{\beta} - \bar{\hat{\beta}}_{G_{\Sb}} \|_2 \\
&\leq \sqrt{B} \eta = \frac{\lambda}{2\alpha_u}.
\end{align*}

Next, we apply the triangle inequality and the error bound for the pooling estimator (Lemma~\ref{lemma; pooling GLM}). Recall that under the favorable event, $\hat{\beta}_{G_\Sb} = \hat{\beta}_{F_\Sb}$.
Thus:
\begin{align*}
\|\beta - \theta_j^\star \|_{\bSigma_j}
&\leq \|\beta - \hat{\beta}_{G_{\Sb}} \|_{\bSigma_j} + \|\hat{\beta}_{G_{\Sb}} - \theta_j^\star \|_{\bSigma_j} \\
&\leq \frac{\lambda}{2\alpha_u} + \left( (\tau B + 1)\delta + \frac{c_0}{\alpha_\ell} \sqrt{\frac{B d \zeta}{N_\Sb}} \right).
\end{align*}

To ensure $\beta$ lies in the invariant region $R_j$, Proposition~\ref{proposition; invariant region GLM} requires $\lambda > \alpha_u \|\beta - \theta_j^\star \|_{\bSigma_j} + c_0 \sqrt{\frac{d \zeta}{n}}$.
Substituting the bound above, it suffices to show:
\begin{align*}
\lambda > \alpha_u \left( \frac{\lambda}{2\alpha_u} + (\tau B + 1)\delta + \frac{c_0}{\alpha_\ell} \sqrt{\frac{B d \zeta}{N_\Sb}} \right) + c_0 \sqrt{\frac{d \zeta}{n}}.
\end{align*}
Simplifying the right-hand side:
\begin{align*}
\lambda > \frac{\lambda}{2} + \alpha_u (\tau B + 1)\delta + \tau c_0 \sqrt{\frac{B d \zeta}{N_\Sb}} + c_0 \sqrt{\frac{d \zeta}{n}}.
\end{align*}
Rearranging terms, this condition is equivalent to
\begin{align*}
\frac{\lambda}{2} > \alpha_u (\tau B + 1)\delta + \tau c_0 \sqrt{\frac{B d \zeta}{N_\Sb}} + c_0 \sqrt{\frac{d \zeta}{n}},
\end{align*}
which is exactly the assumed lower bound on $\lambda$.

Consequently, for all $\beta$ in the neighborhood, $\beta \in R_j$ holds for all $j \in \Sb$. This implies $G_\Sb(\beta) = F_\Sb(\beta)$ locally.
Finally, we compute the Hessian lower bound. The Hessian of the transformed inlier loss is:
\begin{align*}
\nabla^2 \bar{G}_\Sb(\bar{\beta})
&= \nabla^2 \bar{F}_\Sb(\bar{\beta}) \\
&= \bSigma_\Sb^{-1/2} \left( \sum_{j \in \Sb} \nabla^2 f_j(\beta) \right) \bSigma_\Sb^{-1/2}.
\end{align*}
For GLM, $\nabla^2 f_j(\beta) = \frac{1}{n} \Xb_j^\top \mathbf{D}_j \Xb_j$, where $\mathbf{D}_j$ is a diagonal matrix with entries $\psi''(\cdot) \ge \alpha_\ell$. Thus, $\nabla^2 f_j(\beta) \succeq \alpha_\ell \bSigma_j$.
Summing over inliers:
\begin{align*}
\nabla^2 \bar{G}_\Sb(\bar{\beta})
&\succeq \bSigma_\Sb^{-1/2} \left( \sum_{j \in \Sb} \alpha_\ell \bSigma_j \right) \bSigma_\Sb^{-1/2} \\
&= \alpha_\ell \bSigma_\Sb^{-1/2} (|\Sb| \bSigma_\Sb) \bSigma_\Sb^{-1/2} \\
&= \alpha_\ell |\Sb| \Ib_d.
\end{align*}
This completes the proof.
\end{proof}

\subsection{Proof of Theorem~\ref{theorem; MSE bound GLM} (In-sample MSE of GLM)}

\begin{proof}
Let \(\mathcal{E}_{\mathrm{safe}}^{\mathrm{glm}}\) be the event obtained by applying Proposition~\ref{proposition: personalization glm} taskwise with failure probability \(\kappa/(2m)\).
By a union bound, \(\PP(\mathcal{E}_{\mathrm{safe}}^{\mathrm{glm}})\ge1-\kappa/2\), and on this event, for all \(j\in[m]\),
\[
\|\hat{\theta}_{j}-\theta_j^\star\|_{\bSigma_j}^2
\lesssim \frac{d\zeta}{n}+\lambda^2
\lesssim q^2\frac{d\zeta}{n}.
\]
This proves part~(a), so the remainder proves the sharper inlier guarantee in part~(b).
Recall that \(q \gtrsim 1\), so we may choose the implicit constant large enough that \(q \ge 8c_0\).
Also, since part~(b) assumes \(B \lesssim \min(1/\varepsilon,m)\), the favorable regime of Proposition~\ref{proposition; outlier bound GLM} is available and \(N_\Sb = |\Sb|n \asymp mn\).

\paragraph{Case 1: Proposition~\ref{proposition; outlier bound GLM} Applies.}
Suppose
\[
\lambda > 2\alpha_u (\tau B +1) \delta + 2 c_0 \sqrt{\frac{d \zeta}{n}} + 2 \tau c_0 \sqrt{\frac{B d \zeta}{N_\Sb}}.
\]
Then Proposition~\ref{proposition; outlier bound GLM} applies.
Its proof constructs a transfer event \(\mathcal{E}_{\mathrm{transfer}}^{\mathrm{glm}}\) with \(\PP(\mathcal{E}_{\mathrm{transfer}}^{\mathrm{glm}})\ge1-\kappa/2\) under the present good-event budget.
On \(\mathcal{E}_{\mathrm{safe}}^{\mathrm{glm}}\cap\mathcal{E}_{\mathrm{transfer}}^{\mathrm{glm}}\), which has probability at least \(1-\kappa\), for all inlier tasks \(j \in \Sb\),
\[
\|\hat{\theta}_{j} - {\theta}_{j}^\star \|_{\bSigma_j}
\leq (\tau B +1)\delta + \frac{c_0}{\alpha_\ell} \sqrt{\frac{Bd \zeta}{N_\Sb}} + \frac{2\varepsilon B\lambda}{\alpha_\ell}.
\]
Squaring and using \((a+b+c)^2 \lesssim a^2+b^2+c^2\),
\[
\|\hat{\theta}_{j} - {\theta}_{j}^\star \|^2_{\bSigma_j}
\lesssim B^2\delta^2 + \frac{B d \zeta}{N_\Sb} + q^2B^2 \varepsilon^2 \frac{d \zeta}{n}.
\]
Moreover, the displayed condition implies \(\lambda \gtrsim B\delta\), hence
\[
B^2\delta^2 \lesssim q^2\frac{d\zeta}{n}.
\]
Therefore,
\[
\|\hat{\theta}_{j} - {\theta}_{j}^\star \|^2_{\bSigma_j}
\lesssim \frac{B d \zeta}{mn} + \min\left(B^2\delta^2,\, q^2\frac{d\zeta}{n}\right) + q^2B^2 \varepsilon^2 \frac{d \zeta}{n}.
\]

\paragraph{Case 2: Proposition~\ref{proposition; outlier bound GLM} Does Not Apply.}
If the displayed condition fails, then
\[
\lambda \le 2\alpha_u (\tau B +1) \delta + 2 c_0 \sqrt{\frac{d \zeta}{n}} + 2 \tau c_0 \sqrt{\frac{B d \zeta}{N_\Sb}}.
\]
Since \(q \ge 8c_0\), we can absorb the middle term into the left-hand side and obtain
\[
\frac{\lambda}{2}
\le 2\alpha_u (\tau B +1) \delta + 2 \tau c_0 \sqrt{\frac{B d \zeta}{N_\Sb}}.
\]
Hence at least one of the following two alternatives must hold:
\[
\lambda \lesssim B\delta
\qquad\text{or}\qquad
\lambda \lesssim \sqrt{\frac{B d \zeta}{N_\Sb}}.
\]
Equivalently,
\[
q^2\frac{d\zeta}{n} \lesssim B^2\delta^2
\qquad\text{or}\qquad
q^2\frac{d\zeta}{n} \lesssim \frac{B d\zeta}{N_\Sb}.
\]
In either subcase,
\[
q^2\frac{d\zeta}{n}
\lesssim
\frac{B d\zeta}{mn} + \min\left(B^2\delta^2,\, q^2\frac{d\zeta}{n}\right).
\]
On the already-defined event \(\mathcal{E}_{\mathrm{safe}}^{\mathrm{glm}}\), the GLM personalization bound gives
\[
\|\hat{\theta}_{j} - {\theta}_{j}^\star \|^2_{\bSigma_j}
\lesssim \frac{d \zeta}{n} + \lambda^2
\lesssim q^2\frac{d\zeta}{n}.
\]
Combining the last two displays yields the same theorem-level bound in this regime.

This completes the proof.
\end{proof}

\section{Proof of Theorem~\ref{theorem; population MSE} (Population MSE)}\label{section; apdx proof population}

\begin{proof}
The theorem relates the in-sample MSE to the population MSE.

\paragraph{Part 1: Via Comparability.}
The first inequality $\cE_j(\hat{\theta}_j) \le \nu_j \cE_j^{\operatorname{in}}(\hat{\theta}_j)$ follows directly from the definition of the comparability constant $\nu_j$ (Definition~\ref{definition; comparability measure}) and the definition of the MSEs.
Specifically, $\|\cdot\|_{\bar{\bSigma}_j}^2 \le \nu_j \|\cdot\|_{\bSigma_j}^2$.

\paragraph{Part 2: Via Intrinsic Dimension (Safety).}
For the second inequality, we consider the case where \(\nu_j\) might be large.
Assume that the parameter domain is known to be bounded,
\(\Cb=\mathbf{B}(0,\xi)\), and that \(\theta_j^\star\in\Cb\).
Let
\[
\hat{\theta}_j^\xi
\in
\argmin_{\theta\in \mathbf{B}(0,\xi)}
\|\theta-\hat{\theta}_j\|_{\bSigma_j}^2
\]
be the \(\|\cdot\|_{\bSigma_j}\)-norm projection of \(\hat{\theta}_j\) onto
\(\mathbf{B}(0,\xi)\).

Since \(\theta_j^\star\in\mathbf{B}(0,\xi)\), the projection property gives
\[
\|\hat{\theta}_j^\xi-\theta_j^\star\|_{\bSigma_j}^2
\le
\|\hat{\theta}_j-\theta_j^\star\|_{\bSigma_j}^2
=
\cE_j^{\operatorname{in}}(\hat{\theta}_j).
\]
Moreover, both \(\hat{\theta}_j^\xi\) and \(\theta_j^\star\) belong to
\(\mathbf{B}(0,\xi)\), so
\[
\|\hat{\theta}_j^\xi-\theta_j^\star\|_2 \le 2\xi .
\]
Let \(U_j\) be the high-probability upper bound on \(\|x_{ji}\|_2\).
Applying Lemma~\ref{lemma; U/n} to the bounded estimator
\(\hat{\theta}_j^\xi\), we obtain
\[
\cE_j(\hat{\theta}_j^\xi)
\lesssim
\|\hat{\theta}_j^\xi-\theta_j^\star\|_{\bSigma_j}^2
+
\frac{U_j^2}{n}
\|\hat{\theta}_j^\xi-\theta_j^\star\|_2^2 .
\]
Combining the previous two displays yields
\[
\cE_j(\hat{\theta}_j^\xi)
\lesssim
\cE_j^{\operatorname{in}}(\hat{\theta}_j)
+
\frac{\xi^2 U_j^2}{n}.
\]
This confirms that even if the spectral ratio \(\nu_j\) diverges, the
projected estimator has population risk controlled by the in-sample risk
plus a \(1/n\)-order safety term.
\end{proof}

\section{Results for Linear Model with General Sample Size}\label{section; general sample size}

This section generalizes our main results to the setting where sample sizes $n_j$ vary across tasks.
We demonstrate that our matrix-weighted approach naturally adapts to heterogeneous information levels by scaling the regularization parameter inversely with the square root of the sample size.

Because the heterogeneous-sample-size objective aggregates the Gram matrices \(\Vb_j\), the relevant inlier geometry is their sample-size-weighted average. Accordingly, throughout this section we slightly abuse notation and write
\[
\bSigma_\Sb := \frac{1}{N_\Sb}\sum_{j \in \Sb} \Vb_j
= \sum_{j \in \Sb} \frac{n_j}{N_\Sb}\bSigma_j.
\]
The balancedness assumption in this section is interpreted with respect to this weighted inlier average, namely
\[
\bSigma_j \preceq B \bSigma_\Sb,
\qquad j \in [m].
\]
When \(n_j=n\) for all \(j\), this coincides with the original definition \(\bSigma_\Sb = |\Sb|^{-1}\sum_{j \in \Sb}\bSigma_j\).

\paragraph{Task Relatedness.}
We adopt the generalized task relatedness definition from \citet{duan2023adaptive}, which scales the closeness condition by the sample sizes.

\begin{definition}[Task Relatedness for General Sample Size]\label{assumption; general sample task relatedness}
There exist $\varepsilon, \delta_0 \geq 0$ and a subset of inliers $\Sb \subseteq [m]$ such that:
\begin{align*}
\min_{\theta \in \mathbb{R}^d} \max_{j \in \Sb} \left\{ \sqrt{n_j} \|\theta_j^{\star} - \theta\|_2 \right\} \leq \delta_0
\quad \text{and} \quad
\sum_{j \in \Sb^c} \sqrt{n_j} \leq \varepsilon \frac{N_\Sb}{\sqrt{n^\star}},
\end{align*}
where $N_\Sb := \sum_{j \in \Sb} n_j$ is the total sample size of inliers, and $n^\star := \max_{j \in \Sb} n_j$.
\end{definition}
To simplify the bounds, we define the structural constant $\alpha$ as:
\[
\alpha := \frac{\sqrt{n^\star} \sum_{j \in \Sb} \sqrt{n_j}}{N_\Sb}.
\]
Note that in the equal sample size case ($n_j = n$), we have $n^\star=n, N_\Sb = |\Sb|n$, and $\alpha = 1$, recovering the standard definition.

\subsection{Algorithm Adaptation}
Recall the task-specific loss $f_j(\theta) := \frac{1}{2n_j} \|\Yb_j - \Xb_j \theta\|_2^2$.
For heterogeneous sample sizes, we weigh the loss by the sample size $w_j = n_j$ and scale the regularization parameter as $\lambda_j = \frac{\lambda}{\sqrt{n_j}}$, where $\lambda$ is a global tuning parameter.
The resulting objective function is:
\begin{align}\label{equation; general sample loss}
\cL(\Theta) = \sum_{j=1}^m \left( \frac{1}{2} \|\Yb_j - \Xb_j \theta_j\|_2^2 + \lambda \sqrt{n_j} \|\theta_j - \beta\|_{\bSigma_j} \right).
\end{align}
We set $\Cb = \RR^d$.

\subsection{MSE Bound}
We present the main guarantee. Let $\zeta = \log(16m/\kappa)$.
We select $\lambda = q \sqrt{d \zeta}$ for a sufficiently large universal constant $q \gtrsim 1$.

\begin{theorem}[MSE Bound: General Sample Size]\label{theorem; general sample MSE}
If we run Algorithm~\ref{algorithm; MTLR} with $q \gtrsim 1$, then with probability at least $1-\kappa$:

\begin{enumerate}
\item[(a)] \textbf{Safety:} For all tasks $j \in [m]$, regardless of relatedness:
\[
\cE_j^{\operatorname{in}}(\hat{\theta}_j) \;\lesssim\; q^2 \frac{d \zeta}{n_j}.
\]
\item[(b)] \textbf{Adaptivity:} If the $(\varepsilon, \delta_0)$-task relatedness holds and the weighted balancedness assumption above is satisfied with $B \lesssim \min(1/\varepsilon, m)$, then for all inlier tasks $j \in \Sb$:
\[
\cE_j^{\operatorname{in}}(\hat{\theta}_j) \;\lesssim\; \frac{n^\star}{n_j}\frac{B d \zeta}{N_\Sb} + \frac{1}{n_j} \min\left( \alpha^2 B^2 \delta_0^2, \, q^2 d \zeta \right) + \frac{\varepsilon^2 B^2 \lambda^2}{n^\star}.
\]
\end{enumerate}
\end{theorem}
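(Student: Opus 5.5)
The plan is to replay the equal-sample-size proof of Theorem~\ref{theorem; MSE bound}, with each task's objective rescaled by its own sample size. Write the objective \eqref{equation; general sample loss} as $\sum_j n_j\bigl(f_j(\theta_j)+\lambda_j\|\theta_j-\beta\|_{\bSigma_j}\bigr)$ with $\lambda_j=\lambda/\sqrt{n_j}=q\sqrt{d\zeta/n_j}$. For fixed $\beta$, the minimization over $\theta_j$ decouples and is exactly the single-task problem of Proposition~\ref{proposition; personalization} with parameter $\lambda_j$. Applying that proposition uniformly in $\beta$, with failure level $\kappa/(2m)$ per task and a union bound, gives $\|\hat\theta_j-\theta_j^\star\|_{\bSigma_j}\le \lambda_j+c_0\sqrt{d\zeta/n_j}$. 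This proves part (a) directly.

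For part (b), I would define $g_j:=f_j\star\lambda_j\|\cdot\|_{\bSigma_j}$ and $G:=\sum_j n_j g_j$, split as $G_\Sb+L$. The pooled inlier loss $F_\Sb=\sum_{j\in\Sb}n_jf_j$ has Hessian $\Vb_\Sb=N_\Sb\bSigma_\Sb$, where $\bSigma_\Sb$ is the weighted average. Repeating Lemma~\ref{lemma; error bound F_S linear model} gives the variance term $c_0\sqrt{Bd\zeta/N_\Sb}$ in $\|\cdot\|_{\bSigma_j}$. For the bias, $\|\Vb_\Sb^{-1}\sum_k\Vb_k\eta_k^\star\|_{\bSigma_j}$, I would not use the projection argument. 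Instead, write $\Vb_\Sb^{-1}\Vb_k\preceq$-type bounds via $\bSigma_j\preceq B\bSigma_\Sb$ and $\bSigma_k\preceq B\bSigma_\Sb$. Each summand then contributes at most $\frac{B}{N_\Sb}n_k\|\eta_k^\star\|_{\bSigma_k}\le\frac{B}{N_\Sb}\sqrt{n_k}\,\delta_0$, which sums to $B\alpha\delta_0/\sqrt{n^\star}$. This produces the $\alpha B\delta_0$ scaling in the claim.

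The invariant-region condition for task $j$ (Proposition~\ref{proposition; invariant region linear}) now reads $\lambda_j>\|\theta_j^\star-\beta\|_{\bSigma_j}+c_0\sqrt{d\zeta/n_j}$. Multiplied by $\sqrt{n_j}$, it becomes a condition on $\lambda=q\sqrt{d\zeta}$ that is uniform in $j$, since $\sqrt{n_j}\|\theta_j^\star-\theta^\star\|_{\bSigma_j}\le\delta_0$ and $\sqrt{n_j}\cdot\alpha B\delta_0/\sqrt{n^\star}\le \alpha B\delta_0$. The analogues of Proposition~\ref{proposition; pooling range linear} and Claim~\ref{claim; outlier linear} then go through as before. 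The neighborhood radius becomes $\min_j \lambda_j/(2\sqrt B)=\lambda/(2\sqrt{Bn^\star})$, and on it the pooled-whitened Hessian is $N_\Sb \Ib$.

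For the outlier perturbation, Lemma~\ref{lemma; transformed loss Lipschitz upper bound} gives $\operatorname{Lip}(n_j\bar g_j)\le\sqrt B\,\lambda\sqrt{n_j}$. Hence $\operatorname{Lip}(\bar L)\le\sqrt B\lambda\sum_{\Sb^c}\sqrt{n_j}\le\sqrt B\lambda\,\varepsilon N_\Sb/\sqrt{n^\star}$, using Definition~\ref{assumption; general sample task relatedness}. Lemma~\ref{lemma; ARMUL effect outlier} with $\rho=N_\Sb$ and $r=\lambda/(2\sqrt{Bn^\star})$ applies when $B\varepsilon$ is a small constant. It yields a shift of $\sqrt B\varepsilon\lambda/\sqrt{n^\star}$ in whitened norm, which is at most $B\varepsilon\lambda/\sqrt{n^\star}$ in $\|\cdot\|_{\bSigma_j}$. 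Squaring gives the term $\varepsilon^2B^2\lambda^2/n^\star$.

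Finally, I would recheck the invariant region at $\hat\beta_G$, giving $\|\hat\theta_j-\hat\beta_G\|_{\bSigma_j}=0$, and run the same two-case argument as in the proof of Theorem~\ref{theorem; MSE bound}. If the $\lambda$-condition holds, the bound follows as above, and the condition forces $\alpha^2B^2\delta_0^2\lesssim q^2d\zeta$. If it fails, then $q^2d\zeta\lesssim\alpha^2B^2\delta_0^2$ or $q^2d\zeta\lesssim n^\star Bd\zeta/N_\Sb$, and the safety bound $q^2d\zeta/n_j$ is dominated by the right-hand side. The $n^\star/n_j$ factor on the variance term arises because $\lambda_j$ is smallest for the largest task.

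I expect the main obstacle to be the heterogeneous bookkeeping. The invariant-region slack must hold simultaneously for tasks with very different $n_j$, and the variance term $\sqrt{Bd\zeta/N_\Sb}$ has to be compared against $\lambda_j$, which requires $N_\Sb\gtrsim Bn^\star$. If that fails, the variance term dominates the safety bound, and this is exactly the regime absorbed by the $n^\star Bd\zeta/(n_jN_\Sb)$ term.
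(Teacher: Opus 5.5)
Your proposal is correct and follows essentially the same route as the paper. Part (a) comes from the taskwise personalization bound with $\lambda_j=\lambda/\sqrt{n_j}$. Part (b) uses the triangle-inequality bias bound $\alpha B\delta_0/\sqrt{n^\star}$, the local-curvature neighborhood of radius $\lambda/(2\sqrt{Bn^\star})$ with pooled-whitened Hessian $N_\Sb\Ib$, the Lipschitz outlier perturbation through $\sum_{j\in\Sb^c}\sqrt{n_j}\le\varepsilon N_\Sb/\sqrt{n^\star}$, and the same two-case split governed by the largest task $n_j=n^\star$, which is where the $n^\star/n_j$ factor comes from.
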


\paragraph{Discussion.}
Theorem~\ref{theorem; general sample MSE} confirms that our method's properties—Safety, Adaptivity, and Robustness—are preserved under sample size heterogeneity.
The safety statement in part (a) does not require task relatedness or weighted balancedness; when \(B\) is large or infinite, only the sharper transfer claim in part (b) becomes inactive.
The bound interpolates between the independent rate \((d/n_j)\) and a pooled variance term governed by \((n^\star/n_j)\cdot d/N_\Sb\).
This extra factor reflects the use of a single global tuning constant \(q\) across tasks with different information levels, and it disappears in the equal-sample-size regime.
Importantly, strictly weaker spectral assumptions are required compared to \citet{duan2023adaptive}, as we do not rely on minimum eigenvalue bounds.

\paragraph{Population MSE.}
In the task-wise i.i.d.\ random-design setting, the in-sample-to-population conversion in Theorem~\ref{theorem; population MSE} is taskwise and applies verbatim to Theorem~\ref{theorem; general sample MSE}, after the same standard rescaling of \(\kappa\) in \(\zeta\).
Thus, if \(\bar{\bSigma}_j \preceq \nu_j \bSigma_j\), then \(\cE_j(\hat{\theta}_j) \le \nu_j \cE_j^{\operatorname{in}}(\hat{\theta}_j)\); with a bounded parameter domain, the intrinsic-dimension safety alternative gives \(\cE_j(\hat{\theta}_j^\xi) \lesssim \cE_j^{\operatorname{in}}(\hat{\theta}_j)+\xi^2U_j^2/n_j\).

\section{Proofs for the Linear Model with General Sample Size}\label{section; proof general sample size}

We organize the heterogeneous-sample-size analysis in the same proof order as the equal-sample-size linear model: setup and notation, the inlier pooling estimator, the outlier perturbation step, and the final theorem proof.

\subsection{Setup and Notation}

Recall the task-specific loss
\[
    f_j(\theta) = \frac{1}{2n_j}\|\Yb_j - \Xb_j \theta\|_2^2,
\]
the task-wise regularization level $\lambda_j = \lambda/\sqrt{n_j}$, and the weighted objective
\[
    \cL(\Theta) = \sum_{j=1}^m \left( \frac{1}{2} \|\Yb_j - \Xb_j \theta_j\|_2^2 + \lambda \sqrt{n_j} \|\theta_j - \beta\|_{\bSigma_j} \right).
\]
We work on the unconstrained domain $\Cb = \RR^d$ and define
\begin{align*}
    g_j(\beta) &:= (f_j \star \lambda_j \|\cdot\|_{\bSigma_j})(\beta), \\
    F_\Sb(\beta) &:= \sum_{j \in \Sb} n_j f_j(\beta), \\
    G_\Sb(\beta) &:= \sum_{j \in \Sb} n_j g_j(\beta), \\
    L(\beta) &:= \sum_{j \in \Sb^c} n_j g_j(\beta), \\
    G(\beta) &:= G_\Sb(\beta) + L(\beta).
\end{align*}
Let
\[
    \bSigma_\Sb := \frac{1}{N_\Sb} \sum_{j \in \Sb} \Vb_j,
\]
Throughout this appendix, the balancedness assumption is interpreted with this weighted inlier covariance, namely \(\bSigma_j \preceq B \bSigma_\Sb\) for all \(j \in [m]\).
As before, the invariant region is
\[
    R_j := \left\{ \beta \in \Cb \mid f_j(\beta) = g_j(\beta) \right\}, \qquad j \in [m].
\]
As in the equal-sample-size appendix, we work after restricting to \(\mathrm{Range}(\bSigma_\Sb)\), so without loss of generality \(\bSigma_\Sb \succ 0\).

\paragraph{Pooled Whitening Notation.}
We use the pooled whitening map
\[
    \bar v := \bSigma_\Sb^{1/2}v,
    \qquad v\in\RR^d.
\]
For any function \(h:\RR^d \to \RR\), we define its pooled-whitened version by
\[
    \bar{h}(\bar v) := h(\bSigma_\Sb^{-1/2}\bar v).
\]
Thus \(\bar\theta\), \(\bar\beta\), \(\bar\theta^\star\), \(\bar\theta_j^\star\), \(\bar{\hat\beta}_{F_\Sb}\), \(\bar{\hat\beta}_{G_\Sb}\), and \(\bar{\hat\beta}_G\) are all instances of the same pooled whitening map. In particular, we write \(\bar{F}_\Sb\), \(\bar{G}_\Sb\), \(\bar{L}\), and \(\bar{G}\) for the transformed objectives.

\subsection{The Inlier Pooling Estimator}

We first characterize the inlier-only estimator and identify a regime where the regularized inlier objective collapses to the same minimizer.

\begin{lemma}[Error Bound of Pooling Estimator]\label{lemma; pooling general sample}
    With probability at least $1-\frac{\kappa}{4}$, the pooling estimator $\hat{\beta}_{F_\Sb}$ satisfies simultaneously for all \(j\in\Sb\):
    \[
        \|\hat{\beta}_{F_{\Sb}} - \theta^\star\|_{\bSigma_j} \leq \alpha\frac{B \delta_0 }{\sqrt{n^\star}} + c_0 \sqrt{\frac{B d \zeta}{N_\Sb}}.
    \]
    Consequently, for each $j\in \Sb$,
    \[
        \|\hat{\beta}_{F_{\Sb}} - \theta_j^\star\|_{\bSigma_j}
        \lesssim \alpha\frac{B \delta_0 }{\sqrt{n_j}} + \sqrt{\frac{B d \zeta}{N_\Sb}}.
    \]
\end{lemma}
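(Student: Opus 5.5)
We follow the proof of Lemma~\ref{lemma; error bound F_S linear model}, now with sample-size weights. The starting point is the closed form
\[
\hat\beta_{F_\Sb}=\Vb_\Sb^{-1}\sum_{k\in\Sb}\Xb_k^\top\Yb_k=\theta^\star+\Vb_\Sb^{-1}\sum_{k\in\Sb}\Vb_k\eta_k^\star+\Vb_\Sb^{-1}\sum_{k\in\Sb}\Xb_k^\top\bm\varepsilon_k .
\]
Here $\theta^\star$ is the minimizer in Definition~\ref{assumption; general sample task relatedness}, so that $\|\eta_k^\star\|_2\le\delta_0/\sqrt{n_k}$. We then bound the bias and variance parts separately in the $\bSigma_j$-norm. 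The key fact is the weighted balancedness assumption $\bSigma_j\preceq B\bSigma_\Sb=\frac{B}{N_\Sb}\Vb_\Sb$, which gives
\[
\Vb_\Sb^{-1/2}\bSigma_j\Vb_\Sb^{-1/2}\preceq \frac{B}{N_\Sb}\Ib.
\]

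\emph{Variance.} The argument is identical to the equal-size case. The squared variance term is a quadratic form in the stacked noise with matrix $\Xb_\Sb\Vb_\Sb^{-1}\bSigma_j\Vb_\Sb^{-1}\Xb_\Sb^\top$. This matrix has trace at most $Bd/N_\Sb$ and operator norm at most $B/N_\Sb$. Hanson--Wright (Lemma~\ref{lemma; hanson wright inequality}) at level $\kappa/(4m)$, followed by a union bound over $j\in\Sb$, gives $c_0\sqrt{Bd\zeta/N_\Sb}$ on an event of probability at least $1-\kappa/4$.

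\emph{Bias.} This is the step I expect to be the main obstacle, since it must produce the factor $\alpha B/\sqrt{n^\star}$ rather than $\sqrt B\,\delta$. The projection trick from the equal-size proof (stacking $\Xb_k\eta_k^\star$ and using that $\Xb_\Sb\Vb_\Sb^{-1}\Xb_\Sb^\top$ is a projection) would give $\sqrt{B/N_\Sb}\,\big(\sum_k n_k\|\eta_k^\star\|_{\bSigma_k}^2\big)^{1/2}\le\sqrt{B/N_\Sb}\,\sqrt{|\Sb|}\,\delta_0$. That bound is of a different form, so I will use a triangle-inequality route that naturally produces $\alpha$. Write
\[
\Big\|\Vb_\Sb^{-1}\sum_k\Vb_k\eta_k^\star\Big\|_{\bSigma_j}\le\sum_{k\in\Sb}\big\|\bSigma_j^{1/2}\Vb_\Sb^{-1}\Vb_k^{1/2}\big\|_{\op}\,\|\Vb_k^{1/2}\eta_k^\star\|_2 .
\]
For the operator norm, balancedness for $j$ gives $\|\bSigma_j^{1/2}\Vb_\Sb^{-1/2}\|_{\op}\le\sqrt{B/N_\Sb}$. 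For the other factor, $\Vb_k\preceq\Vb_\Sb$ gives $\|\Vb_\Sb^{-1/2}\Vb_k^{1/2}\|_{\op}\le1$; alternatively, balancedness for $k$ combined with $\Vb_k=n_k\bSigma_k$ gives $\le\sqrt{Bn_k/N_\Sb}$. Using the latter, the $k$-th summand is at most $\frac{B\sqrt{n_k}}{N_\Sb}\|\Vb_k^{1/2}\eta_k^\star\|_2$. The normalization $\|\bSigma_k\|_{\op}\le1$ gives $\|\Vb_k^{1/2}\eta_k^\star\|_2\le\sqrt{n_k}\|\eta_k^\star\|_2\le\delta_0$, so the summand is at most $\frac{B\sqrt{n_k}}{N_\Sb}\delta_0$. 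Summing over $k$ gives $B\delta_0\sum_k\sqrt{n_k}/N_\Sb=\alpha B\delta_0/\sqrt{n^\star}$, which is exactly the claimed form.

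\emph{Second display.} This follows from the triangle inequality:
\[
\|\theta^\star-\theta_j^\star\|_{\bSigma_j}\le\|\eta_j^\star\|_2\le\delta_0/\sqrt{n_j}.
\]
Since $n_j\le n^\star$, we have $1/\sqrt{n^\star}\le1/\sqrt{n_j}$. Together with $\alpha\ge1$ (by Cauchy--Schwarz and $n_k\le n^\star$) and $B\ge1$, this term is absorbed into $\alpha B\delta_0/\sqrt{n_j}$.
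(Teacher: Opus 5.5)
Your proposal is correct and follows the paper's proof: the same closed-form bias/variance split, the same Hanson--Wright trace bound $\tr(\bSigma_j\Vb_\Sb^{-1})\le Bd/N_\Sb$ with a union bound, and the same bias bound via $\Vb_k\preceq \frac{n_kB}{N_\Sb}\Vb_\Sb$ and $\sqrt{n_k}\|\eta_k^\star\|_2\le\delta_0$ summed over $k$. Applying the triangle inequality before rather than after factoring out $\sqrt{B/N_\Sb}$ is immaterial, and you are right that the projection trick would give a bound that does not always imply the stated one. One small correction: $\alpha\ge1$ comes from the termwise inequality $n_k\le\sqrt{n^\star n_k}$, not from Cauchy--Schwarz, which instead bounds $\alpha$ from above.
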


\begin{proof}
    Recall $\hat{\beta}_{F_\Sb} = \Vb_\Sb^{-1} \sum_{k \in \Sb} \Xb_k^\top \Yb_k$ and write
    \[
        \|\hat{\beta}_{F_{\Sb}} - \theta^\star\|_{\bSigma_j} \le \text{Bias}_j + \text{Variance}_j.
    \]

    \paragraph{Bias Term.}
    Let $\eta_k^\star := \theta_k^\star - \theta^\star$ and
    \(
        a := \sum_{k \in \Sb} \Vb_k \eta_k^\star.
    \)
    Then $\text{Bias}_j = \| \Vb_\Sb^{-1} a \|_{\bSigma_j}$.
    Using $\bSigma_j \preceq B \bSigma_\Sb = \frac{B}{N_\Sb}\Vb_\Sb$, we have
    \[
        \text{Bias}_j^2
        \le \frac{B}{N_\Sb} \bigl\| \Vb_\Sb^{-1/2} a \bigr\|_2^2.
    \]
    Next, by $\bSigma_k \preceq B \bSigma_\Sb$ we have $\Vb_k \preceq \frac{n_k B}{N_\Sb} \Vb_\Sb$, hence
    \[
        \bigl\| \Vb_\Sb^{-1/2} \Vb_k \eta_k^\star \bigr\|_2
        \le \sqrt{\frac{n_k B}{N_\Sb}} \, \|\eta_k^\star\|_{\Vb_k}.
    \]
    Therefore,
    \[
        \bigl\| \Vb_\Sb^{-1/2} a \bigr\|_2
        \le \sum_{k \in \Sb} \bigl\| \Vb_\Sb^{-1/2} \Vb_k \eta_k^\star \bigr\|_2
        \le \sqrt{\frac{B}{N_\Sb}} \sum_{k \in \Sb} n_k \|\eta_k^\star\|_{\bSigma_k}.
    \]
    Under the bounded covariate assumption, $\|\eta_k^\star\|_{\bSigma_k} \le \|\eta_k^\star\|_2$, and the task-relatedness condition gives
    $\sqrt{n_k}\|\eta_k^\star\|_2 \le \delta_0$. Hence
    \[
        \text{Bias}_j
        \le \frac{B}{N_\Sb} \sum_{k \in \Sb} \sqrt{n_k}\,\delta_0
        = \alpha \frac{B\delta_0}{\sqrt{n^\star}}.
    \]

    \paragraph{Variance Term.}
    Using Hanson--Wright with failure level \(\kappa/(4m)\) for each \(j\in\Sb\), a union bound, and $\bSigma_j \preceq \frac{B}{N_\Sb} \Vb_\Sb$,
    \[
        \text{Variance}_j \le c_0 \sqrt{\tr(\bSigma_j \Vb_\Sb^{-1}) \zeta} \le c_0 \sqrt{\frac{B d \zeta}{N_\Sb}}.
    \]

    For the task-specific error, apply triangle inequality:
    \[
        \|\hat{\beta}_{F_{\Sb}} - \theta_j^\star\|_{\bSigma_j}
        \le \|\hat{\beta}_{F_{\Sb}} - \theta^\star\|_{\bSigma_j} + \|\theta^\star-\theta_j^\star\|_{\bSigma_j}.
    \]
    The first term is bounded above. For the second, $\|\theta^\star-\theta_j^\star\|_{\bSigma_j}\le \|\theta^\star-\theta_j^\star\|_2 \le \delta_0/\sqrt{n_j}$ by definition of task relatedness.
    Since $\alpha B\ge 1$, this yields
    \[
        \|\hat{\beta}_{F_{\Sb}} - \theta_j^\star\|_{\bSigma_j}
        \lesssim \alpha\frac{B \delta_0 }{\sqrt{n_j}} + \sqrt{\frac{B d \zeta}{N_\Sb}}.
    \]
\end{proof}

\begin{proposition}[Equivalence of Pooling Estimators: General Sample Size]\label{proposition; pooling range general sample}
    There is a universal constant \(C_{\mathrm{gs}}\) such that the following holds. Suppose for all $j \in \Sb$,
    \[
        \frac{\lambda}{\sqrt{n_j}} > C_{\mathrm{gs}}\left(
        \alpha B \frac{\delta_0}{\sqrt{n_j}} + c_0\sqrt{\frac{Bd\zeta}{N_\Sb}} + c_0\sqrt{\frac{d \zeta}{n_j}}
        \right).
    \]
    Then, with probability at least $1-\frac{\kappa}{2}$, $\hat{\beta}_{F_\Sb} \in R_j$ for all $j\in \Sb$, and $\hat{\beta}_{F_\Sb}$ minimizes $G_\Sb$.
    In particular, we may choose $\hat{\beta}_{G_\Sb}=\hat{\beta}_{F_\Sb}$.
\end{proposition}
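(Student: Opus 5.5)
The plan follows Proposition~\ref{proposition; pooling range linear}, changing only the task-specific quantities. I would work on the intersection of two events. The first is the pooling event from Lemma~\ref{lemma; pooling general sample}, with failure at most $\kappa/4$. The second is the taskwise invariant-region noise event from Proposition~\ref{proposition; invariant region linear}, applied to task $j$ with $n=n_j$, regularization level $\lambda_j=\lambda/\sqrt{n_j}$, and failure level $\kappa/(4m)$ per task. A union bound shows the intersection has probability at least $1-\kappa/2$. On this event, Proposition~\ref{proposition; invariant region linear} says that a sufficient condition for $\beta\in R_j$ is
\[
\frac{\lambda}{\sqrt{n_j}} > \|\theta_j^\star-\beta\|_{\bSigma_j}+c_0\sqrt{\frac{d\zeta}{n_j}} .
\]
The single-task statement involves $\log(1/\kappa)$ rather than $\zeta$; this is handled by the choice of per-task failure level.

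Next I would take $\beta=\hat\beta_{F_\Sb}$ and bound $\|\theta_j^\star-\hat\beta_{F_\Sb}\|_{\bSigma_j}$ using the second display of Lemma~\ref{lemma; pooling general sample}, which gives $\lesssim \alpha B\delta_0/\sqrt{n_j}+\sqrt{Bd\zeta/N_\Sb}$. Explicitly, this is at most
\[
\alpha B\frac{\delta_0}{\sqrt{n^\star}}+c_0\sqrt{\frac{Bd\zeta}{N_\Sb}}+\frac{\delta_0}{\sqrt{n_j}} \le 2\alpha B\frac{\delta_0}{\sqrt{n_j}}+c_0\sqrt{\frac{Bd\zeta}{N_\Sb}} .
\]
This uses $n_j\le n^\star$ and $\alpha B\ge 1$. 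The inequality $\alpha\ge1$ follows from Cauchy--Schwarz or from $\sqrt{n^\star}\ge\sqrt{n_k}$, and $B\ge1$ holds because averaging the balancedness inequality over $\Sb$ gives $\bSigma_\Sb\preceq B\bSigma_\Sb$. With $C_{\mathrm{gs}}\ge2$, the hypothesis of the proposition then yields the sufficient condition with strict inequality for every $j\in\Sb$. Hence $\hat\beta_{F_\Sb}\in R_j$ for all $j\in\Sb$.

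To show that $\hat\beta_{F_\Sb}$ minimizes $G_\Sb$, I would reuse the slack argument. For each $j\in\Sb$ the strict inequality leaves a positive slack. Because $\beta\mapsto\|\theta_j^\star-\beta\|_{\bSigma_j}$ is continuous and the noise term does not depend on $\beta$, there is a ball $B_r(\hat\beta_{F_\Sb})$ on which $\beta\in R_j$ for every $j\in\Sb$. Therefore $n_jf_j=n_jg_j$ for each $j$, and so $F_\Sb=G_\Sb$ on this ball.

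On $\mathrm{Range}(\bSigma_\Sb)$, the function $F_\Sb=\tfrac12\sum_{j\in\Sb}\|\Yb_j-\Xb_j\beta\|_2^2$ has Hessian $\Vb_\Sb=N_\Sb\bSigma_\Sb\succ0$. It is therefore strictly convex there, and $\hat\beta_{F_\Sb}$ is a local minimizer of $G_\Sb$. Since $G_\Sb$ is a nonnegatively weighted sum of infimal convolutions of convex functions, it is convex, so this local minimizer is global. This gives the choice $\hat\beta_{G_\Sb}=\hat\beta_{F_\Sb}$.

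The main obstacle is bookkeeping rather than any new idea. The pooling lemma's bias term is stated with $1/\sqrt{n^\star}$, while the task-relatedness deviation carries $1/\sqrt{n_j}$, and both must be absorbed into the single hypothesis term $\alpha B\delta_0/\sqrt{n_j}$ using $n_j\le n^\star$ and $\alpha B\ge1$. At the same time, the per-task failure budget $\kappa/(4m)$ must match $\zeta=\log(16m/\kappa)$ in each invariant-region noise bound. A universal constant $C_{\mathrm{gs}}$ (for example $C_{\mathrm{gs}}=2$, or larger to absorb the $\lesssim$ in the lemma) handles both.
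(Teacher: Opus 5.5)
Your proposal is correct and follows the paper's proof essentially step for step. It uses the same intersection of the pooling event with the taskwise invariant-region noise events at level $\kappa/(4m)$, and the same triangle-inequality bound absorbed into $\alpha B\delta_0/\sqrt{n_j}$ via $n_j\le n^\star$ and $\alpha B\ge 1$. It then uses the same slack and local strict-convexity argument to conclude that $\hat\beta_{F_\Sb}$ globally minimizes the convex $G_\Sb$; your explicit derivation of $B\ge 1$ from weighted averaging of the balancedness inequality is a small tightening of the paper's ``we may assume $B\ge1$.''
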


\begin{proof}
    Let \(\mathcal{E}_{\mathrm{pool}}^{\mathrm{gs}}\) be the event from Lemma~\ref{lemma; pooling general sample}, which has probability at least \(1-\kappa/4\).
    Let \(\mathcal{E}_{\mathrm{inv}}^{\mathrm{gs}}\) be the event that the taskwise invariant-region noise bound from Proposition~\ref{proposition; invariant region linear} holds for all \(j\in\Sb\), each invoked with failure level \(\kappa/(4m)\).
    Then \(\PP(\mathcal{E}_{\mathrm{pool}}^{\mathrm{gs}}\cap\mathcal{E}_{\mathrm{inv}}^{\mathrm{gs}})\ge1-\kappa/2\).
    On this intersection, it suffices to verify for each $j\in \Sb$:
    \[
        \frac{\lambda}{\sqrt{n_j}} > \|\theta_j^\star-\hat{\beta}_{F_\Sb}\|_{\bSigma_j} + c_0\sqrt{\frac{d\zeta}{n_j}}.
    \]
    We bound the first term by triangle inequality:
    \[
        \|\theta_j^\star-\hat{\beta}_{F_\Sb}\|_{\bSigma_j}
        \le \|\theta_j^\star-\theta^\star\|_{\bSigma_j} + \|\theta^\star-\hat{\beta}_{F_\Sb}\|_{\bSigma_j}.
    \]
    Since $\|\bSigma_j\|_{\op}\le 1$ and $\sqrt{n_j}\|\theta_j^\star-\theta^\star\|_2\le \delta_0$, we have
    \[
        \|\theta_j^\star-\theta^\star\|_{\bSigma_j}\le \frac{\delta_0}{\sqrt{n_j}}.
    \]
    By Lemma~\ref{lemma; pooling general sample},
    \[
        \|\theta^\star-\hat{\beta}_{F_\Sb}\|_{\bSigma_j}
        \le \alpha\frac{B\delta_0}{\sqrt{n^\star}} + c_0\sqrt{\frac{Bd\zeta}{N_\Sb}}
        \le \alpha\frac{B\delta_0}{\sqrt{n_j}} + c_0\sqrt{\frac{Bd\zeta}{N_\Sb}}.
    \]
    Combining the last three displays and using \(\alpha B\ge 1\) (we may assume \(B\ge1\)), the assumed lower bound on \(\lambda/\sqrt{n_j}\) implies \(\hat{\beta}_{F_\Sb}\in R_j\) for all \(j\in \Sb\).
    For each \(j\in\Sb\), define the positive slack
    \[
        \Delta_j^{\mathrm{gs}}
        :=
        \frac{\lambda}{\sqrt{n_j}}
        -
        \left(
        \|\theta_j^\star-\hat{\beta}_{F_\Sb}\|_{\bSigma_j}
        + c_0\sqrt{\frac{d\zeta}{n_j}}
        \right)
        >0.
    \]
    Since \(\beta\mapsto \|\theta_j^\star-\beta\|_{\bSigma_j}\) is continuous and the stochastic term in Proposition~\ref{proposition; invariant region linear} does not depend on \(\beta\), there exists \(r_j>0\) such that
    \[
        \|\beta-\hat{\beta}_{F_\Sb}\|_2<r_j
        \quad\Longrightarrow\quad
        \frac{\lambda}{\sqrt{n_j}}
        >
        \|\theta_j^\star-\beta\|_{\bSigma_j}
        + c_0\sqrt{\frac{d\zeta}{n_j}}.
    \]
    Hence Proposition~\ref{proposition; invariant region linear} implies \(f_j(\beta)=g_j(\beta)\) throughout the Euclidean ball \(B_{r_j}(\hat{\beta}_{F_\Sb})\).
    Let \(r:=\min_{j\in\Sb} r_j>0\). Then
    \[
        F_\Sb(\beta)=G_\Sb(\beta)
        \qquad\text{for all }\beta\in B_r(\hat{\beta}_{F_\Sb}).
    \]
    Since \(F_\Sb\) is strictly convex with Hessian \(N_\Sb \bSigma_\Sb \succ 0\), \(\hat{\beta}_{F_\Sb}\) is its unique minimizer, and therefore a local minimizer of \(G_\Sb\) as well.
    By convexity, this local minimizer is global, so \(\hat{\beta}_{F_\Sb}\) minimizes \(G_\Sb\).
    This proves the claimed statement on \(\mathcal{E}_{\mathrm{pool}}^{\mathrm{gs}}\cap\mathcal{E}_{\mathrm{inv}}^{\mathrm{gs}}\), whose probability is at least \(1-\kappa/2\).
\end{proof}

\subsection{A Perturbation Bound for Outliers}

We next control the passage from the inlier objective $G_\Sb$ to the full objective $G = G_\Sb + L$, where \(L\) denotes the aggregate outlier contribution introduced in the setup.

\begin{claim}\label{claim; general sample case}
    Assume $\varepsilon < \frac{1}{3B}$.
    With the same constant \(C_{\mathrm{gs}}\) as in Proposition~\ref{proposition; pooling range general sample}, suppose
    \[
        \frac{\lambda}{\sqrt{n_j}} > C_{\mathrm{gs}}\left(
        \alpha B \frac{\delta_0}{\sqrt{n_j}} + c_0\sqrt{\frac{Bd\zeta}{N_\Sb}} + c_0\sqrt{\frac{d \zeta}{n_j}}
        \right)
    \]
    for all $j \in \Sb$.
    On the event where Proposition~\ref{proposition; pooling range general sample}, Lemma~\ref{lemma; pooling general sample}, and the taskwise invariant-region noise bounds hold,
    for all $\beta$ such that $\|\bar{\beta} - \bar{\hat{\beta}}_{G_\Sb}\|_2 \leq \frac{\lambda}{2\sqrt{B n^\star}}$, we have
    \[
        \nabla^2 \bar{G}_\Sb(\bar{\beta}) \succeq N_\Sb \Ib_d.
    \]
\end{claim}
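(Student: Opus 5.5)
I will follow the template of Claim~\ref{claim; outlier linear}, adapted to the weights \(n_j\) and the task-dependent regularization levels \(\lambda_j=\lambda/\sqrt{n_j}\). The argument has two steps. First, every \(\beta\) in the stated neighborhood lies in \(R_j\) for all \(j\in\Sb\). Second, this forces \(G_\Sb=F_\Sb\) on the neighborhood, and the Hessian can then be computed directly.

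For the first step, fix \(\beta\) with \(\|\bar\beta-\bar{\hat\beta}_{G_\Sb}\|_2\le \lambda/(2\sqrt{Bn^\star})\). By the weighted balancedness \(\bSigma_j\preceq B\bSigma_\Sb\),
\[
\|\beta-\hat\beta_{G_\Sb}\|_{\bSigma_j}\le \sqrt B\,\|\bar\beta-\bar{\hat\beta}_{G_\Sb}\|_2\le \frac{\lambda}{2\sqrt{n^\star}}\le \frac{\lambda}{2\sqrt{n_j}},
\]
where the last inequality uses \(n_j\le n^\star\) for \(j\in\Sb\). On the stated event, Proposition~\ref{proposition; pooling range general sample} gives \(\hat\beta_{G_\Sb}=\hat\beta_{F_\Sb}\). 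Lemma~\ref{lemma; pooling general sample} then gives
\[
\|\hat\beta_{F_\Sb}-\theta_j^\star\|_{\bSigma_j}\lesssim \alpha B\delta_0/\sqrt{n_j}+\sqrt{Bd\zeta/N_\Sb}.
\]
By the triangle inequality,
\[
\|\beta-\theta_j^\star\|_{\bSigma_j}+c_0\sqrt{d\zeta/n_j}\le \frac{\lambda}{2\sqrt{n_j}}+C\Bigl(\alpha B\frac{\delta_0}{\sqrt{n_j}}+\sqrt{\tfrac{Bd\zeta}{N_\Sb}}+\sqrt{\tfrac{d\zeta}{n_j}}\Bigr).
\]
If \(C_{\mathrm{gs}}\ge 2C\), the assumed lower bound on \(\lambda/\sqrt{n_j}\) makes the bracketed term strictly less than \(\lambda/(2\sqrt{n_j})\). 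The whole right-hand side is then strictly less than \(\lambda_j\). Proposition~\ref{proposition; invariant region linear}, applied to task \(j\) with regularization \(\lambda_j\), now yields \(\beta\in R_j\). Its stochastic term is the taskwise noise bound in the conditioning event and does not depend on \(\beta\), so this conclusion holds uniformly over the neighborhood and over all \(j\in\Sb\).

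For the second step, \(g_j=f_j\) for every \(j\in\Sb\) throughout the neighborhood, so \(G_\Sb=F_\Sb\) there. The neighborhood is a Euclidean ball, hence has nonempty interior in the whitened coordinates, so the Hessians agree at interior points. At boundary points I would state the conclusion in the same local sense as Claim~\ref{claim; outlier linear}. The objective \(F_\Sb=\sum_{j\in\Sb}\tfrac12\|\Yb_j-\Xb_j\beta\|_2^2\) is quadratic with Hessian \(\sum_{j\in\Sb}\Vb_j=N_\Sb\bSigma_\Sb\). After pooled whitening this becomes
\[
\nabla^2\bar F_\Sb=\bSigma_\Sb^{-1/2}N_\Sb\bSigma_\Sb\bSigma_\Sb^{-1/2}=N_\Sb\Ib_d,
\]
which is in fact an equality and so gives the claim.

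The main obstacle is the constant bookkeeping in the first step. The neighborhood radius is set by \(n^\star\), while the invariant-region threshold \(\lambda_j=\lambda/\sqrt{n_j}\) is task-dependent. The estimate \(1/\sqrt{n^\star}\le 1/\sqrt{n_j}\) is what makes a single radius work for every inlier task. Beyond that, the only care needed is to choose \(C_{\mathrm{gs}}\) large enough to absorb the hidden constants of Lemma~\ref{lemma; pooling general sample} together with the \(\delta_0/\sqrt{n_j}\) term from \(\|\theta_j^\star-\theta^\star\|_{\bSigma_j}\); this is possible because \(\alpha B\ge1\). The hypothesis \(\varepsilon<1/(3B)\) is not needed for this claim; it is used only in the subsequent perturbation step.
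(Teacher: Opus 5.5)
Your proposal is correct and follows the paper's proof essentially step for step. Both arguments use the $\sqrt{B}$ comparison to get $\|\beta-\hat\beta_{G_\Sb}\|_{\bSigma_j}\le \lambda/(2\sqrt{n^\star})\le\lambda/(2\sqrt{n_j})$, identify $\hat\beta_{G_\Sb}=\hat\beta_{F_\Sb}$, invoke Lemma~\ref{lemma; pooling general sample} with the triangle inequality, absorb the remaining terms by taking $C_{\mathrm{gs}}$ large, and then read off $\nabla^2\bar F_\Sb=N_\Sb\Ib_d$. Your extra remarks, that $\varepsilon<1/(3B)$ is unused here and that the Hessian identity is cleanest at interior points of the neighborhood, are accurate refinements the paper leaves implicit.
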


\begin{proof}
    The proof follows the same logic as the equal-sample-size case, but we verify the weighted scaling explicitly.
    Fix $\beta$ such that $\|\bar{\beta} - \bar{\hat{\beta}}_{G_\Sb}\|_2 \le \frac{\lambda}{2\sqrt{B n^\star}}$.
    For any $j \in \Sb$, using $\bSigma_j \preceq B\bSigma_\Sb$:
    \begin{align*}
        \|\beta - \hat{\beta}_{G_\Sb}\|_{\bSigma_j}
        &\le \sqrt{B}\|\beta - \hat{\beta}_{G_\Sb}\|_{\bSigma_\Sb}
        = \sqrt{B}\|\bar{\beta} - \bar{\hat{\beta}}_{G_\Sb}\|_2 \\
        &\le \frac{\lambda}{2\sqrt{n^\star}}
        \le \frac{\lambda}{2\sqrt{n_j}}.
    \end{align*}
    By Proposition~\ref{proposition; pooling range general sample}, we have $\hat{\beta}_{G_\Sb}=\hat{\beta}_{F_\Sb}$.
    Hence, by Lemma~\ref{lemma; pooling general sample},
    \[
        \|\hat{\beta}_{G_\Sb} - \theta_j^\star\|_{\bSigma_j}
        \lesssim \alpha \frac{B\delta_0}{\sqrt{n_j}} + \sqrt{\frac{Bd\zeta}{N_\Sb}}.
    \]
    Hence
    \begin{align*}
        \|\beta - \theta_j^\star\|_{\bSigma_j}
        &\lesssim \frac{\lambda}{2\sqrt{n_j}}
        + \alpha \frac{B\delta_0}{\sqrt{n_j}}
        + \sqrt{\frac{Bd\zeta}{N_\Sb}}.
    \end{align*}
    Proposition~\ref{proposition; invariant region linear} (applied with task-wise $\lambda_j=\lambda/\sqrt{n_j}$) requires
    \[
        \frac{\lambda}{\sqrt{n_j}} > \|\beta-\theta_j^\star\|_{\bSigma_j} + c_0\sqrt{\frac{d\zeta}{n_j}}.
    \]
    The displayed assumption, with \(C_{\mathrm{gs}}\) chosen sufficiently large, leaves enough slack after the \(\lambda/(2\sqrt{n_j})\) term to verify this condition.
    Therefore $\beta \in R_j$ for all $j \in \Sb$, so $G_\Sb(\beta)=F_\Sb(\beta)$ locally.
    Consequently, $\nabla^2 \bar{G}_\Sb(\bar{\beta})=\nabla^2 \bar{F}_\Sb(\bar{\beta})=N_\Sb \Ib_d$.
\end{proof}

\begin{proposition}[Key Proposition: General Sample Size]\label{proposition; general sample outlier}
    Assume $\varepsilon < \frac{1}{3B}$.
    With the same universal constant \(C_{\mathrm{gs}}\), suppose the regularization parameter satisfies
    \[
        \frac{\lambda}{\sqrt{n_j}} > C_{\mathrm{gs}}\left(
        \alpha B \frac{\delta_0}{\sqrt{n_j}} + c_0\sqrt{\frac{Bd\zeta}{N_\Sb}} + c_0\sqrt{\frac{d \zeta}{n_j}}
        \right)
    \]
    for all $j \in \Sb$, then with probability at least $1-\kappa$:
    \begin{enumerate}
        \item The pooled center satisfies
        \[
            \|\hat{\beta}_G - \theta_j^\star \|_{\bSigma_j} \lesssim \alpha\frac{B\delta_0}{\sqrt{n_j}} + \sqrt{\frac{Bd \zeta}{N_\Sb}} + \frac{\varepsilon B \lambda}{\sqrt{n^\star}}.
        \]
        \item Furthermore,
        \[
            \|\hat{\theta}_j-\hat{\beta}_{G}\|_{\bSigma_j}=0,
            \qquad
            \|\hat{\theta}_j-\theta_j^\star\|_{\bSigma_j}
            =
            \|\hat{\beta}_{G}-\theta_j^\star\|_{\bSigma_j}
        \]
        for all $j \in \Sb$.
    \end{enumerate}
\end{proposition}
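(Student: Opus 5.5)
The plan is to mirror the proof of Proposition~\ref{proposition; outlier bound}, now in the weighted geometry. First, fix the good event as the intersection of two events: the pooling event of Lemma~\ref{lemma; pooling general sample} (failure \(\kappa/4\)), and the taskwise invariant-region noise bounds of Proposition~\ref{proposition; invariant region linear} for all \(j\in\Sb\), each at level \(\kappa/(4m)\). This intersection has probability at least \(1-\kappa/2\ge 1-\kappa\).

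On this event, Proposition~\ref{proposition; pooling range general sample} gives \(\hat\beta_{G_\Sb}=\hat\beta_{F_\Sb}\). Claim~\ref{claim; general sample case} then shows that \(\bar G_\Sb\) has Hessian \(N_\Sb\Ib_d\) on the pooled-whitened ball of radius \(r=\lambda/(2\sqrt{Bn^\star})\) around \(\bar{\hat\beta}_{G_\Sb}\).

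Next comes the Lipschitz bound on the outlier perturbation \(\bar L=\sum_{j\in\Sb^c}n_j\bar g_j\). By Lemma~\ref{lemma; transformed loss Lipschitz upper bound} with \(\lambda_j=\lambda/\sqrt{n_j}\) and the weighted balancedness \(\bSigma_j\preceq B\bSigma_\Sb\), each \(n_j\bar g_j\) is \(\sqrt B\lambda\sqrt{n_j}\)-Lipschitz. Definition~\ref{assumption; general sample task relatedness} then yields
\[
\operatorname{Lip}(\bar L)\le \sqrt B\lambda\sum_{j\in\Sb^c}\sqrt{n_j}\le \frac{\varepsilon\sqrt B\lambda N_\Sb}{\sqrt{n^\star}}.
\]

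Apply Lemma~\ref{lemma; ARMUL effect outlier} with \(\rho=N_\Sb\), this \(r\), and \(\lambda'=\operatorname{Lip}(\bar L)\). Its hypothesis \(\lambda'<\rho r\) reduces to \(\varepsilon\sqrt B<1/(2\sqrt B)\), i.e.\ \(\varepsilon B<1/2\), which follows from \(\varepsilon<1/(3B)\). The lemma gives
\[
\|\bar{\hat\beta}_G-\bar{\hat\beta}_{G_\Sb}\|_2\le \frac{\varepsilon\sqrt B\lambda}{\sqrt{n^\star}},
\]
and multiplying by \(\sqrt B\) converts this to a \(\bSigma_j\)-seminorm bound of \(\varepsilon B\lambda/\sqrt{n^\star}\). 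Adding the task-specific pooling bound of Lemma~\ref{lemma; pooling general sample} via the triangle inequality yields item 1.

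For item 2, I check the invariant-region condition of Proposition~\ref{proposition; invariant region linear} at center \(\hat\beta_G\) with \(\lambda_j=\lambda/\sqrt{n_j}\). The perturbation term satisfies \(\varepsilon B\lambda/\sqrt{n^\star}\le \tfrac13\lambda/\sqrt{n_j}\), because \(n_j\le n^\star\) and \(\varepsilon B<1/3\). The remaining terms \(\alpha B\delta_0/\sqrt{n_j}+\sqrt{Bd\zeta/N_\Sb}+c_0\sqrt{d\zeta/n_j}\) are at most a \(1/C_{\mathrm{gs}}\) fraction of \(\lambda/\sqrt{n_j}\) by hypothesis. So once \(C_{\mathrm{gs}}\) is large enough, the sum is strictly below \(\lambda/\sqrt{n_j}\), and the proposition gives \(\|\hat\theta_j-\hat\beta_G\|_{\bSigma_j}=0\), hence equality of the errors.

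The main obstacle is the constant bookkeeping. Item 1 is stated with \(\lesssim\), but the invariant-region check needs explicit constants. I will therefore carry the explicit bound from Lemma~\ref{lemma; pooling general sample}, namely \(\alpha B\delta_0/\sqrt{n^\star}+c_0\sqrt{Bd\zeta/N_\Sb}+\delta_0/\sqrt{n_j}\), and use \(\alpha B\ge1\), so that a single fixed \(C_{\mathrm{gs}}\) (for instance \(6\)) absorbs everything. This must be the same \(C_{\mathrm{gs}}\) used in Claim~\ref{claim; general sample case}.
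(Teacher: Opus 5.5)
Your proposal is correct and follows the paper's proof essentially step for step. It uses the same good event, the local curvature from the claim, the Lipschitz bound $\operatorname{Lip}(\bar L)\le \varepsilon\sqrt{B}\lambda N_\Sb/\sqrt{n^\star}$, the perturbation lemma with $\rho=N_\Sb$ and $r=\lambda/(2\sqrt{Bn^\star})$, and the final invariant-region check in which the perturbation term is absorbed via $\varepsilon B<1/3$. Your explicit constant bookkeeping (carrying the extra $\delta_0/\sqrt{n_j}$ term and using $\alpha B\ge 1$, so that $C_{\mathrm{gs}}=6$ works for the claim and the proposition simultaneously) is a slightly more careful version of the paper's ``$C_{\mathrm{gs}}$ sufficiently large'' and checks out.
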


\begin{proof}
    The proof strategy mirrors Proposition~\ref{proposition; outlier bound}, adapted for weighted sums.
    Let \(\mathcal{E}_{\mathrm{gs}}\) be the good event from Proposition~\ref{proposition; pooling range general sample}: the pooling estimator bound and all taskwise invariant-region noise bounds hold.
    The pooled event is allocated failure \(\kappa/4\), and the taskwise events are allocated failure \(\kappa/(4m)\) each, so \(\PP(\mathcal{E}_{\mathrm{gs}})\ge1-\kappa/2\ge1-\kappa\).
    Using the pooled whitening notation introduced above, let
    \[
        \bar{\hat{\beta}}_G := \bSigma_\Sb^{1/2}\hat{\beta}_G,
        \qquad
        \bar{\hat{\beta}}_{G_\Sb} := \bSigma_\Sb^{1/2}\hat{\beta}_{G_\Sb}.
    \]

    \paragraph{Step 1: Strong Convexity.}
    The aggregate inlier loss is $F_\Sb(\beta) = \sum_{j \in \Sb} \frac{1}{2} \|\Yb_j - \Xb_j \beta\|_2^2$.
    By Proposition~\ref{proposition; pooling range general sample}, we may set $\hat{\beta}_{G_\Sb}=\hat{\beta}_{F_\Sb}$ on the high-probability event.
    The Hessian of the transformed inlier loss is
    \[
        \nabla^2 \bar{F}_\Sb(\bar{\beta}) = \bSigma_\Sb^{-1/2} \left( \sum_{j \in \Sb} \Xb_j^\top \Xb_j \right) \bSigma_\Sb^{-1/2} = \bSigma_\Sb^{-1/2} (N_\Sb \bSigma_\Sb) \bSigma_\Sb^{-1/2} = N_\Sb \Ib_d.
    \]
    Thus, $\bar{F}_\Sb$ is $N_\Sb$-strongly convex. By Claim~\ref{claim; general sample case}, this strong convexity holds for $\bar{G}_\Sb$ in the relevant neighborhood.

    \paragraph{Step 2: Lipschitz Constant of Outliers.}
    We bound the Lipschitz constant of the transformed outlier term $\bar{L}(\cdot) = \sum_{j \in \Sb^c} n_j \bar{g}_j(\cdot)$.
    From Lemma~\ref{lemma; transformed loss Lipschitz upper bound}, $\bar{g}_j$ is $\sqrt{B}\lambda_j$-Lipschitz.
    Thus, $n_j \bar{g}_j$ is $n_j \sqrt{B} (\lambda/\sqrt{n_j}) = \sqrt{n_j} \sqrt{B} \lambda$-Lipschitz.
    Summing over outliers:
    \[
        \operatorname{Lip}(\bar{L}) \le \sum_{j \in \Sb^c} \sqrt{B} \lambda \sqrt{n_j} = \sqrt{B} \lambda \sum_{j \in \Sb^c} \sqrt{n_j}.
    \]
    Using the task-relatedness definition ($\sum_{\Sb^c} \sqrt{n_j} \le \varepsilon \frac{N_\Sb}{\sqrt{n^\star}}$):
    \[
        \operatorname{Lip}(\bar{L}) \le \frac{\varepsilon \sqrt{B} \lambda N_\Sb}{\sqrt{n^\star}}.
    \]

    \paragraph{Step 3: Perturbation Bound.}
    We apply Lemma~\ref{lemma; ARMUL effect outlier} with $\rho = N_\Sb$ and radius
    \[
        r = \frac{\lambda}{2\sqrt{B n^\star}},
    \]
    for which Claim~\ref{claim; general sample case} guarantees local strong convexity of $\bar{G}_\Sb$.
    The perturbation condition requires $\operatorname{Lip}(\bar{L}) < \rho r$. Using the bound above,
    \[
        \operatorname{Lip}(\bar{L})
        \le \frac{\varepsilon \sqrt{B} \lambda N_\Sb}{\sqrt{n^\star}}
        = 2\varepsilon B \, \rho r.
    \]
    Since $\varepsilon < \frac{1}{3B}$, the condition holds. The lemma yields
    \[
        \|\bar{\hat{\beta}}_G - \bar{\hat{\beta}}_{G_\Sb}\|_2 \le \frac{\operatorname{Lip}(\bar{L})}{\rho} \le \frac{\varepsilon \sqrt{B} \lambda}{\sqrt{n^\star}}.
    \]
    Converting to the $\bSigma_j$-norm,
    \[
        \|\hat{\beta}_G - \hat{\beta}_{G_\Sb}\|_{\bSigma_j} \le \sqrt{B} \|\hat{\beta}_G - \hat{\beta}_{G_\Sb}\|_{\bSigma_\Sb} \le \frac{\varepsilon B \lambda}{\sqrt{n^\star}}.
    \]

    \paragraph{Step 4: Combining Errors.}
    Combining this with the pooling estimator error (Lemma~\ref{lemma; pooling general sample}):
    \begin{align*}
        \|\hat{\beta}_G - \theta_j^\star\|_{\bSigma_j}
        &\le \|\hat{\beta}_G - \hat{\beta}_{G_\Sb}\|_{\bSigma_j} + \|\hat{\beta}_{G_\Sb} - \theta_j^\star\|_{\bSigma_j} \\
        &= \|\hat{\beta}_G - \hat{\beta}_{G_\Sb}\|_{\bSigma_j} + \|\hat{\beta}_{F_\Sb} - \theta_j^\star\|_{\bSigma_j} \\
        &\lesssim \frac{\varepsilon B \lambda}{\sqrt{n^\star}} + \alpha \frac{B\delta_0}{\sqrt{n_j}} + \sqrt{\frac{B d \zeta}{N_\Sb}}.
    \end{align*}

    \paragraph{Step 5: Invariant Region for the Full Estimator.}
    It remains to justify the zero-seminorm personalization statement for inlier tasks.
    Proposition~\ref{proposition; invariant region linear}, with taskwise regularization \(\lambda_j=\lambda/\sqrt{n_j}\), requires
    \[
        \frac{\lambda}{\sqrt{n_j}}
        >
        \|\hat{\beta}_G-\theta_j^\star\|_{\bSigma_j}
        + c_0\sqrt{\frac{d\zeta}{n_j}}.
    \]
    The deterministic part of the displayed assumption, with \(C_{\mathrm{gs}}\) sufficiently large, controls the terms
    \(\alpha B\delta_0/\sqrt{n_j}\), \(\sqrt{Bd\zeta/N_\Sb}\), and \(c_0\sqrt{d\zeta/n_j}\).
    The perturbation term is absorbed by the small-contamination condition:
    since \(n_j\le n^\star\),
    \[
        \frac{\varepsilon B\lambda}{\sqrt{n^\star}}
        \le
        \varepsilon B \frac{\lambda}{\sqrt{n_j}}
        \le
        \frac{1}{3}\frac{\lambda}{\sqrt{n_j}}.
    \]
    Thus \(\hat{\beta}_G\in R_j\) for every \(j\in\Sb\), which implies that every task-specific minimizer at center \(\hat{\beta}_G\) satisfies \(\|\hat{\theta}_j-\hat{\beta}_G\|_{\bSigma_j}=0\).
    Hence the corresponding prediction-seminorm errors coincide, so the pooled-center error bound above also gives the stated bound for \(\hat{\theta}_j\).
\end{proof}

\subsection{Proof of Theorem~\ref{theorem; general sample MSE}}

\begin{proof}
    Recall \(\lambda = q\sqrt{d \zeta}\).
    Since \(q \gtrsim 1\), we may choose the implicit constant so that \(q \ge 8c_0\).
    Let \(\mathcal{E}_{\mathrm{safe}}^{\mathrm{gs}}\) be the event obtained by applying the personalization bound taskwise with failure probability \(\kappa/(2m)\).
    Then \(\PP(\mathcal{E}_{\mathrm{safe}}^{\mathrm{gs}})\ge1-\kappa/2\), and on this event, for every task \(j\in[m]\),
    \[
        \|\hat{\theta}_j-\theta_j^\star\|_{\bSigma_j}^2
        \lesssim \frac{d\zeta}{n_j}+\frac{\lambda^2}{n_j}
        \lesssim q^2\frac{d\zeta}{n_j}.
    \]
    This proves part~(a).

    \paragraph{Case 1: Proposition~\ref{proposition; general sample outlier} Applies.}
    Because the inequality in Proposition~\ref{proposition; general sample outlier} is hardest when \(n_j=n^\star\), it suffices to assume
    \[
        \frac{\lambda}{\sqrt{n^\star}} > C_{\mathrm{gs}}\left(
        \alpha B \frac{\delta_0}{\sqrt{n^\star}} + c_0\sqrt{\frac{Bd\zeta}{N_\Sb}} + c_0\sqrt{\frac{d \zeta}{n^\star}}
        \right).
    \]
    Indeed, after multiplying both sides by \(\sqrt{n_j}\), the only \(n_j\)-dependent term is the pooling-variance contribution \(c_0\sqrt{n_jBd\zeta/N_\Sb}\), which is largest at \(n_j=n^\star\).
    Then Proposition~\ref{proposition; general sample outlier} applies.
    Its proof constructs a transfer event \(\mathcal{E}_{\mathrm{transfer}}^{\mathrm{gs}}\) with \(\PP(\mathcal{E}_{\mathrm{transfer}}^{\mathrm{gs}})\ge1-\kappa/2\) under the present good-event budget.
    On \(\mathcal{E}_{\mathrm{safe}}^{\mathrm{gs}}\cap\mathcal{E}_{\mathrm{transfer}}^{\mathrm{gs}}\), which has probability at least \(1-\kappa\), for every \(j\in \Sb\),
    \[
        \|\hat{\theta}_j - \theta_j^\star\|_{\bSigma_j}
        \lesssim \alpha\frac{B\delta_0}{\sqrt{n_j}} + \sqrt{\frac{Bd \zeta}{N_\Sb}} + \frac{\varepsilon B \lambda}{\sqrt{n^\star}}.
    \]
    Squaring yields
    \[
        \|\hat{\theta}_j - \theta_j^\star\|_{\bSigma_j}^2
        \lesssim
        \alpha^2B^2\frac{\delta_0^2}{n_j}
        +\frac{Bd\zeta}{N_\Sb}
        +\frac{\varepsilon^2B^2\lambda^2}{n^\star}.
    \]
    Moreover, the displayed condition implies \(\alpha B\delta_0 \lesssim \lambda=q\sqrt{d\zeta}\), so the heterogeneity term may be replaced, up to constants, by
    \[
        \frac{1}{n_j}\min\left(\alpha^2B^2\delta_0^2,\ q^2d\zeta\right).
    \]
    Since \(n_j\le n^\star\), the pooling term satisfies
    \[
        \frac{Bd\zeta}{N_\Sb}
        \le
        \frac{n^\star}{n_j}\frac{Bd\zeta}{N_\Sb}.
    \]
    Therefore, on \(\mathcal{E}_{\mathrm{safe}}^{\mathrm{gs}}\cap\mathcal{E}_{\mathrm{transfer}}^{\mathrm{gs}}\),
    \[
        \|\hat{\theta}_j - \theta_j^\star\|_{\bSigma_j}^2
        \lesssim
        \frac{n^\star}{n_j}\frac{Bd\zeta}{N_\Sb}
        +
        \frac{1}{n_j}\min\left(\alpha^2B^2\delta_0^2,\ q^2d\zeta\right)
        +
        \frac{\varepsilon^2B^2\lambda^2}{n^\star}.
    \]

    \paragraph{Case 2: Proposition~\ref{proposition; general sample outlier} Does Not Apply.}
    If the displayed condition fails, then
    \[
        \frac{\lambda}{\sqrt{n^\star}}
        \le C_{\mathrm{gs}}\left(
        \alpha B \frac{\delta_0}{\sqrt{n^\star}} + c_0\sqrt{\frac{Bd\zeta}{N_\Sb}} + c_0\sqrt{\frac{d \zeta}{n^\star}}
        \right).
    \]
    Since \(q \gtrsim 1\) is chosen large enough relative to \(C_{\mathrm{gs}}c_0\), we absorb the last term into the left-hand side and obtain
    \[
        \frac{\lambda}{2\sqrt{n^\star}}
        \lesssim \alpha B \frac{\delta_0}{\sqrt{n^\star}} + \sqrt{\frac{Bd\zeta}{N_\Sb}}.
    \]
    Therefore at least one of the following alternatives must hold:
    \[
        \frac{\lambda}{\sqrt{n^\star}} \lesssim \alpha B \frac{\delta_0}{\sqrt{n^\star}}
        \qquad\text{or}\qquad
        \frac{\lambda}{\sqrt{n^\star}} \lesssim \sqrt{\frac{Bd\zeta}{N_\Sb}}.
    \]
    The first alternative implies, for every \(j\in\Sb\),
    \[
        q^2\frac{d\zeta}{n_j}
        =
        \frac{\lambda^2}{n_j}
        \lesssim
        \alpha^2B^2\frac{\delta_0^2}{n_j}.
    \]
    The second alternative implies
    \[
        q^2 \frac{d \zeta}{n^\star} \lesssim \frac{B d \zeta}{N_\Sb}
        \qquad\Longrightarrow\qquad
        q^2\frac{d\zeta}{n_j}
        =
        \frac{n^\star}{n_j}q^2\frac{d\zeta}{n^\star}
        \lesssim
        \frac{n^\star}{n_j}\frac{Bd\zeta}{N_\Sb}.
    \]
    Hence, in either subcase, the safety rate obeys
    \[
        q^2\frac{d\zeta}{n_j}
        \lesssim
        \frac{n^\star}{n_j}\frac{Bd\zeta}{N_\Sb}
        +
        \frac{1}{n_j}\min\left(\alpha^2B^2\delta_0^2,\ q^2d\zeta\right).
    \]

    On the already-defined event \(\mathcal{E}_{\mathrm{safe}}^{\mathrm{gs}}\), the personalization bound with \(\lambda_j=\lambda/\sqrt{n_j}\) gives
    \[
        \|\hat{\theta}_j - \theta_j^\star\|_{\bSigma_j}
        \lesssim \lambda_j + \sqrt{\frac{d\zeta}{n_j}}
        \lesssim q\sqrt{\frac{d\zeta}{n_j}},
    \]
    so
    \[
        \|\hat{\theta}_j - \theta_j^\star\|_{\bSigma_j}^2
        \lesssim
        q^2\frac{d\zeta}{n_j}.
    \]
    Combining the last two displays yields
    \[
        \|\hat{\theta}_j - \theta_j^\star\|_{\bSigma_j}^2
        \lesssim
        \frac{n^\star}{n_j}\frac{Bd\zeta}{N_\Sb}
        +
        \frac{1}{n_j}\min\left(\alpha^2B^2\delta_0^2,\ q^2d\zeta\right).
    \]
    Adding the nonnegative outlier-robustness term
    \(\varepsilon^2B^2\lambda^2/n^\star\) gives the claimed bound in part~(b).
\end{proof}

\section{Details for Sample Complexity of Second Moment Concentration}\label{section; detail sample complexity}

We discuss in more detail the sample complexity of covariance comparability, initially covered in Section~\ref{section; data assumption}.
We generically investigate the sample complexity required for the empirical second moment and the expected second moment to become comparable for a distribution \(x \sim \cP\).

Let \(x_1, \dots, x_n\) be \(n\) i.i.d.\ samples drawn from \(\cP\). We define the empirical second moment matrix \(\bSigma\) and the population second moment matrix \(\bar{\bSigma}\) as:
\begin{align*}
\bSigma := \frac{1}{n}\sum_{i=1}^n x_ix_i^\top, \quad \text{and} \quad \bar{\bSigma}:= \EE_{x \sim \cP}[xx^\top].
\end{align*}
Our goal is to find a constant \(\nu > 0\) such that, with high probability over the \(n\) samples, the following spectral ordering holds:
\begin{align*}
\nu^{-1} \bSigma \preceq \bar \bSigma \preceq \nu \bSigma.
\end{align*}
Furthermore, we aim to determine the sample complexity \(n\) required to achieve \(\nu = \tilde\cO(1)\).

In Section~\ref{subsection; assumption example expected second moment}, we demonstrated that for many standard distributions (which we categorize as \textbf{Type 1}), the sample complexity is $n = \tilde{\Omega}(d)$.
We briefly summarize these before discussing the more general \textbf{Type 2} distributions.

\paragraph{Type 1 Distributions.}
For this class, if \(n = \tilde\Omega(d)\), comparability holds with \(\nu = \cO(1)\). This class is characterized by properties discussed in \citep{oliveira2016lower}:
\begin{enumerate}[]
\item Every coordinate of \(\bar{\bSigma}^{-\frac{1}{2}}x\) has 9th moments bounded by \(\cO(1)\).
\item A notable subset includes strongly sub-Gaussian vectors, where \(v^\top x\) is sub-Gaussian with proxy variance \(c v^\top \bar{\bSigma} v\) for some \(c = \cO(1)\).
\end{enumerate}
This encompasses Gaussian, uniform distributions, and cases with independently sub-Gaussian coordinates.

\paragraph{Type 2 Distributions.}
We now present the sample complexity for a more general sub-exponential class.
We proceed under the assumption that the norm \(\|x\|_2\) concentrates.

\begin{definition}[High-probability bound of \(\|x\|_2\)]\label{definition; hp norm bound}
We assume there exists a parameter \(U>0\) such that for some absolute constant \(c>0\):
\begin{align*}
\PP\left[ \frac{\|x\|_2}{U} > t \right] \le \exp(-ct).
\end{align*}
\end{definition}
Note that if \(\|x\|_{\psi_1} \le C_{\psi_1}\), we typically have \(U = \tilde{\cO}(\sqrt{d} C_{\psi_1})\). However, \(U\) can be significantly smaller than \(d\), depending on the \textit{intrinsic dimension} (often related to $\tr(\bar{\bSigma})$).

\begin{definition}[Minimum non-zero eigenvalue]
Let \(\gamma\) be the minimum non-zero eigenvalue of the population covariance \(\bar{\bSigma}\).
\end{definition}

Our main result for Type 2 distributions states that we achieve comparability with a constant factor $\nu = 6$ provided $n = \tilde{\Omega}(U^2/\gamma)$.

\begin{lemma}[Sample complexity of comparability]\label{lemma; sample complexity}
Assume the high-probability bound on $\|x\|_2$ holds. If
\begin{align*}
n \gtrsim \frac{U^2}{\gamma}\log^3\!\left(\frac{nU}{\gamma \kappa}\right),
\end{align*}
then with probability at least \(1-\kappa\),
\begin{align*}
\frac{1}{6} \bar{\bSigma}\preceq  {\bSigma} \preceq 6 \bar{\bSigma}.
\end{align*}
\end{lemma}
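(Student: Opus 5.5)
We reduce the claim to a matrix concentration bound in whitened coordinates. Restrict to $\mathrm{Range}(\bar\bSigma)$, which is harmless: any $v\in\mathrm{Null}(\bar\bSigma)$ satisfies $\EE(v^\top x)^2=0$, so $v^\top x_i=0$ almost surely and $\bSigma$ annihilates $v$ as well. On this range $\bar\bSigma\succeq\gamma\Pb$, and we set $z_i:=\bar\bSigma^{-1/2}x_i$ (inverse on the range). Then $\EE z_iz_i^\top=\Pb$, and the two-sided ordering $\tfrac16\bar\bSigma\preceq\bSigma\preceq6\bar\bSigma$ follows from
\[
\Bigl\|\tfrac1n\sum_{i=1}^n z_iz_i^\top-\Pb\Bigr\|_{\op}\le\tfrac12,
\]
which in fact yields the ordering with constants $\tfrac12$ and $\tfrac32$. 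The proof therefore consists of establishing this operator-norm bound.

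\textbf{Truncation.} Note $\|z_i\|_2\le\gamma^{-1/2}\|x_i\|_2$. By Definition~\ref{definition; hp norm bound} and a union bound over $i\in[n]$, with probability at least $1-\kappa/2$,
\[
\max_i\|x_i\|_2\le T:=C\,U\log(n/\kappa),
\qquad\text{hence}\qquad
\|z_i\|_2^2\le R:=T^2/\gamma .
\]
Define the truncated vectors $z_i':=z_i\mathbf 1\{\|x_i\|_2\le T\}$. On this event $z_i'=z_i$ for every $i$.

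\textbf{Matrix Bernstein.} Apply matrix Bernstein to the i.i.d. centered matrices $z_i'z_i'^\top-\EE z'z'^\top$. These have operator norm at most $2R$, and variance proxy
\[
\Bigl\|\EE\|z'\|^2z'z'^\top\Bigr\|_{\op}\le R\,\bigl\|\EE z'z'^\top\bigr\|_{\op}\le R.
\]
Hence, with probability at least $1-\kappa/2$,
\[
\Bigl\|\tfrac1n\sum_i z_i'z_i'^\top-\EE z'z'^\top\Bigr\|_{\op}\lesssim\sqrt{\tfrac{R\log(d/\kappa)}{n}}+\tfrac{R\log(d/\kappa)}{n}.
\]
This is at most $\tfrac14$ once $n\gtrsim R\log(d/\kappa)=U^2\gamma^{-1}\log^2(n/\kappa)\log(d/\kappa)$.

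\textbf{Truncation bias.} It remains to bound the bias $\|\Pb-\EE z'z'^\top\|_{\op}=\sup_{\|v\|=1}\EE\bigl[(v^\top z)^2\mathbf 1\{\|x\|>T\}\bigr]$. This step needs the most care, and it is where the third logarithm arises. Use $(v^\top z)^2\le\|x\|^2/\gamma$ and integrate the exponential tail:
\[
\EE\bigl[\|x\|^2\mathbf 1\{\|x\|>T\}\bigr]\lesssim (T^2+U^2)\,e^{-cT/U}.
\]
Choosing $T=CU\log(nU/(\gamma\kappa))$ makes this $(T^2+U^2)e^{-cT/U}/\gamma\le\tfrac14$. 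With this choice of $T$, the requirement $n\gtrsim R\cdot\log$ becomes $n\gtrsim \frac{U^2}{\gamma}\log^3\!\bigl(\frac{nU}{\gamma\kappa}\bigr)$, where $\log d$ is absorbed since $d\lesssim U^2/\gamma$ (because $\tr\bar\bSigma\le \EE\|x\|^2\lesssim U^2$ and $\tr\bar\bSigma\ge\gamma\,\mathrm{rank}$).

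\textbf{Conclusion.} By the triangle inequality, the deviation is at most $\tfrac14$ (Bernstein) plus $\tfrac14$ (bias), i.e. $\tfrac12$, on an event of probability at least $1-\kappa$. This gives the stated ordering, with room to spare relative to the constant $6$.
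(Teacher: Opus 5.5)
Your proof is correct and follows the same route as the paper: truncate at radius of order $U\log(nU/(\gamma\kappa))$ and show by a union bound that all samples lie inside it, apply matrix Bernstein to the whitened truncated vectors using $\|z\|_2^2\le K^2/\gamma$, and bound the truncation bias $\EE[\|x\|_2^2\mathbf 1\{\|x\|_2>K\}]\le\gamma/2$ by integrating the exponential tail. The differences are minor: you truncate by multiplying by an indicator and whiten by $\bar\bSigma$, whereas the paper conditions on the truncation event and whitens by $\check{\bSigma}$. Your version lets the bias enter additively and yields the sharper constants $\frac{1}{2}$ and $\frac{3}{2}$ instead of $\frac{1}{6}$ and $3$.
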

This indicates that the minimum eigenvalue \(\gamma\) affects the sample complexity requirement but does not degrade the constant factor in the spectral bound once the sample size is sufficient.

\paragraph{Proof of Lemma~\ref{lemma; sample complexity}}
Without loss of generality, we assume \(\bar{\bSigma}\) is full-rank. If not, the analysis restricts to the subspace spanned by the support of $\cP$.
We define a truncation radius
\begin{align*}
K := A U \log \left(\frac{nU}{\gamma \kappa}\right)
\end{align*}
for a sufficiently large constant \(A\).
We define the truncated distribution \(\check{\cP}\) as the distribution of \(x\) conditioned on the event \(\cE_{\text{trunc}} := \{ \|x\|_2 \le K \}\).
Let \(\check{\bSigma} := \EE_{x \sim \check{\cP}}[xx^\top]\).

Consider the event \(\cE_{\text{samples}} := \{ \forall i \in [n], \|x_i\|_2 \le K \}\).
By the union bound and the definition of $U$:
\begin{align*}
\PP(\cE_{\text{samples}}^c) \le \sum_{i=1}^n \PP(\|x_i\|_2 > K) \le n \exp\left( -c \frac{K}{U} \right) \le \kappa,
\end{align*}
provided $A$ is chosen large enough.
Conditioned on \(\cE_{\text{samples}}\), the samples \(x_1, \dots, x_n\) can be treated as i.i.d.\ draws from the truncated distribution \(\check{\cP}\).

We apply the Matrix Bernstein inequality to the whitened truncated variables. Let \(z_i := \check{\bSigma}^{-1/2} x_i\). Under truncation, we verify the operator norm bound:
\begin{align*}
\| \check{\bSigma}^{-1/2} x_i x_i^\top \check{\bSigma}^{-1/2} \|_2 = \| \check{\bSigma}^{-1/2} x_i \|_2^2 \le \frac{1}{\lambda_{\min}(\check{\bSigma})} \|x_i\|_2^2 \le \frac{2}{\gamma} K^2,
\end{align*}
where we used the fact that \(\check{\bSigma} \succeq \frac{1}{2}\bar{\bSigma} \succeq \frac{\gamma}{2}\Ib\) (proven in Lemma~\ref{lemma; tail_bound} below).
The Matrix Bernstein inequality implies that with high probability:
\begin{align*}
\|{\check \bSigma}^{-\frac{1}{2}} (\bSigma - \check \bSigma)  { \check \bSigma}^{-\frac{1}{2}} \| \le C \sqrt{\frac{K^2 \log n}{n\gamma}} + C \frac{K^2 \log n}{n\gamma}.
\end{align*}
Given the sample complexity condition \(n \gtrsim \frac{K^2}{\gamma} \log n\), we can ensure the RHS is bounded by \(1/2\).
This implies:
\begin{align}\label{equation; truncated moment comparability}
\frac{1}{2} \check{\bSigma} \preceq \bSigma \preceq \frac{3}{2} \check{\bSigma}.
\end{align}
Using Lemma~\ref{lemma; tail_bound}, we know that \(\bar{\bSigma}\) and \(\check{\bSigma}\) are close:
\begin{align*}
\bar{\bSigma} - \frac{\gamma}{2}\Ib \preceq \check{\bSigma} \preceq 2 \bar{\bSigma}.
\end{align*}
Specifically, since \(\bar{\bSigma} \succeq \gamma \Ib\), the additive error implies multiplicative comparability (e.g., \(\frac{1}{2}\bar{\bSigma} \preceq \check{\bSigma}\)).
Combining this with \cref{equation; truncated moment comparability}, we obtain the desired result:
\begin{align*}
\frac{1}{6} \bar{\bSigma} \preceq \frac{1}{2} \check{\bSigma} \preceq \bSigma \preceq \frac{3}{2} \check{\bSigma} \preceq 3 \bar{\bSigma}.
\end{align*}
\hfill \BlackBox

We now present a lemma useful for establishing MSE bounds (Theorem~\ref{theorem; population MSE}).

\begin{lemma}\label{lemma; U/n}
Let \(K\) be defined as in the proof of Lemma~\ref{lemma; sample complexity}. With probability at least \(1-2\kappa\),
\begin{align*}
\bar{\bSigma} \preceq c\left(\bSigma + \frac{K^2}{n}\Ib\right)
\end{align*}
for some absolute constant \(c>1\).
\end{lemma}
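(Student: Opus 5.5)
We would reduce the claim to a comparison between the truncated-population covariance \(\check{\bSigma}\) and the empirical \(\bSigma\), following the truncation device from the proof of Lemma~\ref{lemma; sample complexity}, but now without assuming the sample-size condition \(n\gtrsim K^2/\gamma\). The additive term \(\frac{K^2}{n}\Ib\) plays the role that the minimum eigenvalue \(\gamma\) played there: it regularizes the whitening so that Matrix Bernstein applies regardless of how small the spectrum of \(\bar{\bSigma}\) is.

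\textbf{Step 1 (truncation).} With \(K=AU\log(nU/(\gamma\kappa))\), the event \(\cE_{\text{samples}}=\{\|x_i\|_2\le K\ \forall i\}\) has probability at least \(1-\kappa\), exactly as before. On this event the samples are i.i.d.\ from \(\check{\cP}\). We also need \(\bar{\bSigma}\preceq c'(\check{\bSigma}+\frac{K^2}{n}\Ib)\). We would get this from a tail computation in the spirit of Lemma~\ref{lemma; tail_bound}: \(\bar{\bSigma}-\PP(\cE_{\text{trunc}})\check{\bSigma}=\EE[xx^\top\mathbf 1\{\|x\|>K\}]\), and its operator norm is at most \(\EE[\|x\|^2\mathbf 1\{\|x\|>K\}]\lesssim K^2e^{-cK/U}\le K^2/n\) for \(A\) large.

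\textbf{Step 2 (regularized Matrix Bernstein).} Set \(\Mb:=\check{\bSigma}+\frac{K^2}{n}\Ib\) and \(z_i:=\Mb^{-1/2}x_i\). Then \(\|z_i\|_2^2\le K^2/(K^2/n)=n\). The summands \(\frac1n(z_iz_i^\top-\EE z z^\top)\) therefore have operator norm at most \(2\), and their variance proxy is bounded by \(\frac1n\|\EE\|z\|^2zz^\top\|\le\|\Mb^{-1/2}\check{\bSigma}\Mb^{-1/2}\|\le1\). Matrix Bernstein with a \(\log(d/\kappa)\) factor then gives
\[
\|\Mb^{-1/2}(\bSigma-\check{\bSigma})\Mb^{-1/2}\|\lesssim \sqrt{\log(d/\kappa)}+\log(d/\kappa),
\]
which is only \(O(\log)\) rather than \(\le1/2\). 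To remove the logarithm we would instead regularize with \(\frac{CK^2\log(d/\kappa)}{n}\Ib\). That changes \(K^2\) only by log factors, which can be absorbed into the definition of \(K\), since \(K\) already carries a logarithm and \(A\) is free. With this choice the deviation is at most \(1/2\), so \(\check{\bSigma}\preceq 2\bSigma+\frac{K^2}{n}\Ib\) up to constants. Combining with Step 1 gives \(\bar{\bSigma}\preceq c(\bSigma+\frac{K^2}{n}\Ib)\) on an event of probability \(1-2\kappa\), with one \(\kappa\) for truncation and one for Bernstein.

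\textbf{Main obstacle.} The delicate point is the logarithmic bookkeeping. The Bernstein dimension factor \(\log d\) (or the intrinsic-dimension version) has to be absorbed into \(K\) consistently with how \(K\) is used in Theorem~\ref{theorem; population MSE}, where only \(\tilde\cO(U^2/n)\) matters. A second issue is the tail estimate in Step 1 when \(\gamma\) is tiny. We would avoid invoking \(\gamma\) at all there, because the bound \(\|\EE[xx^\top\mathbf 1\{\|x\|>K\}]\|\le\EE[\|x\|^2\mathbf 1\{\|x\|>K\}]\) is purely in terms of \(U\); \(\gamma\) enters only through the harmless logarithm in \(K\).
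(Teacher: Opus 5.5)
Your argument follows the paper's proof. You restrict to \(\cE_{\text{samples}}\), apply a one-sided matrix Bernstein bound to the conditionally i.i.d.\ truncated sample to compare \(\check{\bSigma}\) with a ridge-regularized \(\bSigma\), and then pass from \(\check{\bSigma}\) to \(\bar{\bSigma}\) by a tail estimate. The only difference is in that last step. The paper uses Lemma~\ref{lemma; tail_bound} in the multiplicative form \(\bar{\bSigma}\preceq 2\check{\bSigma}\), which rests on \(\check{\bSigma}\succeq\frac{\gamma}{2}\Ib\). Your additive bound \(\|\bar{\bSigma}-\PP(\cE_{\text{trunc}})\check{\bSigma}\|_{\operatorname{op}}\lesssim K^2e^{-cK/U}\lesssim K^2/n\) never uses \(\gamma\), and either works. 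Your Bernstein bookkeeping is correct: \(\|z_i\|_2^2\le n\), variance proxy \(\preceq\Ib\), deviation of order \(\log(d/\kappa)\). It is also more careful than the paper, which simply asserts \(\check{\bSigma}\preceq c'(\bSigma+\frac{K^2}{n}\Ib)\).

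The step that does not hold is the absorption of that logarithm. Raising the ridge to \(\lambda'=CK^2\log(d/\kappa)/n\) does give \(\check{\bSigma}\preceq 2\bSigma+\lambda'\Ib\). But enlarging \(A\) cannot absorb \(\log(d/\kappa)\), because \(A\) is an absolute constant while \(\log(d/\kappa)\) grows. So \(K^2\log(d/\kappa)\) is not of the form \(A'^2U^2\log^2(nU/(\gamma\kappa))\) for any absolute \(A'\). What you actually prove is
\[
\bar{\bSigma}\preceq c\Bigl(\bSigma+\frac{K^2\log(d/\kappa)}{n}\Ib\Bigr).
\]
This is the lemma with \(K\) inflated by \(\sqrt{\log(d/\kappa)}\), or by \(\sqrt{\log(n/\kappa)}\) if you apply part~1 of Lemma~\ref{lemma; trace class concentration bounded} directly to the truncated sample with \(M=K\). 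You should state it that way.

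This is not a weakness of your route, because the factor is genuinely needed. Consider the following example.
\begin{itemize}
\item With probability \(1-p\), let \(x=\pm Re_0\) with \(R\sim\mathrm{Exp}(1)\).
\item With probability \(p=\frac{\log d}{2d}\), let \(x=\pm K_0e_k\) with \(k\) uniform in \([d]\), where \(K_0\asymp\log d\) is chosen so that \(U\asymp 1\).
\item Take \(n=d^2\) and \(\kappa\) constant.
\end{itemize}
Then \(\gamma=pK_0^2/d\) and \(K\asymp A\log d\). By a coupon-collector argument, some \(e_k\) receives no sample with high probability. In that direction the lemma would require \(\gamma\le cK^2/n\), i.e.\ \(\log^3 d\lesssim cA^2\log^2 d\), which fails for large \(d\).

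So the paper's one-line Bernstein step hides the same factor. The corrected version is all that Theorem~\ref{theorem; population MSE} needs, since its second term is only claimed up to \(\tilde{\cO}(\cdot)\).
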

\begin{proof}
We again consider the high-probability event \(\cE_{\text{samples}}\) where all samples satisfy \(\|x_i\|_2 \le K\).
Conditioned on this event, we essentially sample from \(\check{\cP}\).
By applying a one-sided Matrix Bernstein inequality (or similar concentration bounds for bounded random vectors as in Lemma~\ref{lemma; sample complexity}), we obtain:
\begin{align*}
\check{\bSigma } \preceq c'\left(\bSigma + \frac{K^2}{n}\Ib\right)
\end{align*}
for some constant \(c' > 1\).
From Lemma~\ref{lemma; tail_bound}, we have \(\bar{\bSigma} \preceq 2\check{\bSigma}\). Combining these yields the result.
\end{proof}

Finally, we provide the rigorous "peeling" argument that bounds the difference between the full and truncated population covariances.

\begin{lemma}[Truncation tail bound]\label{lemma; tail_bound}
Let \(\check{\bSigma} = \EE[xx^\top \mid \|x\|_2 \le K]\), where \(K\) is defined as in the proof of Lemma~\ref{lemma; sample complexity} (i.e., \(K = A U \log\!\left(\frac{nU}{\gamma \kappa}\right)\)). For a sufficiently large constant \(A\),
\begin{align*}
\| \bar{\bSigma} - \PP(\|x\|_2 \le K) \check{\bSigma} \|_2 \le \frac{\gamma}{2}.
\end{align*}
Consequently, \(\check{\bSigma} \succeq \frac{1}{2} \bar{\bSigma} \succeq \frac{\gamma}{2} \Ib\).
\end{lemma}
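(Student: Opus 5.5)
We plan to prove Lemma~\ref{lemma; tail_bound} by writing the truncated second moment as the full second moment minus a tail piece, and bounding the tail piece with a layer-cake (peeling) argument. Set $p_K := \PP(\|x\|_2 \le K)$. Then
\[
\bar{\bSigma} - p_K\check{\bSigma} = \EE\bigl[xx^\top \mathbf 1\{\|x\|_2 > K\}\bigr],
\]
which is positive semidefinite. Its operator norm is therefore at most $\EE[\|x\|_2^2 \mathbf 1\{\|x\|_2>K\}]$, so the main task is to show that this scalar tail moment is at most $\gamma/2$.

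For the peeling step we partition the tail into shells $\{2^{k}K < \|x\|_2 \le 2^{k+1}K\}$ for $k\ge 0$. On the $k$th shell, $\|x\|_2^2 \le 4^{k+1}K^2$, and Definition~\ref{definition; hp norm bound} gives probability at most $\exp(-c2^kK/U)$. Summing over shells,
\[
\EE\bigl[\|x\|_2^2\mathbf 1\{\|x\|_2>K\}\bigr] \le \sum_{k\ge0} 4^{k+1}K^2 e^{-c2^kK/U}.
\]
Equivalently, integrating $\int_{K^2}^\infty \PP(\|x\|_2^2>s)\,ds$ gives a bound of order $(K^2 + KU + U^2)e^{-cK/U} \lesssim K^2 e^{-cK/U}$. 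Because $K/U \ge A\log(\cdot)\ge 1$, the ratios between consecutive shells are geometrically decaying, and the whole sum is dominated by its first term, yielding $\lesssim K^2 e^{-cK/U}$.

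Next we substitute $K = AU\log(nU/(\gamma\kappa))$. This gives $e^{-cK/U} = (\gamma\kappa/(nU))^{cA}$, so the tail moment is at most
\[
C A^2 U^2 \log^2\!\Bigl(\tfrac{nU}{\gamma\kappa}\Bigr)\Bigl(\tfrac{\gamma\kappa}{nU}\Bigr)^{cA}.
\]
For $A$ large, say $cA\ge 3$, one factor $(\gamma/U^2)$ times polylogarithmic slack already absorbs the $U^2\log^2$ prefactor, leaving a quantity at most $\gamma/2$.

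This is the main obstacle: making the absorption rigorous needs $nU/(\gamma\kappa)$ bounded below by a constant. We would argue this from the sample-size regime, where $n\gtrsim U^2/\gamma$ gives $nU/\gamma \ge U^3/\gamma^2$, together with the relation $\gamma \le \tr\bar{\bSigma}\lesssim U^2$. The latter follows from the tail bound itself, since $\EE\|x\|_2^2 \lesssim U^2$. We then bound $\log^2(t)\, t^{-cA+2}$ by a constant for $t\ge e$, and absorb the remaining factors into the choice of $A$.

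Finally, for the consequence we use $p_K\le1$ and note $\check{\bSigma}\succeq p_K\check{\bSigma} \succeq \bar{\bSigma}-\tfrac{\gamma}{2}\Ib$. On the range of $\bar{\bSigma}$ (reducing to the full-rank case as in the proof of Lemma~\ref{lemma; sample complexity}) we have $\bar{\bSigma}\succeq\gamma\Ib$, hence $\tfrac{\gamma}{2}\Ib\preceq \tfrac12\bar{\bSigma}$. This gives $\check{\bSigma}\succeq\tfrac12\bar{\bSigma}\succeq\tfrac{\gamma}{2}\Ib$. The upper relation $\check{\bSigma}\preceq 2\bar{\bSigma}$, used elsewhere in the paper, follows from $\check{\bSigma}\preceq \bar{\bSigma}/p_K$ together with $p_K\ge 1/2$, which holds by the same tail bound since $e^{-cK/U}\le 1/2$.
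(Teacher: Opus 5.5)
Your route is the paper's: decompose $\bar{\bSigma}=p_K\check{\bSigma}+\EE[xx^\top\mathbf 1\{\|x\|_2>K\}]$, bound the PSD remainder by the scalar tail moment, peel, substitute $K$, and read off the consequence. Using dyadic shells instead of the paper's width-$U$ shells is cosmetic. Deriving $\check{\bSigma}\preceq 2\bar{\bSigma}$ from $p_K\ge 1/2$ is a useful addition, since the paper uses that inequality without proof.

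The gap is the step you flag yourself. The paper only asserts it (``the pre-factor suppresses the polynomial terms''), and your argument does not close it. Write $t:=nU/(\gamma\kappa)$. From $n\gtrsim U^2/\gamma$ and $\gamma\lesssim U^2$ you get only $t\gtrsim U^3/\gamma^2\gtrsim 1/U$. That is not bounded below when $U$ is large, which is the typical case $U\approx\sqrt d$. Likewise, $t^{-1}\lesssim(\gamma/U^2)(\gamma\kappa/U)$ absorbs the $U^2/\gamma$ prefactor only if $\gamma\lesssim U$.

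In fact no argument can close this from the stated hypotheses, because $t$ is not dimensionless. Replace $x$ by $sx$. The tail hypothesis is preserved with $U\mapsto sU$ and $\gamma\mapsto s^2\gamma$, but $t\mapsto t/s$. For any fixed $A$ and $s>t$ we get $K<0$. Then $p_K\check{\bSigma}=\EE[xx^\top\mathbf 1\{\|x\|_2\le K\}]=0$, so the left-hand side equals $\|\bar{\bSigma}\|_2\ge\gamma>\gamma/2$. This happens even under the sample-size condition of Lemma~\ref{lemma; sample complexity}, whose logarithm is then negative.

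What is missing is a scale normalization, e.g.\ $\lambda_{\max}(\bar{\bSigma})\le 1$, the population analogue of the paper's $\|\bSigma_j\|_{\operatorname{op}}\le 1$. Under it:
\begin{itemize}
\item $\gamma\le 1$, and integrating the tail gives $\gamma\le\EE\|x\|_2^2\le 2U^2/c^2$.
\item Hence $U/\gamma\ge\max(U,c^2/(2U))\ge c/\sqrt2$, so $t\ge nc/\sqrt2\ge e$ once $n$ exceeds a constant.
\item Also $U^2/\gamma\le U^2/\gamma^2\le t^2$.
\end{itemize}
Your dyadic sum is at most twice its first term when $cA\ge\log 8$. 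It then gives
\[
\EE\bigl[\|x\|_2^2\mathbf 1\{\|x\|_2>K\}\bigr]\le 8A^2U^2\log^2(t)\,t^{-cA}\le 8\gamma A^2\log^2(t)\,t^{2-cA}\le 8\gamma A^2e^{2-cA}\le\frac{\gamma}{2}
\]
for $t\ge e$ and $A$ large. The third inequality uses that $\log^2(t)\,t^{2-cA}$ is decreasing on $[e,\infty)$ once $cA>4$.

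Alternatively, put the scale-free quantity $nU^2/(\gamma\kappa)$ inside the logarithm instead of $nU/(\gamma\kappa)$. It dominates $U^2/\gamma$ directly and is at least $nc^2/2$ without any normalization, so your computation goes through verbatim.
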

\begin{proof}
Let \(E\) be the indicator event \(\{\|x\|_2 \le K\}\). We can decompose the expectation:
\begin{align*}
\bar{\bSigma} = \EE[xx^\top] = \PP(E)\EE[xx^\top \mid E] + \EE[xx^\top \mathbb{I}(E^c)] = \PP(E)\check{\bSigma} + \Delta.
\end{align*}
We bound the residual matrix \(\Delta = \EE[xx^\top \mathbb{I}(\|x\|_2 > K)]\) using a peeling argument.
Define shells \(S_j := \{ x : K + (j-1)U < \|x\|_2 \le K + jU \}\) for \(j \ge 1\).
\begin{align*}
\Delta &\preceq \EE[ \|x\|_2^2 \mathbb{I}(\|x\|_2 > K) ] \Ib \\
&= \sum_{j=1}^\infty \EE[ \|x\|_2^2 \mathbb{I}(x \in S_j) ] \Ib \\
&\preceq \sum_{j=1}^\infty (K + jU)^2 \PP(\|x\|_2 > K + (j-1)U) \Ib.
\end{align*}
Using the tail assumption \(\PP(\|x\|_2 > tU) \le e^{-ct}\), we have:
\begin{align*}
\PP(\|x\|_2 > K + (j-1)U) \le \exp\left(-c \frac{K + (j-1)U}{U}\right) = e^{-cK/U} e^{-c(j-1)}.
\end{align*}
Substituting this back:
\begin{align*}
\|\Delta\|_2 &\le e^{-cK/U} \sum_{j=1}^\infty (K+jU)^2 e^{-c(j-1)}.
\end{align*}
Since \(K \gg U\), the term dominates for small \(j\).
Recall \(K = A U \log(nU/\gamma)\), so \(e^{-cK/U} = (\frac{\gamma}{nU})^{cA}\).
For sufficiently large \(A\), the pre-factor \((\frac{\gamma}{nU})^{cA}\) suppresses the polynomial terms in the sum, ensuring:
\begin{align*}
\|\Delta\|_2 \le \frac{\gamma}{2}.
\end{align*}
Thus, \(\PP(E)\check{\bSigma} \succeq \bar{\bSigma} - \frac{\gamma}{2}\Ib \succeq \frac{\gamma}{2}\Ib\).
Since \(\PP(E) \le 1\), this implies \(\check{\bSigma} \succeq \frac{\gamma}{2}\Ib\).
\end{proof}

\section{Technical Lemmas}

\begin{lemma}\label{lemma; ARMUL infimal convolution lipschitz}
If $f: \mathbb{R}^d \rightarrow \mathbb{R}$ is convex, $\inf _{\boldsymbol{x} \in \mathbb{R}^d} f(\boldsymbol{x})>-\infty$, and $R: \mathbb{R}^d \rightarrow \mathbb{R}$ is convex and $L$-Lipschitz with respect to a norm $\|\cdot\|$ for some $L \geq 0$, then
\[
g(\theta):= \inf_{x \in \Cb} \bigl\{ f(x) + R(\theta-x) \bigr\}
\]
is convex and $L$-Lipschitz with respect to $\|\cdot\|$.
\end{lemma}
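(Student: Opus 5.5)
The plan is to argue directly from the definition of $g$ as an infimum, using near-minimizers, in the same way as the proofs of Lemma~\ref{lemma; transformed loss Lipschitz upper bound} and Lemma~\ref{lemma; outlier lipschitz GLM}. I will take $\Cb$ to be nonempty and convex, with $f$ convex on $\Cb$, as in the paper's two instances $\Cb=\RR^d$ and $\Cb=\mathbf B(0,\xi)$.

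\textbf{Step 1: finiteness.} I first check that $g$ is real-valued. For any fixed $x_0\in\Cb$ we have $g(\theta)\le f(x_0)+R(\theta-x_0)<\infty$.

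For the lower bound, $f\ge \inf f>-\infty$. In every application in the paper, $R=\lambda\|\cdot\|_{\bSigma}$ is nonnegative, so $g\ge \inf f>-\infty$. In general, the Lipschitz estimate in Step 2 gives $g(\theta)\le g(\theta')+L\|\theta-\theta'\|$. Hence if $g$ equals $-\infty$ at one point it equals $-\infty$ everywhere, which is a degenerate case. I will state explicitly that $R$ is bounded below, as it is for seminorm penalties, so that this case is excluded.

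\textbf{Step 2: Lipschitz continuity.} Fix $\theta,\theta'$ and $\eta>0$, and choose $x_\eta\in\Cb$ with
\[
f(x_\eta)+R(\theta'-x_\eta)<g(\theta')+\eta .
\]
Since $x_\eta$ is feasible for $g(\theta)$, and $R$ is $L$-Lipschitz,
\[
g(\theta)\le f(x_\eta)+R(\theta-x_\eta)\le f(x_\eta)+R(\theta'-x_\eta)+L\|\theta-\theta'\|<g(\theta')+L\|\theta-\theta'\|+\eta .
\]
Letting $\eta\downarrow 0$ and then exchanging the roles of $\theta$ and $\theta'$ gives $|g(\theta)-g(\theta')|\le L\|\theta-\theta'\|$.

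\textbf{Step 3: convexity.} Fix $\theta_1,\theta_2$, $t\in[0,1]$ and $\eta>0$. Choose $\eta$-minimizers $x_1,x_2\in\Cb$ for $g(\theta_1)$ and $g(\theta_2)$, and set $x_t:=tx_1+(1-t)x_2$, which lies in $\Cb$ by convexity of $\Cb$. The key identity is that the argument of $R$ splits linearly:
\[
t\theta_1+(1-t)\theta_2-x_t=t(\theta_1-x_1)+(1-t)(\theta_2-x_2).
\]
Using convexity of $f$ and of $R$ on this combination,
\[
g\bigl(t\theta_1+(1-t)\theta_2\bigr)\le f(x_t)+R\bigl(t(\theta_1-x_1)+(1-t)(\theta_2-x_2)\bigr)\le t\,g(\theta_1)+(1-t)\,g(\theta_2)+\eta .
\]
Sending $\eta\to0$ proves convexity. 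Equivalently, $(\theta,x)\mapsto f(x)+R(\theta-x)$ is jointly convex on $\RR^d\times\Cb$, and partial minimization over a convex set preserves convexity.

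\textbf{Main obstacle.} There is no real technical obstacle here. The only delicate point is the well-definedness in Step 1, namely excluding $g\equiv-\infty$ when $R$ is not bounded below. I would handle it by the remark above, recording that in the paper's uses $R\ge 0$.
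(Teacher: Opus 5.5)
Your proof is correct, and Step 2 is the paper's own argument: take an $\eta$-minimizer for $g(\theta')$, use it as a feasible point for $g(\theta)$, apply the Lipschitz bound on $R$, let $\eta\downarrow 0$, and swap roles. The paper writes this only for $\theta,\theta'\in\Cb$, but nothing requires that restriction.

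The paper's proof stops at the Lipschitz bound, so your Step 3 supplies the convexity half it never verifies; this is the fact later invoked to say $G_\Sb$ is convex. Your Step 1 also rightly notes that an extra hypothesis such as $R$ bounded below is genuinely needed: with $\Cb=\RR$, $f\equiv 0$, $R(z)=z$, one gets $g\equiv-\infty$. This is harmless for the paper, which only uses $R=\lambda\|\cdot\|_{\bSigma}\ge 0$.
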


\begin{proof}
For any \(\theta, \theta' \in \Cb\) and any \(\varepsilon>0\), there exists $x_0 \in \Cb$ such that 
\begin{align*}
g(\theta') +\varepsilon > f(x_0) + R(\theta'-x_0).
\end{align*}
Then we have
\begin{align*}
&g(\theta) - g(\theta') \le f(x_0)  + R(\theta -x_0) -(f(x_0) + R(\theta' -x_0) - \varepsilon) \\
&\le R(\theta -x_0) -R(\theta' -x_0) +\varepsilon \\ 
&\le L\|\theta -\theta' \| +\varepsilon.
\end{align*}
Since \(\varepsilon \) is arbitrary, we get the wanted result.

\end{proof}
\begin{lemma}[Lemma F.2 from \citealt{duan2023adaptive}]\label{lemma; ARMUL effect outlier}
Let $f: \mathbb{R}^d \rightarrow \mathbb{R}$ be a convex function and $\boldsymbol{x}_0=\operatorname{argmin}_{\boldsymbol{x}} f(\boldsymbol{x})$. Suppose there exist $\rho>0$ and $r>0$ such that $\nabla^2 f(\boldsymbol{x}) \succeq \rho \boldsymbol{I}, \forall \boldsymbol{x} \in B\left(\boldsymbol{x}_0, r\right)$. We have
\begin{align*}
\|\boldsymbol{f}\|_2 \geq \rho \min \left\{\left\|\boldsymbol{x}-\boldsymbol{x}_0\right\|_2, r\right\}, \quad \forall \boldsymbol{f} \in \partial f(\boldsymbol{x}), \quad \boldsymbol{x} \in \mathbb{R}^d .
\end{align*}

If $g: \mathbb{R}^d \rightarrow \mathbb{R}$ is convex and $\lambda'$-Lipschitz for some $\lambda'<\rho r$, then $f(\boldsymbol{x})+g(\boldsymbol{x})$ has a unique minimizer and it belongs to $B\left(\boldsymbol{x}_0, \lambda' / \rho\right)$.
\end{lemma}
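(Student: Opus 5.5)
Two claims need proof: a lower bound on subgradient norms, and a localization of the minimizer of $f+g$.

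\textbf{Subgradient lower bound.} Fix $\boldsymbol{x}$ and set $t:=\min\{\|\boldsymbol{x}-\boldsymbol{x}_0\|_2,r\}$. If $t=0$ there is nothing to prove, so assume $t>0$. Define
\[
\boldsymbol{u}:=\frac{\boldsymbol{x}-\boldsymbol{x}_0}{\|\boldsymbol{x}-\boldsymbol{x}_0\|_2},\qquad \boldsymbol{y}:=\boldsymbol{x}_0+t\boldsymbol{u}\in B(\boldsymbol{x}_0,r).
\]
The segment $[\boldsymbol{x}_0,\boldsymbol{y}]$ lies in the ball, where $\nabla^2 f\succeq\rho\boldsymbol{I}$. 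Integrating along the segment, and using $\boldsymbol{0}\in\partial f(\boldsymbol{x}_0)$, the directional derivative satisfies $f'(\boldsymbol{y};\boldsymbol{u})\ge \rho t$.

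Since $f$ is convex, the map $s\mapsto f'(\boldsymbol{x}_0+s\boldsymbol{u};\boldsymbol{u})$ is nondecreasing. Hence for any $\boldsymbol{f}\in\partial f(\boldsymbol{x})$,
\[
\langle \boldsymbol{f},\boldsymbol{u}\rangle\ge f'(\boldsymbol{y};\boldsymbol{u})\ge\rho t .
\]
Cauchy--Schwarz then gives $\|\boldsymbol{f}\|_2\ge\rho t$.

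There is a technical point here. The Hessian hypothesis implicitly assumes $f$ is twice differentiable on the ball, so the one-dimensional restriction has derivative growing at rate at least $\rho$. If differentiability is only almost everywhere, I would instead use the equivalent $\rho$-strong monotonicity of $\partial f$ on the ball.

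\textbf{Existence and localization of the minimizer of $f+g$.} Let $h:=f+g$. I claim $h$ has no subgradient zero outside $B(\boldsymbol{x}_0,\lambda'/\rho)$. Take $\boldsymbol{x}$ with $\|\boldsymbol{x}-\boldsymbol{x}_0\|_2>\lambda'/\rho$. Every element of $\partial h(\boldsymbol{x})=\partial f(\boldsymbol{x})+\partial g(\boldsymbol{x})$ has the form $\boldsymbol{f}+\boldsymbol{g}$ with $\|\boldsymbol{g}\|_2\le\lambda'$. The sum rule applies because both functions are finite-valued convex on $\mathbb{R}^d$. By the first part,
\[
\|\boldsymbol{f}\|_2\ge\rho\min\{\|\boldsymbol{x}-\boldsymbol{x}_0\|_2,r\}>\lambda',
\]
where the strict inequality uses $\lambda'<\rho r$ in the case the minimum equals $r$. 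Therefore $\boldsymbol{0}\notin\partial h(\boldsymbol{x})$.

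For existence, I use the same bound along rays. For $\|\boldsymbol{x}-\boldsymbol{x}_0\|_2\ge r$, the slope of $h$ in the outward direction is at least $\rho r-\lambda'>0$. So $h$ is coercive and attains its minimum, and any minimizer lies in the closed ball $\overline{B}(\boldsymbol{x}_0,\lambda'/\rho)$.

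\textbf{Uniqueness.} Since $\lambda'/\rho<r$, the minimizers lie inside $B(\boldsymbol{x}_0,r)$. There $f$ is strongly convex and $g$ is convex, so $h$ is strictly convex on that ball. Two distinct minimizers would make their midpoint strictly better, a contradiction.

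\textbf{Main obstacle.} I expect the only delicate step to be the first part: making the Hessian-based growth rigorous, and handling the comparison between $\boldsymbol{x}$ outside the ball and the point $\boldsymbol{y}$ on its boundary. Monotonicity of directional derivatives along the ray handles this cleanly.
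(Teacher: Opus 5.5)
Your strategy is sound and every conclusion you reach is true, but one step in the first part is justified in the wrong direction. Monotonicity of $s\mapsto f'(\boldsymbol{x}_0+s\boldsymbol{u};\boldsymbol{u})$ gives $f'(\boldsymbol{x};\boldsymbol{u})\ge f'(\boldsymbol{y};\boldsymbol{u})$. However, $f'(\boldsymbol{x};\boldsymbol{u})=\max_{\boldsymbol{f}\in\partial f(\boldsymbol{x})}\langle\boldsymbol{f},\boldsymbol{u}\rangle$ is an \emph{upper} bound on $\langle\boldsymbol{f},\boldsymbol{u}\rangle$, so your ``hence'' does not follow.

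The repair is to use the left derivative. Set $\phi(s):=f(\boldsymbol{x}_0+s\boldsymbol{u})$ and $s_x:=\|\boldsymbol{x}-\boldsymbol{x}_0\|_2$. Then every $\boldsymbol{f}\in\partial f(\boldsymbol{x})$ satisfies $\langle\boldsymbol{f},\boldsymbol{u}\rangle\ge -f'(\boldsymbol{x};-\boldsymbol{u})=\phi'_-(s_x)\ge\sup_{0\le s<t}\phi'(s)\ge\rho t$. This uses two facts: for a convex function of one variable, the left derivative at $s_x$ dominates the derivative at every $s<s_x$; and your integration bound gives $\phi'(s)\ge\rho s$ on $[0,t)$. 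The same chain also handles the case $t=r$, where $\boldsymbol{y}$ lies on the boundary sphere rather than inside the ball. With that fix, your localization, coercivity, and uniqueness steps go through as written. Your closed ball $\overline{B}(\boldsymbol{x}_0,\lambda'/\rho)$ is also the right reading of the conclusion, since the boundary can be attained (e.g.\ $f=\frac{\rho}{2}\|\boldsymbol{x}-\boldsymbol{x}_0\|_2^2$ with a linear $g$).

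The paper does not prove this lemma; it is imported from \citet{duan2023adaptive}. The paper does prove the constrained analogue, Lemma~\ref{lemma; ARMUL effect outlier constrained}, with the same one-dimensional restriction packaged differently. It fixes a minimizer $\hat{\boldsymbol{x}}$ and lower-bounds $\langle\nabla f(\hat{\boldsymbol{x}})-\nabla f(\boldsymbol{x}_0),\hat{\boldsymbol{x}}-\boldsymbol{x}_0\rangle$ by $\rho\min\{t,r\}\,t$. It then upper-bounds the same inner product by $\lambda' t$ through the first-order optimality condition, never bounding the norm of an arbitrary subgradient. Pairing everything with the single direction $\hat{\boldsymbol{x}}-\boldsymbol{x}_0$ is what lets that proof absorb a normal-cone term from a constraint set. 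Your norm-plus-triangle-inequality argument would break there, because a normal vector can cancel $\boldsymbol{f}$. In exchange, your route establishes the first displayed claim for every $\boldsymbol{x}$. It also supplies existence (via the uniform outward slope $\rho r-\lambda'>0$) and uniqueness (via strict convexity on $B(\boldsymbol{x}_0,r)$), which the statement asserts but the paper's constrained version neither claims nor needs.
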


\begin{lemma}[Constrained Perturbation Bound]\label{lemma; ARMUL effect outlier constrained}
Let \(\Cb\subseteq\RR^d\) be closed and convex.
Let \(f:\RR^d\to\RR\) be convex and differentiable, and let \(x_0\in\arg\min_{x\in\Cb} f(x)\).
Suppose there exist \(\rho>0\) and \(r>0\) such that
\[
\nabla^2 f(x)\succeq \rho I_d,\qquad \forall x\in \Cb\cap B(x_0,r).
\]
Let \(g:\RR^d\to\RR\) be convex and \(\lambda'\)-Lipschitz with respect to \(\|\cdot\|_2\).
If \(\lambda'<\rho r\), then every minimizer
\[
\hat x\in \arg\min_{x\in\Cb}\{f(x)+g(x)\}
\]
satisfies
\[
\|\hat x-x_0\|_2\le \frac{\lambda'}{\rho}.
\]
\end{lemma}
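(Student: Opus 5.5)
We argue entirely along the segment joining \(x_0\) to a minimizer \(\hat x\), using one-dimensional convexity, so no subdifferential sum rule or constraint qualification is needed. Fix a minimizer \(\hat x\in\arg\min_{x\in\Cb}\{f(x)+g(x)\}\). Set \(D:=\|\hat x-x_0\|_2\); if \(D=0\) there is nothing to prove. Otherwise let \(u:=(\hat x-x_0)/D\) and \(\phi(t):=f(x_0+tu)\) for \(t\in[0,D]\). Since \(\Cb\) is convex and \(x_0,\hat x\in\Cb\), the whole segment \(\{x_0+tu: t\in[0,D]\}\) lies in \(\Cb\). Hence the local Hessian bound applies at every point \(x_0+tu\) with \(t<\min(D,r)\).

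\textbf{Step 1 (lower bound on the slope of \(f\) at \(\hat x\)).} By convexity, \(\phi'\) is nondecreasing on \([0,D]\). On \([0,\min(D,r))\) we have \(\phi''(t)=u^\top\nabla^2 f(x_0+tu)u\ge\rho\). Integrating, and using monotonicity of \(\phi'\) beyond radius \(r\), we get
\[
\phi'(D)\;\ge\;\phi'(0)+\rho\min(D,r).
\]
If \(\nabla^2 f\) is only defined almost everywhere, the same conclusion follows from \(\langle\nabla f(x)-\nabla f(y),x-y\rangle\ge\rho\|x-y\|^2\) on the convex set \(\Cb\cap B(x_0,r)\). Since \(x_0\) minimizes \(f\) over \(\Cb\) and \(\hat x\in\Cb\), the constrained first-order condition gives \(\phi'(0)=\langle\nabla f(x_0),u\rangle\ge0\). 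Therefore
\[
\langle\nabla f(\hat x),u\rangle=\phi'(D)\ge\rho\min(D,r).
\]

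\textbf{Step 2 (upper bound from optimality of \(\hat x\)).} For \(s\in(0,1]\) the point \(\hat x-s(\hat x-x_0)\) lies in \(\Cb\), so
\[
f(\hat x-s(\hat x-x_0))+g(\hat x-s(\hat x-x_0))\ge f(\hat x)+g(\hat x).
\]
Because \(g\) is \(\lambda'\)-Lipschitz, \(g(\hat x-s(\hat x-x_0))-g(\hat x)\le s\lambda'D\). Dividing by \(s\) and letting \(s\downarrow0\), differentiability of \(f\) yields \(-D\langle\nabla f(\hat x),u\rangle+\lambda'D\ge0\), that is,
\[
\langle\nabla f(\hat x),u\rangle\le\lambda'.
\]

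\textbf{Step 3 (conclusion).} Combining the two steps gives \(\rho\min(D,r)\le\lambda'<\rho r\). Hence \(\min(D,r)\ne r\), so \(\min(D,r)=D\) and \(D\le\lambda'/\rho\), which is the claim.

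The only delicate point is Step 1: the curvature is available only inside \(\Cb\cap B(x_0,r)\), and the minimizer \(x_0\) is constrained, so \(\nabla f(x_0)\) need not vanish. The plan handles both issues by restricting to the segment, which stays inside \(\Cb\). It uses the variational inequality \(\langle\nabla f(x_0),\hat x-x_0\rangle\ge0\) in place of \(\nabla f(x_0)=0\), and it uses monotonicity of \(\phi'\) to carry the gain \(\rho r\) past the radius \(r\). This mirrors Lemma~\ref{lemma; ARMUL effect outlier}, with the constraint entering only through these two first-order inequalities.
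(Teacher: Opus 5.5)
Your proposal is correct and follows essentially the same route as the paper. Both proofs restrict \(f\) to the segment from \(x_0\) to \(\hat x\) and combine the curvature on \([0,\min(D,r)]\), monotonicity of \(\phi'\), and the variational inequality at \(x_0\) to get \(\langle\nabla f(\hat x),u\rangle\ge\rho\min(D,r)\); they then bound this by \(\lambda'\) and conclude \(D<r\). The only difference is in the upper bound: the paper invokes \(\nabla f(\hat x)+v+n=0\) with \(v\in\partial g(\hat x)\) and \(n\in N_{\Cb}(\hat x)\), whereas your one-sided difference quotient toward \(x_0\) gives the same inequality more elementarily, with no subdifferential sum rule and without using convexity of \(g\).
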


\begin{proof}
Fix any minimizer \(\hat x\in\arg\min_{x\in\Cb}\{f(x)+g(x)\}\), and set \(t:=\|\hat x-x_0\|_2\).
If \(t=0\), the claim is immediate.
Let \(u:=(\hat x-x_0)/t\) and \(s:=\min\{t,r\}\).
Since \(\Cb\) is convex, the segment \(x_0+a u\), \(0\le a\le t\), lies in \(\Cb\).
The one-dimensional function \(\phi(a):=f(x_0+a u)\) is convex on \([0,t]\), and satisfies \(\phi''(a)\ge \rho\) for \(0\le a\le s\).
Therefore \(\phi'(t)-\phi'(0)\ge \rho s\), which gives
\[
\bigl\langle \nabla f(\hat x)-\nabla f(x_0),\,\hat x-x_0\bigr\rangle
\ge
\rho\min\{t,r\}\,t.
\]
By the constrained first-order optimality condition for \(\hat x\), there exist \(v\in\partial g(\hat x)\) and \(n\in N_{\Cb}(\hat x)\) such that
\[
\nabla f(\hat x)+v+n=0.
\]
Since \(x_0\in\Cb\), \(\langle n,x_0-\hat x\rangle\le0\).
Since \(x_0\) minimizes the convex differentiable function \(f\) over \(\Cb\), we also have
\(\langle\nabla f(x_0),\hat x-x_0\rangle\ge0\).
Thus
\[
\bigl\langle \nabla f(\hat x)-\nabla f(x_0),\,\hat x-x_0\bigr\rangle
\le
\langle -v,\hat x-x_0\rangle
\le
\|v\|_2\,t
\le
\lambda' t,
\]
where the last inequality uses the \(\lambda'\)-Lipschitz property of \(g\).
Combining the two displays and dividing by \(t>0\), we obtain
\[
\rho\min\{t,r\}\le \lambda'.
\]
Because \(\lambda'<\rho r\), the minimum cannot be \(r\), and hence \(t<r\).
Therefore \(\rho t\le \lambda'\), proving \(t\le \lambda'/\rho\).
\end{proof}

\begin{lemma}[Lemma F.3. from \citealt{duan2023adaptive}]\label{lemma; ARMUL invariant region}
Let $f$ be convex and differentiable. 
If $\lambda>\left\|\nabla f\left({x}\right)\right\|_2$, then
\begin{align*}
f(x)=f \star \left(\lambda\|\cdot\|_2\right)(x) \quad \text { and } \quad \underset{y \in \mathbb{R}^d}{\operatorname{argmin}}\left\{f(y)+\lambda\|x-y\|_2\right\}=x.
\end{align*}
\end{lemma}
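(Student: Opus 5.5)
This is Lemma F.3 of \citet{duan2023adaptive}, and I will prove it directly from the subgradient optimality condition for the convex problem $\min_y \{f(y)+\lambda\|x-y\|_2\}$.

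Define $h(y):=f(y)+\lambda\|x-y\|_2$. It is convex because it is the sum of two convex functions. Its subdifferential at $y=x$ is
\[
\partial h(x)=\nabla f(x)+\lambda\,\mathbf B(0,1).
\]
This holds because $f$ is differentiable and the subdifferential of $\lambda\|\cdot-x\|_2$ at $x$ is the closed ball of radius $\lambda$. Since $\|\nabla f(x)\|_2<\lambda$, the vector $-\nabla f(x)$ lies in $\lambda\mathbf B(0,1)$, so $0\in\partial h(x)$. By convexity, $x$ is therefore a global minimizer, and $(f\star\lambda\|\cdot\|_2)(x)=h(x)=f(x)$.

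The identity $\operatorname{argmin}=x$ needs uniqueness, and I use the strict inequality for this. Take any $y\neq x$ and set $u=(y-x)/\|y-x\|_2$. The convexity of $f$ gives $f(y)\ge f(x)+\langle\nabla f(x),y-x\rangle$, and Cauchy--Schwarz bounds the inner product below by $-\|\nabla f(x)\|_2\|y-x\|_2$. Hence
\[
h(y)\ge f(x)+\bigl(\lambda-\|\nabla f(x)\|_2\bigr)\|y-x\|_2>f(x)=h(x).
\]
So $x$ is the unique minimizer, and the argmin is the singleton $\{x\}$.

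I expect no real obstacle. The only care point is that the gap $\lambda>\|\nabla f(x)\|_2$ must be strict, because that is what gives uniqueness rather than just optimality. This first-order bound also avoids any compactness or attainment argument.
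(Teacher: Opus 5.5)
Your proof is correct and is essentially the paper's argument: the paper's proof is just your second-paragraph chain $f(y)+\lambda\|x-y\|_2 \ge f(x)+\langle \nabla f(x), y-x\rangle+\lambda\|x-y\|_2 \ge f(x)+(\lambda-\|\nabla f(x)\|_2)\|x-y\|_2 > f(x)$ for $y\neq x$. Your subdifferential step showing $0\in\partial h(x)$ is valid but redundant, since that single inequality already gives both global optimality and uniqueness of $x$.
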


\begin{proof}
Observe that
\begin{align*}
f(y) + \lambda \|x - y\|_2
&\ge f(x) + \bigl\langle \nabla f(x),\,y - x \bigr\rangle + \lambda \|x - y\|_2\\
&\ge f(x) + \bigl(\lambda - \|\nabla f(x)\|_2\bigr)\|x - y\|_2
\;>\;f(x).
\end{align*}

\end{proof}

\begin{lemma}[Constrained Invariant Region]\label{lemma; ARMUL invariant region constrained}
Let \(f:\RR^d\to\RR\) be convex and differentiable, let \(\Cb\subseteq\RR^d\) be a convex set, and let \(x\in\Cb\).
If \(\lambda>\|\nabla f(x)\|_2\), then
\[
f(x)
=
\inf_{y\in\Cb}\left\{f(y)+\lambda\|x-y\|_2\right\},
\qquad
\arg\min_{y\in\Cb}\left\{f(y)+\lambda\|x-y\|_2\right\}
=
\{x\}.
\]
\end{lemma}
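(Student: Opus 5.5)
The plan is to mimic the unconstrained argument of Lemma~\ref{lemma; ARMUL invariant region}, replacing the global gradient inequality by the first-order convexity bound restricted to points of \(\Cb\). Fix any \(y\in\Cb\). Since \(f\) is convex and differentiable on \(\RR^d\), the supporting-hyperplane inequality
\[
f(y)\ge f(x)+\langle\nabla f(x),\,y-x\rangle
\]
holds for every \(y\), and in particular for every \(y\in\Cb\). Cauchy--Schwarz then gives \(\langle\nabla f(x),y-x\rangle\ge-\|\nabla f(x)\|_2\|y-x\|_2\). Adding the penalty, I would obtain
\[
f(y)+\lambda\|x-y\|_2\;\ge\; f(x)+\bigl(\lambda-\|\nabla f(x)\|_2\bigr)\|x-y\|_2 .
\]

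Two consequences follow from this display. First, the right-hand side is at least \(f(x)\), with equality only when \(y=x\), because \(\lambda-\|\nabla f(x)\|_2>0\) strictly. Second, since \(x\in\Cb\) is itself feasible and attains the value \(f(x)+\lambda\cdot 0=f(x)\), the infimum over \(\Cb\) equals \(f(x)\) and is attained at \(x\). For uniqueness, any \(y\in\Cb\) with \(y\neq x\) has objective value at least \(f(x)+(\lambda-\|\nabla f(x)\|_2)\|x-y\|_2>f(x)\). Hence the argmin over \(\Cb\) is exactly \(\{x\}\).

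There is essentially no obstacle. The only point that needs care is that feasibility of \(x\) is what lets the lower bound be attained: this is where the hypothesis \(x\in\Cb\) enters. Convexity of \(\Cb\) is not even needed, since the gradient inequality is global and holds pointwise at every \(y\).
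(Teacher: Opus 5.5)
Your proof is correct and matches the paper's argument: the supporting-hyperplane inequality at \(x\) combined with Cauchy--Schwarz gives \(f(y)+\lambda\|x-y\|_2 \ge f(x)+(\lambda-\|\nabla f(x)\|_2)\|x-y\|_2\), feasibility of \(x\) gives attainment of the infimum, and the strictly positive slack gives uniqueness. Your remark that convexity of the constraint set is never used is also accurate, and the paper's proof likewise does not invoke it.
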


\begin{proof}
Fix any \(y\in\Cb\).
By convexity and differentiability of \(f\),
\[
f(y)\ge f(x)+\langle\nabla f(x),y-x\rangle.
\]
Therefore
\[
f(y)+\lambda\|x-y\|_2
\ge
f(x)+\langle\nabla f(x),y-x\rangle+\lambda\|x-y\|_2
\ge
f(x)+\bigl(\lambda-\|\nabla f(x)\|_2\bigr)\|x-y\|_2.
\]
If \(y\ne x\), the last term is strictly larger than \(f(x)\) because \(\lambda>\|\nabla f(x)\|_2\).
For \(y=x\), the objective value is exactly \(f(x)\).
Hence \(x\) is the unique minimizer over \(\Cb\), and the constrained infimal convolution value at \(x\) equals \(f(x)\).
\end{proof}

\subsection{Concentration Inequalities}
\begin{lemma}[Corollary E.1 from \citealt{wang2026pseudo}]\label{lemma; trace class concentration bounded}
Let $\left\{\boldsymbol{x}_i\right\}_{i=1}^n$ be i.i.d. random elements in a separable Hilbert space $\mathbb{H}$ with $\boldsymbol{\bSigma}=$ $\mathbb{E}\left(\boldsymbol{x}_i \otimes \boldsymbol{x}_i\right)$ being trace class. Define $\hat{\boldsymbol{\bSigma}}=\frac{1}{n} \sum_{i=1}^n \boldsymbol{x}_i \otimes \boldsymbol{x}_i$. Choose any constant $\gamma \in(0,1)$ and define an event $\mathcal{A}=\{(1-\gamma)(\boldsymbol{\bSigma}+\lambda \boldsymbol{I}) \preceq \hat{\boldsymbol{\bSigma}}+\lambda \boldsymbol{I} \preceq(1+\gamma)(\boldsymbol{\bSigma}+\lambda \boldsymbol{I})\}$.

1. If $\left\|\boldsymbol{x}_i\right\|_{\mathbb{H}} \leq M$ holds almost surely for some constant $M$, then there exists a constant $C \geq 1$ determined by $\gamma$ such that $\mathbb{P}(\mathcal{A}) \geq 1-\delta$ holds so long as $\delta \in(0,1 / 14]$ and $\lambda \geq \frac{C M^2 \log (n / \delta)}{n}$.    

2. If $\left\|\left\langle\boldsymbol{x}_i, \boldsymbol{v}\right\rangle\right\|_{\psi_2}^2 \leq \kappa \mathbb{E}\left|\left\langle\boldsymbol{x}_i, \boldsymbol{v}\right\rangle\right|^2$ holds for all $\boldsymbol{v} \in \mathbb{H}$ and some constant $\kappa$, then there exists a constant $C$ determined by $\kappa$ such that $\mathbb{P}(\mathcal{A}) \geq 1-\delta$ holds so long as $\delta \in(0,1 / e]$ and $\lambda \geq \frac{C \operatorname{Tr}(\boldsymbol{\Sigma}) \log (1 / \delta)}{\gamma^2 n}$.
\end{lemma}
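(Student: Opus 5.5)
The approach is to reduce both parts to a single operator-norm deviation bound in whitened coordinates. Set $\boldsymbol{A}:=(\boldsymbol{\bSigma}+\lambda\boldsymbol{I})^{-1/2}$ and $\boldsymbol{z}_i:=\boldsymbol{A}\boldsymbol{x}_i$. Conjugating by $\boldsymbol{A}$, the event $\mathcal A$ is equivalent to
\[
\bigl\|\boldsymbol{A}(\hat{\boldsymbol{\bSigma}}-\boldsymbol{\bSigma})\boldsymbol{A}\bigr\|_{\op}
=\Bigl\|\frac1n\sum_{i=1}^n \boldsymbol{z}_i\otimes\boldsymbol{z}_i-\boldsymbol{\bSigma}_z\Bigr\|_{\op}\le\gamma,
\qquad \boldsymbol{\bSigma}_z:=\boldsymbol{A}\boldsymbol{\bSigma}\boldsymbol{A}.
\]
Two facts do all the work: $\|\boldsymbol{\bSigma}_z\|_{\op}\le 1$, and $\tr(\boldsymbol{\bSigma}_z)=\tr\bigl(\boldsymbol{\bSigma}(\boldsymbol{\bSigma}+\lambda\boldsymbol{I})^{-1}\bigr)\le \tr(\boldsymbol{\bSigma})/\lambda$. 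The second is the effective dimension $d_\lambda$, and it replaces the ambient dimension throughout, which is why the result holds in a separable Hilbert space.

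\textbf{Part 1 (bounded case).} I would apply the intrinsic-dimension matrix Bernstein inequality for self-adjoint Hilbert–Schmidt operators (Minsker; Tropp, Ch.~7) to the centered summands $\boldsymbol{Z}_i:=\boldsymbol{z}_i\otimes\boldsymbol{z}_i-\boldsymbol{\bSigma}_z$. The inputs are as follows.
\begin{itemize}
\item Uniform bound: $\|\boldsymbol{z}_i\|^2\le M^2/\lambda$, so $\|\boldsymbol{Z}_i\|_{\op}\le 2M^2/\lambda$.
\item Variance proxy: $\mathbb E\boldsymbol{Z}_i^2\preceq \mathbb E\|\boldsymbol{z}_i\|^2\,\boldsymbol{z}_i\otimes\boldsymbol{z}_i\preceq (M^2/\lambda)\boldsymbol{\bSigma}_z$.
\end{itemize}
Bernstein then gives a deviation of order $\sqrt{(M^2/(\lambda n))\log(D/\delta)}+(M^2/(\lambda n))\log(D/\delta)$, where $D$ is the intrinsic dimension of the proxy. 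Once $\lambda\ge CM^2\log(n/\delta)/n$, both terms are at most $\gamma$ for $C$ large depending on $\gamma$, provided $\log D\lesssim\log n$.

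The main obstacle is controlling $D=\tr(\cdot)/\|\cdot\|_{\op}$ when $\|\boldsymbol{\bSigma}_z\|_{\op}$ is small. Here is how I would handle it. Since Bernstein permits any dominating proxy, I would inflate it to $\boldsymbol{V}:=(M^2/\lambda)\bigl(\boldsymbol{\bSigma}_z+c\,\boldsymbol{P}\bigr)$ with a rank-one or finite-rank $\boldsymbol{P}$, chosen so that $\|\boldsymbol{V}\|_{\op}\asymp M^2/\lambda$. Then
\[
D\lesssim 1+d_\lambda\le 1+\frac{\tr(\boldsymbol{\bSigma})}{\lambda}\le 1+\frac{M^2}{\lambda}\le 1+\frac{n}{C\log(n/\delta)},
\]
so $\log D\lesssim\log n$. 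The restriction $\delta\le 1/14$ is only needed to absorb the numerical prefactor in the intrinsic Bernstein tail.

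\textbf{Part 2 (sub-Gaussian case).} The whitened vectors $\boldsymbol{z}_i$ inherit the hypothesis directly: for every $\boldsymbol v$,
\[
\|\langle\boldsymbol{z}_i,\boldsymbol v\rangle\|_{\psi_2}^2=\|\langle\boldsymbol{x}_i,\boldsymbol A\boldsymbol v\rangle\|_{\psi_2}^2\le\kappa\,\mathbb E\langle\boldsymbol{z}_i,\boldsymbol v\rangle^2 .
\]
So the $\boldsymbol{z}_i$ are $\kappa$-sub-Gaussian with covariance $\boldsymbol{\bSigma}_z$. I would invoke the dimension-free covariance bound of Koltchinskii–Lounici in its high-probability form. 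With probability at least $1-\delta$,
\[
\|\hat{\boldsymbol{\bSigma}}_z-\boldsymbol{\bSigma}_z\|_{\op}\lesssim_\kappa \|\boldsymbol{\bSigma}_z\|_{\op}\Bigl(\sqrt{\tfrac{\mathbf r}{n}}+\tfrac{\mathbf r}{n}+\sqrt{\tfrac{\log(1/\delta)}{n}}+\tfrac{\log(1/\delta)}{n}\Bigr),
\qquad \mathbf r:=\frac{\tr(\boldsymbol{\bSigma}_z)}{\|\boldsymbol{\bSigma}_z\|_{\op}}.
\]
Multiplying through by $\|\boldsymbol{\bSigma}_z\|_{\op}\le1$ and using $\|\boldsymbol{\bSigma}_z\|_{\op}\mathbf r=\tr(\boldsymbol{\bSigma}_z)\le\tr(\boldsymbol{\bSigma})/\lambda$, every term is at most a constant times
\[
\sqrt{\frac{\tr(\boldsymbol{\bSigma})\log(1/\delta)}{\lambda n}}+\frac{\tr(\boldsymbol{\bSigma})\log(1/\delta)}{\lambda n},
\]
where $\delta\le1/e$ gives $\log(1/\delta)\ge1$, and $\|\boldsymbol{\bSigma}_z\|\le1$ is used for the pure $\log(1/\delta)/n$ terms. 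This uses $\tr(\boldsymbol{\bSigma})/\lambda\gtrsim 1$, which I would check separately or else absorb into $C$. The condition $\lambda\ge C\tr(\boldsymbol{\bSigma})\log(1/\delta)/(\gamma^2 n)$ makes the whole expression at most $\gamma$, using $\gamma<1$ to control the linear term by the square-root term.

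Finally, undoing the whitening converts the operator bound back into the two-sided Loewner inequality defining $\mathcal A$.
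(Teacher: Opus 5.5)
The paper gives no proof of this lemma. It is quoted verbatim from \citet{wang2026pseudo}, so your argument has to stand on its own.

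\textbf{What is correct.} The whitening reduction is right: $\mathcal A$ holds iff $\|\boldsymbol{A}(\hat{\bSigma}-\bSigma)\boldsymbol{A}\|_{\operatorname{op}}\le\gamma$. So are the two facts $\|\bSigma_z\|_{\operatorname{op}}\le 1$ and $\operatorname{Tr}(\bSigma_z)\le\operatorname{Tr}(\bSigma)/\lambda$.

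Part 1 is sound. Minsker's Hilbert-space intrinsic-dimension Bernstein inequality applies, and its prefactor $14$ is consistent with the restriction $\delta\le 1/14$. Inflating the variance proxy by a rank-one projection is legitimate, because Bernstein accepts any dominating proxy. It is also genuinely needed: $\operatorname{Tr}(\bSigma_z)/\|\bSigma_z\|_{\operatorname{op}}$ can be arbitrarily large when $\lambda\gg\|\bSigma\|_{\operatorname{op}}$. Since $\|\bSigma_z+\boldsymbol{P}\|_{\operatorname{op}}\in[1,2]$ and $\operatorname{Tr}(\bSigma)\le M^2$, you get $D\le 1+n$. The inflated variance is of the same order as $L\cdot n$, so nothing is lost in the exponent.

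\textbf{The gap, in Part 2.} You bound the terms $\|\bSigma_z\|_{\operatorname{op}}\sqrt{\log(1/\delta)/n}$ and $\|\bSigma_z\|_{\operatorname{op}}\log(1/\delta)/n$ using only $\|\bSigma_z\|_{\operatorname{op}}\le 1$. That needs $\operatorname{Tr}(\bSigma)/\lambda\gtrsim 1$, which the hypothesis does not provide, since $\lambda$ may be arbitrarily large.

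The fallback of absorbing this into $C$ does not work either. After the crude bound, these terms no longer involve $\lambda$. For example, with $n=1$ and $\delta=1/e$ they are of order one however large $C$ is. So the argument as written does not cover the regime $\lambda\gtrsim\operatorname{Tr}(\bSigma)$, even though the lemma is true there.

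\textbf{The repair.} It is one line. Since $\|\bSigma_z\|_{\operatorname{op}}\le\min\{1,\operatorname{Tr}(\bSigma_z)\}\le\min\{1,\operatorname{Tr}(\bSigma)/\lambda\}$, you also have $\|\bSigma_z\|_{\operatorname{op}}\le\sqrt{\operatorname{Tr}(\bSigma)/\lambda}$. Hence
\[
\|\bSigma_z\|_{\operatorname{op}}\sqrt{\frac{\log(1/\delta)}{n}}\le\sqrt{\frac{\operatorname{Tr}(\bSigma)\log(1/\delta)}{\lambda n}},
\qquad
\|\bSigma_z\|_{\operatorname{op}}\frac{\log(1/\delta)}{n}\le\frac{\operatorname{Tr}(\bSigma)\log(1/\delta)}{\lambda n},
\]
with no case split. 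The rest of your Part 2 then goes through with $C$ depending only on $\kappa$.

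\textbf{A smaller citation point.} The headline Koltchinskii--Lounici bound is stated for Gaussian vectors. Under the $\psi_2$--$L_2$ hypothesis here, and for an uncentered second moment $\bSigma=\mathbb{E}\,\boldsymbol{x}\otimes\boldsymbol{x}$, you should invoke the generic-chaining version of the same high-probability bound. It has the identical form.
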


\begin{lemma}[Lemma E.1 from \citealt{wang2026pseudo}]\label{lemma; hanson wright inequality}
Suppose that $\boldsymbol{x} \in \mathbb{R}^d$ is a zero-mean random vector with $\|\boldsymbol{x}\|_{\psi_2} \leq 1$. There exists a universal constant $c_0>0$ such that for any symmetric and positive semi-definite matrix $\boldsymbol{\bSigma} \in \mathbb{R}^{d \times d}$,

\begin{align*}
\mathbb{P}\left(\boldsymbol{x}^{\top} \boldsymbol{\bSigma} \boldsymbol{x} \leq c_0 \operatorname{Tr}(\boldsymbol{\bSigma}) t\right) \geq 1-e^{-r(\boldsymbol{\bSigma}) t}, \quad \forall t \geq 1
\end{align*}

Here $r(\boldsymbol{\bSigma})=\operatorname{Tr}(\boldsymbol{\bSigma}) /\|\boldsymbol{\bSigma}\|_2$ is the effective rank of $\boldsymbol{\bSigma}$.
\end{lemma}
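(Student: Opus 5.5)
The plan is a Chernoff bound on the moment generating function of the quadratic form. Write $\Ab:=\bSigma^{1/2}$, so that $\xb^\top\bSigma\xb=\|\Ab\xb\|_2^2$. Because the coordinates of $\xb$ need not be independent, the classical Hanson--Wright argument is not available. Instead I will bound $\EE\exp(\mu\|\Ab\xb\|_2^2)$ using only the sub-Gaussianity of the one-dimensional marginals $\langle \vb,\xb\rangle$, which is exactly what $\|\xb\|_{\psi_2}\le 1$ provides.

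\textbf{Gaussian linearization.} Let $\gb\sim N(0,\Ib_d)$ be independent of $\xb$. For $\mu>0$ we have the identity $\exp(\mu\|\Ab\xb\|_2^2)=\EE_{\gb}\exp(\sqrt{2\mu}\,\langle \Ab^\top\gb,\xb\rangle)$. Taking $\EE_{\xb}$ and using Fubini, I condition on $\gb$. Since $\xb$ is zero-mean with $\|\xb\|_{\psi_2}\le1$, the variable $\langle \Ab^\top\gb,\xb\rangle$ is sub-Gaussian with proxy $\lesssim\|\Ab^\top\gb\|_2^2$. Hence there is a universal $C$ with
\[
\EE\exp(\mu\|\Ab\xb\|_2^2)\le \EE_{\gb}\exp\bigl(C\mu\,\gb^\top\bSigma\gb\bigr)=\prod_{i}(1-2C\mu\sigma_i)^{-1/2},
\]
where $\sigma_i$ are the eigenvalues of $\bSigma$. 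Restricting to $\mu\le 1/(4C\|\bSigma\|_2)$, so that $2C\mu\sigma_i\le 1/2$, the inequality $-\log(1-u)\le 2u$ for $u\le 1/2$ gives
\[
\EE\exp(\mu\,\xb^\top\bSigma\xb)\le\exp(2C\mu\tr(\bSigma)).
\]

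\textbf{Chernoff step.} Take $\mu=1/(4C\|\bSigma\|_2)$ and $s=c_0\tr(\bSigma)t$. Then
\[
\PP(\xb^\top\bSigma\xb>s)\le \exp\bigl(\mu\tr(\bSigma)(2C-c_0t)\bigr)=\exp\Bigl(\tfrac{r(\bSigma)}{4C}(2C-c_0t)\Bigr).
\]
For $t\ge1$, choosing $c_0=6C$ gives $(2C-6Ct)/(4C)\le -t$, because this is equivalent to $2C\le 2Ct$. So the tail is at most $e^{-r(\bSigma)t}$, which is the claim.

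\textbf{Main obstacle.} The delicate step is the linearization in the first display. It must use only the $\psi_2$ control of one-dimensional projections, and the constant must be uniform over $\gb$. The key point is that the sub-Gaussian MGF bound $\EE e^{\langle \vb,\xb\rangle}\le e^{C\|\vb\|_2^2}$, valid for every fixed $\vb$ since $\xb$ is zero-mean, is applied pointwise in $\gb$ before integrating over $\gb$. Everything after that is an explicit Gaussian computation.
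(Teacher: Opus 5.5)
Your argument is correct and complete. The paper gives no proof of this lemma (it is imported from \citet{wang2026pseudo}), so the relevant comparison is with the standard route through the tail inequality of Hsu, Kakade and Zhang (2012). That inequality says: if $\mathbb{E}e^{\langle v,x\rangle}\le e^{\sigma^2\|v\|_2^2/2}$ for all $v$, then $\mathbb{P}\bigl(x^\top\bSigma x>\sigma^2(\operatorname{Tr}(\bSigma)+2\sqrt{\operatorname{Tr}(\bSigma^2)u}+2\|\bSigma\|_2u)\bigr)\le e^{-u}$. Substituting $u=r(\bSigma)t$ and using $\operatorname{Tr}(\bSigma^2)\le\|\bSigma\|_2\operatorname{Tr}(\bSigma)$ bounds the threshold by $\sigma^2\operatorname{Tr}(\bSigma)(1+2\sqrt{t}+2t)\le 5\sigma^2\operatorname{Tr}(\bSigma)t$ for $t\ge1$, which is the lemma.

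Their proof uses the same mechanism you do. An independent Gaussian vector linearizes the quadratic form, the sub-Gaussian MGF bound is applied conditionally on $g$, and the explicit $\chi^2$-type MGF finishes it. The difference is that they optimize the Chernoff parameter, which yields the sharper two-regime deviation term $\sqrt{\operatorname{Tr}(\bSigma^2)u}+\|\bSigma\|_2u$. You instead freeze $\mu=1/(4C\|\bSigma\|_2)$. This gives up the $\sqrt{u}$ regime and worsens the constant, but the crude form $c_0\operatorname{Tr}(\bSigma)t$ with $t\ge1$ needs nothing more, and it keeps your proof short and self-contained.

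Your numerical checks are right: $-\log(1-u)\le 2u$ on $[0,1/2]$, and $(2C-6Ct)/(4C)\le -t\iff t\ge1$. You rely on two implicit conventions. First, $\|x\|_{\psi_2}$ means $\sup_{\|v\|_2=1}\|\langle v,x\rangle\|_{\psi_2}$; together with $\mathbb{E}x=0$, this is what gives the MGF bound in every direction. Second, $\bSigma\neq0$, since otherwise $r(\bSigma)$ is undefined and the claim is trivial.
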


\end{document}